\theoremstyle{plain}
\newtheorem{theorem}{Theorem}[section]
\newtheorem{proposition}[theorem]{Proposition}
\newtheorem{lemma}[theorem]{Lemma}
\newtheorem{corollary}[theorem]{Corollary}
\theoremstyle{definition}
\newtheorem{definition}[theorem]{Definition}
\newtheorem{assumption}[theorem]{Assumption}
\theoremstyle{remark}
\newtheorem{remark}[theorem]{Remark}
\newcommand*\samethanks[1][\value{footnote}]{\footnotemark[#1]}
\definecolor{myred}{HTML}{ae1908}
\newcommand{\dnote}[1]{\textcolor{red}{#1}}
\newcommand{\bbE}{\ensuremath{\mathbb{E}}}
\newcommand{\bbH}{\ensuremath{\mathbb{H}}}
\newcommand{\bbN}{\ensuremath{\mathbb{N}}}
\newcommand{\bbP}{\ensuremath{\mathbb{P}}}
\newcommand{\bbR}{\ensuremath{\mathbb{R}}}
\newcommand{\calA}{\ensuremath{\mathcal{A}}}
\newcommand{\calB}{\ensuremath{\mathcal{B}}}
\newcommand{\calC}{\ensuremath{\mathcal{C}}}
\newcommand{\calD}{\ensuremath{\mathcal{D}}}
\newcommand{\calE}{\ensuremath{\mathcal{E}}}
\newcommand{\calF}{\ensuremath{\mathcal{F}}}
\newcommand{\calG}{\ensuremath{\mathcal{G}}}
\newcommand{\calM}{\ensuremath{\mathcal{M}}}
\newcommand{\calN}{\ensuremath{\mathcal{N}}}
\newcommand{\calO}{\ensuremath{\mathcal{O}}}
\newcommand{\calR}{\ensuremath{\mathcal{R}}}
\newcommand{\calI}{\mathcal{I}}
\newcommand{\calH}{\mathcal{H}}
\newcommand{\calcolI}{\textcolor{blue}{\mathcal{I}}}
\newcommand{\calcolII}{\textcolor{blue}{\mathcal{II}}}
\newcommand{\lai}{\textcolor{blue}{\mathit{I}}}
\newcommand{\llangle}{\left\langle}
\newcommand{\rrangle}{\right\rangle}
\newcommand{\upA}{\mathbf{A}}
\newcommand{\upa}{\mathbf{a}}
\newcommand{\upB}{\mathbf{B}}
\newcommand{\upb}{\mathbf{b}}
\newcommand{\upD}{\mathbf{D}}
\newcommand{\upe}{\mathbf{e}}
\newcommand{\upH}{\mathbf{H}}
\newcommand{\upI}{\mathbf{I}}
\newcommand{\upq}{\mathbf{q}}
\newcommand{\upR}{\mathbf{R}}
\newcommand{\upu}{\mathbf{u}}
\newcommand{\upV}{\mathbf{V}}
\newcommand{\upv}{\mathbf{v}}
\newcommand{\upw}{\mathbf{w}}
\newcommand{\upx}{\mathbf{x}}
\newcommand{\upz}{\mathbf{z}}
\newcommand{\upLambda}{\mathbf{\Lambda}}
\newcommand{\upSigma}{\mathbf{\Sigma}}
\newcommand{\bi}{\mathrm{bias}}
\newcommand{\var}{\mathrm{variance}}
\newcommand{\sigmin}{\sigma_{\mathrm{min}}}
\newcommand{\Barsigmin}{\Bar{\sigma}_{\mathrm{min}}}
\newcommand{\Tildesigmax}{\Tilde{\sigma}_{\mathrm{max}}}
\newcommand{\Hatsigmax}{\Hat{\sigma}_{\mathrm{max}}}
\DeclareMathOperator*{\diag}{diag}
\DeclareMathOperator*{\tr}{tr}
\DeclareMathOperator*{\Var}{Var}
\DeclareMathOperator*{\Pro}{\mathbb{P}}
\title{Scaling Law for Stochastic Gradient Descent in Quadratically \\Parameterized Linear Regression}
\author{Shihong Ding$^{\dag}$\thanks{Equal Contribution. }\quad Haihan Zhang$^{\dag}$\samethanks\quad Hanzhen Zhao$^{\dag}$\quad Cong Fang$^{\dag}$\\
\\
        \small $^{\dag}$Peking University
        \\\\
}
\date{}
\begin{document}
\maketitle
\begin{abstract} 
\noindent In machine learning, the scaling law describes how the model performance improves with the model and data size scaling up. From a learning theory perspective, this class of results establishes upper and lower generalization bounds for a specific learning algorithm. Here,  
the exact algorithm running using a specific model parameterization often offers a crucial implicit regularization effect, leading to good generalization.
   To characterize the scaling law, previous theoretical studies mainly focus on linear models, whereas,  feature learning, a notable process that contributes to the remarkable empirical success of neural networks, is regretfully vacant.  This paper studies the scaling law over a linear regression with the model being quadratically parameterized. We  consider infinitely dimensional data and slope ground truth,  both signals exhibiting certain power-law decay rates. We study convergence rates for Stochastic Gradient Descent and  demonstrate the learning  rates for variables will automatically adapt to the ground truth.  As a result, in the canonical linear regression,  we provide explicit separations for generalization curves between  SGD with and without feature learning,  and the information-theoretical lower bound that is agnostic to parametrization method and the algorithm. Our analysis for decaying ground truth provides a new characterization for  
the learning dynamic of the 
model. 

\end{abstract}


\section{Introduction}

The rapid advancement of large-scale models has precipitated a paradigm shift across AI field, with the empirical scaling law emerging as a foundational principle guiding practitioners to scale up the model. The \textit{neural scaling law} \citep{kaplan2020scaling,bahri2024explaining} characterized a polynomial-type decay of excess risk against both the model size and training data volume. Originated from empirical observations, this law predict the substantial improvements of the model performance given abundant training resources. Enough powerful validations have supported the law as critical tools for development of model architecture and allocation of computational resources.


From the statistical learning perspective, neural scaling law formalizes an algorithm-dependent generalization that explicitly quantify how excess risk diminishes with increasing model size and sample size. This paradigm diverges from the classical learning theory, which prioritizes algorithm-agnostic guarantees through a uniform convergence argument for the hypotheses. Empirically, the neural scaling law demonstrates a stable polynomial-type decay of excess risk. This phenomenon persists even as model size approaches infinity, challenging the traditional intuitions about variance explosion. Theoretically, this apparent contradiction implies the role of implicit regularization. Learning algorithms, when coupled with specific parameterized architectures, realize good generalization that suppresses variance explosion. The critical interplay between parameterization methods, optimization dynamics, and generalization, positions algorithmic preferences as an implicit regularization governing scalable learning.

Theoretical progress in characterization of the polynomial-type scaling law has largely centered on linear models, motivated by two synergistic insights. First, the Neural Tangent Kernel (NTK) theory \citep{NEURIPS2018_5a4be1fa,arora2019exact} reveals that wide neural networks, when specially scaled and randomly initialized, can be approximated by linearized models, bridging nonlinear architectures to analytically tractable regimes. Second, linear systems allow for precise characterization of learning dynamics.  The excess risk of linear model is associated with two key factors, the covariance operator spectrum and the regularity of ground truth~\citep{lin2024scaling,bahri2024explaining}.   In the Reproducing Kernel Hilbert Space (RKHS) framework, 
these factors can be described by the capacity of the kernel and source conditions of the target function~\citep{caponnetto2007optimal}.


Compared with traditional studies in linear regression, recent analyses have shifted focus to high-dimensional problems with non-uniform and fine-grained covariance spectra and source conditions~\citep{caponnetto2007optimal,bartlett2021deep}.   The NTK spectrum is shown to exhibit power-law decay when the inputs are uniformly distributed on the unit sphere~\citep{bietti2019inductive,bietti2021deep}.  
 In the offline setting, Gradient Descent (GD) and kernel ridge regression (KRR) exhibit the implicit regularization and multiple descents phenomena, under various geometries of the covariance spectrum and source conditions~\citep{gunasekar2017implicit, Bartlett_2020,10.1214/20-AOS1990,zhang2024optimal1}. In the more widely studied online setting, Stochastic Gradient Descent (SGD) has been proven to achieve a polynomial excess risk under a power-law decay covariance spectrum and ground truth parameter~\citep{dieuleveut2016nonparametric,lin2017optimal,wu2022last}.

\begin{figure}[t]
\centering

\subfigure[\scriptsize{Quadratic v.s. Linear Model}]{
\begin{minipage}[t]{0.3\linewidth}
\centering
\includegraphics[width=2 in]{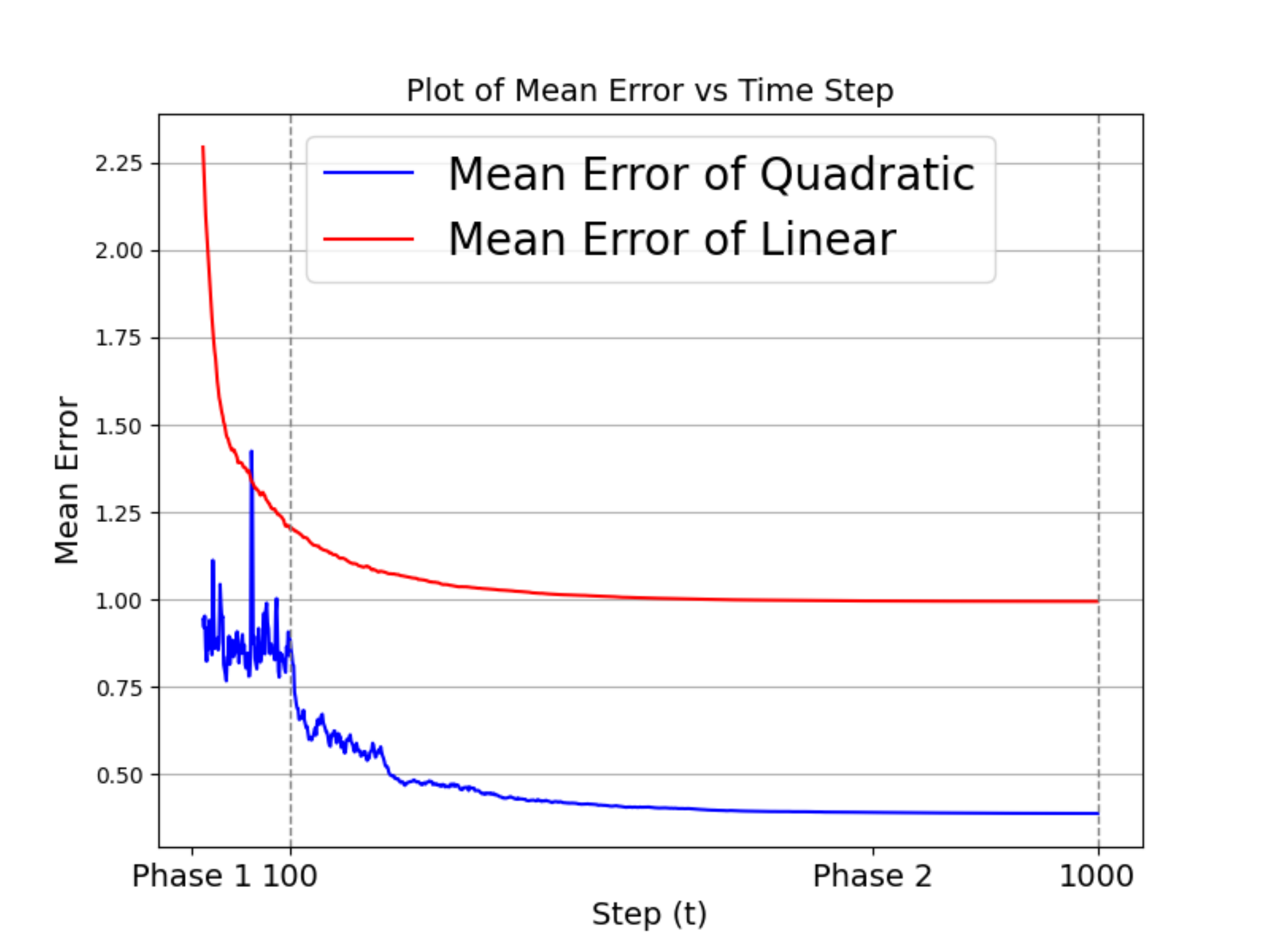}
\end{minipage}%
}%
\subfigure[\scriptsize{Quadratic v.s. Linear Model }]{
\begin{minipage}[t]{0.3\linewidth}
\centering
\includegraphics[width=2 in]{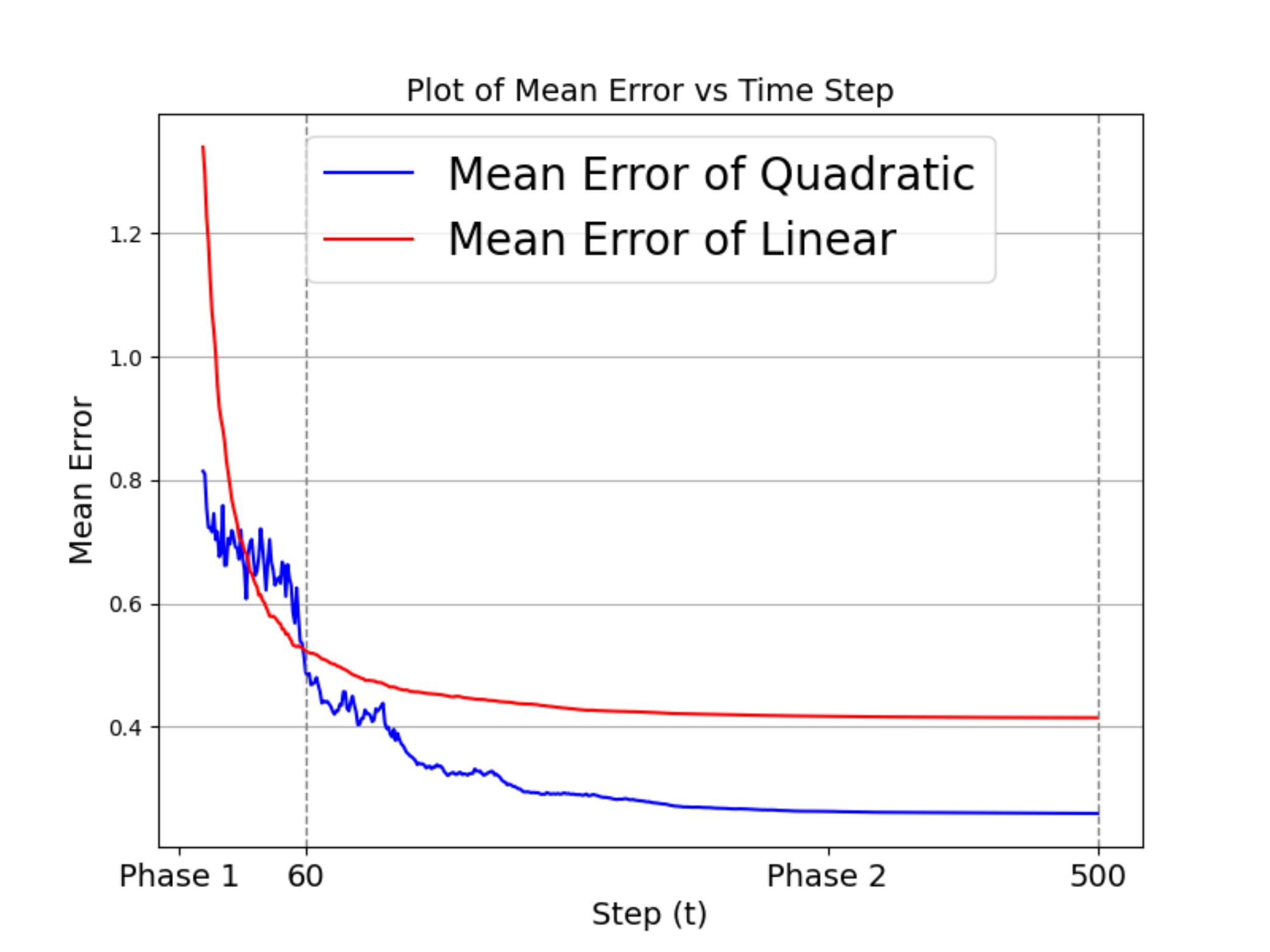}
\end{minipage}%
}%
\subfigure[\scriptsize{Empirical v.s. Theoretical Results}]{
\begin{minipage}[t]{0.3\linewidth}
\centering
\includegraphics[width=2 in]{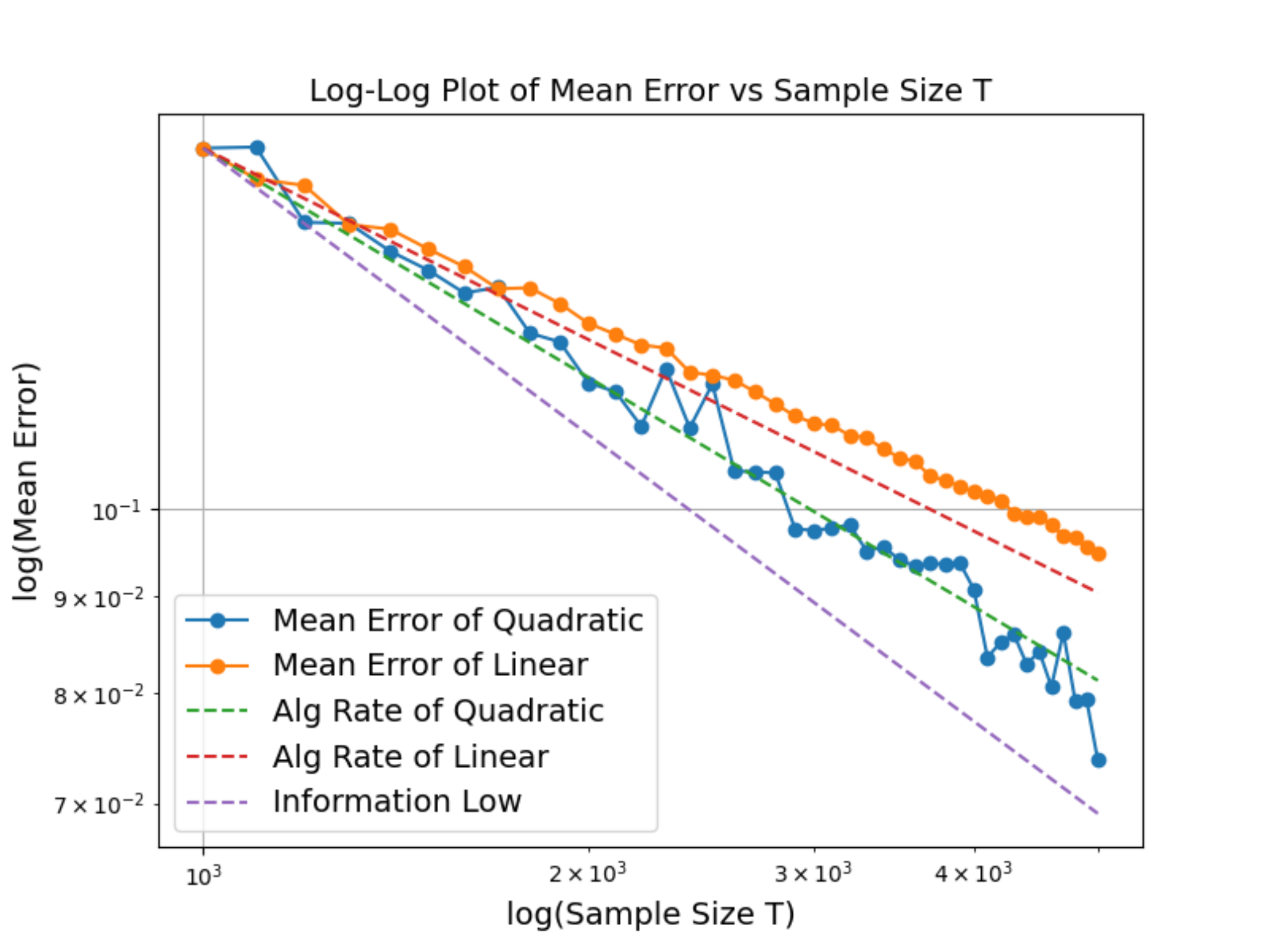}
\end{minipage}%
}%
\centering
\caption{Empirical results on the convergence rate of quadratic model with spectral decay v.s. traditional linear model. (a) and (b) show the curve of mean error against the number of iteration steps, with $\alpha = 2.5, \beta = 1.5$ in (a) and $\alpha =3,\beta=2$ in (b), respectively. (c) show the logarithmic curve of final mean loss against the sample size, where the solid lines represent the empirical results and the dashed lines represent the theoretical rates. }
\label{experiment1}
\end{figure}

However, significant gaps persist in explaining the scaling laws when relying on simplified linear models. A primary limitation of these models is their inability to capture the feature learning process, a mechanism that is widely regarded as crucial to the empirical success of deep neural networks \citep{lecun2015deep}. This process enables neural networks to autonomously extract high-quality hierarchical representations from data, leading to effective generalization. This limitation arises because linear models inherently restrict the capacity to learn feature representations and tend to rapidly diverge from the initial conditions. In linear models, the parameter trajectory under SGD follows a predictable pattern: the estimation bias contracts at a constant rate proportional to the eigenvalue of each feature, while variance accumulates uniformly. However, neural networks are not constrained by an initial feature set; instead, they adaptively reconfigure their internal representations through coordinated parameter updates. The feature learning can often improve the performances. For example,  even the enhanced convolutional neural tangent kernel based on the linearization of neural networks in the infinite-width limit has a performance gap compared to neural networks on the CIFAR10 dataset \citep{li2019enhanced}.

In this paper, we study a quadratically parameterized model:  $f\left (  \mathbf{x} \right ) = \left \langle  \mathbf{x}, \mathbf{v}^{\odot 2} \right \rangle $, where $\mathbf{x}\in \mathbb{H}$ is the input data and 
$\mathbf{v}\in \mathbb{H}$ are the model parameters, 
as an alternative testbed to study the scaling law.  This model can be regarded as a ``diagonal'' linear neural network and exhibits feature learning capabilities. As shown in Figure~\ref{experiment1} (a) and (b), linear models exhibit a empirically suboptimal convergence rate on excess risk under SGD. This suboptimal performance is not solely attributed to the limitations of SGD itself. As demonstrated in Figure~\ref{experiment1} (c), SGD achieves a significantly faster convergence rate on excess risk in quadratically parameterized models, aligning with our theoretical findings. Note that the previous studies for quadratic models  \citep{haochen21shape} often assume a sparse ground
truth for the model where the variance will explode with the number of non-zero elements increasing and no polynomial rates are established. We instead consider an infinitely dimensional data input and ground
truth, whose signal exhibits certain power-law
decay rates.  Specifically,  for constants $\alpha,\beta>1$, we assume that the eigenvalues of the covariance matrix decay as $\lambda_i\asymp i^{-\alpha}$ and that $\mathbf{v}^*_i$ the $i$-th alignment coordinate of the ground truth satisfies $\lambda_i\left(\mathbf{v}^*_i\right)^4\asymp i^{-\beta}$.
Suppose the model has access only to the top $M$-th covariates and their response, we study the excess risk of quadratically parameterized predictor with $M$ parameters and trained by SGD with tail geometric decay schedule of step size, given $T$ training samples. 

We establish the upper bound for the excess risk, demonstrating that its follows a piecewise 
power law with respect to both the model size and the sample size throughout the training process. More concretely, the upper bounds of the excess risk $\mathcal{R}_M(\upv^T)-\bbE[\xi^2]$ behaves as 
\begin{equation}\nonumber
\small
\begin{aligned}
    \underbrace{\frac{1}{M^{\beta-1}}}_{\mathrm{approximation}}+\underbrace{\frac{\sigma^2D}{T}}_{\mathrm{variance}}+\underbrace{\frac{D}{T}+\frac{1}{D^{\beta-1}}\mathds{1}_{D<M}}_{\mathrm{bias}}
\end{aligned}
\end{equation}
where $D=\min\left\{T^{1/\max\{\beta,(\alpha+\beta)/2\}},M\right\}$ serves as the effective dimension. 
The above result
reveals that, for a fixed sample size, increasing the model
size is initially beneficial, but the returns begin to diminish
once a certain threshold is reached. Moreover, when the model size is large enough, SGD achieves the excess risk as $\Tilde{\calO} \left ( T^{-1+\frac{1}{\beta} } \right ) $ when $\alpha\le \beta$, and the excess risk as $\Tilde{\calO} \left ( T^{-\frac{2\beta-2}{\alpha+\beta} } \right ) $.
This indicates that when the true parameter aligns with the covariance spectrum ($\alpha \le \beta$), the quadratic model, similar to the linear model, achieves the optimal rate~\citep{zhang2024optimality}.  On the other hand, when the true parameter opposes the covariance spectrum ($\alpha > \beta$), SGD achieves a rate of $\Tilde{\mathcal{O}} \left( T^{-\frac{2\beta - 2}{\alpha + \beta}} \right)$ in the quadratic model, which outperforms the best rate SGD can achieve in the linear model $\Tilde{\mathcal{O}} \left( T^{-\frac{\beta - 1}{\alpha}} \right)$~\citep{zhang2024optimality}.


In our analysis, we characterize the learning process of SGD into two typical stages. 
In the first ``adaptation'' stage, the algorithm implicitly truncates the first $D$ coordinates 
to form the effective dimension set $\mathcal{S}$, based on the initial conditions. 
The variables within $\mathcal{S}$ grow and oscillate around the ground truth, while the remaining variables are constrained by a constant multiple of the ground truth, leading to an acceptable excess risk. 
In the second ``estimation'' stage, the variables in the effective dimension set $\mathcal{S}$ converge to the ground truth, while the other variables remain within a region that produces a tolerable level of excess risk.  The advantage beyond the linear model is easy to be observed in the “estimation” stage, where the step size is scaled by the certain
magnitude of the ground truth due to the adaption, resulting in a faster convergence rate for the bias term. 

Due to the non-convex nature of the quadratic model, our analysis is much more involved. The main challenge in our analysis is the diverse scaling of the ground truth signals and the anisotropic gradient noise caused by the diverse data eigenvalues. This requires us to provide individual bounds for the model parameters through the analysis and proposes a refined characterization for the learning process.
Both challenges do not exist in the traditional analysis in the quadratic model, since they consider near isotropic input data and $\Theta(1)$ ground truth \citep{haochen21shape}. 

\vspace{0.1in}
We summarize the contribution of this paper as follows: \vspace{-0.05in}
\begin{itemize}
    \item The learning curves of SGD is proposed based on a quadratically parameterized model that emphasizes feature learning. We establish excess risk against sample and model sizes.
\vspace{-0.05in}
    \item A theoretical analysis for the dynamic of the quadratic model is offered, where we propose a new characterization to deal with the decaying ground truth and anisotropic gradient noise.
\end{itemize}

\section{Related Works}

\noindent \textbf{Linear Regression.} 
Linear regression, a cornerstone of statistical learning, achieves information-theoretic optimality $\widetilde{\mathcal{O}}\left(d\sigma^2/T\right)$ in finite dimensions for both offline and online settings \citep{bach2013non,jain2018parallelizing,ge2019step}. Recent advances extend analyses to high-dimensional regimes under eigenvalue regularity conditions and parameter structure \citep{raskutti2014early,gunasekar2017implicit,Bartlett_2020,10.1214/21-AOS2133,tsigler2023benign}. Offline studies characterize implicit bias, benign overfitting, and multi-descent phenomena linked to spectral geometries \citep{liang2020multiple,10.1214/20-AOS1990,mei2022generalization,lu2023optimal,zhang2024optimal1}, while online analyses reveal SGD’s phased complexity release and covariance spectrum-dependent overfitting \citep{dieuleveut2016nonparametric,dieuleveut2017harder,lin2017optimal,ali2020implicit,zou2021benefits,zou2021benign,wu2022last,varre2021last}. Recent work quantifies SGD’s risk scaling under power-law spectral decays \citep{lin2024scaling,bordelon2024a,bahri2024explaining}. 
We follow the geometric decay schedule of the step size \citep{ge2019step,wu2022last,zhang2024optimality}  in Phase  II due to its  superiority in balancing rapid early-phase convergence and stable asymptotic refinement \citep{ge2019step}.  However, in analysis of Phase II, we further require constructing auxiliary sequences to reach the desired convergence rate, which is much more technical.
\vspace{0.2cm}
\noindent \textbf{Feature Learning.}  
The feature learning ability of neural networks is the core mechanism behind their excellent generalization performance. In recent years, theoretical research has primarily focused on two directions: one is the analysis of infinitely wide networks within the mean-field framework, see e.g.~\citet{doi:10.1073/pnas.1806579115,NEURIPS2018_a1afc58c}, and the other is the study of how networks align with low-dimensional objective functions including single-index models~\citep{ba2022high,mousavi2022neural,lee2024neural} and multi-index models \citep{damian2022neural,vural2024pruning}. Although significant progress has been made in these areas, the mean-field mode lacks a clear finite sample convergence rate. Assumptions such as sparse or low-dimensional isotropic objective functions weaken the generality and fail to recover the polynomial decay of generalization error with respect to sample size and model parameters. In this paper, we follow the previous quadratic parameterization~\citep{vaskevicius2019implicit, woodworth2020kernel, haochen21shape}  while develop a generalization error analysis under an anisotropic covariance structure, yielding generalization error results similar to those predicted by the neural scaling law.

\section{Set up}
\subsection{Notation} 




In this section, we introduce the following notations adopted throughout this work. Let $\mathcal{O}(\cdot)$ and $\Omega(\cdot)$ denote upper and lower bounds, respectively, with a universal constants, while $\widetilde{\mathcal{O}}(\cdot)$ and $\widetilde{\Omega}(\cdot)$ ignore polylogarithmic dependencies. For functions $f$ and $g$: $f \lesssim g$ denotes $f = \widetilde{\mathcal{O}}(g)$; $f \gtrsim g$ denotes $f = \widetilde{\Omega}(g)$; $f \asymp g$ indicates $g \lesssim f \lesssim g$. Let $\mathbb{H}$ be a separable Hilbert space with finite or countably infinite dimensions, equipped with an inner product $\langle \cdot, \cdot \rangle$. For any $\mathbf{u}, \mathbf{v}, \mathbf{w} \in \mathbb{H}$, we define the operator $\mathbf{u} \otimes \mathbf{v}: \mathbb{H} \to \mathbb{H}$ as:
\begin{equation}
    (\mathbf{u} \otimes \mathbf{v})(\mathbf{w}) = \langle \mathbf{v}, \mathbf{w} \rangle \mathbf{u}.
\end{equation}
Let $\{\mathbf{e}_i\}_{i=1}^M$ denote the canonical basis in $\mathbb{R}^M$. For an arbitrary orthonormal basis $\{\mathbf{u}_i\}_{i=1}^\infty$ of $\mathbb{H}$, we define the nonlinear operator $\odot^2: \mathbb{H} \to \mathbb{H}$ acting on $\mathbf{v} \in \mathbb{H}$ as:
\begin{equation}
    \mathbf{v}^{\odot 2} := \sum_{i=1}^\infty \langle \mathbf{v}, \mathbf{u}_i \rangle^2 \mathbf{u}_i.
\end{equation}
Specially, in finite-dimensional Euclidean space ($\mathbf{v} \in \mathbb{R}^d$), this reduces to element-wise squaring as: $(\mathbf{v}^{\odot 2})_j = \upv_j^2$ for $j \in [d]$. Given a positive integer $N \leq M$ and a vector $\upv \in \mathbb{R}^M$, denote $\upv_{1:N}$ as $(\upv_1, \ldots, \upv_N)^{\top}$. Moreover, for a diagonal matrix $\diag\{\gamma_1,\cdots,\gamma_M\}=\upH\in\bbR^{M\times M}$, denote $\upH_{n^*:n^{\dagger}}:=\sum_{i=n^*}^{n^{\dagger}}\gamma_i\upe_i\upe_i^{\top}$, where $0\leq n^*\leq n^{\dagger}\leq M$ are two integers. 

\subsection{Quadratically Parameterized Model}
Denote the covariate vector by $\mathbf{x}\in \bbH$, and the corresponding response by $y\in \mathbb{R}$.
We focus on a quadratically parameterized model and measure the population risk of parameter $\mathbf{v}$ by the mean squared loss as:
\begin{equation}\nonumber
    \mathcal{R} \left ( \mathbf{v} \right )=\mathbb{E}_{\left ( \mathbf{x},y  \right ) \sim \mathcal{D} } \left ( \left \langle \mathbf{x},\mathbf{v}^{\odot 2} \right \rangle  -y  \right )^2 ,
\end{equation}
where the expectation is taken over the joint distribution $\mathcal{D}$ of $\left ( \mathbf{x} ,y \right ) $.
In this paper, we study the quadratically parameterized model as $\left \langle \mathbf{x},\mathbf{v}^{\odot 2} \right \rangle$, and assume that the response is generated by $y=\langle \mathbf{x},\mathbf{v}^{*\odot 2}\rangle +\xi $, where $\mathbf{v}^*\in \bbH$ is the ground truth parameter and $\xi$ is the noise independent of the covariate $\mathbf{x}$.
One can generally use the parameterization as $y=\langle\upx,\upv_+^{*\odot2}-\upv_-^{*\odot2}\rangle+\xi$ by the same technique as \citet{woodworth2020kernel}.


We then present a quadratically parameterized predictor with $M$ trainable parameters, which serves as the foundation for our algorithm design and subsequent theoretical analysis. Let $\mathbf{H}=\mathbb{E}[\mathbf{x}\otimes \mathbf{x}] $ denote the data covariance operator. The eigenstructure of $\mathbf{H}$ is given by  $\mathbf{H}=\sum_{i=1}^{\infty}\lambda_i\upu_i\otimes \upu_i$, where $\left \{ \lambda_i \right \} _{i=1}^{\infty }$ are the eigenvalues arranged in non-increasing order, and $\left \{ \mathbf{u}_i \right \} _{i=1}^{\infty }$ are the corresponding eigenvectors of $\mathbf{H}$, which form an orthonormal basis for $\mathbb{H}$. For any vector $\mathbf{y}\in \bbH$, denote $\mathbf{y}_i=\langle\mathbf{y},\mathbf{u}_i\rangle$ as its $i$-th coordinate with respect to the above eigenbasis.
Let $\Pi_M =\sum_{i=1}^M\upe_i\otimes\upu_i$ be a projection operator from $\bbH$ to $\bbR^M$.
We assume that we access only to the top $M$ covariates, denoted by $\Pi_M \mathbf{x} $. Subsequently, we define the predictor with $M$ trainable parameters $\mathbf{v}\in \bbR^M$ as: 
\begin{equation}\nonumber
    f_{\mathbf{v} }\left (  \mathbf{x} \right ) = \left \langle \Pi_M \mathbf{x}, \mathbf{v}^{\odot 2}   \right \rangle ,\ \mathbf{v}\in \bbR^M.
\end{equation}
We also denote the corresponding risk as 
\begin{equation}\label{RM}
    \mathcal{R}_{M}\left ( \mathbf{v}  \right )=\mathcal{R}\left ( \Pi _M\mathbf{v}  \right )=\mathbb{E}_{\left ( \mathbf{x},y  \right ) \sim \mathcal{D} }\left ( \left \langle \Pi_M \mathbf{x}, \mathbf{v}^{\odot 2} \right \rangle -y \right ) ^2. 
\end{equation}

The top $M$ covariates resemble the sketched covariates proposed by \citet{lin2024scaling}. In contrast with linear model, quadratic parameterized model allows discovery of discriminative features through learning towards dominant directions of target. Thus, it models the feature learning mechanism  while ensuring analytical tractability.




\subsection{Data Distribution Assumptions}
We make the following assumptions of data distribution.
\begin{assumption}[Anisotropic Gaussian Data and Sub-Gaussian Noise]\label{ass-d}
    \item[\textbf{[A$_\text{1}$]}] (Independent Gaussian Data) For any $i\ge 1$, the covariate $\upx_i\sim\calN(0,\lambda_i)$. For any $i\neq j$, $\mathbf{x}_i$ and $\mathbf{x}_j$ are independent.
    \item[\textbf{[A$_\text{2}$]}] (Sub-Gaussian Noise) The noise $\xi$ is zero-mean and sub-Gaussian with parameter $\sigma_\xi > 0$, satisfying $\mathbb{E}[e^{\lambda \xi}] \leq e^{\sigma_\xi^2 \lambda^2/2}$ for all $\lambda \in \mathbb{R}$.
\end{assumption}
\begin{remark}\label{remark-1}
The assumption for independent Gaussian data is also used in other analyses for the quadratic model, such as \citet{haochen21shape}, whereas, we allow non-identical covariates.  The independence assumption resembles (is slightly stronger than) the RIP condition, and is widely adopted in feature selection, e.g. \citet{candes2005decoding},  to ensure computational tractability, because in the worst case, finding sparse features is NP-hard \citep{natarajan1995sparse}. To weaken the independence assumption, our analysis may be extended to only assume that the input has a low correlation with some weak regularity condition.
However, the analysis is more involved, and we leave it as a future work. 

\end{remark}
We derive the scaling law for SGD under the following power-law decay assumptions of the covariance spectrum and prior conditions.
\begin{assumption}[Specific Spectral Assumptions]\label{ass-ss}
    \item[\textbf{[A$_\text{3}$]}] (Polynomial Decay Eigenvalues) There exists $\alpha>1$ such that for any $i\ge 1$, the eigenvalue of data covariance $\lambda_i$ satisfy $\lambda_i\asymp i^{-\alpha}$.
    \item[\textbf{[A$_\text{4}$]}] (Source Condition) There exists $\beta>1$ such that the ground truth parameter $\upv^*$ satisfies that for any $i\ge 1$, 
 $\lambda _i\left ( \mathbf{v}_i^{*}  \right ) ^4 \asymp i^{-\beta}$.
\end{assumption}


\begin{remark}
     The polynomial decay of eigenvalues and the ground truth has been widely considered to study the scaling laws for linear models like random feature model \citep{bahri2024explaining, bordelon2024a} and infinite dimensional linear regression \citep{lin2024scaling}, based on empirical observations of NTK spectral decompositions on the realistic dataset \citep{bahri2024explaining,bordelon2021learning}. It is used in slope functional regression \citep{10.1214/009053606000000830}, and also analogous to the capacity and source conditions in RKHS \citep{wainwright2019high,bietti2019inductive}. Given that the optimization trajectory of linear models is intrinsically aligned with the principal directions of the covariate feature space, this alignment motivates us to adopt analogous assumptions for our model, thereby enabling direct comparison of learning dynamics through feature space decomposition.
\end{remark}

\subsection{Algorithm}
We employ SGD with a geometric decay of step size to train the quadratically parameterized predictor $f_{\mathbf{v} }$ to minimize the objective \eqref{RM}. Starting at $\mathbf{v}^0$, the iteration of parameter vector $\mathbf{v}\in \mathbb{R}^M$ can be represented explicitly as follows:
\begin{equation}\nonumber
\small
\begin{aligned}
    \mathbf{v}^t= &\mathbf{v}^{t-1}-\eta _t\left ( f_{\mathbf{v}^{t-1}} \left ( \mathbf{x}^t \right )-y^t \right )\left (\mathbf{v}^{t-1}\odot \Pi_M\mathbf{x}^{t}    \right ) \\
=  &\mathbf{v}^{t-1}-\eta _t\left (\left \langle \left ( \mathbf{v}^{t-1} \right )^{\odot 2},\Pi_M\mathbf{x}^t     \right \rangle -y^t \right )\left(\mathbf{v}^{t-1}\odot \Pi_M\mathbf{x}^{t}\right ),
\end{aligned}
\end{equation}
for $t=1,\dots,T$, where $\left\{\left ( \mathbf{x}^t,y^t\right)\right \}_{t=1}^{T}$ are independent samples from distribution $\mathcal{D}$ and $\left\{ \eta_t\right\}_{t=1}^{T}$ are the step sizes.

We use the tail geometric decay of step size schedule as describe in \citet{wu2022last}. The step size remains constant for the first $T_1+h$ iterations where $h$ denotes the middle phase length and $T_1:=\lfloor(T-h)/\log(T-h)\rfloor$. Then the step size halves every $T_1$ steps. Specifically, the decay schedule of step size is given by:
\begin{equation}\nonumber
\begin{aligned}
\eta_t=\begin{cases}
    \eta,\quad &0\le t\le T_1+h,
    \\
    \eta/2^l,\quad &T_1+h<t\le T,\, \, l=\left \lfloor (t-h)/T_1 \right\rfloor,
\end{cases}
\end{aligned}
\end{equation}

The integration of warm-up with subsequent learning rate decay has become a prevalent technique in deep learning optimization \citep{goyal2017accurate}. Within the decay stage, geometric decay schedules have demonstrated superior empirical efficiency compared to polynomial alternatives, as geometric decay achieves adaptively balancing aggressive early-stage learning with stable late-stage refinement.  \citep{ge2019step}. Motivated by these established advantages, our step size schedule design strategically combines an initial constant stage that enables rapid convergence of dominant feature coordinates to the vicinity of ground truth parameters with a subsequent geometrically decaying stage that ensures fast convergence through progressively refined updates. This hybrid approach inherits the computational benefits of geometric decay while maintaining the stability benefits of warm-up initialization, creating synergistic effects that polynomial decay schedules cannot achieve \citep{bubeck2015convex}.



\begin{algorithm}[t]
	\caption{Stochastic Gradient Descent (SGD)}
    \label{SGD}
	\begin{algorithmic}
        \STATE {\bfseries Input:} Initial weight $\upv_0=\Omega(\min\{1,M^{-(\beta-\alpha)/4}\})\mathbf{1}_M$, initial step-size $\eta$, total sample size $T$, middle phase length $h$, decaying phase length $T_1=\left\lfloor (T-h)/\log(T-h)\right\rfloor$.
		\WHILE{$t\leq T$}
		\IF{$t>h$ and $(t-h)$ mod $T_1=0$}
		\STATE $\eta\leftarrow\eta/2$.
		\ENDIF
		\STATE Sample a fresh data   $(\upx^{t+1},y^{t+1})\sim\mathcal{D}$.
		\STATE $\upv^{t+1}\leftarrow\upv^t-\eta\nabla l_M^t(\upv^t)$.
		\ENDWHILE
	\end{algorithmic}
\end{algorithm}

The algorithm is summarized as Algorithm~\ref{SGD}. 
For simplicity,  at iteration $t$, denote $l^t_M(\upv)=\frac{1}{2}\left(f_{\upv}(\upx^{t+1})-y^{t+1}\right)^2$. 
The initial point $\mathbf{v}_0$ and the initial step size $\eta$ are hyperparameters of Algorithm~\ref{SGD}, and they play a crucial role in determining whether the algorithm can escape saddle points and converge to the optimal solution. 
Starting at an initial point near zero, the constant step size stage allows the algorithm to adaptively extract the important features without explicitly setting the truncation dimensions while keeping the remaining variables close to zero. The subsequent geometric decay of the step size guarantees fast convergence to the ground truth.  

\section{Convergence Analysis}
The upper bound of last iterate instantaneous risk for Algorithm \ref{SGD} can be summarized by the following theorem, which provides the guarantee of global convergence for last iterate SGD with tail geometrically decaying stepsize and a sufficiently small initialization. For simplicity, we define 
$\sigma:=(\sigma_{\xi}^2+\sum_{i=M+1}^{\infty}\lambda_i(\upv_i^*)^4)^{1/2}$.
\begin{theorem}\label{theorem-3}
    Under Assumptions \ref{ass-d} and \ref{ass-ss}, we consider a predictor trained by Algorithm \ref{SGD} with total sample size $T$ and middle phase length $h=\lceil T/\log(T)\rceil$. 
    Let $D\asymp\min\{T^{1/\max\{\beta,(\alpha+\beta)/2\}},M\}$ 
    and $\eta\asymp D^{\min\{0,(\alpha-\beta)/4\}}$. The error of output can be bounded from above by
    \begin{equation}\label{risk-main}
        \begin{split}
        \mathcal{R}_M(\upv^T)-\bbE[\xi^2]\lesssim&\underbrace{\frac{1}{M^{\beta-1}}}_{\mathrm{approximation}}+\underbrace{\frac{\sigma^2D}{T}}_{\mathrm{variance}}+\underbrace{\frac{D}{T}+\frac{1}{D^{\beta-1}}\mathds{1}_{D<M}}_{\mathrm{bias}},
        \end{split}
    \end{equation}
    with probability at least 0.95.
\end{theorem}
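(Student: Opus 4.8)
The plan is to decompose the excess risk $\mathcal{R}_M(\upv^T)-\bbE[\xi^2]$ into an approximation term (coming from the truncation to the top $M$ covariates, which contributes $\sum_{i>M}\lambda_i(\upv_i^*)^4 \asymp M^{-(\beta-1)}$ by Assumption~\ref{ass-ss}) and an estimation term $\mathbb{E}\langle\Pi_M\mathbf{x},(\upv^T)^{\odot 2}-(\upv^*_{1:M})^{\odot 2}\rangle^2 = \sum_{i=1}^M \lambda_i\big((\upv_i^T)^2-(\upv^*_i)^2\big)^2$. The bulk of the work is to control this coordinate-wise sum, which I would split into the effective-dimension block $i\le D$ and the tail block $D<i\le M$, and to track each coordinate $\upv_i^t$ individually throughout the run. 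The first step is to set up the per-coordinate SGD recursion: writing $g_i^t = (\upv_i^t)^2-(\upv_i^*)^2$, one gets $\upv_i^{t+1}=\upv_i^t(1-\eta_t \upx_i^t(\langle(\upv^t)^{\odot2},\Pi_M\upx^t\rangle-y^t))$, and I would isolate the ``signal'' drift term $-\eta_t\lambda_i g_i^t \cdot \upv_i^t$ from the cross-coordinate coupling and the noise $\xi$-driven term, treating the latter two as perturbations via concentration (using the Gaussian/sub-Gaussian Assumption~\ref{ass-d} and the independence across coordinates, which kills cross terms in expectation).

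The analysis then follows the two-stage structure sketched in the introduction. \emph{Phase I (adaptation, the first $T_1+h$ steps at constant step size $\eta$):} I would show inductively that each coordinate with index in the effective set $\mathcal{S}=\{1,\dots,D\}$ grows multiplicatively — since $\upv_i^0$ is a small positive constant and the drift $-\eta\lambda_i g_i^t\upv_i^t$ is positive while $(\upv_i^t)^2<(\upv_i^*)^2$ — and reaches a constant-factor neighborhood of $\upv_i^*$ within the allotted $h=\lceil T/\log T\rceil$ steps, while simultaneously every coordinate $i>D$ stays bounded by a constant multiple of $\max\{\upv_i^0,\upv_i^*\}$, so that the tail's contribution to the risk is already $\lesssim D^{-(\beta-1)}\mathds{1}_{D<M}$ plus a term controlled by the initialization scale (this is exactly where the hypothesis $\upv_0=\Omega(\min\{1,M^{-(\beta-\alpha)/4}\})\mathbf{1}_M$ and the step-size choice $\eta\asymp D^{\min\{0,(\alpha-\beta)/4\}}$ enter: the growth rate of coordinate $i$ is $\asymp \eta\lambda_i(\upv_i^*)^2\asymp \eta \sqrt{\lambda_i}\cdot i^{-\beta/2}$, and one needs the $D$-th coordinate to just barely make it, which pins down $D\asymp\min\{T^{1/\max\{\beta,(\alpha+\beta)/2\}},M\}$). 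The key mechanism — and the source of the advantage over the linear model — is that once $\upv_i^t\approx\upv_i^*$, the effective contraction rate for $g_i$ becomes $\eta_t\lambda_i(\upv_i^*)^2$ rather than $\eta_t\lambda_i$, i.e.\ it is automatically rescaled by $(\upv_i^*)^2$. \emph{Phase II (estimation, geometrically decaying step size):} on the good event from Phase I, I would run a bias-variance recursion for $\sum_{i\le D}\lambda_i (g_i^t)^2$: the bias halves (roughly) each time the step size halves over $T_1$ steps, giving after $\log(T-h)$ halvings a bias of order $D/T + D^{-(\beta-1)}\mathds{1}_{D<M}$, while the variance saturates at $\sigma^2 D/T$ — here $\sigma^2$ absorbs both the noise variance $\sigma_\xi^2$ and the mis-specification tail $\sum_{i>M}\lambda_i(\upv_i^*)^4$. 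To push the bias term down to the claimed rate I expect to need the auxiliary-sequence device alluded to in the Related Works section (comparing $\upv_i^t$ against a deterministic reference trajectory and bounding the gap).

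The main obstacle, I expect, is \emph{maintaining the coordinate-wise invariants simultaneously for all $M$ coordinates through the non-convex, coupled dynamics} — in particular, showing that the cross-coordinate coupling term $\upv_i^t\langle(\upv^t)^{\odot2}-(\upv^*)^{\odot2},\Pi_M\upx^t\rangle$ does not destabilize any single coordinate, given that the $\lambda_i$ (hence the per-step noise magnitudes $\eta\lambda_i$ and the signal scales $(\upv_i^*)^2$) range over several orders of magnitude. This anisotropy means a uniform bound is too lossy; one needs a weighted potential (presumably something like $\sum_i \lambda_i (g_i^t)^2$ together with $\ell_\infty$-type control $\max_i |\upv_i^t/\upv_i^*|$) and a careful union bound / martingale argument over the $T$ steps, handled separately in the two phases. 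I would also need to verify that the saddle near $\upv=0$ is escaped — this is where the strictly positive, not-too-small initialization $\upv_0$ is essential — and that the ``tolerable'' coordinates never grow enough to spoil the variance bookkeeping. Routine sub-Gaussian concentration (for sums of the form $\sum_t \eta_t \upx_i^t \xi^t$ and quadratic forms in $\upx^t$) I would invoke without detailed computation, citing the relevant tail bounds.
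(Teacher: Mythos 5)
Your proposal follows essentially the same route as the paper's proof: decompose the risk into approximation plus estimation, then run a two-phase analysis — Phase I showing via per-coordinate (capped-coupling) martingale concentration that the first $D$ coordinates grow to a constant-factor neighborhood of $\upv^*$ while the tail stays bounded, and Phase II treating the subsequent dynamics as SGD on a linear model with features rescaled by $\upv^*$ and running a bias–variance recursion under the tail-geometric step decay, with auxiliary truncated sequences keeping all coordinates in their invariant regions. The only notable bookkeeping difference is that the paper ends Phase I at $T_1$ and places the middle constant-step window $h$ inside Phase II, where it supplies the $(\upI-\eta_0\widehat{\upH})^{2h}$ bias contraction, but this does not change the substance of your plan.
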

Our bound exhibits two key properties: (1) Dimension-free: Eq.~\eqref{risk-main} depends on the effective dimension $D$ rather than ambient dimension $M$. (2) Problem-adaptive: $D$ is governed by the spectral structure of $\diag\{\lambda_1(\upv_1^*)^2,\cdots,\lambda_M(\upv_M^*)^2\}$, which is induced by the multiplicative coupling between the data covariance matrix and optimal solution determined by the problem. 

The risk bound in Eq.~\eqref{risk-main} consists of three components: (1) approximation error term, (2) bias error term originating from $\upv^{T_1}-\upv_{1:M}^*$ at iteration $T_1=\lceil(T-h)/\log(T-h)\rceil$, and (3) variance error term stemming from the multiplicative coupling between additive noise $\xi+\sum_{i\geq M+1}\upx_i(\upv_i^*)^2$ and the diagonal matrix $\diag\{\upv_{1:M}^*\}\in\bbR^{M\times M}$. 

For sufficiently large $M$, Corollary \ref{corollary-1} establishes the convergence rate for Algorithm \ref{SGD} via Theorem \ref{theorem-3}.
\begin{corollary}\label{corollary-1}
    Under the setting of the parameters in Theorem \ref{theorem-3}, if $T^{1/\max\{\beta,(\alpha+\beta)/2\}}\asymp D<M$, we have
    \begin{align}
        \begin{cases}
            \mathcal{R}_M(\upv^T)-\bbE[\xi^2]\lesssim \frac{1}{M^{\beta-1}}+\frac{\sigma^2+1}{T^{1-1/\beta}}, & \text{ if }\beta\ge \alpha>1,
            \\
            \mathcal{R}_M(\upv^T)-\bbE[\xi^2]\lesssim \frac{1}{M^{\beta-1}}+\frac{\sigma^2+1}{T^{(2\beta-2)/(\alpha+\beta)}}, & \text{ if } \alpha>\beta >1,\notag
        \end{cases}
    \end{align}
    with probability at least 0.95.
\end{corollary}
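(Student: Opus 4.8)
The plan is to establish Corollary~\ref{corollary-1} as a direct specialization of Theorem~\ref{theorem-3}, substituting the regime $D \asymp T^{1/\max\{\beta,(\alpha+\beta)/2\}} < M$ into the master bound~\eqref{risk-main} and simplifying the four terms under the two spectral cases. First I would record that in this regime the indicator $\mathds{1}_{D<M}$ is active, so the bias contribution is $D/T + D^{-(\beta-1)}$, and the total bound reads
\begin{equation}\nonumber
\mathcal{R}_M(\upv^T)-\bbE[\xi^2] \lesssim \frac{1}{M^{\beta-1}} + \frac{\sigma^2 D}{T} + \frac{D}{T} + \frac{1}{D^{\beta-1}}.
\end{equation}
The approximation term $M^{-(\beta-1)}$ is untouched and carried through to both cases. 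It then remains to show that the remaining three terms $\frac{\sigma^2 D}{T} + \frac{D}{T} + \frac{1}{D^{\beta-1}}$ all collapse to the claimed rate $(\sigma^2+1)\,T^{-1+1/\beta}$ when $\beta \ge \alpha$, and to $(\sigma^2+1)\,T^{-(2\beta-2)/(\alpha+\beta)}$ when $\alpha > \beta$.

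For the case $\beta \ge \alpha > 1$, we have $\max\{\beta,(\alpha+\beta)/2\} = \beta$, so $D \asymp T^{1/\beta}$. Then $\frac{D}{T} \asymp T^{1/\beta - 1} = T^{-(1-1/\beta)}$ and $\frac{1}{D^{\beta-1}} \asymp T^{-(\beta-1)/\beta} = T^{-(1-1/\beta)}$, so both match; multiplying the first by $\sigma^2$ gives the $\sigma^2 T^{-(1-1/\beta)}$ piece, and combining yields $(\sigma^2+1)T^{-(1-1/\beta)}$ up to constants. For the case $\alpha > \beta > 1$, we have $\max\{\beta,(\alpha+\beta)/2\} = (\alpha+\beta)/2$ since $\alpha>\beta \Rightarrow (\alpha+\beta)/2 > \beta$, so $D \asymp T^{2/(\alpha+\beta)}$. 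Then $\frac{D}{T} \asymp T^{2/(\alpha+\beta)-1} = T^{-(\alpha+\beta-2)/(\alpha+\beta)}$ and $\frac{1}{D^{\beta-1}} \asymp T^{-2(\beta-1)/(\alpha+\beta)} = T^{-(2\beta-2)/(\alpha+\beta)}$; one checks $\alpha+\beta-2 \ge 2\beta-2 \iff \alpha \ge \beta$, which holds, so the dominant (larger) term is $T^{-(2\beta-2)/(\alpha+\beta)}$, and again folding in the $\sigma^2$ factor on $\frac{\sigma^2 D}{T}$ (which is even smaller than $T^{-(2\beta-2)/(\alpha+\beta)}$) gives the stated $(\sigma^2+1)T^{-(2\beta-2)/(\alpha+\beta)}$.

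The argument is essentially bookkeeping once Theorem~\ref{theorem-3} is in hand; the only genuine content is verifying which of $D/T$ and $D^{-(\beta-1)}$ dominates in each regime and checking that the step-size choice $\eta \asymp D^{\min\{0,(\alpha-\beta)/4\}}$ and the hypothesis $D < M$ are both consistent with the assumed scaling $D \asymp T^{1/\max\{\beta,(\alpha+\beta)/2\}}$. I do not anticipate a serious obstacle here — the main subtlety, if any, is making sure the case split on $\max\{\beta,(\alpha+\beta)/2\}$ is handled at the boundary $\alpha = \beta$ (where both formulas agree, since $T^{-(1-1/\beta)} = T^{-(2\beta-2)/(2\beta)} = T^{-(2\beta-2)/(\alpha+\beta)}$), and that the probability $0.95$ and the polylogarithmic factors hidden in $\lesssim$ are inherited verbatim from Theorem~\ref{theorem-3}.
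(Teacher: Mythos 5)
Your proposal is correct and is essentially the same (and essentially the only) argument: the paper states Corollary~\ref{corollary-1} as a direct specialization of Theorem~\ref{theorem-3}, obtained by substituting $D\asymp T^{1/\max\{\beta,(\alpha+\beta)/2\}}<M$ into Eq.~\eqref{risk-main} and comparing the exponents of $D/T$ and $D^{-(\beta-1)}$ in the two regimes, exactly as you do. Your case analysis (including the dominance check $\alpha+\beta-2\ge 2\beta-2$ and the agreement of the two rates at $\alpha=\beta$) is accurate, so there is nothing to add.
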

Corollary \ref{corollary-1} demonstrates that under Assumptions \ref{ass-d} and \ref{ass-ss}, when the model size $M$ is sufficiently large, the last iterate instantaneous risk of Algorithm \ref{SGD} exhibits distinct behaviors in two regimes: (I) $\beta\geq\alpha>1$ and (II) $\alpha\geq\beta>1$. 
We consider the total computational budget
 as $B=MT$, reflecting that Algorithm \ref{SGD} queries $M$-dimensional gradients $T$ times. 

\noindent \textbf{Given $B$: } If $\beta>\alpha>1$, the optimal last iterate risk is attained with parameter configurations: $M=\widetilde{\Omega}(B^{\frac{1}{1+\beta}})$ and $T=\widetilde{\Omega}(B^{\frac{\beta}{1+\beta}})$. If $\alpha\geq\beta>1$, the optimal last iterate risk is attained with parameter configurations: $M=\widetilde{\Omega}(B^{\frac{1}{1+(\alpha+\beta)/2}})$ and $T=\widetilde{\Omega}(B^{\frac{(\alpha+\beta)/2}{1+(\alpha+\beta)/2}})$.

\noindent \textbf{Given Total Sample Size $T$: } So as long as  $M\gtrsim T^{1/\max\{\beta,(\alpha+\beta)/2\}}$, Corollary \ref{corollary-1} implicates that the risk can be effectively reduced by increasing the model size $M$ as much as possible.



For smaller $M$, Corollary \ref{corollary-2} provides the convergence rate for Algorithm \ref{SGD} through Theorem \ref{theorem-3}. 
\begin{corollary}\label{corollary-2}
    Under the setting of the parameters in Theorem \ref{theorem-3}, if $M\lesssim T^{1/\max\{\beta,(\alpha+\beta)/2\}}$, we have
    \begin{align}
        \mathcal{R}_M(\upv^T)-\bbE[\xi^2]\lesssim\frac{1}{M^{\beta-1}}+\frac{(\sigma^2+1)M}{T},\notag
    \end{align}
    with probability at least 0.95.
\end{corollary}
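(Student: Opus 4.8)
The plan is to simply substitute the constraint $M \lesssim T^{1/\max\{\beta,(\alpha+\beta)/2\}}$ into the master bound \eqref{risk-main} of Theorem \ref{theorem-3} and simplify term by term. Recall that $D \asymp \min\{T^{1/\max\{\beta,(\alpha+\beta)/2\}}, M\}$; under the stated regime the minimum is achieved by $M$, so $D \asymp M$. This is the single observation that drives the whole argument: the effective dimension saturates at the ambient model size, so there is no ``unused'' capacity.

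With $D \asymp M$ in hand, I would go through the three groups of terms in \eqref{risk-main}. The approximation term $1/M^{\beta-1}$ is untouched and passes through directly. The variance term $\sigma^2 D / T$ becomes $\sigma^2 M / T$. For the bias group, the term $D/T$ becomes $M/T$, and the indicator term $\frac{1}{D^{\beta-1}}\mathds{1}_{D<M}$ vanishes identically because $D \asymp M$ means $D < M$ fails (more carefully, $D$ and $M$ agree up to the universal constant hidden in $\asymp$, and one absorbs the boundary case into constants, so the indicator contributes nothing beyond what $1/M^{\beta-1}$ already gives). Collecting what remains yields
\begin{equation}\nonumber
\mathcal{R}_M(\upv^T) - \bbE[\xi^2] \lesssim \frac{1}{M^{\beta-1}} + \frac{\sigma^2 M}{T} + \frac{M}{T} = \frac{1}{M^{\beta-1}} + \frac{(\sigma^2+1)M}{T},
\end{equation}
which is exactly the claimed bound; the probability $0.95$ and the parameter choices $h = \lceil T/\log T\rceil$, $\eta \asymp D^{\min\{0,(\alpha-\beta)/4\}}$ are inherited verbatim from Theorem \ref{theorem-3}.

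I do not expect a genuine obstacle here, since this is a specialization of an already-proven theorem. The only point requiring a line of care is the handling of the indicator $\mathds{1}_{D<M}$ and the precise meaning of $D \asymp M$: one should note that Theorem \ref{theorem-3} defines $D$ as the minimum with a hidden constant, so when $M \lesssim T^{1/\max\{\beta,(\alpha+\beta)/2\}}$ one genuinely has $D \asymp M$, and the $\frac{1}{D^{\beta-1}}\mathds{1}_{D<M}$ contribution is at most a constant multiple of $\frac{1}{M^{\beta-1}}$ regardless of whether the strict inequality happens to hold, so it may be folded into the approximation term. The rest is bookkeeping.
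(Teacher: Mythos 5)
Your proposal is correct and coincides with the paper's (implicit) derivation: Corollary \ref{corollary-2} is obtained exactly by noting that in the regime $M\lesssim T^{1/\max\{\beta,(\alpha+\beta)/2\}}$ the effective dimension satisfies $D\asymp M$, substituting this into the bound \eqref{risk-main} of Theorem \ref{theorem-3}, and folding the residual $\frac{1}{D^{\beta-1}}\mathds{1}_{D<M}$ contribution into the $\frac{1}{M^{\beta-1}}$ approximation term. Your care with the indicator and the hidden constants in $\asymp$ is precisely the only point that needs attention, and you handle it correctly.
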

The risk bound $\calR_M(\cdot)$ in Corollary \ref{corollary-2} decreases monotonically with increasing $M$.  So  as long as  $M\lesssim T^{1/\max\{\beta,(\alpha+\beta)/2\}}$, our analysis implies to increase the model size $M$ until reaching the  computational budget.


\begin{remark}
The information-theoretic lower bound acts as a bottleneck for the best estimation issues. For any (random) algorithm $\hat{\mathbf{v}}$ based on i.i.d. data $\left \{ \left ( \mathbf{x}_i,y_i  \right )  \right \} _{i=1}^T$ from the true parameter $\mathbf{v}_* \in \mathcal{V}$, the worst-case excess risk convergence rate is limited by this bound. 
The scaling law, however, describes the excess risk trajectory of a specific algorithm in a given context during training. Under the covariate distribution in Assumptions~\ref{ass-d} and~\ref{ass-ss}, and with the true parameter set meeting Assumption~\ref{ass-ss} regularity constraints, prior work~\citep{zhang2024optimality} established the info-theoretic lower bound as $T^{-\frac{1}{\beta}}$. Our analysis shows two distinct regimes: When $\alpha\le \beta$, SGD in linear and quadratic models hits the lower bound, proving statistical optimality. When $\alpha> \beta$, SGD in both misses the bound, yet the quadratic model has better excess risk than the linear one. This shows a capacity gap between the two model types, highlighting the importance of feature learning and model adaptation.

\end{remark}

\begin{remark}
    To demonstrate the near-optimality of our upper bound, we also provide the algorithmic {\bf \emph{lower bound}} of the excess risk for Algorithm \ref{SGD} in Appendix \ref{sec-lower}. The lower bound of the excess risk for Algorithm \ref{SGD} {\bf \emph{matches}} the upper bound in Theorem \ref{theorem-3},  up to logarithmic factors. 
    The construction of the algorithmic lower bound is based on the construction of appropriate submartingales (or supermartingales). 
    Our lower bound analysis reveals that for coordinates $j\geq\widetilde{\mathcal{O}}(D)$, the slow ascent rate inherently prevents $\upv_j^t$ from attaining close proximity to the optimal solution $\upv_j^*$ upon algorithmic termination. This phenomenon induces bias error's scaling as $\widetilde{\Omega}(D^{-\beta+1})$, matching our upper bound characterization, up to logarithmic factors. The bound also implies that the convergence rates for SGD is inherently worse than  information-theoretic lower bounds when $\alpha >\beta$. 
\end{remark}

\section{Proof Sketch of Theorem \ref{theorem-3}} \label{sec:proof sketch}
In this section, we introduce the proof techniques sketch of our main result Theorem \ref{theorem-3}. Specifically, the dynamics and analysis of SGD can be divided into two phases. In \textbf{Phase I (Adaptation)}, SGD autonomously truncates the top $D$ coordinates as $\mathcal{S}$ without requiring explicit selection of $D$. Algorithm \ref{SGD} can converge these coordinates to a neighborhood of their optimal solutions within $T_1$ iterations with high probability, and realize the trade-off between two terms $\frac{D}{T}$ and $\frac{1}{D^{\beta -1}}$ in the risk bound through precise choice of step size. 
In \textbf{Phase II (Estimation)}, global convergence to the risk minimizer is achieved over $T_2$ iterations, which can be approximated as SGD with geometrically decaying step sizes applied to a linear regression problem in the reparameterized feature space $\Pi_M\upx\odot\upv_{1:M}^*$. 


\subsection{Phase I: }
During the ``adaptation" phase, Algorithm \ref{SGD} implicitly identifies the first $D$ coordinates as the effective dimension set $\mathcal{S} = [1:D]$. For each $i \in \mathcal{S}$, $\upv_i^{T_1}$ converges with high probability to a rectangular neighborhood centered at $\upv_i^*$ with half-width $c_1\upv_i^*$. Here, $c_1 \in (0,1)$ denotes a scaling constant. For each $i\in[1:M]\setminus\mathcal{S}$, $\upv_i^{T_1}$ remains bounded above by $\frac{3}{2}\upv_i^*$ with high probability.
\begin{theorem}\label{phase-I-informal}
    Under Assumption \ref{ass-d}, consider a predictor trained via Algorithm \ref{SGD} with initialization $\upv^0$. Let the step size $\eta\leq\eta(D, c_1)$, for the effective dimension $D$ and the scaling constant $c_1 \in (0,1)$. 
    The iteration number $T_1$ requires:
    \begin{align}
        T_1\in\begin{cases}
        \left[T_l(D, c_1), T_u(D, c_1)\right], \quad &\text{ if }D < M,\notag
        \\
        \left[T_l(M, c_1),\infty\right), \quad &\text{ otherwise}.
    \end{cases}
    \end{align}
    Then, with high probability, we have
    \begin{align}\label{proof-sketch-1}
        \begin{cases}
            \upv_i^{T_1}\in\left[(1-c_1)\upv_i^{*}, (1+c_1)\upv_i^{*}\right], &\text{ if }i\in\mathcal{S},
            \\
            \upv_i^{T_1}\in\left[0, \frac{3}{2}\upv_i^*\right], &\text{ otherwise }.
        \end{cases}
    \end{align}
\end{theorem}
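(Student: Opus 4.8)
# Proof Proposal for Theorem \ref{phase-I-informal}

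\textbf{Setup and coordinate decoupling.} The plan is to exploit the near-independence of the coordinates under Assumption \ref{ass-d}. Writing the SGD update coordinatewise, $\upv_i^{t+1} = \upv_i^t - \eta \upx_i^{t+1}\upv_i^t\big(\sum_{j}\upx_j^{t+1}((\upv_j^t)^2-(\upv_j^*)^2) - \xi^{t+1}\big)$, I would first isolate the ``signal'' drift for coordinate $i$ from the ``cross-talk'' and noise terms. Taking a conditional expectation given $\mathcal{F}_t$ and using $\bbE[(\upx_i^{t+1})^2]=\lambda_i$ and independence across coordinates, the drift is $\bbE[\upv_i^{t+1}\mid\mathcal{F}_t] = \upv_i^t\big(1 - \eta\lambda_i((\upv_i^t)^2-(\upv_i^*)^2) - \eta\sum_{j\ne i}\lambda_j((\upv_j^t)^2-(\upv_j^*)^2)\big)$. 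The key structural observation is that this is (up to the cross-talk and martingale correction) a one-dimensional logistic-type recursion whose fixed point is $\upv_i^*$: when $\upv_i^t < \upv_i^*$ the multiplier exceeds $1$ and the coordinate grows geometrically; when $\upv_i^t$ overshoots past $\upv_i^*$ it contracts. I would define the effective dimension $D$ precisely as the largest index for which the initial geometric growth rate $1 + \eta\lambda_i(\upv_i^*)^2 \asymp 1 + \eta i^{-(\alpha+\beta)/2}$ is fast enough that, within $T_1$ steps, $\upv_i^{T_1}$ can traverse the distance from $\upv_i^0 = \Omega(\min\{1, M^{-(\beta-\alpha)/4}\})$ up to the band around $\upv_i^*$; this is exactly the ``$T_l(D,c_1)$'' lower requirement on $T_1$, while $T_u(D,c_1)$ is the constraint that coordinates just outside $\mathcal{S}$ have not yet grown past $\tfrac32\upv_i^*$.

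\textbf{Induction via a good event and stopping times.} I would run an inductive argument over $t = 0,1,\dots,T_1$ maintaining the invariant that all coordinates stay in the regions of \eqref{proof-sketch-1} (with a slightly inflated constant during the induction, tightened at the end), so that the cross-talk term $\eta\sum_{j\ne i}\lambda_j((\upv_j^t)^2-(\upv_j^*)^2)$ is uniformly $o(\eta\lambda_i(\upv_i^*)^2)$ for $i\in\mathcal{S}$ — here I would use that $\sum_j \lambda_j(\upv_j^*)^2 = \sum_j j^{-(\alpha+\beta)/2}$ converges (since $\alpha+\beta>2$) and that, on the good event, $(\upv_j^t)^2 \lesssim (\upv_j^*)^2$, so the whole cross-talk is a small constant times $\eta\lambda_1(\upv_1^*)^2$, controlled by choosing $\eta \le \eta(D,c_1)$ small. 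For each coordinate $i\in\mathcal{S}$ I would split $[0,T_1]$ into a ``growth'' sub-phase (where a supermartingale-type comparison with the deterministic logistic recursion, plus an Azuma/Freedman bound on the martingale part $\sum_s \eta\upv_i^s(\upx_i^{s+1}{}^2-\lambda_i)(\cdots)$, shows $\upv_i^t$ reaches $(1-c_1)\upv_i^*$ within the prescribed number of steps with high probability) and a ``stability'' sub-phase (where one shows that once inside the band $[(1-c_1)\upv_i^*,(1+c_1)\upv_i^*]$ the coordinate cannot exit it before $T_1$, again by a stopping-time / maximal-inequality argument on the fluctuation). For $i\notin\mathcal{S}$ the argument is one-sided: bound $\upv_i^t$ from above by comparison with the recursion $\upv_i^{t+1}\le\upv_i^t(1+\eta\lambda_i(\upv_i^*)^2 + \text{small})$ plus a supermartingale fluctuation, and check that $(1+\eta\lambda_i(\upv_i^*)^2)^{T_1}\upv_i^0 \le \tfrac32\upv_i^*$ by the definition of $T_u$ and the choice of $\eta$; the martingale part here also needs the $\xi$-noise contribution $\sum_s \eta\upv_i^s\upx_i^{s+1}\xi^{s+1}$, which is handled by sub-Gaussianity of $\xi$ (Assumption [A$_2$]) since $\upv_i^s$ stays small.

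\textbf{Controlling the fluctuations.} The quantitative heart is bounding the martingale increments. For coordinate $i$, after subtracting the conditional drift, the increment is $\eta\upv_i^t\big(-(\upx_i^{t+1}{}^2-\lambda_i)((\upv_i^t)^2-(\upv_i^*)^2) - \sum_{j\ne i}\upx_i^{t+1}\upx_j^{t+1}((\upv_j^t)^2-(\upv_j^*)^2) + \upx_i^{t+1}\xi^{t+1}\big)$. On the good event each factor $\upv_j^t,\upv_j^*$ is controlled, so the conditional variance of the increment is $\lesssim \eta^2\lambda_i(\upv_i^t)^2\big(\lambda_i(\upv_i^*)^4 + \sum_{j\ne i}\lambda_j(\upv_j^*)^4 + \sigma_\xi^2\big) \asymp \eta^2\lambda_i(\upv_i^t)^2\sigma^2$ (up to constants), and the summand is sub-exponential with a comparable parameter. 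Summing over $t\le T_1$ and invoking Freedman's inequality gives a deviation of order $\eta\sqrt{\lambda_i T_1}\cdot\sigma\cdot(\max_t \upv_i^t)$; one then checks, against the geometric-growth lower bound on the drift, that this deviation is a $c_1$-fraction of $\upv_i^*$ precisely when $\eta$ and $T_1$ lie in the stated ranges. \textbf{The main obstacle I anticipate} is making the induction self-consistent uniformly over all $M$ coordinates simultaneously: the good event is an intersection over $i\in[1:M]$ and over all times, so one needs the per-coordinate failure probabilities to be summably small (a union bound costing a $\log M$ or $\log T$ factor, absorbed into ``high probability''), and one must ensure the cross-talk bound used in the induction for coordinate $i$ only depends on the \emph{already-established} invariants for the other coordinates — i.e. the induction must be on $t$ globally rather than on $i$, with all coordinates advanced together, which is why a single grand stopping time $\tau = \inf\{t : \text{some coordinate leaves its region}\}$ and a proof that $\bbP(\tau \le T_1)$ is small is the cleanest route. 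Handling the anisotropy — coordinates with very different growth rates $\eta i^{-(\alpha+\beta)/2}$ coexisting, some needing the full $T_1$ to converge while the tail must stay frozen — is what forces the two-sided threshold on $T_1$ and is the delicate balancing act; this is the step that I expect to require the most careful bookkeeping.
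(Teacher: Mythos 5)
Your overall architecture (a grand stopping time / good event, drift plus martingale concentration, a growth sub-phase and a stability sub-phase for $i\in\mathcal{S}$, a one-sided bound for $i\notin\mathcal{S}$, and union bounds over coordinates and times) matches the paper's strategy, which implements the same ideas through a truncated control sequence, a $\upb$-capped coupling, and supermartingale/submartingale concentration over traversal windows. However, two of your concrete steps would fail as written. First, your drift identity is wrong: by the independence in [A$_1$], $\bbE[\upx_i^{t+1}\upx_j^{t+1}]=0$ for $j\neq i$, so the conditional mean of the update contains \emph{no} cross-talk term; it is exactly $\upv_i^t\bigl(1-\eta\lambda_i((\upv_i^t)^2-(\upv_i^*)^2)\bigr)$ up to the noise truncation. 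This matters because the claim you then lean on — that $\eta\sum_{j\neq i}\lambda_j\bigl((\upv_j^t)^2-(\upv_j^*)^2\bigr)$ is uniformly $o\bigl(\eta\lambda_i(\upv_i^*)^2\bigr)$ for all $i\in\mathcal{S}$ — is false at the weak end of $\mathcal{S}$: on the good event that sum is of constant order (since $\sum_j\lambda_j(\upv_j^*)^2$ converges to a constant), while $\lambda_i(\upv_i^*)^2\asymp i^{-(\alpha+\beta)/2}$ is as small as $D^{-(\alpha+\beta)/2}$ for $i\approx D$. If the drift really contained that term, your contraction argument for those coordinates would collapse. The correct (and paper's) treatment is to route the cross terms entirely into the zero-mean fluctuation, where they contribute $\calM^2(\upb)+\sigma^2$ to the per-step sub-Gaussian parameter — which your later variance bound in fact does, so the fix is local but necessary.

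Second, your mechanism for coordinates outside $\mathcal{S}$ and for the upper constraint $T_u$ is not the right one. You propose to verify $\bigl(1+\eta\lambda_i(\upv_i^*)^2\bigr)^{T_1}\upv_i^0\le\tfrac32\upv_i^*$ "by the definition of $T_u$," but for coordinates just beyond $D$ this deterministic growth bound fails throughout the admissible range of $T_1$: such coordinates can (and may) grow all the way to $\approx\upv_i^*$ within $T_1$, which is perfectly consistent with the claimed conclusion $\upv_i^{T_1}\le\tfrac32\upv_i^*$ but breaks your comparison argument, and imposing your check would shrink the window for $T_1$ to essentially nothing. The actual reason tail coordinates cannot exceed $\tfrac32\upv_i^*$ is that once $\upv_i^t>\upv_i^*$ the drift is contracting, so an exit above the cap requires an atypical accumulation of martingale noise; bounding the probability of traversing the band just below the cap (the paper's Case II of Lemma \ref{lemma_1}, over the interval $[\tfrac23\upb_j,\upb_j]$) gives a failure probability $\exp\{-\Omega(1/(T_1\eta^2\lambda_j[\sigma^2+\calM^2(\upb)]))\}$, and it is this noise-accumulation constraint that produces $T_u\asymp\Tildesigmax^{-1}(D)/(\eta^2[\sigma^2+\calM^2(\upb)])$ — not growth from the initialization. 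Relatedly, for the growth sub-phase inside $\mathcal{S}$ the paper works with the logarithm of the iterate (a linearly compensated submartingale with uniformly controlled sub-Gaussian increments), because the dynamics are multiplicative and the fluctuations scale with $\upv_i^t$ itself; a raw additive Azuma/Freedman bound on $\upv_i^t$ across a growth window spanning many orders of magnitude from $\upv_i^0$ is lossy and would need this logarithmic (or an equivalent multiplicative) transformation to go through.
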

To characterize the mainstream dynamic, our analysis employs a probabilistic sequence synchronization technique. That is, from the Algorithm \ref{SGD}-generated sequence $\{\upv^t\}_{t=0}^{T_1}$, we construct a control sequence $\{\upq^t\}_{t=0}^{T_1}$. We first establish Lemmas \ref{phase-1-step-1}--\ref{phase-1-step-3}'s conclusions for the control sequence. Lemma \ref{phase-1-step-1} derives a high-probability upper bound for $\upq^{T_1}$, matching the bound in Theorem \ref{phase-I-informal}. 

In the analysis of Phase I, 
 we need to construct another capped coupling sequence $\{\bar{\upv}^t\}_{t=0}^{T_1}$ to describe the dynamic of iterates, and more importantly, we need to delve into the dynamic processes of the two-part parameters separated by the effective dimension $D$. It is non-trivial because in the traditional analysis of prior work to recover the sparse ground truth \citep{haochen21shape}, it is unnecessary to  introduce $D$. 
In our analysis, we partition the interval $[1:M]$ into $\mathcal{S}$ and $\mathcal{S}^c$. We will separately and precisely estimate the sub-Gaussian parameters for the supermartingales generated from $\{\bar{\upv}^t\}_{t=0}^{T_1}$. The characterization of dynamic is separated into the following three lemmas. In Lemma \ref{phase-1-step-1}, 
we show the last iterate $\upq^{T_1}$ in control sequence $\{\upq^t\}_{t=0}^{T_1}$ satisfies $\upq_i^{T_1}\leq (1+c_1)\upv_i^*$ for any $i\in\mathcal{S}$ and $\upq_i^{T_1}\leq\frac{3}{2}\upv_i^*$ for any $i\in\mathcal{S}^c$ with high probability.

\begin{lemma}\label{phase-1-step-1}
    Under the setting of Theorem \ref{phase-I-informal}, both $\upq_i^{T_1}\leq (1+c_1)\upv_i^*$ for any $i\in\mathcal{S}$ and $\upq_i^{T_1}\leq\frac{3}{2}\upv_i^*$ for any $i\in\mathcal{S}^c$ occur with high probability.
\end{lemma}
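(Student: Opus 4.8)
\textbf{Proof proposal for Lemma~\ref{phase-1-step-1}.}

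The plan is to analyze the coordinate-wise dynamics of the control sequence $\{\upq^t\}_{t=0}^{T_1}$ separately on the effective set $\mathcal S=[1:D]$ and its complement, and in each case to turn the multiplicative recursion for $\upq_i^t$ into a supermartingale after the iterate first enters the target region. For a fixed coordinate $i$, the SGD update has the schematic form $\upq_i^{t} = \upq_i^{t-1}\bigl(1 - \eta_t(\langle(\upq^{t-1})^{\odot 2},\Pi_M\upx^t\rangle - y^t)\upx_i^t\bigr)$, so writing $\upq_i^t = \exp(w_i^t)$ (on the event that the bracket stays positive, which the construction of the control sequence should enforce) linearizes the recursion into $w_i^t = w_i^{t-1} + \log(1-\eta_t(\cdots)\upx_i^t)$. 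First I would split the increment into its conditional mean (a drift term, which is negative and of order $-\eta\lambda_i(\upv_i^*)^2$ once $\upq_i$ is near $\upv_i^*$, by Assumption~\ref{ass-ss}) and a martingale difference term; the drift pushes the iterate toward $\upv_i^*$ from above, and the fluctuation must be controlled uniformly.

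The key technical step is the sub-Gaussian estimate for the martingale part, which is exactly where the partition of $[1:M]$ into $\mathcal S$ and $\mathcal S^c$ matters. The gradient noise $\langle(\upq^{t-1})^{\odot 2},\Pi_M\upx^t\rangle - y^t$ has variance controlled by $\sum_{j=1}^M \lambda_j(\upq_j^{t-1})^4 + \sigma^2$, and multiplying by $\upx_i^t$ (variance $\lambda_i$) and by the prefactor $\upq_i^{t-1}\asymp\upv_i^*$ gives a per-step sub-Gaussian parameter of order $\eta^2\lambda_i(\upv_i^*)^2\cdot(\text{noise scale})$. For $i\in\mathcal S$ this is small enough (after summing over $T_1$ steps and using the step-size constraint $\eta\le\eta(D,c_1)$) that a Freedman-type / Azuma-type maximal inequality for supermartingales with sub-Gaussian increments — presumably a lemma already invoked as "Lemma~\ref{phase-1-step-1}'s conclusions" refers to — shows that $w_i^t$ stays below $\log((1+c_1)\upv_i^*)$ throughout $[0,T_1]$ with probability $1-\delta/M$; for $i\in\mathcal S^c$ the drift is weaker (smaller $\lambda_i(\upv_i^*)^2$) but the target region $[0,\tfrac32\upv_i^*]$ is correspondingly more generous, and the same maximal inequality with the looser threshold $\log(\tfrac32\upv_i^*)$ closes the argument. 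A union bound over $i\in[1:M]$ then yields the joint high-probability statement; here one has to be slightly careful that "high probability" in Theorem~\ref{phase-I-informal} absorbs the $\log M$ loss, which is why $D$ (and not $M$) governs the final rate and why the step size and $T_1$ are calibrated to $D$.

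The main obstacle I expect is controlling the coupling between coordinates inside the noise term $\langle(\upq^{t-1})^{\odot 2},\Pi_M\upx^t\rangle$: the fluctuation bound for $w_i^t$ depends on $\sum_j \lambda_j(\upq_j^{t-1})^4$ staying bounded (by $O(\sum_j \lambda_j(\upv_j^*)^4)=O(1)$ under Assumption~\ref{ass-ss}), but that in turn requires every $\upq_j^{t-1}$ to already satisfy an upper bound — a circular dependency across coordinates. The way to break it, which I would implement, is an inductive / stopping-time argument: define the stopping time $\tau$ as the first $t$ at which some coordinate exceeds its allotted bound, run all the supermartingale estimates on the stopped process $\{\upq^{t\wedge\tau}\}$ where the noise scale is deterministically controlled, and then show $\tau>T_1$ with high probability — i.e., the bad event never triggers. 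This is presumably why the statement is phrased for the \emph{control} sequence $\{\upq^t\}$ rather than the raw SGD iterates: the control sequence is built precisely so that its coordinates are monotone-ish and the stopping-time bookkeeping goes through cleanly, deferring the transfer back to $\{\upv^t\}$ to the later lemmas.
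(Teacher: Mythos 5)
Your proposal is essentially the paper's own argument: your stopping time $\tau$ and stopped process are exactly the paper's $\upb$-capped coupling (Definition~\ref{def:b-capped}), and the ingredients you list — conditional drift toward $\upv_i^*$ once the iterate sits above it, per-step sub-Gaussian increments of size $\eta^2\lambda_i(\upv_i^*)^2[\sigma^2+\calM^2(\upb)]$ (kept under control precisely because the capped coordinates never exceed $\upb$), a supermartingale concentration bound applied after entry into the threshold band (the paper's Lemma~\ref{aux-2}), and a union bound over coordinates and over entry/exit times — are exactly how Lemma~\ref{lemma_1} is proved. Two minor points of difference: the paper works with $\bar{\upv}_j^t-\upv_j^*$ and geometric compensation rather than your logarithmic reparameterization (which the paper reserves for Lemma~\ref{phase-1-step-2}), and for $i\in\mathcal{S}^c$ what actually closes the bound is not the "more generous" threshold but the absence of contraction being offset by the small tail eigenvalues together with the upper bound on $T_1$ (the residual spectrum condition), while for $i\in\mathcal{S}$ the contraction must genuinely be exploited (geometrically discounted variance, not a raw sum over $T_1$ steps) for $T_1$ near the top of its allowed range.
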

Lemmas \ref{phase-1-step-2} and \ref{phase-1-step-3} collectively address the lower bound of $\upq^{T_1}$ in Theorem \ref{phase-I-informal}. To establish Lemma \ref{phase-1-step-2}, we derive a subcoupling sequence $\{\Breve{\upv}^{i,t}\}_{t=0}^{T_1}$ from the parent coupling sequence $\{\Bar{\upv}^t\}_{t=0}^{T_1}$ for any $i\in\mathcal{S}=[1:D]$. Each subcoupling sequence undergoes logarithmic transformation to generate a linearly compensated submartingale $\{-t\log(1+\eta\calO(\lambda_i(\upv_i^*)^2))+\log(\breve{\upv}_i^{i,t})\}_{t=1}^{T_1}$. These $D$ submartingales exhibit monotonic growth with sub-Gaussian increments. Applying concentration inequalities, we obtain $\max_{t\in[1:T_1]}\upq_i^t\geq(1-c_1/2)\upv_i^*$ with high probability for any $i\in\mathcal{S}$.
\begin{lemma}\label{phase-1-step-2}
    Under the setting of Theorem \ref{phase-I-informal}, with high probability, either $\max_{t\leq T_1}\upq_i^t\geq (1-c_1/2)\upv_i^*$ for any $i\in\mathcal{S}$, or at least one of the following statements fails: $\upq_i^{T_1}\leq (1+c_1)\upv_i^*$ for any $i\in\mathcal{S}$ and $\upq_i^{T_1}\leq\frac{3}{2}\upv_i^*$ for any $i\in\mathcal{S}^c$.
\end{lemma}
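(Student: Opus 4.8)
\textbf{Proof plan for Lemma \ref{phase-1-step-2}.}

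The plan is to analyze, for each fixed coordinate $i \in \mathcal{S} = [1:D]$, the growth of $\upq_i^t$ under the control dynamics on the event $\mathcal{G}$ that the upper-bound statements of Lemma \ref{phase-1-step-1} hold up to time $T_1$ (i.e.\ $\upq_j^t \le (1+c_1)\upv_j^*$ for $j\in\mathcal{S}$ and $\upq_j^t \le \tfrac32 \upv_j^*$ for $j\in\mathcal{S}^c$ for all $t\le T_1$). The contrapositive structure of the statement --- ``either $\max_t \upq_i^t \ge (1-c_1/2)\upv_i^*$ for all $i\in\mathcal{S}$, or the Lemma \ref{phase-1-step-1} upper bounds fail somewhere'' --- means it suffices to work on $\mathcal{G}$ and show $\max_{t\le T_1}\upq_i^t \ge (1-c_1/2)\upv_i^*$ with high probability there. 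On $\mathcal{G}$, every coordinate $\upq_j^t$ is uniformly bounded by a constant multiple of $\upv_j^*$, which lets us control the size of the residual/noise interaction terms in the per-coordinate update, so the dynamics of $\upq_i^t$ reduces (up to controllable error) to a multiplicative recursion of the form $\upq_i^{t+1} \approx \upq_i^t\bigl(1 + \eta\,\xi_i^t\bigr)$ where $\xi_i^t$ has conditional mean at least $c\,\lambda_i(\upv_i^*)^2$ (the signal drift, coming from $\langle \upv_i^{*\odot 2} - (\upq^t)^{\odot 2}, \lambda_i\upe_i\rangle$ being positive and bounded below as long as $\upq_i^t$ stays below $(1-c_1/2)\upv_i^*$) and bounded fluctuations.

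The key steps, in order, are: (1) Restrict to $\mathcal{G}$ and, for a fixed $i\in\mathcal{S}$, introduce the stopping time $\tau_i := \inf\{t : \upq_i^t \ge (1-c_1/2)\upv_i^*\}\wedge T_1$; it suffices to show $\Pr[\tau_i = T_1 \text{ and } \upq_i^{T_1} < (1-c_1/2)\upv_i^*] $ is small, then union-bound over the $D$ coordinates in $\mathcal{S}$. (2) On $\{t < \tau_i\}$, extract from the control recursion the lower bound $\upq_i^{t+1} \ge \upq_i^t(1+\eta\,a_i + \eta\,\zeta_i^t)$ where $a_i \asymp \lambda_i(\upv_i^*)^2$ is a deterministic positive drift and $\zeta_i^t$ is a martingale-difference noise term with sub-Gaussian parameter controlled (via $\mathcal{G}$) by a constant times $\lambda_i(\upv_i^*)^2$ --- this is where the careful slicing of $[1:M]$ into $\mathcal{S}$ and $\mathcal{S}^c$ when estimating sub-Gaussian parameters is used. (3) Take logarithms: $\log \upq_i^{t\wedge\tau_i} - (t\wedge\tau_i)\log(1+\eta a_i)$ is a submartingale; more precisely, after linearizing $\log(1+x)\ge x - x^2$, the sequence $Y_i^t := \log\upq_i^{t\wedge\tau_i} - (t\wedge\tau_i)\cdot c'\eta\lambda_i(\upv_i^*)^2$ is a submartingale with sub-Gaussian increments of parameter $O(\eta^2\lambda_i^2(\upv_i^*)^4)$. (4) Apply a Azuma/Freedman-type concentration inequality to the submartingale $Y_i^t$ started at $\log\upq_i^0$: since $T_1 \gtrsim T_l(D,c_1)$ is chosen exactly so that the accumulated drift $T_1 c'\eta\lambda_i(\upv_i^*)^2$ exceeds $\log\bigl((1-c_1/2)\upv_i^*/\upv_i^0\bigr)$ by a margin dominating the fluctuation $\sqrt{T_1}\,\eta\lambda_i(\upv_i^*)^2\cdot\sqrt{\log(1/\delta)}$, we get that with probability $\ge 1 - \delta$ the process $\upq_i^t$ must have crossed $(1-c_1/2)\upv_i^*$, i.e.\ $\tau_i < T_1$ or $\upq_i^{T_1}\ge(1-c_1/2)\upv_i^*$. (5) Choose $\delta$ small enough (polynomially in $1/D$ or $1/T$) that the union bound over $i\in\mathcal{S}$ still leaves high probability, and combine with the probability of $\mathcal{G}^c$ from Lemma \ref{phase-1-step-1}.

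The main obstacle I expect is Step (2)--(3): verifying that the noise term $\zeta_i^t$ in the multiplicative recursion is genuinely sub-Gaussian with the \emph{right} parameter scaling $O(\lambda_i(\upv_i^*)^2)$ uniformly on $\mathcal{G}$. The difficulty is that the stochastic gradient couples coordinate $i$ with all other coordinates through the residual $\langle \Pi_M\upx^t, (\upq^t)^{\odot 2}\rangle - y^t$, and the variance of this residual gets contributions $\sum_j \lambda_j(\upq_j^t)^4 + \sigma^2$ weighted by the anisotropic eigenvalues --- the event $\mathcal{G}$ is precisely what bounds $(\upq_j^t)^4 \lesssim (\upv_j^*)^4$ so that $\sum_j \lambda_j(\upq_j^t)^4 \lesssim \sum_j j^{-\beta} = O(1)$, but one still has to carefully track how this interacts with the factor $\upq_i^t \Pi_M\upx_i^t$ in the $i$-th gradient component, and the Gaussianity of $\upx_i^t$ (Assumption A$_1$) plus independence across coordinates (the RIP-like condition in Remark \ref{remark-1}) is essential for controlling the resulting products. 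A secondary subtlety is ensuring the drift lower bound $a_i \gtrsim \lambda_i(\upv_i^*)^2$ does not degrade near the stopping boundary: as $\upq_i^t \uparrow (1-c_1/2)\upv_i^*$ the signal term $\upv_i^* {}^2 - (\upq_i^t)^2$ is still bounded below by $\bigl(1-(1-c_1/2)^2\bigr)(\upv_i^*)^2 \gtrsim c_1(\upv_i^*)^2$, so this is fine, but it must be stated with the right constants so that the margin in Step (4) is strictly positive.
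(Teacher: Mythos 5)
Your proposal follows essentially the same route as the paper: for each $i\in\mathcal{S}$ the paper builds a subcoupling of the capped sequence that switches to deterministic multiplicative growth once the threshold $(1-c_1/2)\upv_i^*$ is reached or the upper-bound event of Lemma \ref{phase-1-step-1} fails (your stopping time $\tau_i$ plays the same role), shows that $\log\breve{\upv}_i^t$ minus the linear compensation $t\log\bigl(1+\Theta(c_1\eta\lambda_i(\upv_i^*)^2)\bigr)$ is a submartingale with sub-Gaussian increments, and applies Azuma's inequality with $T_1$ chosen so the accumulated drift dominates both $-\log\upv_i^0$ and the fluctuation, finishing by a union bound over $\mathcal{S}$ combined with Lemma \ref{phase-1-step-1}. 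The one imprecision is your step (3) increment parameter: on the good event it scales as $\eta^2\lambda_i\,\mathcal{O}\bigl([\sigma^2+\mathcal{M}^2(\upb)]\cdot\mathrm{polylog}\bigr)$ (driven by the full residual-plus-noise variance you correctly identify in your obstacle discussion), not $\mathcal{O}(\eta^2\lambda_i^2(\upv_i^*)^4)$, which is precisely why the paper's lower bound on $T_1$ involves $[\sigma^2+\mathcal{M}^2(\upb)]/\bigl(c_1^2\min_{j\le D}\lambda_j(\upv_j^*)^4\bigr)$.
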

Lemma \ref{phase-1-step-3} further establishes that when $\max_{t\leq T_1}\upq_i^t$ resides within the $\upv_i^*$-neighborhood, the lower bound satisfies $\upq_i^{T_1} \geq (1 - c_1)\upv_i^*$ with high probability. The proof of Lemma \ref{phase-1-step-3} mirrors that of Lemma \ref{phase-1-step-1}.
\begin{lemma}\label{phase-1-step-3}
Under the setting of Theorem \ref{phase-I-informal}, for any $i\in\mathcal{S}$, with high probability, either $\max_{t\leq T_1}\upq_i^t< (1-c_1/2)\upv_i^*$ or $\upq_i^{T_1}\geq(1-c_1)\upv_i^*$.  
\end{lemma}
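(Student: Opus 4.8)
\textbf{Proof proposal for Lemma \ref{phase-1-step-3}.}

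The plan is to mirror the argument used for Lemma \ref{phase-1-step-1}, but now working with the \emph{lower} tail of a single coordinate $i\in\mathcal{S}$ conditioned on the event that $\max_{t\leq T_1}\upq_i^t$ has already reached the neighborhood of $\upv_i^*$ (i.e.\ $\max_{t\leq T_1}\upq_i^t\geq(1-c_1/2)\upv_i^*$). The statement is a disjunction, so on the complementary event $\{\max_{t\leq T_1}\upq_i^t<(1-c_1/2)\upv_i^*\}$ there is nothing to prove; hence I would fix $i\in\mathcal{S}$ and henceforth condition on the first hitting time $\tau_i:=\min\{t\le T_1:\upq_i^t\geq(1-c_1/2)\upv_i^*\}$ being finite. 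From $\tau_i$ onward I want to show that the control sequence cannot fall back below $(1-c_1)\upv_i^*$ by time $T_1$ with more than small probability.

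The key steps, in order: (1) Recall the recursion for the control sequence $\upq_i^t$ restricted to coordinate $i$; after the logarithmic transformation $z_i^t:=\log(\upq_i^t)$ it becomes a sum of a deterministic drift term (of order $t\log(1+\eta\,\Theta(\lambda_i(\upv_i^*)^2))$ near the start but shrinking as $\upq_i^t\to\upv_i^*$, since the curvature of the mean-field fixed-point map flattens) and a martingale-difference part with sub-Gaussian increments whose variance proxy I would bound using Assumption~\ref{ass-d} (independent Gaussian covariates) exactly as in the estimate feeding Lemma \ref{phase-1-step-1}. (2) On the event $\{\tau_i\le T_1\}$, consider the post-$\tau_i$ process $\{z_i^{\tau_i+s}\}_{s\ge0}$; near the fixed point the drift is essentially non-negative (the deterministic map has $\upv_i^*$ as an attracting fixed point once inside a neighborhood of half-width $c_1\upv_i^*$), so $\{z_i^{\tau_i+s} - (\text{small negative compensator})\}$ is a submartingale with sub-Gaussian increments. (3) Apply a maximal Azuma/Hoeffding-type concentration inequality to this compensated submartingale over at most $T_1$ steps to conclude that $\mathbb{P}\big(\min_{s\le T_1}z_i^{\tau_i+s}\le\log((1-c_1)\upv_i^*)\big)$ is exponentially small, using that the gap $\log((1-c_1/2)\upv_i^*)-\log((1-c_1)\upv_i^*)=\Theta(c_1)$ is a fixed positive quantity and the total accumulated variance proxy over $T_1$ steps times $\eta$ is, by the step-size choice $\eta\le\eta(D,c_1)$, small enough that the deviation stays within $O(c_1)$. (4) Union-bound over the at most $D$ coordinates $i\in\mathcal{S}$ (or simply state the per-$i$ claim, since the lemma is phrased coordinatewise) to obtain the high-probability conclusion.

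I expect the main obstacle to be a clean verification of step (2): showing that the deterministic drift of $z_i^t=\log\upq_i^t$ is genuinely non-negative (up to a controllably small compensator) \emph{once inside} the neighborhood $[(1-c_1)\upv_i^*,(1+c_1)\upv_i^*]$ but \emph{before} $\upq^{T_1}$ is known to stay there. The subtlety is that $\max_{t\le T_1}\upq_i^t\ge(1-c_1/2)\upv_i^*$ does not by itself forbid $\upq_i^t$ from later wandering above $(1+c_1)\upv_i^*$, where the drift would turn negative; so one must argue that the coupled/capped sequence construction $\{\bar{\upv}^t\}$ (which enforces an upper cap, cf.\ Lemma \ref{phase-1-step-1}) keeps $\upq_i^t$ from overshooting, and then handle the one-sided drift estimate on the restricted interval. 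This interplay between the upper-cap event (controlled by Lemmas \ref{phase-1-step-1}--\ref{phase-1-step-2}) and the lower-excursion event is exactly why the lemma is stated as a disjunction and why its proof is said to ``mirror'' Lemma \ref{phase-1-step-1}: the same sub-Gaussian bookkeeping and the same slicing of $[1:M]$ into $\mathcal{S}$ and $\mathcal{S}^c$ reappear, only with the inequality reversed and the drift sign flipped.
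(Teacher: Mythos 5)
Your overall plan is in the right spirit (condition on the first hitting time, show the coordinate cannot fall back, sub-Gaussian bookkeeping, union bound), but the concentration step as you describe it has a genuine gap. You propose to treat $\log\upq_i^t$ after $\tau_i$ as a (compensated) submartingale with "essentially non-negative" drift and apply a maximal Azuma bound over up to $T_1$ steps, arguing the accumulated variance is small enough that the deviation stays within $\Theta(c_1)$. This fails quantitatively: the per-step variance proxy of the increments is of order $\eta^2\lambda_i(\upv_i^*)^2[\sigma^2+\calM^2(\upb)]\log^4(\cdot)$ (in the normalized/log scale, $\eta^2\lambda_i[\sigma^2+\calM^2(\upb)]\log^4(\cdot)$), and Theorem \ref{phase-I-informal} \emph{requires} $T_1\gtrsim\frac{\sigma^2+\calM^2(\upb)}{c_1^2\eta\,\sigmin(D)}$ (and allows $T_1$ up to order $\frac{1}{\eta^2[\sigma^2+\calM^2(\upb)]\Tildesigmax(D)}$). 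Hence for coordinates $i\in\mathcal{S}$, especially $i\ll D$, the raw accumulated variance over the post-$\tau_i$ window is of order $\lambda_i/\lambda_{D+1}$ or worse, vastly exceeding the gap $\Theta(c_1\upv_i^*)$; a drift that is merely non-negative cannot beat this. Moreover the drift of $\log\upq_i^t$ is not non-negative throughout the relevant region: above $\upv_i^*$ it is negative, so the "submartingale" claim needs a restriction you have not built in.

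The ingredient your argument is missing — and what the paper actually uses — is the strict \emph{contraction} toward $\upv_i^*$ combined with a last-excursion (double stopping time) decomposition. The paper defines $\tau_{0,j}$ (first entry to $\geq(1-c_1/2)\upv_j^*$) and $\tau_{1,j}$ (first subsequent drop below $(1-c_1)\upv_j^*$), observes that because single-step moves are small the fatal drop must be preceded by an excursion confined to the band $[(1-c_1)\upv_j^*,\upv_j^*]$ starting from $\geq(1-c_1/2)\upv_j^*$, and on that band shows the deficit $\upv_j^*-\bar{\upv}_j^t$ is a supermartingale with contraction factor $1-\Theta(\eta\lambda_j(\upv_j^*)^2)$ and sub-Gaussian increments (Lemma \ref{aux-1}). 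Feeding this into the concentration inequality for geometrically contracting supermartingales (Lemma \ref{aux-2}) makes the effective variance a geometric series, $\calO(\eta[\sigma^2+\calM^2(\upb)]\log^4(MT_1/\delta))$, \emph{independent of} $T_1$, giving a per-time-pair bound $\exp\{-c_1^2(\upv_j^*)^2/(\eta\,\calO([\sigma^2+\calM^2(\upb)]\log^4(MT_1/\delta)))\}$, which survives the union over the $O(T_1^2)$ time pairs and $D$ coordinates under the step-size condition on $\eta$. Restricting to the band also sidesteps your worry about overshoot above $(1+c_1)\upv_i^*$: no drift control above $\upv_i^*$ is needed beyond the cap $\upb$ (used only to bound the sub-Gaussian parameters via $\calM(\upb)$). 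Replacing your step (2)--(3) by this excursion-plus-contraction argument (working directly with $\upv_i^*-\bar{\upv}_i^t$ rather than the logarithm) is what closes the gap.
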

According to the high-probability equivalence between $\{\upq^t\}_{t=0}^{T_1}$ and $\{\upv^t\}_{t=0}^{T_1}$, Lemmas \ref{phase-1-step-1}--\ref{phase-1-step-3}'s conclusions transfer to $\upv^{T_1}$ with high-probability guarantees. 

\subsection{Phase II:}
We now start the analysis  of  Phase II for Algorithm \ref{SGD}.  The main idea stems from approximating Algorithm \ref{SGD}'s iterations as SGD 
running over a linear model  with rescaled features $\Pi_M\upx\odot\upv_{1:M}^*$. The adaptive rescale size $\upv_{1:M}^*$
enables the quadratic model to achieve accelerated convergence rates compared to its linear counterpart. To streamline notation, we introduce two key matrices in $\mathbb{R}^{M\times M}$: 
i) $\upH := \diag\{\lambda_i(\upv_i^*)^2\}_{i=1}^M$; ii) $\widehat{\upH} := \diag\{\lambda_1(\upv_1^*)^2, \cdots, \lambda_N(\upv_N^*)^2, \textbf{0}_{M-D}\}$.
\begin{theorem}\label{phase-II-informal}
    Suppose Assumptions \ref{ass-d} and  \ref{ass-ss} hold. By selecting an appropriate step size $\eta_0=\eta(D)$ and middle phase length $h$, we obtain
    \begin{equation}
        \begin{split}
            \mathcal{R}_{M}(\upv^T)\lesssim&\mathcal{R}_{M}(\upv_{1:M}^*)+\frac{\sigma^2N}{T}+\sigma^2\eta_0^2 T\tr\left(\upH_{D+1:M}^2\right)+\frac{D}{T}+\eta_0^2T\tr\left(\upH_{D+1:M}^2\right)
        \\
        &+\llangle\frac{1}{\eta_0 T}\upI_{1:D}+\upH_{D+1:M},\left(\upI-\eta_0\widehat{\upH}\right)^{\frac{2T}{\log(T)}}\upB^0\rrangle,\notag
        \end{split}
    \end{equation}
    with probability at least 0.95.
\end{theorem}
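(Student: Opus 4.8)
The plan is to analyze Phase II as a perturbation of SGD on a linearized problem. Starting from the output $\upv^{T_1}$ of Phase I, Theorem~\ref{phase-I-informal} guarantees that for $i\in\mathcal{S}$ the coordinate $\upv_i^{T_1}$ lies in $[(1-c_1)\upv_i^*,(1+c_1)\upv_i^*]$ and for $i\notin\mathcal{S}$ it lies in $[0,\tfrac32\upv_i^*]$. Writing $\upv_i^t=\upv_i^*+\upb_i^t$ (so $\upB^0:=\upb^{T_1}\odot\upb^{T_1}$ records the squared initial error entering Phase II), one expands the SGD update for $l_M^t$ and separates the gradient into a ``linear'' part acting through $\diag\{\upv_{1:M}^*\}\Pi_M\upx$ — which is exactly the rescaled-feature linear model with population covariance $\upH=\diag\{\lambda_i(\upv_i^*)^2\}$ — and higher-order remainder terms that are quadratic and cubic in $\upb^t$. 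The first step is to make this decomposition precise and to show, via the inductive invariant that all coordinates stay within a constant factor of $\upv_i^*$ for $i\le D$ and within the $\tfrac32\upv_i^*$ band otherwise, that the remainder terms are dominated by the linear dynamics throughout Phase II.

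Second, I would apply the standard bias--variance split for last-iterate SGD with geometrically decaying step size (following Wu et al.\ and Zhang et al., adapted to the present rescaled problem): the bias term propagates $\upB^0$ through the contraction operators $(\upI-\eta_\ell\upH)$ over the $\log(T)$ stages, and the variance term accumulates the effective noise $\sigma^2=\sigma_\xi^2+\sum_{i>M}\lambda_i(\upv_i^*)^4$ coupled with $\diag\{\upv_{1:M}^*\}$. Here the crucial refinement is to split the coordinate set at the effective dimension $D$: on $[1:D]$ the eigenvalues $\lambda_i(\upv_i^*)^2$ are large enough that the contraction over $\tfrac{2T}{\log T}$ steps with step size $\eta_0$ drives the bias down, contributing the $\frac{1}{\eta_0 T}\upI_{1:D}$ weight in the inner product, while on $[D+1:M]$ the contraction is negligible and those coordinates simply carry their $\upH_{D+1:M}$-weighted initial error plus a drift term $\eta_0^2 T\tr(\upH_{D+1:M}^2)$ from the noise. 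This is where the auxiliary-sequence construction mentioned in the Related Works comes in: the naive last-iterate bound for geometric step size loses a $\log T$ factor on the slow coordinates, so one builds a comparison sequence (a dominating linear recursion restricted to the tail block) whose last iterate can be bounded sharply, then transfers the bound back.

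Third, assemble the pieces: choose $\eta_0=\eta(D)\asymp D^{\min\{0,(\alpha-\beta)/4\}}$ and $h=\lceil T/\log T\rceil$ so that the warm-up length and step size are compatible with the Phase~I requirement $T_1\in[T_l,T_u]$, and verify that the remainder (higher-order) contributions are of lower order than the displayed terms — this uses the spectral assumptions $\lambda_i\asymp i^{-\alpha}$, $\lambda_i(\upv_i^*)^4\asymp i^{-\beta}$ to control $\tr(\upH_{D+1:M}^2)=\sum_{i=D+1}^M \lambda_i^2(\upv_i^*)^4=\sum_{i>D} i^{-(\alpha+\beta)/2-\ldots}$ and the analogous sums. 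The $0.95$ probability is obtained by a union bound over the $O(\log T)$ stages of the high-probability events controlling the martingale increments, each being sub-Gaussian with the anisotropic parameters estimated coordinatewise.

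The main obstacle I expect is controlling the higher-order remainder terms uniformly over all $T_2\asymp T$ iterations of Phase~II while the step size decays: because the model is non-convex, the remainder is not merely a fixed perturbation but feeds back into the dynamics, so one needs a self-bounding induction showing that as long as every coordinate stays in its prescribed band (a constant factor of $\upv_i^*$ for $i\le D$, the $[0,\tfrac32\upv_i^*]$ band otherwise), the remainder stays small enough that the band is preserved at the next step with high probability. Closing this induction requires the anisotropic sub-Gaussian bookkeeping — estimating the increment parameters separately on $\mathcal{S}$ and $\mathcal{S}^c$ exactly as in Phase~I — and carefully tracking how the geometric decay of $\eta_\ell$ interacts with the tail-block drift term $\sigma^2\eta_0^2 T\tr(\upH_{D+1:M}^2)$, which is the term that, after optimizing $D$, produces the extra $\frac{1}{D^{\beta-1}}\mathds{1}_{D<M}$ bias contribution in Theorem~\ref{theorem-3}.
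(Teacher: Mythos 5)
Your high-level skeleton does match the paper's (initialize Phase II at the Phase-I output, keep every coordinate in a constant-factor band around $\upv_i^*$ with high probability, view the dynamics as SGD on a linear problem with rescaled features $\Pi_M\upx\odot\upv_{1:M}^*$, and run a Wu-et-al-style bias--variance analysis for the tail-geometric step size), but the central analytic step as you state it has a genuine gap. You propose to write $\upv^t=\upv^*+\upb^t$, keep the linear part acting through $\diag\{\upv_{1:M}^*\}\Pi_M\upx$, and show the quadratic/cubic remainder is ``dominated'' and ``of lower order than the displayed terms.'' Given only the Phase-I guarantee, $\upb_i^t$ is a constant fraction of $\upv_i^*$ on $[1:D]$ and of order $\upv_i^*$ on the tail throughout Phase II, so the remainder is of the \emph{same} order as the linear term, not lower order; a perturbation-dominance argument cannot close, and absorbing a same-order, sign-indefinite remainder into the contraction requires keeping its multiplicative structure anyway. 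The paper avoids any remainder altogether via the exact factorization $\upv^{\odot2}-\upv^{*\odot2}=(\upv-\upv^*)\odot(\upv+\upv^*)$, which writes the update \emph{exactly} as $\upw^{t+1}-\upv_{1:M}^*=(\upI-\eta_t\upH_{\upw}^t)(\upw^t-\upv_{1:M}^*)+\eta_t\upR_{\upw}^t\Pi_M\upx^t$ with an iterate-dependent matrix $\upH_{\upw}^t$ that, on the good event, is sandwiched between constant multiples of $\upH$; the nonlinearity is absorbed into the effective Hessian rather than estimated as an error term.

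Relatedly, your plan omits the device the paper actually needs once the Hessian and noise coupling depend on the iterate: the full-matrix bias/variance recursions of the linear case are unavailable, and the paper's fix is (i) the truncated sequence $\upw^t=\psi(\widehat{\upv}^t)$, which coincides with $\upv^{T_1+t}$ on the event of Theorem \ref{phase-2-step-1} and keeps $\upH_{\upw}^t,\upR_{\upw}^t$ bounded \emph{inside expectations}, and (ii) recursions on the \emph{diagonal} entries of $\upB^t$ and $\upV^t$ (Lemmas \ref{primal-var-estimation}--\ref{bias-upper-bound}), with the fourth-moment bound of Lemma \ref{aux-6} controlling $\calH_{\upw}^t$. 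Your auxiliary sequence is described as a dominating recursion on the tail block to save a $\log T$ factor, which is a different and insufficient role. Likewise, band preservation cannot be a per-step self-bounding induction: summing per-step failure probabilities over $T_2\asymp T$ steps requires exactly the excursion/stopping-time supermartingale argument used in Theorem \ref{high-probability-phase-II}, mirroring Phase I. Finally, a minor attribution slip: the $D^{1-\beta}$ bias in Theorem \ref{theorem-3} stems from the $\llangle\upH_{D+1:M},\upB^0\rrangle$-type terms (tail coordinates that never converge), not from the drift term $\sigma^2\eta_0^2T\tr\left(\upH_{D+1:M}^2\right)$, even though both are of the same order under the stated tuning.
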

The proof of Theorem \ref{phase-II-informal} is structured in two key parts. In the first part, Theorem \ref{phase-2-step-1} establishes that Algorithm \ref{SGD} iterates $\{\upv^t\}_{t=T_1+1}^{T=T_1+T_2}$ remain confined within the neighborhood $\prod_{i=1}^D[\frac{1}{2}\upv_i^*,\frac{3}{2}\upv_i^*]\times\prod_{i=D+1}^M[0,2\upv_i^*]$ with high probability. 
\begin{theorem}\label{phase-2-step-1}
    Under Assumption \ref{ass-d}, we consider the iterative process of Algorithm \ref{SGD}, beginning from step $T_1$ with the same step size $\eta$ as in Theorem \ref{phase-I-informal}. If $D<M$, let $1\leq T_2\leq T_u(D)$ where $T_u(D)\in\bbN_+$ depends on $D$; otherwise, let $T_2\geq 1$. Then, for any $t\in[1:T_2]$, with high probability, we have
    \begin{align}\label{proof-sketch-2}
        \begin{cases}
            \upv_i^{T_1+t}\in\left[\frac{1}{2}\upv_i^*,\frac{3}{2}\upv_i^*\right],\quad &\text{ if }i\in[1:D],
            \\
            \upv_i^{T_1+t}\in\left[0, 2\upv_i^*\right],\quad &\text{ otherwise }.
        \end{cases}
    \end{align}
\end{theorem}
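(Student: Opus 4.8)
\section*{Proof proposal for Theorem~\ref{phase-2-step-1}}

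The plan is to establish the two-sided confinement Eq.~\eqref{proof-sketch-2} by a stopping-time / induction argument, using the end-of-Phase-I guarantee of Theorem~\ref{phase-I-informal} as the initial condition: at $t=0$ (step $T_1$) every coordinate already lies strictly inside its target box, since the Phase~I constant $c_1$ can be taken in $(0,1/2)$, giving $(1-c_1)\upv_i^*>\tfrac12\upv_i^*$, $(1+c_1)\upv_i^*<\tfrac32\upv_i^*$, and $\tfrac32\upv_i^*<2\upv_i^*$. The key is to write the update coordinate-wise. Letting $\epsilon^t:=\xi^t+\sum_{j>M}\upx_j^t(\upv_j^*)^2$ denote the effective additive noise (sub-Gaussian with parameter $\lesssim\sigma$) and $\zeta_i^t:=\sum_{j\neq i}\upx_j^t\big((\upv_j^{T_1+t-1})^2-(\upv_j^*)^2\big)-\epsilon^t$, Algorithm~\ref{SGD} gives
\[
\upv_i^{T_1+t}=\upv_i^{T_1+t-1}\Big(1-\eta(\upx_i^{T_1+t})^2\big((\upv_i^{T_1+t-1})^2-(\upv_i^*)^2\big)-\eta\,\upx_i^{T_1+t}\zeta_i^{t}\Big).
\]
Conditionally on the history, $\zeta_i^t$ is independent of $\upx_i^{T_1+t}$ and, on the event that all coordinates have stayed in their boxes so far, is sub-Gaussian with parameter $\lesssim(\sigma^2+1)^{1/2}$ (using $\sum_j\lambda_j(\upv_j^*)^4\asymp\sum_j j^{-\beta}\lesssim1$); hence the last term is a conditionally mean-zero, sub-exponential increment, while the middle term is a restoring drift — pushing $\upv_i$ down when $\upv_i^{T_1+t-1}>\upv_i^*$ and up when below. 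Following the Phase~I methodology, I would run the argument on a capped coupling sequence that agrees with $\{\upv^{T_1+t}\}$ until the first exit of \emph{any} coordinate from its box, so that all increment bounds hold deterministically.

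For the coordinates $i\in[1:D]$ I would prove the upper bound $\upv_i^{T_1+t}\le\tfrac32\upv_i^*$ and the lower bound $\upv_i^{T_1+t}\ge\tfrac12\upv_i^*$ separately by an excursion argument. For the upper bound, look at the last time before $t$ that $\upv_i$ was below $\upv_i^*$; on the intervening interval the drift is strictly negative, of size at least $\eta\lambda_i\big((\upv_i^{T_1+s-1})^2-(\upv_i^*)^2\big)$ once the empirical second moment $(\upx_i^{T_1+s})^2$ is controlled by $\Theta(\lambda_i)$ on a high-probability event, so the excursion is driven purely by the accumulated martingale noise, whose conditional per-step variance is $\lesssim\eta^2\lambda_i(\sigma^2+1)(\upv_i^*)^2$; a Freedman-type bound over at most $T_2\le T_u(D)$ steps then shows the excursion cannot reach $\tfrac12\upv_i^*$ with high probability. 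The lower bound is symmetric, the drift being positive on $[0,\upv_i^*]$; here one additionally uses that $\eta$ is small enough that each multiplicative factor $1-\eta(\cdots)$ stays positive on the same high-probability event, so $\upv_i$ cannot jump past $0$, and that near $0$ the restoring drift $\eta\lambda_i(\upv_i^*)^2$ is strong enough to dominate. These steps pin down the relation between $\eta$, $D$, and the scaling constants.

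For the tail coordinates $i\in[D+1:M]$ there is no useful restoring force, so I would instead bound the growth: conditionally, $\bbE[\upv_i^{T_1+t}\mid\mathcal{F}_{T_1+t-1}]=\upv_i^{T_1+t-1}\big(1+\eta\lambda_i((\upv_i^*)^2-(\upv_i^{T_1+t-1})^2)\big)\le\upv_i^{T_1+t-1}\big(1+\eta\lambda_i(\upv_i^*)^2\big)$, so $\{\upv_i^{T_1+t}/(1+\eta\lambda_i(\upv_i^*)^2)^t\}$ is a supermartingale up to the exit time, with multiplicative sub-exponential noise. Since $\lambda_i(\upv_i^*)^2$ is maximized over $i>D$ at $i=D+1$, choosing $T_u(D)$ so that $(1+\eta\lambda_{D+1}(\upv_{D+1}^*)^2)^{T_u(D)}\le\tfrac43(1+c_1)^{-1}$ and applying a maximal inequality gives $\upv_i^{T_1+t}\le2\upv_i^*$ for all $t\le T_2\le T_u(D)$ simultaneously with high probability; the lower bound $\upv_i^{T_1+t}\ge0$ is inherited from positivity of the multiplicative factors, and when $D=M$ this constraint is vacuous so $T_2$ is unrestricted. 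A final union bound over the $M$ coordinates and the $O(1)$ exceptional events — time being absorbed inside the maximal inequalities — with $\eta$ and, when $D<M$, $T_u(D)$ calibrated to make each term negligible, yields the claim with the stated probability. I expect the main obstacle to be exactly this \emph{simultaneous} control under the anisotropic noise: the drift felt by coordinate $i$ couples to every other coordinate through $\zeta_i^t$, so the coordinates cannot be decoupled, forcing the stopping time to be the first exit of \emph{any} coordinate and requiring the per-coordinate noise scale ($\lesssim\eta\sqrt{\lambda_i(\sigma^2+1)}\,\upv_i^*$ per step) to be matched against the per-coordinate drift strength ($\eta\lambda_i(\upv_i^*)^2$) uniformly across the whole spectrum $i\in[1:M]$ — which is precisely where the power-law assumptions and the calibrated choice of $\eta$ and $D$ enter.
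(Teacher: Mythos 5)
Your overall architecture matches the paper's: initialize from the Phase I box (the paper takes $c_1=\tfrac14$), run a coupling that is frozen at the first exit of \emph{any} coordinate from $\prod_{i\le D}[\tfrac12\upv_i^*,\tfrac32\upv_i^*]\times\prod_{i>D}[0,2\upv_i^*]$, do a per-coordinate excursion analysis with a drift/noise split at the effective dimension $D$, and finish with a union bound over coordinates and time pairs. However, there is a genuine gap in your treatment of the dominant coordinates $i\in[1:D]$. You use the restoring drift only qualitatively ("it can only help") and then bound the excursion by a Freedman-type inequality whose predictable variation is the per-step variance $\lesssim\eta^2\lambda_i(\sigma^2+1)(\upv_i^*)^2$ summed over up to $T_2$ steps. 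That yields a failure probability of order $\exp\{-c/(T_2\eta^2\lambda_i(\sigma^2+1))\}$, i.e.\ a constraint $T_2\lesssim 1/(\eta^2\lambda_i(\sigma^2+1))$ for \emph{every} $i\le D$, in particular $i=1$ with $\lambda_1\asymp1$. Under the intended parameter scalings (e.g.\ $\eta\asymp1$ when $\alpha\ge\beta$) this forces $T_2=\calO(1)$ up to logarithms, far short of the required horizon $T_2\le T_u(D)\asymp\Tildesigmax^{-1}(D)/(\eta^2[\sigma^2+\calM^2(\upb)])$, and it can never deliver the $D=M$ case, where the theorem demands an \emph{unrestricted} $T_2$. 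The paper avoids this precisely by keeping the drift inside the concentration: during an excursion confined to $[\upv_i^*,\tfrac32\upv_i^*]$ (resp.\ $[\tfrac12\upv_i^*,\upv_i^*]$) it forms the geometrically compensated supermartingale $(1-\Theta(\eta\lambda_i(\upv_i^*)^2))^{-t}(\widetilde{\upv}_i^t-\upv_i^*)$, so the contraction weights turn the cumulative conditional variance into a geometric series of size $\calO(\eta[\sigma^2+\calM^2(\upb)])$ \emph{independent of $T_2$} (Lemmas \ref{aux-2} and \ref{aux-3}); the smallness then comes from the step-size condition $\eta\lesssim(\upv_i^*)^2/[\sigma^2+\calM^2(\upb)]$, and the only $T_2$-dependence left in the whole proof comes from the tail coordinates, where no contraction is available.

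Your tail-coordinate argument is closer to workable but also deviates in a way worth noting: you impose the drift condition $(1+\eta\lambda_{D+1}(\upv_{D+1}^*)^2)^{T_u(D)}\le\tfrac43(1+c_1)^{-1}$, i.e.\ $T_u(D)\eta\lambda_{D+1}(\upv_{D+1}^*)^2\lesssim1$, whereas the paper needs no such condition: its excursion event for $j>D$ is confined to $[\upv_j^*,2\upv_j^*]$, where the drift is already non-expanding, so only the accumulated noise matters and the binding constraint is the variance-type one $T_2\,\eta^2\,\Tildesigmax(D)\,[\sigma^2+\calM^2(\upb)]\lesssim1$ (up to logs) — which is what defines $T_u(D)$ and which you would in any case also need to control the multiplicative noise in your maximal inequality. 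Finally, your claim that $\lambda_i(\upv_i^*)^2$ is maximized at $i=D+1$ uses Assumption \ref{ass-ss}, while the formal statement is proved under Assumption \ref{ass-d} alone with $\Tildesigmax(D)=\max_{j>D}\lambda_j$; this is minor, but the dominant-coordinate issue above is a real obstruction that must be repaired by a contraction-weighted (compensated) martingale argument rather than a horizon-length Freedman bound.
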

According to Theorem \ref{phase-I-informal}, $\upv^{T_1}$ satisfies Eq.~\eqref{proof-sketch-1} with high probability by setting $c_1 = \frac{1}{4}$. By employing the construction method for supermartingales, similar to that used in the proofs of Lemma \ref{phase-1-step-1} and Lemma \ref{phase-1-step-3}, we obtain a set of compressed supermartingales depend on the coordinate $i\in[1:M]$. Applying supermartingales concentration inequality, we obtain Eq.~\eqref{proof-sketch-2}.

In the second part, we construct an auxiliary bounded sequence $\{\upw^t\}_{t=1}^{T_2}$ which is the truncation of $\{\upv^{T_1+t}\}_{t=1}^{T_2}$, and provide the last iterate instantaneous risk for $\upw^{T_2}$ which can be extended to $\upv^{T}$. The construction ensures that if $\upv^{T_1+t}$ satisfies Eq.~\eqref{proof-sketch-2} for any $t\in[1:T_2]$, then $\{\upw^t\}_{t=1}^{T_2}$ coincides with $\{\upv^{T_1+t}\}_{t=1}^{T_2}$. The novelty and ingenuity of our analysis based on auxiliary sequence construction lie in the alignment of  $\{\upw^t\}_{t=1}^{T_2}$ and $\{\upv^{T_1+t}\}_{t=1}^{T_2}$ as $\upw^{T_2}=\upv^{T}$ with high probability by Theorem \ref{phase-2-step-1}. Thus the update rule of $\upw^t$ satisfies the following formula with high probability:
\begin{equation}\label{proof-sketch-3}
    \begin{split}
        \upw^{t+1}=\upw^t-\eta_t\upH^t(\upw^t-\upv_{1:M}^*)+\eta_t\upR^t\Pi_M\upx^t,
    \end{split}
\end{equation}
where $\upH^t\in\bbR^{M\times M}$ depends on $\upw^t$ and $\upx^t$, and $\upR^t\in\bbR^{M\times M}$ depends on $\upw^t, \zeta_{M+1:\infty}^t$ and $\xi^t$. Combining Eq.~\eqref{proof-sketch-3} with the constraint of $\{\upw^t\}_{t=1}^{T_2}$, we observe that the update process of $\upw^t$ approximates that of SGD in traditional linear regression problems \citep{wu2022last} with reparameterized features $\Pi_M\upx\odot\upv_{1:M}^*$. The SGD iteration in linear model exhibits structural similarity to Eq.~\eqref{proof-sketch-3}, but differs in that its $\upH^t$ and $\upR^t$ are independent on iterative variables; this independence eliminates the need for truncated sequences in analytical treatments. Our analysis innovatively introduces a truncated sequence $\{\upw^t\}_{t=1}^{T_2}$ to maintain analytical tractability of $\upH^t$ and $\upR^t$.

According to Eq.~\eqref{proof-sketch-3}, we decompose the risk $\mathcal{R}_M(\upw^{T_2})$ as follows:
\begin{align}\label{phase-II-eq1}
    \bbE\left[\mathcal{R}_M(\upw^{T_2})\right]-\mathcal{R}_M(\upv_{1:M}^*)\lesssim\underbrace{\llangle\upH,\upB^{T_2}\rrangle}_{\text{bias error}}+\underbrace{\llangle\upH,\upV^{T_2}\rrangle}_{\text{variance error}}.
\end{align}
For any $t\in[1:T_2]$, $\upB^t$ and $\upV^t$ are $M\times M$ matrices, derived from the bias and variance terms induced by $\upw^t-\upv_{1:M}^*$, respectively. 
Since $\upH^t$ and $\upR^t$ in Eq.~\eqref{proof-sketch-3} are both dependent on $\upw^t$, it is a challenge to directly establish the full-matrix recursion between $\upV^{t+1}$ and $\upV^t$ (or $\upB^{t+1}$ and $\upB^t$) under the SGD iteration process like the similar techniques in linear models \citep{wu2022last}. To resolve this challenge, we turn to the recursive relations between diagonal elements of $\{\upV^t\}_{t=0}^{T_2}$ and $\{\upB^t\}_{t=0}^{T_2}$, which can approximate to describe the iterate of $\{\upV^t\}_{t=0}^{T_2}$ and $\{\upB^t\}_{t=0}^{T_2}$. We novelly consider the diagonal elements of the above sequences across discrete time steps, thereby obtaining the estimation for both variance and bias errors for our linear approximation. 


\section{Conclusions}
In this paper, we construct the theoretical analysis for the dynamic of quadratically parameterized model under decaying ground truth and anisotropic gradient noise. Our technique is based on the precise analysis of two-stage dynamic of SGD, with adaptive selection of the effective dimension set in the first stage and the approximation of linear model in the second stage. Our analysis characterizes the feature learning and model adaptation ability with clear separations for convergence rates in the canonical linear model.

\appendix
\tableofcontents
\newpage


\section{Proofs of Main Results}

In this section, we introduce our proof techniques to prove our main result Theorem \ref{theorem-main-convergence} on the upper bound of the last-iteration instantaneous risk of Algorithm \ref{SGD}. As shown in Section \ref{sec:proof sketch}, the dynamic of SGD and our analysis can be basically divided into two phases. In the \textbf{Phase I} named ``adaption" phase, we demonstrate that SGD can adaptively identify the first $D$ coordinates as the optimal set $\mathcal{S}$ without explicit selection of $D$, and bound such $D$ coordinates near the corresponding optimal solutions by $T_1$ iterations with high probability (refer to Theorem \ref{theorem_1}). The analysis of \textbf{Phase I} can be further separated into two parts:

\begin{itemize}
    \item [1.] We construct a high-probability upper bound of $\upv^{T_1}$. That is for any $i \in \mathcal{S}$, $\upv^{T_1}_i \leq (1+c_1)\upv_i^*$ and for any $i \in \mathcal{S}^c$, $\upv^{T_1}_i \leq \frac{3}{2}\upv_i^*$ (refer to Lemma \ref{lemma_1}).
    \item [2.] We delve into the lower bound of $\max_{t\leq T_1} \upv_i^t$ during $T_1$ iterations. With high probability, for any $i\in\mathcal{S}$, $\max_{t\leq T_1} \upv_i^t$ converges to a neighborhood of $\upv_i^*$ (refer to Lemma \ref{lemma-2}). When $\max_{t\leq T_1}\upv_i^t$ resides within the $\upv_i^*$-neighborhood, the lower bound satisfies $\upv_i^{T_1} \geq (1 - c_1)\upv_i^*$ with high probability (refer to Lemma \ref{lemma-3}).
\end{itemize}

Then we turn to the following \textbf{Phase II} with $T_2$ iterations named ``estimation" phase where we establish the global convergence of Algorithm \ref{SGD} for risk minimization (refer to Theorem \ref{phase-II-main}). The analysis of Algorithm \ref{SGD}'s iterations can be approximated to SGD with geometrically decaying step sizes on a linear regression problem with reparameterized features $\Pi_M\upx\odot\upv_{1:M}^*$. It can also be separated into two parts:

\begin{itemize}
    \item [1.] We demonstrate that $\{\upv^t\}_{t=T_1+1}^{T_1+T_2}$ remain confined within the neighborhood $\prod_{i=1}^D[\frac{1}{2}\upv_i^*,\frac{3}{2}\upv_i^*]\times\prod_{i=D+1}^M[0,2\upv_i^*]$ with high probability (refer to Lemma \ref{high-probability-phase-II}).
    \item [2.] We construct an auxiliary sequence $\{\upw^t\}_{t=1}^{T_2}$ aligned to $\{\upv^{T_1+t}\}_{t=1}^{T_2}$ with high probability. We approximate the update process of $\{\upw^t\}_{t=1}^{T_2}$ to SGD in traditional linear regression, with separated bounds of variance term (refer to Lemma \ref{variance-upper-bound}) and bias term (refer to Lemma \ref{bias-upper-bound}).
\end{itemize}

We propose our proof process step by step according to the above sketch. First, for clarity, we formally define some of the notations to use. We let bold lowercase letters, for example, $\bold{x} \in \mathbb{R}^d$, denote vectors, and bold uppercase letters, for example, $\mathbf{A} \in \mathbb{R}^{m \times n}$, denote matrices. We apply scalar operators to vectors as the coordinate-wise operators of vectors. For vector $\mathbf{x}\in \mathbb{R}^d$, denote  $\mathbf{\left | x  \right | } \in \mathbb{R}^d$ with $\mathbf{\left | x  \right | }_j = \left | \mathbf{x} _j \right | $. For two vectors $\mathbf{x}, \mathbf{y}\in \mathbb{R}^d$, denote $\mathbf{x}\le \mathbf{y}$, if for all $j\in \left [ d \right ]$, $\mathbf{x}_j\le \mathbf{y}_j$. Additionally, we use $\langle\mathbf{x},\mathbf{y}\rangle_{-i}$ to denote $\sum_{j=1\atop j\neq i}^d\mathbf{x}_j\mathbf{y}_j$. For a sequence of real numbers $\{v^t\}_{t=t_1}^{t_2}$ and $a, b \in \mathbb{R}$ with $a \leq b$, denote $v^{t_1:t_2} \in [a, b]$ to represent that $v^t \in [a, b]$ for all $t \in [t_1, t_2]$. 


Considering Assumption \ref{ass-d}, the random variable $\Pi_M \upx \in \mathbb{R}^M$ satisfies the sub-Gaussian condition with parameter $\lambda_i^{1/2}$ for all $i \in [1:M]$, and the noise $\xi$ is zero-mean sub-Gaussian with parameter $\sigma_{\xi}$. Moreover, define the random variable $\zeta_{M+1:\infty}=\sum_{i=M+1}^{\infty}\upx_i(\upv_i^*)^2$. For any $D\in\bbN_+$, for simplification, we define
\begin{align}
    &\sigmin(D):=\min_{j\in[1:D]}\lambda_{j}(\upv_j^*)^2,\quad \Barsigmin(D):=\min_{j\in[1:D]}(\upv_j^*)^2,\notag
    \\
    &\Hatsigmax(D):=\max_{j\in[1:D]}\log^{-1}(\upv_j^0),\quad \Tildesigmax(D):=\max_{j\in[D+1:M]}\lambda_j.\notag
\end{align}
We also denote the matrix $\diag\{\lambda_1, \dots, \lambda_M\}$ as $\Lambda_{1:M}$. For $\mathbf{b} \in \mathbb{R}_+^M$, we define $\mathcal{M}(\mathbf{b}) = (\sum_{j=1}^M$ $ \lambda_j \mathbf{b}_j^4)^{1/2}$ and $\sigma^2 = \sigma_{\xi}^2 + \sigma_{\zeta_{M+1:\infty}}^2$, where $\sigma_{\zeta_{M+1:\infty}} = (\sum_{j=M+1}^\infty \lambda_j (\upv_j^*)^4)^{1/2}$. We denote $$\calF^t=\sigma\{\mathbf{v}^0,(\Pi_M\upx^1,\zeta_{M+1:\infty}^1,\xi^1),\cdots,(\Pi_M\upx^t,\zeta_{M+1:\infty}^t,\xi^t)\}$$ as the filtration involving the full information of all the previous $t$ iterations with  $\sigma\{\cdot\}$. 

\subsection{High-Probability Results Guarantee}
Before the analyses of the two phases, we first introduce the guarantee of our high-probability results. We formally define a series of events for each iteration of Algorithm \ref{SGD}. We demonstrate that these events occur with high probability throughout the whole $T$ iterations, which indicates that the control sequence $\{\upq^t\}_{t=0}^{T}$ we define is aligned with the original sequence $\{\upv^t\}_{t=0}^{T}$ with high probability. This fact is the basis of our high-probability results

We begin with $\upx$, the covariate vector in $\bbH$, and its projection $\Pi_M\upx$. Let $\{\upu_i\}_{i=1}^{\infty}$ be an orthonormal basis of $\bbH$ and $\upx_i=\langle\upx,\upu_i\rangle$ for any $i\in[1:\infty)$. The projection operator $\Pi_M:\upH\rightarrow\bbR^M$ is defined as $\Pi_M=\sum_{i=1}^M\upe_i\otimes\upu_i$. Thus the projection $\Pi_M\upx$ satisfies $\Pi_M\upx=(\upx_1,\cdots,\upx_M)^{\top}$. At the $t$-th iteration, Algorithm \ref{SGD} requires sampling $(\Pi_M\upx^{t+1},y^{t+1})$, where $y^{t+1}=\langle\Pi_M\upx^{t+1},\upv_{1:M}^*\rangle+\zeta_{M+1:\infty}^{t+1}+\xi^{t+1}$.
In order to simply rule out some low-probabilistic unbounded cases, for each iteration $t$, we define the following four events as:
\begin{equation}
\left\{
\begin{aligned}
 \calE_1^{j,t}&:=\left\{\left|\upx_j^t\right|\leq\lambda_j^{1/2}R\right\}, \quad \forall j\in[1:M],\notag \\
 \calE_2^{j,t}(\upv)&:=\left\{\left|\langle\upv^{\odot2}-\upv_{1:M}^{*\odot2},\Pi_M\upx^{t}\rangle_{-j}\right|\leq r_j(\upv)R\right\}, \quad \forall j\in[1:M],\notag \\
 \calE_3^t&:=\left\{\left|\zeta_{M+1:\infty}^t\right|\leq\sigma_{\zeta_{M+1:\infty}}R\right\}, \\
 \calE_4^t&:=\left\{\left|\xi^t\right|\leq\sigma_{\xi}R\right\},\notag 
\end{aligned}
\right\}
\end{equation}
where $R:=\calO(\log(MT/\delta))$ and $r_j(\upv):=\calO(\sum_{i\neq j}\lambda_i[(\upv_i)^4+(\upv_i^*)^4])^{1/2}$ for any $\upv\in\bbR^M$. 

In Algorithm \ref{SGD}, the original sequence $\{\upv^t\}_{t=0}^T$ follows the coordinate-wise update rule as
\begin{equation}
    \begin{split}
    \upv_j^{t+1}=&\upv_j^t-\eta_t\left(\langle\upv^{t\odot2},\Pi_M\upx^{t+1}\rangle-y^{t+1}\right)\upx_j^{t+1}\upv_j^t\\
    \notag
    =&\upv_j^t-\eta_t\llangle\upv^{t\odot2}-\upv_{1:M}^{*\odot2},\Pi_M\upx^{t+1}\rrangle\upx_j^{t+1}\upv_j^t+\eta_t\left(\zeta_{M+1:\infty}^{t+1}+\xi^{t+1}\right)\upx_j^{t+1}\upv_j^{t},
    \end{split}\notag
\end{equation}
for any $j\in[1:M]$. Based on Assumption \ref{ass-d} and Proposition \ref{prop-A5}, we have
\begin{align}
    \min\left\{\bbP\left(\calE_1^{j,t}\right),\bbP\left(\calE_2^{j,t}(\upv^t)\right),\bbP\left(\calE_3^{t}\right),\bbP\left(\calE_4^{t}\right)\right\}\geq 1-\calO\left(\frac{\delta}{MT^2}\right),\notag
\end{align}
for any $j\in[1:M]$ and $t\in[0:T-1]$. Then we define the compound event as 
$$
\calE:=\left\{\bigcap_{t=0}^{T-1}\left(\left(\bigcap_{j=1}^M\calE_1^{j,t}\right)\bigwedge\left(\bigcap_{j=1}^M\calE_2^{j,t}(\upv^t)\right)\bigwedge\calE_3^t\bigwedge\calE_4^t\right)\right\}.
$$
We can directly obtain the probability union bound as follows:
\begin{align}
    \bbP(\calE)=1-\bbP(\calE^c)\geq&1-\sum_{t=1}^T\left(2-\bbP(\calE_3^t)-\bbP(\calE_4^t)+\sum_{j=1}^M\left(2-\bbP\left(\calE_1^{j,t}\right)-\bbP\left(\calE_2^{j,t}(\upv^t)\right)\right)\right)\notag
    \\
    \geq&1-\calO\left(\frac{\delta}{T}\right).
\end{align} 
The high-probability occurrence of event $\calE$ guarantees our analysis of the coordinate-wise update dynamics for the control sequence $\{\upq^t\}_{t=0}^{T}$ defined in $\bbR^M$ as
\begin{align}\label{update-q-primal}
    \upq_j^{t+1}=\upq_j^t&-\eta_t\left(\left(\upq_j^t\right)^2-\left(\upv_j^*\right)^2\right)\left(\upx_j^{t+1}\right)^2\mathds{1}_{|\upx_j^{t+1}|\leq\lambda_j^{1/2}R}\upq_j^t\notag
    \\
    &-\eta_t\llangle\upq^{t\odot2}-\upv_{1:M}^{*\odot2},\Pi_M\upx^{t+1}\rrangle_{-j}\mathds{1}_{|\langle\upq^{t\odot2}-\upv_{1:M}^{*\odot2},\Pi_M\upx^{t+1}\rangle_{-j}|\leq r_j(\upq^t)R}\upx_j^{t+1}\mathds{1}_{|\upx_j^{t+1}|\leq\lambda_j^{1/2}R}\upq_j^t\notag
    \\
    &+\eta_t\left(\zeta_{M+1:\infty}^{t+1}\mathds{1}_{\left|\zeta_{M+1:\infty}^t\right|\leq\sigma_{\zeta_{M+1:\infty}}R}+\xi^{t+1}\mathds{1}_{\left|\xi^t\right|\leq\sigma_{\xi}R}\right)\upx_j^{t+1}\mathds{1}_{|\upx_j^{t+1}|\leq\lambda_j^{1/2}R}\upq_j^t,
\end{align}
for any $j\in[1:M]$ with initialization $\upq^0=\upv^0$ is consistent with the analysis of $\{\upv^t\}_{t=0}^{T}$ with high probability as Proposition \ref{p1}.
\begin{proposition}\label{p1}
    For any $t\in[1:T]$, we have $\upv^{t}=\upq^{t}$ with probability at least $1-\delta/T$.
\end{proposition}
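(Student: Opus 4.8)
The plan is to prove Proposition~\ref{p1} by induction on $t$, showing that on the high-probability event $\calE$ the control sequence $\{\upq^t\}$ and the original SGD sequence $\{\upv^t\}$ coincide coordinate-wise. First I would observe that both sequences start from the same initialization $\upq^0 = \upv^0$. For the inductive step, suppose $\upv^s = \upq^s$ for all $s \le t$. On the event $\calE$, all four families of indicators appearing in the update rule \eqref{update-q-primal} for step $t \to t+1$ equal $1$: namely $\mathds{1}_{|\upx_j^{t+1}| \le \lambda_j^{1/2}R}$ by $\calE_1^{j,t}$, $\mathds{1}_{|\langle \upq^{t\odot 2} - \upv_{1:M}^{*\odot 2}, \Pi_M\upx^{t+1}\rangle_{-j}| \le r_j(\upq^t)R}$ by $\calE_2^{j,t}(\upv^t)$ together with the inductive hypothesis $\upq^t = \upv^t$, and $\mathds{1}_{|\zeta_{M+1:\infty}^{t+1}| \le \sigma_{\zeta_{M+1:\infty}}R}$, $\mathds{1}_{|\xi^{t+1}| \le \sigma_\xi R}$ by $\calE_3^t$ and $\calE_4^t$ respectively. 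Once all indicators are $1$, the right-hand side of \eqref{update-q-primal} reduces exactly to the coordinate-wise update for $\upv_j^{t+1}$ displayed just above it, so $\upq^{t+1} = \upv^{t+1}$ on $\calE$.

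The mild subtlety is that the events $\calE_1^{j,t}, \calE_3^t, \calE_4^t$ are defined purely in terms of the fresh sample at iteration $t+1$ and do not reference any iterate, while $\calE_2^{j,t}(\upv^t)$ references $\upv^t$; here the inductive hypothesis is exactly what lets us replace $\upv^t$ by $\upq^t$ inside the indicator so that the reduction goes through. Since $\calE = \bigcap_{t=0}^{T-1}(\cdots)$ is precisely the intersection over all iterations of all these events, on $\calE$ the induction runs from $0$ up to $T$, yielding $\upv^t = \upq^t$ for every $t \in [1:T]$. The probability bound then follows immediately from the union bound already established in the excerpt, $\bbP(\calE) \ge 1 - \calO(\delta/T)$; to get the stated ``at least $1 - \delta/T$'' one absorbs the universal constant into the $\calO(\cdot)$ inside the definition $R = \calO(\log(MT/\delta))$, i.e.\ choosing the constant in $R$ large enough.

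I do not anticipate a genuine obstacle here — the statement is essentially bookkeeping that consolidates the event construction preceding it. The only point requiring care is making sure the conditioning/measurability lines up: $\calE_2^{j,t}$ must be evaluated at the iterate $\upv^t$ (equivalently $\upq^t$), which is $\calF^t$-measurable, while the sample $(\Pi_M\upx^{t+1}, \zeta_{M+1:\infty}^{t+1}, \xi^{t+1})$ is independent of $\calF^t$; this is what justifies the per-step probability bound $1 - \calO(\delta/(MT^2))$ quoted from Assumption~\ref{ass-d} and Proposition~\ref{prop-A5}, and hence the union bound. Beyond that, the proof is a one-paragraph induction, and I would write it as such.
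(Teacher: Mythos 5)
Your proposal is correct and follows essentially the same route the paper takes: the paper's (largely implicit) justification of Proposition~\ref{p1} is exactly that on the compound event $\calE$ all truncation indicators in Eq.~\eqref{update-q-primal} equal one, so the control sequence's update collapses to the true SGD update, with the probability coming from the stated union bound $\bbP(\calE)\ge 1-\calO(\delta/T)$. Your explicit induction (using the hypothesis $\upq^t=\upv^t$ to transfer $\calE_2^{j,t}(\upv^t)$ to the indicator evaluated at $\upq^t$) is just the careful write-up of that same argument, so no gap.
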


To simplify the representation of $\{\upq^t\}_{t=0}^{T}$, we introduce four truncated random variables as:
\begin{enumerate}
    \item $\widehat{\upx}\in\bbR^M$ with entries $\widehat{\upx}_j=\upx_j\mathds{1}_{|\upx_j|\leq\lambda_j^{1/2}R}$ for any $j\in[1:M]$,
    \item $\widehat{\upz}(\upq)\in\bbR^M$ with entries $\widehat{\upz}_j(\upq)=\llangle\upq^{\odot2}-\upv_{1:M}^{*\odot2},\Pi_M\upx\rrangle_{-j}\mathds{1}_{|\langle\upq^{\odot2}-\upv_{1:M}^{*\odot2},\Pi_M\upx\rangle_{-j}|\leq r_j(\upq)R}$
    \item $\widehat{\zeta}_{M+1:\infty}=\zeta_{M+1:\infty}\mathds{1}_{\zeta_{M+1:\infty}\leq \sigma_{\zeta_{M+1:\infty}}R}$,
    \item $\widehat{\xi}=\xi\mathds{1}_{\xi\leq \sigma_{\xi}R}$. 
\end{enumerate}
Thus, the coordinate-wise update dynamics for $\{\upq^{t}\}_{t=0}^T$ in Eq.~\eqref{update-q-primal} can be represented as:
\begin{align}\label{update-q}
    \upq_j^{t+1}=&\upq_j^t-\eta_t\left(\left(\upq_j^t\right)^2-\left(\upv_j^*\right)^2\right)\left(\widehat{\upx}_j^{t+1}\right)^2\upq_j^t-\eta_t\widehat{\upz}_j^{t+1}(\upq^t)\widehat{\upx}_j^{t+1}\upq_j^t\notag\notag
    \\
    &+\eta_t\left(\widehat{\zeta}_{M+1:\infty}^{t+1}+\widehat{\xi}^{t+1}\right)\widehat{\upx}_j^{t+1}\upq_j^t,
\end{align}
for any $j\in[1:M]$.

\subsection{Proof of Phase I}\label{phase 1}

In this section, we formally propose the proof techniques of \textbf{Phase I} in Theorem \ref{theorem_1}. Theorem \ref{theorem_1} establishes that Algorithm \ref{SGD} adaptively selects a effective dimension $D \in \mathbb{N}_+$ with the following convergence properties: (1) for $j \leq D$, $\upv_j^{T_1}$ converges to an adaptive neighborhood of $\upv_j^*$; (2) for $j > D$, $\upv_j^{T_1}$ is bounded by $\frac{3}{2}\max\{\upv_j^*,2\upv_j^0\}$. Theorem \ref{theorem_1} specifies the intrinsic relationship between Algorithm \ref{SGD}'s key parameters: the recommended step size $\eta$, effective dimension $D$, and total sample size $T$. Furthermore, under Assumption \ref{ass-ss}, Phase II analysis demonstrates the optimality of the effective dimension $D$ selected in Theorem \ref{theorem_1}.


\begin{theorem}\label{theorem_1}[Formal version of Theorem \ref{phase-I-informal} ]
	Under Assumption \ref{ass-d}, consider the dynamic generated via Algorithm \ref{SGD} with initialization $\mathbf{v}_0$. Denote (1) the threshold vector $\hat{\mathbf{v}}^* \in \mathbb{R}^M$ with coordinate $\hat{\mathbf{v}}_j^* = \max\left\{\frac{3}{2}\mathbf{v}_j^*, 3\mathbf{v}_j^0\right\}$ for any $j \in [1:M]$; (2) the composite vector $\mathbf{b} = ((1 + c_1)(\mathbf{v}_{1:D}^*)^{\top}, (\hat{\mathbf{v}}_{D+1:M}^*)^{\top})^{\top}$, where the scaling constant $c_1 \in (0, 1/2)$. Let the step size $\eta$ satisfy $\eta \leq \widetilde{\Omega}\left(\frac{c_1^2 \Barsigmin(\max\{D,M\})}{[\sigma^2 + \mathcal{M}^2(\upb)]^2}\right)$ for the given effective dimension $D \in \mathbb{N}_+$. If the iteration number $T_1$ requires:
    \begin{align}
        T_1\in\begin{cases}
        \left[\widetilde{\mathcal{O}}\left(\frac{\sigma^2 + \mathcal{M}^2(\upb)}{c_1^2 \eta \sigmin(D)}\right): \widetilde{\Omega}\left(\frac{\Tildesigmax^{-1}(D)}{\eta^2[\sigma^2 + \mathcal{M}^2(\upb)]}\right)\right], \quad &\text{ if }D < M,\notag
        \\
\left[\widetilde{\mathcal{O}}\left(\frac{\sigma^2 + \mathcal{M}^2(\upb)}{c_1^2 \eta \sigmin(M)}\right):\infty\right), \quad &\text{ otherwise},
    \end{cases}
    \end{align}
    then the dynamic satisfies the following convergence property:
    \begin{align}\label{thm-1-eq}
        \upv_j^{T_1} \in \begin{cases}
            \left[\upv_j^{*}-c_1\upv_j^*, \upv_j^{*}+c_1\upv_j^*\right], &\text{ if }j\in[1:D],
            \\
            \left[0, \frac
            {3}{2}\max\{\upv_j^*,2\upv_j^0\}\right], &\text{ otherwise },
        \end{cases}
    \end{align}
    with probability at least $1-\delta$.
\end{theorem}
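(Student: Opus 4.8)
\textbf{Proof plan for Theorem \ref{theorem_1} (Phase I).}

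The plan is to analyze the truncated control sequence $\{\upq^t\}_{t=0}^{T_1}$ defined by \eqref{update-q}, which by Proposition \ref{p1} agrees with $\{\upv^t\}_{t=0}^{T_1}$ on the high-probability event $\calE$; all subsequent estimates are carried out conditionally on this event and we pay the $\calO(\delta/T)$ loss at the end. Since the update of coordinate $j$ has the form $\upq_j^{t+1} = \upq_j^t\bigl(1 - \eta_t((\upq_j^t)^2-(\upv_j^*)^2)(\widehat{\upx}_j^{t+1})^2 - \eta_t \widehat{\upz}_j^{t+1}\widehat{\upx}_j^{t+1} + \eta_t(\widehat\zeta + \widehat\xi)^{t+1}\widehat{\upx}_j^{t+1}\bigr)$, the sign of $\upq_j^t$ is preserved and (for $j\le D$) the multiplicative factor is a mean-one-ish contraction toward $\upv_j^*$ plus zero-mean, controlled-variance noise. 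I would split the argument exactly along the three lemmas sketched in the body.

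First, for the \emph{upper bound} (Lemma \ref{phase-1-step-1}): I would define the capped coupling sequence $\{\bar\upv^t\}$ that freezes a coordinate once it exceeds its target ($(1+c_1)\upv_j^*$ for $j\in\calS$, $\hat\upv_j^*$ for $j\notin\calS$), so that on the capped sequence each coordinate stays bounded by $\upb_j$; this makes $\widehat{\upz}_j$ and hence the cross-term noise uniformly controlled by $\calM(\upb)$ and $\sigma$. Then for a fixed coordinate $j$ I would form the process $\log(\bar\upv_j^t)$ (valid since signs are preserved and the cap keeps it away from $0$ via the initialization lower bound $\upv^0=\Omega(\min\{1,M^{-(\beta-\alpha)/4}\})\mathbf 1_M$), subtract off its drift, and observe the result is a supermartingale with sub-Gaussian increments whose parameter I estimate using $\bbE[(\widehat\upx_j)^2]\asymp\lambda_j$, $\bbE[(\widehat\upx_j)^4]\asymp\lambda_j^2$, and the event bounds; the drift is nonpositive once $\bar\upv_j^t$ is near or above $\upv_j^*$ because the $-\eta_t((\upq_j^t)^2-(\upv_j^*)^2)(\widehat\upx_j)^2$ term dominates. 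A union bound over $j$ (hence the $R=\calO(\log(MT/\delta))$ logarithmic factors) plus the stepsize constraint $\eta\le\widetilde\Omega(c_1^2\Barsigmin/[\sigma^2+\calM^2(\upb)]^2)$ (which guarantees each single step cannot overshoot the cap) gives $\upq_j^{T_1}\le\upb_j$ for all $j$ with high probability, and because the cap was never actually triggered with high probability, $\upq=\bar\upv$ and the bound transfers.

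Second, for the \emph{lower bound} on $\calS$ (Lemmas \ref{phase-1-step-2}–\ref{phase-1-step-3}): conditioned on the high-probability event that no coordinate ever exceeded its cap, for each $j\in\calS$ I would extract the subcoupling $\{\breve\upv^{i,t}\}$ and study $\log(\breve\upv_j^t) - t\log(1+\eta\,c\,\lambda_j(\upv_j^*)^2)$, which is a submartingale with sub-Gaussian increments as long as $\breve\upv_j^t$ stays below $(1-c_1/2)\upv_j^*$ (there the drift per step is $\gtrsim\eta\lambda_j(\upv_j^*)^2$). The deterministic drift accumulates to $\gtrsim T_1\eta\sigmin(D)$, and the lower endpoint $T_1\gtrsim\widetilde\calO((\sigma^2+\calM^2(\upb))/(c_1^2\eta\sigmin(D)))$ is precisely what forces this to exceed $\log((1-c_1/2)\upv_j^*/\upv_j^0)$ beyond the sub-Gaussian fluctuations; so $\max_{t\le T_1}\upq_j^t\ge(1-c_1/2)\upv_j^*$ whp. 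Once that hitting time occurs, repeat the supermartingale argument of the first part downward (this is Lemma \ref{phase-1-step-3}) to show the process does not fall back below $(1-c_1)\upv_j^*$ by step $T_1$ — here the upper endpoint $T_1\le\widetilde\Omega(\Tildesigmax^{-1}(D)/(\eta^2[\sigma^2+\calM^2(\upb)]))$ matters, since it caps the total noise variance $\sum\eta_t^2\cdot(\text{noise})$ accumulated after hitting. Combining all three lemmas on the intersection of their high-probability events, and transferring back from $\upq$ to $\upv$ via Proposition \ref{p1}, yields \eqref{thm-1-eq} with probability $\ge 1-\delta$.

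\textbf{Main obstacle.} The hard part is the bookkeeping of the sub-Gaussian parameters for the cross-term $\widehat\upz_j(\upq^t) = \langle\upq^{t\odot2}-\upv_{1:M}^{*\odot2},\Pi_M\upx^{t+1}\rangle_{-j}$: this is where the anisotropy bites, since $\upq_i^t$ for $i\ne j$ sit at wildly different scales ($\asymp\upv_i^*$, decaying in $i$) while the $\lambda_i$ also decay, and one must show the aggregate noise injected into coordinate $j$ is $\lesssim(\sigma^2+\calM^2(\upb))$ uniformly — crucially \emph{not} growing with $M$ or $D$ — which is exactly why the sums $\calM^2(\upb)=\sum_j\lambda_j\upb_j^4$ and $\sigma^2$ converge under Assumption \ref{ass-ss} ($\alpha,\beta>1$). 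Getting the partition of $[1:M]$ into $\calS$ and $\calS^c$ right when estimating these parameters, and then matching the resulting stepsize/iteration windows to the claimed $\widetilde\calO/\widetilde\Omega$ expressions, is the delicate core; the rest is standard supermartingale concentration (Azuma/Freedman-type) plus a union bound.
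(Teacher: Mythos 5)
Your overall architecture coincides with the paper's: pass to the truncated control sequence, build a capped coupling, prove (i) a high-probability upper bound for all coordinates, (ii) a hitting-time lower bound for $j\in\calS$ via a log-transformed submartingale, (iii) post-hitting stability via a downward supermartingale, then transfer back through Proposition \ref{p1}. The choice to run the upper bound through $\log(\bar\upv_j^t)$ rather than the paper's geometrically compensated raw process $(1-\eta\Theta(\lambda_j(\upv_j^*)^2))^{-t}(\bar\upv_j^t-\upv_j^*)$ is a cosmetic difference, and your lower-endpoint accounting for $T_1$ (drift $T_1\eta\sigmin(D)$ must beat $\log(1/\upv_j^0)$ and the sub-Gaussian fluctuations) matches Lemma \ref{lemma-2}.

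The genuine gap is in where the \emph{upper} endpoint of the $T_1$ window enters. You attribute $T_1\lesssim \Tildesigmax^{-1}(D)/(\eta^2[\sigma^2+\calM^2(\upb)])$ to the post-hitting stability of the dominant coordinates (your Lemma-\ref{phase-1-step-3} step), but that step needs no such cap: once $\upv_j^t$ is in the interval $[\tfrac{1-c_1}{1+c_1}\upb_j,\tfrac{1}{1+c_1}\upb_j]$ the drift contracts toward $\upv_j^*$ at rate $\Theta(\eta\lambda_j(\upv_j^*)^2)$, and the paper's supermartingale bound is time-uniform with exponent $c_1^2(\upv_j^*)^2/(\eta[\sigma^2+\calM^2(\upb)])$, depending only on $\eta$. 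Worse, your variance-accumulation reading would not even close for $j\le D$: there $\lambda_j\ge\Tildesigmax(D)$, so capping $T_1$ by $\Tildesigmax^{-1}(D)/(\eta^2[\cdot])$ does not make $T_1\eta^2\lambda_j[\sigma^2+\calM^2(\upb)]$ small. Where the cap is actually indispensable is the coordinates $j\in[D+1:M]$ in the upper-bound lemma (the paper's Case II of Lemma \ref{lemma_1}): for these the restoring drift $\eta\lambda_j\bar\upv_j(\bar\upv_j+\upv_j^*)$ is negligible, so ``drift nonpositive above $\upv_j^*$ plus concentration'' — which is how your first part treats all coordinates uniformly — only yields an exponent of order $1/(T_1\eta^2\lambda_j[\sigma^2+\calM^2(\upb)])$, and keeping them below $\tfrac32\max\{\upv_j^*,2\upv_j^0\}$ is exactly what forces the residual-spectrum condition and hence the right endpoint of the $T_1$ interval. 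As written, your plan never produces that condition (the tail coordinates are handled by the same contraction heuristic as $\calS$), and simultaneously inserts it where it is neither needed nor sufficient; fixing this requires splitting the upper-bound concentration into the two regimes with their different exponents, which is precisely the $\calS$/$\calS^c$ bookkeeping you flagged as the delicate core but did not carry out.
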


Before the beginning of our proof, we define the $\mathbf{b}$-capped coupling processes used in the following lemmas as below. 
\begin{definition}[$\mathbf{b}$-capped coupling] \label{def:b-capped}
  Let $\{\mathbf{q}^t\}_{t=0}^T$ be a Markov chain in $\mathbb{R}_+^M$ adapted to filtration $\{\mathcal{F} ^{t}\}_{t=0}^T$. Given threshold vector $\mathbf{b}\in\bbR_+^{M}$, the $\mathbf{b}$-capped coupling process $\{\bar{\mathbf{v}}^t\}_{t=0}^T$ with initialization $\bar{\mathbf{v}}^0=\mathbf{q}^0\le \mathbf{b}$ evolves as:
  \begin{enumerate}
      \item Updating state: If $\bar{\mathbf{v}}^t\le \mathbf{b}$, let $\bar{\mathbf{v}}^{t+1}=\mathbf{q}^{t+1}$,
      \item Absorbing state: Otherwise, maintain $\bar{\mathbf{v}}^{t+1}=\bar{\mathbf{v}}^{t}$. 
  \end{enumerate} 
\end{definition}

\subsubsection{Part I: the coordinate-wise upper bounds of $\upv^{T_1}$.}
In this part, we establish coordinate-wise upper bounds for $\upv^{T_1}$ in Lemma \ref{lemma_1}. For each coordinate $i \in [1:M]$, we develop a geometrically compensated supermartingale $\{u_i^t:=(1-\eta\Theta(\lambda_i(\upv_i^*)^2))^{-t}(\Bar{\upv}_i^t-\upv_i^*)\}_{t=1}^{T_1}$ using the $\mathbf{b}$-capped coupling sequence $\{\Bar{\upv}^t\}_{t=0}^{T_1}$ derived from the control sequence $\{\upq^t\}_{t=0}^{T_1}$. We precisely calculate the sub-Gussian parameters of the supermartingale increments through geometric series summation over $\mathcal{S}$ and linear summation over $\mathcal{S}^c$. The analysis enables the application of Bernstein-type inequalities to establish the claimed concentration results in Lemma \ref{lemma_1}.

\begin{lemma}\label{lemma_1}[Formal version of Lemma \ref{phase-1-step-1}]
Under the setting of Theorem \ref{theorem_1}, let $\{\mathbf{q}^t\}_{t=0}^{T_1}$ be a Markov chain with its $\mathbf{b}$-capped coupling process $\{\bar{\mathbf{v}}^t\}_{t=0}^{T_1}$. When $\eta\leq\widetilde{\Omega}\left(\frac{1}{\sigma^2+\calM^2(\upb)}\right)$, the inequality $\bar{\upv}^t\geq\textbf{0}$ holds for any $t\in[1:T_1]$. For any $\mathbf{v}\in \mathbb{R}^M$, define the truncation event $\calA(\mathbf{v}):=\{\mathbf{v}\leq\mathbf{b}\}$. For $\delta\in(0,1)$, the following conditions guarantee that $\calA(\Bar{\mathbf{v}}^{T_1})$ holds with probability at least $1-\frac{\delta}{6}$:
\begin{enumerate}
    \item Dominant coordinates condition: $\Barsigmin(D)\geq\frac{\eta}{c_1^2}\calO([\sigma^2+\calM^2(\upb)]\log^5(MT_1/\delta))$,
    \item Residual spectrum condition: $\Tildesigmax(D)\geq T_1\eta^2\calO([\sigma^2+\calM^2(\upb)]\log(\max\{M-D,0\}T_1/\delta)$ $\log^4(MT_1/\delta))$.
\end{enumerate}
\end{lemma}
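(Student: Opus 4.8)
\textbf{Proof plan for Lemma \ref{lemma_1}.}
The plan is to analyze each coordinate $i\in[1:M]$ separately via a geometrically compensated supermartingale built from the $\mathbf{b}$-capped coupling $\{\bar{\upv}^t\}_{t=0}^{T_1}$, and then combine the coordinates through a union bound. First I would verify the nonnegativity claim $\bar{\upv}^t\geq\mathbf{0}$: on the event that $\bar{\upv}^t\le\mathbf{b}$, the truncated multiplicative factor in Eq.~\eqref{update-q} is of the form $\upq_i^t\bigl(1-\eta_t[(\upq_i^t)^2-(\upv_i^*)^2](\widehat{\upx}_i^{t+1})^2-\eta_t\widehat{\upz}_i^{t+1}\widehat{\upx}_i^{t+1}+\eta_t(\widehat{\zeta}^{t+1}_{M+1:\infty}+\widehat{\xi}^{t+1})\widehat{\upx}_i^{t+1}\bigr)$, and the truncation bounds $|\widehat{\upx}_i|\le\lambda_i^{1/2}R$, $|\widehat{\upz}_i|\le r_i(\upq)R$, etc., together with $\eta\le\widetilde{\Omega}(1/(\sigma^2+\calM^2(\upb)))$ force the parenthesized factor to stay positive, so signs are preserved by induction (the absorbing branch trivially preserves nonnegativity).

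Next, for each fixed $i$, define $u_i^t:=(1-\eta\Theta(\lambda_i(\upv_i^*)^2))^{-t}(\bar{\upv}_i^t-\upv_i^*)$. The key step is to show $\{u_i^t\}$ is a supermartingale with sub-Gaussian increments. Taking conditional expectation in Eq.~\eqref{update-q} and isolating the deterministic drift: when $\bar{\upv}_i^t\ge\upv_i^*$, the term $-\eta_t[(\bar{\upv}_i^t)^2-(\upv_i^*)^2]\bbE[(\widehat{\upx}_i^{t+1})^2]\bar{\upv}_i^t$ provides a contraction of order $\eta\lambda_i(\upv_i^*)^2(\bar{\upv}_i^t-\upv_i^*)$ (using $(\bar{\upv}_i^t)^2-(\upv_i^*)^2=(\bar{\upv}_i^t+\upv_i^*)(\bar{\upv}_i^t-\upv_i^*)$ and $\bar{\upv}_i^t+\upv_i^*\gtrsim\upv_i^*$), and the cross-term $\widehat{\upz}_i$ has conditional mean that is controllably small because $\widehat{\upz}_i(\upq)$ is an off-coordinate inner product whose sign fluctuates — these are the truncated versions, so their conditional means equal the untruncated means minus a tail correction of size $\calO(\delta/(MT^2))\cdot(\text{magnitude})$, which is negligible. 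The compensation factor $(1-\eta\Theta(\lambda_i(\upv_i^*)^2))^{-t}$ is chosen precisely to absorb this drift, leaving $\bbE[u_i^{t+1}\mid\calF^t]\le u_i^t$. The increment $u_i^{t+1}-u_i^t$, conditionally, is sub-exponential/sub-Gaussian with parameter governed by $\eta\cdot(\text{range of }\widehat{\upx}_i,\widehat{\upz}_i,\widehat{\zeta},\widehat{\xi})\cdot(\text{range of }\bar{\upv}_i^t)$; since $\bar{\upv}^t\le\mathbf{b}$ on the non-absorbed event, these ranges are bounded by $\lambda_i^{1/2}R\mathbf{b}_i$, $\calM(\upb)R$, $\sigma R$, and $\mathbf{b}_i$ respectively, so the squared sub-Gaussian parameter of the $t$-th increment is $\widetilde{\calO}(\eta^2[\sigma^2+\calM^2(\upb)]\mathbf{b}_i^2\lambda_i\cdot(1-\eta\Theta(\lambda_i(\upv_i^*)^2))^{-2t})$.

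Then I would apply a Freedman/Bernstein-type concentration inequality for supermartingales with sub-exponential increments to control $\max_{t\le T_1}u_i^t$, hence $\bar{\upv}_i^{T_1}$. Summing the increment variances: for $i\in\mathcal{S}$, the effective number of ``active'' steps is $\sim 1/(\eta\lambda_i(\upv_i^*)^2)$ because the compensation geometric factor decays the early-time variance contributions, yielding a total variance proxy of order $\eta[\sigma^2+\calM^2(\upb)]\mathbf{b}_i^2/(\upv_i^*)^2$, which is $\le c_1^2(\upv_i^*)^2/\text{polylog}$ precisely under the Dominant coordinates condition $\Barsigmin(D)\ge(\eta/c_1^2)\calO([\sigma^2+\calM^2(\upb)]\log^5(\cdots))$; this gives $\bar{\upv}_i^{T_1}\le(1+c_1)\upv_i^*$ with probability $1-\calO(\delta/(MT_1))$ per coordinate. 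For $i\in\mathcal{S}^c=[D+1:M]$, the compensation factor is essentially $1$ (since $\lambda_i(\upv_i^*)^2$ is tiny), so variances add \emph{linearly} over $T_1$ steps giving a variance proxy of order $T_1\eta^2[\sigma^2+\calM^2(\upb)]\mathbf{b}_i^2\lambda_i$; requiring this to be smaller than (a constant times) $\mathbf{b}_i^2=\widehat{\upv}_i^{*2}$ up to logs is exactly the Residual spectrum condition via $\lambda_i\le\Tildesigmax(D)$, giving $\bar{\upv}_i^{T_1}\le\frac{3}{2}\max\{\upv_i^*,2\upv_i^0\}=\mathbf{b}_i$. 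A union bound over $i\in[1:M]$ (and over $t\le T_1$ inside each maximal inequality) then yields $\bbP(\calA(\bar{\upv}^{T_1}))\ge1-\delta/6$ after the $\log(MT_1/\delta)$ and $\log(\max\{M-D,0\}T_1/\delta)$ factors are accounted for in the two conditions. The main obstacle I anticipate is the bookkeeping in the variance summation — correctly tracking how the geometric compensation factor converts the step-size/eigenvalue scaling into the $1/\Barsigmin(D)$ versus $T_1\Tildesigmax(D)$ dichotomy, and ensuring the off-coordinate cross term $\widehat{\upz}_i$ (which couples all coordinates and depends on the iterate) does not break the supermartingale property; this is where the $\mathbf{b}$-capping is essential, since it freezes the magnitude of $\widehat{\upz}_i$ in terms of $\calM(\upb)$ rather than the running iterate.
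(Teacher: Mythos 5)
Your overall architecture (nonnegativity by induction, per-coordinate compensated process with sub-Gaussian increments thanks to the $\mathbf{b}$-capping, geometric variance summation on $[1:D]$ versus linear summation on $[D+1:M]$, union bound) is the same as the paper's, and indeed matches how the paper informally describes its own proof. However, there is a genuine gap in your key step: the globally compensated process $u_i^t=(1-\eta\Theta(\lambda_i(\upv_i^*)^2))^{-t}(\bar{\upv}_i^t-\upv_i^*)$ is \emph{not} a supermartingale over the whole horizon. The drift inequality $\bbE[\bar{\upv}_i^{t+1}-\upv_i^*\mid\calF^t]\le(1-\eta C\lambda_i(\upv_i^*)^2)(\bar{\upv}_i^t-\upv_i^*)$ requires, when $\bar{\upv}_i^t\ge\upv_i^*$, that the actual contraction coefficient $\approx\eta\lambda_i\bar{\upv}_i^t(\bar{\upv}_i^t+\upv_i^*)$ (with an extra factor $\tfrac12$ from truncation) be \emph{at least} $\eta C\lambda_i(\upv_i^*)^2$, forcing $C\lesssim1$; but when $\bar{\upv}_i^t<\upv_i^*$ the sign of $\bar{\upv}_i^t-\upv_i^*$ flips and the same inequality requires the actual coefficient to be \emph{at most} $\eta C\lambda_i(\upv_i^*)^2$, which fails for $\bar{\upv}_i^t$ just below $\upv_i^*$ (coefficient $\approx2\eta\lambda_i(\upv_i^*)^2$), and in the plain (rate-$0$) case used for $i\in[D+1:M]$ it fails whenever the iterate sits below $\upv_i^*$, since the upward drift toward $\upv_i^*$ \emph{increases} $\bar{\upv}_i^t-\upv_i^*$ in expectation. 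So no single compensation rate works on both sides of $\upv_i^*$, and the Freedman-type maximal inequality you invoke cannot be applied to $u_i^t$ over $[0:T_1]$ as stated.

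The paper circumvents exactly this by a stopping-time/excursion argument: before $\bar{\upv}_j$ can exceed $\upb_j$ it must enter and traverse a band lying entirely above $\upv_j^*$ (namely $[\tfrac{1}{1+c_1}\upb_j,\upb_j]$ for $j\le D$, resp.\ $[\tfrac23\upb_j,\upb_j]$ for $j>D$), and the drift contraction with a fixed rate is only ever used along such excursions, where $\bar{\upv}_j^t\ge\upv_j^*$ holds; concentration (the paper's Lemma \ref{aux-2}, itself proved via the compensated-supermartingale construction restricted to the excursion) is then applied to each candidate excursion and a union bound is taken over the $\calO(T_1^2)$ time pairs $(t_1,t_2)$ per coordinate, which is also where part of the polylogarithmic factors in the two conditions comes from. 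Your plan becomes correct once you localize the supermartingale argument in this way (condition on the last entry time into the band, or equivalently enumerate excursions); without that localization the claimed "$\bbE[u_i^{t+1}\mid\calF^t]\le u_i^t$" is simply false on part of the state space, and that is the step that does the real work in the lemma.
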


\begin{proof}
Define the random variable 
$$
p_j^{t+1}:=\left(\left((\bar{\mathbf{v}}_j^{t})^2-(\mathbf{v}_{j}^{*})^2\right)\hat{\mathbf{x}}_{j}^{t+1}+\hat{\upz}_j^{t+1}(\bar{\upv}^t)-\hat{\zeta}_{M+1:\infty}^{t+1}-\hat{\xi}^{t+1}\right)\hat{\mathbf{x}}_j^{t+1}
$$
for any $j\in[1:M]$ and $t\in[0:T_1-1]$. Then in the updating state of $\{\bar{\upv}^t\}_{t=0}^{T_1}$, we have
\begin{align}
    \bar{\upv}_j^{t+1}=(1-\eta p_j^{t+1})\bar{\upv}_j^{t},\quad \forall j\in[1:M].
\end{align}
Based on the boundedness of $p_j^{t+1}$ and the appropriately chosen step size $\eta\leq\widetilde{\Omega}\left(\frac{1}{\sigma^2+\calM^2(\upb)}\right)$, if $\bar{\upv}^t>\textbf{0}$, then we have $\bar{\upv}^{t+1}\geq\frac{1}{2}\bar{\upv}^t$. Since $\bar{\upv}^0>0$,  we have $\bar{\upv}^t>0$ for any $t\in[1:T_1]$ by induction.
Let $\bar{\tau}_{\mathbf{b} }$ be the stopping time when $\bar{\mathbf{v}}_j^{\bar{\tau}_{\mathbf{b} }} > \mathbf{b}_j$ for a certain coordinate $j\in[1:M]$, i.e.,
\begin{equation}\nonumber
      \bar{\tau}_{\mathbf{b} }=\inf_{t}\left \{ t:\exists j\in[1:M], \text{ s.t. }\bar{\mathbf{v}}_j^t > \mathbf{b}_j \right\}.
\end{equation}
For each coordinate $1\le j\le M$, let $\bar{\tau}_{\mathbf{b},j }$ be the stopping time when $\bar{\mathbf{v}}^{\bar{\tau}_{\mathbf{b},j }}_j > \mathbf{b}_j$, i.e.,
\begin{equation}\nonumber
\bar{\tau}_{\mathbf{b},j }=\inf_{t}\left \{ t:\bar{\mathbf{v}}_j^t > \mathbf{b}_j \right \}.
\end{equation}
Based on Definition \ref{def:b-capped}, when the stopping time $\bar{\tau}_{\mathbf{b}}=t_2$ occurs for some $t_2\in[1:T_1]$, the coupling process satisfies $\bar{\mathbf{v}}^t=\bar{\mathbf{v}}^{t_2}$ for all $t>t_2$. We categorize the following two cases and analyze the probability bound respectively. 

\noindent \textbf{Case I:} Suppose there exists $j\in[1:D]$ such that $\bar{\tau}_{\mathbf{b},j }=t_2$. 
That is, the event 
$\mathcal{A} \left ( \bar{\mathbf{v}} ^t \right ) $ holds for all $t\in[0:t_2-1]$. 
The boundedness of $p_j^{t+1}$ and the dominant coordinates condition of $\eta$ in Lemma \ref{lemma_1} indicate that $\bar{\upv}_j^t$ must traverse in and out of the threshold interval $\left[\frac{1}{1+c_1}\mathbf{b}_j,\mathbf{b}_j\right]$ before exceeding $\upb_j$.
We aim to estimate the following probability for coordinates $j\in[1:D]$ and time pairs $t_1<t_2\in[1:T_1]$:
\begin{equation}\nonumber
\bbP\left(\mathcal{B}_{t_1}^{\bar{\tau}_{\mathbf{b},j }=t_2}(j)=\left \{ \bar{\mathbf{v}}_j^{t_1}\leq\frac{1+c_1/2}{1+c_1}\mathbf{
b} _j\bigwedge \bar{\mathbf{v}}_j^{t_1:t_2-1}\in\left[\frac{1}{1+c_1}\mathbf{b}_j,\mathbf{b}_j\right]\bigwedge\bar{\mathbf{v}}_j^{t_2}>\mathbf{b}_j  \right \}\right). 
\end{equation}
For any $t\in[t_1:t_2-1]$, we have 
\begin{equation}\label{mar-concen-1}
    \begin{aligned}
        \bbE\left[\bar{\mathbf{v}}_j^{t+1}-\mathbf{v}_j^*\mid\calF^{t}\right]=&\bbE_{\mathbf{x}_{1:M}^{t+1},\xi^{t+1},\zeta_{M+1:\infty}^{t+1}}\left[\bar{\mathbf{v}}_j^{t}-\mathbf{v}_j^*-\eta p_j^{t+1}\bar{\mathbf{v}}_j^{t}\right]
		\\
		\overset{\text{(a)}}{\leq}&\left(1-\frac{1}{2}\eta\lambda_j\bar{\mathbf{v}}_j^{t}(\bar{\mathbf{v}}_j^{t}+\mathbf{v}_j^*)\right)(\bar{\mathbf{v}}_j^{t}-\mathbf{v}_j^*)
		\\
		\leq&\left(1-\frac{1+c_1/2}{\left(1+c_1\right)^2}\eta\lambda_j(\mathbf{v}_j^*)^2\right)(\bar{\mathbf{v}}_j^{t}-\mathbf{v}_j^*),
    \end{aligned}
\end{equation}
where (a) is due to Assumption \ref{ass-d} and Lemma \ref{aux-1.1}. By applying Lemma \ref{aux-1} to $p_j^t$, we demonstrate that $p_j^t$ satisfies the sub-Gaussian property for all $t\in[0:T_1-1]$. Thus we have
\begin{equation}
\small
\begin{split}
    \bbE\left[\exp\left\{\lambda\left(\bar{\mathbf{v}}_j^{t+1}-\bbE[\bar{\mathbf{v}}_j^{t+1}\mid\calF^t]\right)\right\}\mid\calF^t\right]\leq \exp\left\{\frac{\lambda^2\eta^2\lambda_j(\upv_j^*)^2\calO\left(\left[\sigma^2+\mathcal{M}^2(\upb)\right]\log^4(MT_1/\delta)\right)}{2}\right\},\notag
\end{split}
\end{equation}
for any $\lambda\in\bbR$.
Combining Lemma \ref{aux-2} with Eq.~\eqref{mar-concen-1}, we can establish the probability bound for event $\mathcal{B}_{t_1}^{\bar{\tau}_{\mathbf{b},j }=t_2}(j)$ for any time pair $t_1<t_2\in[1:T_1]$ as
\begin{align}\label{probability-1}
\Pro\left(\mathcal{B}_{t_1}^{\bar{\tau}_{\mathbf{b},j }=t_2}(j)\right )\leq&\exp\left\{-\frac{c_1^2(\mathbf{v}_j^*)^2}{\eta\calO\left([\sigma^2+\mathcal{M}^2(\upb)]\log^4(MT_1/\delta)\right)}\right\}.
\end{align}
	
\noindent \textbf{Case II:} Suppose there exists $j\in[D+1:M]$ such that $\bar{\tau}_{\mathbf{b},j }=t_2$.  Similarly, $\bar{\upv}_j^t$ must traverse in and out of the threshold interval $[\frac{2}{3}\upb_j,\upb_j]$ before exceeding $\upb_j$. Therefore, we aim to estimate the following probability for coordinates $j\in[D+1:M]$ and time pairs $t_1<t_2\in[1:T_1]$:
\begin{equation}\nonumber
    \bbP\left(\calC_{t_1}^{\bar{\tau}_{\mathbf{b},j }=t_2}(j)=\left\{\bar{\mathbf{v}}_j^{t_1}\leq\frac{3}{4}\mathbf{b}_j\bigwedge\bar{\mathbf{v}}_j^{t_1:t_2-1}\in\left[\frac{2}{3}\upb_j,\mathbf{b}_j\right]\bigwedge \bar{\mathbf{v}}_j^{t_2}>\mathbf{b}_j \right\}\right).
\end{equation}
For any $t\in[t_1:t_2-1]$, we have 
\begin{equation}\label{mar-concen-2}
    \begin{aligned}
         \bbE\left[\bar{\mathbf{v}}_j^{t+1}-\mathbf{v}_j^*\mid\calF^{t}\right]=&\bbE_{\mathbf{x}_{1:M}^{t+1},\xi^{t+1},\zeta_{M+1:\infty}^{t+1}}\left[\bar{\mathbf{v}}_j^{t}-\mathbf{v}_j^*-\eta p_j^{t+1}\bar{\mathbf{v}}_j^{t}\right]
		\\
		\leq&\left(1-\frac{1}{2}\eta\lambda_j\bar{\mathbf{v}}_j^{t}(\bar{\mathbf{v}}_j^{t}+\mathbf{v}_j^*)\right)(\bar{\mathbf{v}}_j^{t}-\mathbf{v}_j^*)
		\\
		\leq &\bar{\mathbf{v}}_j^{t}-\mathbf{v}_j^*.
    \end{aligned}
\end{equation}
Similarly, based on Lemma \ref{aux-1}, we have
\begin{equation}
    \small
    \begin{split}
        \bbE\left[\exp\left\{\lambda(\bar{\upv}_j^{t+1}-\bbE[\bar{\upv}_j^{t+1}\mid\calF^t])\right\}\mid\calF^t\right]\leq\exp\left\{\frac{\lambda^2\eta^2\lambda_j(\upb_j^*)^2\calO\left(\left[\sigma^2+\mathcal{M}^2(\upb)\right]\log^4(MT_1/\delta)\right)}{2}\right\},\notag
    \end{split}
\end{equation}
for any $\lambda\in\bbR$. Combining Lemma \ref{aux-2} with Eq.~\eqref{mar-concen-2}, we can establish the  probability bound for event $\mathcal{C}_{t_1}^{\bar{\tau}_{\mathbf{b},j }=t_2}(j)$ for any time pair $t_1<t_2\in[1:T_1]$ as
	\begin{align}\label{probability-2}
		\Pro\left (\calC_{t_1}^{\bar{\tau}_{\mathbf{b},j }=t_2}(j)\right )\leq\exp\left\{-\frac{1}{T\eta^2\lambda_j\calO\left([\sigma^2+\calM^2(\upb)]\log^2(MT_1/\delta)\right)}\right\}.
	\end{align}

Finally, combining the probability bounds Eq.~\eqref{probability-1} and Eq.\eqref{probability-2} with the dominant coordinates condition and residual spectrum condition in Lemma \ref{lemma_1}, we obtain the following probability bound for complement event $\calA^c(\bar{\mathbf{v}}^{T_1})$:
	\begin{align}\label{union-1}
		\Pro\left(\calA^c(\bar{\mathbf{v}}^{T_1})\right ) \leq&\sum_{j=1}^D\sum_{1\leq t_1<t_2\leq T_1}\Pro\left(\mathcal{B}_{t_1}^{\bar{\tau}_{\mathbf{b},j }=t_2}(j)\right)\notag+\sum_{j=D+1}^M\sum_{1\leq t_1<t_2\leq T_1}\Pro\left(\calC_{t_1}^{\bar{\tau}_{\mathbf{b},j }=t_2}(j)\right) \notag
		\\
		\leq&\frac{NT_1^2}{2}\exp\left\{-\frac{c_1^2\min_{1\le j\le D}(\mathbf{v}_j^*)^2}{\eta\calO\left(\left[\sigma^2+\calM^2(\upb)\right]\log^4(MT_1/\delta)\right)}\right\}\notag
		\\
		&+\max\{M-D,0\}T_1\exp\left\{-\frac{\min_{D+1\le j\le M}\lambda_j^{-1}}{T_1\eta^2\calO\left([\sigma^2+\calM^2(\upb)]\log^4(MT_1/\delta)\right)}\right\}\notag
		\\
		\leq&\frac{\delta}{12}.
	\end{align}
\end{proof}

Lemma \ref{lemma_1} establishes the adaptive high-probability upper bounds for each coordinate of $\bar{\upv}^{T_1}$. According to the construction methodology of the coupling process $\{\bar{\upv}^t\}_{t=0}^{T_1}$, these bounds can be naturally extended to $\upq^{T_1}$. Moreover, the high-probability consistency between control sequence $\{\upq^t\}_{t=0}^{T}$ and original sequence $\{\upv^t\}_{t=0}^{T}$ (refer to Proposition \ref{p1}) allows the direct application of Lemma \ref{lemma_1} to $\upv^{T_1}$. It similarly holds for  Lemmas \ref{lemma-2} and \ref{lemma-3}, respectively.

\subsubsection{Part II: The coordinate-wise lower bounds of $\bar{\upv}^{T_1}$}
Deriving a direct high-probability lower bound for $\bar{\upv}^{T_1}$ proves to be a challenge. We turn to the lower bound of $\max_{t\leq T_1} \bar{\upv}_i^t$ during $T_1$ iterations. First  we propose Lemma \ref{lemma-2} to construct such bounds for $\max_{t\leq T_1}\bar{\upv}_j^t$ adaptively over $j\in[1:D]$. We derive a subcoupling sequence $\{\Breve{\upv}^{i,t}\}_{t=0}^{T_1}$ from the original coupling sequence $\{\Bar{\upv}^t\}_{t=0}^{T_1}$ for any $i\in\mathcal{S}$. Each subcoupling sequence undergoes logarithmic transformation to generate a linearly compensated submartingale $\{-t\log(1+\eta\calO(\lambda_i(\upv_i^*)^2))+\log(\breve{\upv}_i^{i,t})\}_{t=1}^{T_1}$. These $|\mathcal{S}|$ submartingales exhibit monotonic growth with sub-Gaussian increments. Applying Bernstein-type concentration inequalities, we obtain $\max_{t\leq T_1}\upv_i^t\geq(1-c_1/2)\upv_i^*$ with high probability for any $i\in\mathcal{S}$ in Lemma \ref{lemma-2}.  
 
\begin{lemma}\label{lemma-2}[Formal version of Lemma \ref{phase-1-step-2}]
	Under the setting of Lemma \ref{lemma_1}, let  $$\eta\leq\frac{c_1\log^{-4}(MT_1/\delta)\min_{j\in[1:D]}(\upv_j^*)^2}{\calO(\sigma^2+(1+C)\mathcal{M}^2(\upb))}$$ and 
    $$
    T_1\geq\max\left\{\frac{\calO\left(\max_{j\in[1:D]}-\log(\upv_j^0)\right)}{c_1\eta\sigmin(D)},\frac{\calO\left([\sigma^2+\mathcal{M}^2(\upb)]\log^8(MT_1/\delta)\right)}{c_1^2\min_{j\in[1:D]}(\lambda_j(\upv_j^*)^4)}\right\}
    $$
    The combined event set satisfies $\mathbb{P}\left(\left(\bigcap_{j=1}^D\calE_{1,j}\right)\bigcup\calE_2\right) \geq 1 - \frac{\delta}{6}$. where $$\calE_{1,j} := \left\{\max_{t\leq T_1}\bar{\upv}_j^t\geq\frac{1-c_1/2}{1+c_1}\upb_j \right\},\quad \forall j\in[1:D],$$ and $\calE_2 := \left\{ \calA^c(\bar{\upv}^{T_1})\right\}$.
\end{lemma}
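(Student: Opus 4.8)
The plan is to fix a single coordinate $i\in[1:D]$, show that $\bbP\bigl(\calE_{1,i}^c\cap\calE_2^c\bigr)\le\delta/(6D)$, and then union bound, using the identity $\bigcup_{i=1}^D\bigl(\calE_{1,i}^c\cap\calE_2^c\bigr)=\bigl(\bigcap_{i=1}^D\calE_{1,i}\bigr)^c\cap\calE_2^c$; this yields $\bbP\bigl(\bigl(\bigcap_{i=1}^D\calE_{1,i}\bigr)\cup\calE_2\bigr)=1-\bbP\bigl(\bigl(\bigcap_i\calE_{1,i}\bigr)^c\cap\calE_2^c\bigr)\ge 1-\delta/6$. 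So everything reduces to a per-coordinate first-hitting estimate.

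For a fixed $i$ I would introduce a subcoupling $\{\breve{\upv}^{i,t}\}_{t=0}^{T_1}$ that copies the $\upb$-capped coupling $\{\bar{\upv}^t\}_{t=0}^{T_1}$ of Lemma~\ref{lemma_1} up to the stopping time $\tau_i:=\inf\bigl\{t\ge 1:\bar{\upv}_i^t\ge(1-c_1/2)\upv_i^*\text{ or }\bar{\upv}^t\not\le\upb\bigr\}$ and freezes afterwards. The design principle is that on $\calE_{1,i}^c\cap\calE_2^c$ one has $\tau_i>T_1$: coordinate $i$ never reaches $(1-c_1/2)\upv_i^*$, and $\calE_2^c$ (i.e.\ $\calA(\bar{\upv}^{T_1})$) forces $\bar{\upv}^t\le\upb$ for all $t\le T_1$, so the coupling is in its updating state throughout and $\breve{\upv}_i^{i,T_1}=\bar{\upv}_i^{T_1}<(1-c_1/2)\upv_i^*<\upv_i^*$. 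This is exactly the regime in which coordinate $i$ carries a strictly positive multiplicative drift toward $\upv_i^*$; working with $\breve{\upv}^{i,t}$ rather than $\bar{\upv}^t$ directly is what keeps the increments bounded and the drift sign-definite off the good event.

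Next I would pass to $Y_t:=\log\breve{\upv}_i^{i,t}$. By Lemma~\ref{lemma_1} (positivity of $\bar{\upv}^t$ and $\bar{\upv}^{t+1}\ge\tfrac12\bar{\upv}^t$ under the stated $\eta$) and the truncations, $\breve{\upv}_i^{i,t+1}/\breve{\upv}_i^{i,t}=1-\eta p_i^{t+1}\in[\tfrac12,\tfrac32]$, so $Y_t$ is well defined with bounded increments. Because the cross terms of $p_i^{t+1}$ with $\widehat{\upz}_i,\widehat{\zeta}_{M+1:\infty},\widehat{\xi}$ vanish in conditional expectation ($\upx_i$ is independent of the other coordinates and of the noise, and the truncation of $\upx_i$ is symmetric), one gets $\bbE[p_i^{t+1}\mid\calF^t]=\widetilde{\lambda}_i\bigl((\breve{\upv}_i^{i,t})^2-(\upv_i^*)^2\bigr)$ with $\widetilde{\lambda}_i:=\bbE[(\widehat{\upx}_i)^2]\asymp\lambda_i$; combining this with $\log(1-x)\ge -x-x^2$ and the sub-Gaussian second-moment bound $\bbE[(p_i^{t+1})^2\mid\calF^t]=\lambda_i\cdot\calO\bigl([\sigma^2+\calM^2(\upb)]\log^4(MT_1/\delta)\bigr)$ of Lemma~\ref{aux-1}, I would show that on the stopped region
\[
\bbE[Y_{t+1}-Y_t\mid\calF^t]\ \ge\ \mu_i:=\Omega\bigl(c_1\eta\lambda_i(\upv_i^*)^2\bigr),
\]
where the step-size hypothesis $\eta\lesssim c_1\log^{-4}(MT_1/\delta)\Barsigmin(D)/(\sigma^2+\calM^2(\upb))$ is precisely what makes the $\calO(\eta^2)$ linearization/variance correction negligible against the linear drift. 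The same estimate gives that $Y_{t+1}-Y_t-\bbE[Y_{t+1}-Y_t\mid\calF^t]$ is sub-Gaussian with parameter $\nu_i^2:=\eta^2\lambda_i\cdot\calO\bigl([\sigma^2+\calM^2(\upb)]\log^4(MT_1/\delta)\bigr)$, so $\{Y_{t\wedge\tau_i}-(t\wedge\tau_i)\mu_i\}_{t=0}^{T_1}$ is a submartingale with sub-Gaussian increments, and a Bernstein/Azuma-type bound (Lemma~\ref{aux-2}) yields $\bbP\bigl(Y_{T_1\wedge\tau_i}<\log\upv_i^0+(T_1\wedge\tau_i)\mu_i-\lambda\bigr)\le\exp\bigl(-\lambda^2/(2T_1\nu_i^2)\bigr)$ for any $\lambda>0$.

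Finally, since $\calE_{1,i}^c\cap\calE_2^c$ forces $\tau_i>T_1$ and $Y_{T_1}<\log\bigl((1-c_1/2)\upv_i^*\bigr)$, this event is incompatible with the complement of the bad concentration event as soon as $T_1\mu_i\ge\lambda+\log(1/\upv_i^0)$; taking $\lambda=\sqrt{2T_1\nu_i^2\log(6D/\delta)}$ and unwinding reproduces exactly the two lower bounds on $T_1$ in the statement — namely $T_1\gtrsim(-\log\upv_i^0)/(c_1\eta\sigmin(D))$ to dominate $\log(1/\upv_i^0)$, and $T_1\gtrsim[\sigma^2+\calM^2(\upb)]\log^{\calO(1)}(MT_1/\delta)/\bigl(c_1^2\min_{j\in[1:D]}\lambda_j(\upv_j^*)^4\bigr)$ to dominate $\lambda$. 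Hence $\bbP(\calE_{1,i}^c\cap\calE_2^c)\le\exp(-\lambda^2/(2T_1\nu_i^2))=\delta/(6D)$, and the union bound over $i\in[1:D]$ closes the proof. I expect the main obstacle to be the drift step: pushing the truncated, logarithmically transformed increments through carefully enough that the multiplicative drift lower bound $\mu_i$ holds \emph{uniformly} on the entire stopped region — this is where the interplay between the truncation radius $R=\calO(\log(MT/\delta))$, the step-size cap, and the source-condition scaling of $(\upv_i^*)^2$ must be balanced so that neither the $\log(1-x)$ linearization error nor the anisotropic gradient-noise variance overwhelms the drift, and where the subcoupling construction (rather than working with $\bar{\upv}^t$) is essential.
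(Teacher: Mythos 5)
Your proposal is correct and follows essentially the same route as the paper's proof: a per-coordinate sub-coupling derived from the $\upb$-capped coupling, a logarithmic transformation yielding a linearly compensated submartingale with drift $\Omega(c_1\eta\lambda_i(\upv_i^*)^2)$, an Azuma-type concentration bound, and a union bound over $i\in[1:D]$ against $\calE_2^c$. The only (immaterial) difference is bookkeeping after the stopping time: you freeze the sub-coupling and compensate by $(t\wedge\tau_i)\mu_i$, whereas the paper keeps multiplying the coupled coordinate by $1+\tfrac{c_1(1-c_1)\eta}{2}\lambda_i(\upv_i^*)^2$ so the compensator stays $t\log(1+\cdot)$ for all $t$; both preserve the submartingale property and lead to the same conditions on $\eta$ and $T_1$.
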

\begin{proof}
    For a fixed $j\in[1:D]$, we define 
    the subcoupling $\{\breve{\upv}^t\}_{t=0}^{T_1}$ with initialization $\breve{\upv}^0=\bar{\upv}^0$ as follows: 
    \begin{enumerate}
        \item Updating state: If 
        event $\calB_t(j)=\left\{\calA(\breve{\upv}^t)\bigwedge\breve{\upv}_j^t<\frac{1-c_1/2}{1+c_1}\upb_j\right\}$ holds, let $\breve{\upv}^{t+1}=\bar{\upv}^{t+1}$,
        \item Multiplicative scaling state: Otherwise, let $\breve{\upv}^{t+1}=\left(1+\frac{c_1(1-c_1)\eta}{2}\lambda_{j}(\upv_{j}^*)^2\right)\breve{\upv}^{t}$.
    \end{enumerate}
    We aim to demonstrate that $-t\log(1+\frac{c_1(1-c_1)\eta}{2}\lambda_{j}(\upv_{j}^*)^2)+\log(\breve{\upv}_j^t)$ is a submartingale. 
    If event $\calB_t^c(j)$ holds, we directly obtain $\bbE[\log(\breve{\upv}_j^{t+1}) \mid \calF^t] \geq \log(1+\frac{c_1(1-c_1)\eta}{2}\lambda_j(\upv_j^*)^2) + \log(\breve{\upv}_j^t)$. Otherwise, letting 
    $$
    w_j^t:=\hat{z}_j^t(\bar{\upv}^{t-1})-\hat{\zeta}_{M+1:\infty}^t-\hat{\xi}^t,\quad \forall t\in[1:T_1],
    $$ 
    we have
	\begin{align}
		\bbE\left[\log(\breve{\upv}_j^{t+1})\mid\calF^t\right]=&\bbE\left[\log(\bar{\upv}_j^{t+1})\mid\calF^t\right]\notag
		\\
		=&\bbE_{\upx_{1:M}^{t+1},\xi^{t+1},\zeta_{M+1:\infty}^{t+1}}\left[\log\left(1-\eta\left((\bar{\upv}_j^t)^2-(\upv_j^*)^2\right)(\hat{\upx}_j^{t+1})^2-\eta w_j^{t+1}\hat{\upx}_j^{t+1}\right)\right]\notag
        \\
        &+\log(\bar{\upv}_j^t)\notag
	\\
	\overset{\text{(a)}}       {\geq}&\log\left(1+\frac{3c_1(1-c_1)\eta}{4}\lambda_j(\upv_j^*)^2\right)\notag
        \\
        &-\eta^2\lambda_j\calO\left(\left[\sigma^2+(1+C)\mathcal{M}^2(\upb)\right]\log^4(MT_1/\delta)\right)+\log(\bar{\upv}_j^t)\notag
        \\
        \overset{\text{(b)}}{\geq}&\log\left(1+\frac{c_1(1-c_1)\eta}{2}\lambda_j(\upv_j^*)^2\right)+\log(\bar{\upv}_j^t)\notag
		\\
		\overset{\text{(c)}}{=}&\log\left(1+\frac{c_1(1-c_1)\eta}{2}\lambda_j(\upv_j^*)^2\right)+\log(\breve{\upv}_j^t),\notag
	\end{align} 
    where (a) is based on the following three facts: 1) the Taylor expansion of $\log(a+\cdot)$ with $a=1+\eta((\upv_j^*)^2-(\bar{\upv}_j^t)^2)\bbE[(\hat{\upx}_j^{t+1})^2]$; 2) the property that $Y_j^{t+1}$ is zero-mean and independent of $\hat{\upx}_j^{t+1}$; and 3) the step size $\eta\leq\frac{c_1(\upv_j^*)^2\log^{-4}(MT_1/\delta)}{\calO(\sigma^2+(1+C)\mathcal{M}^2(\upb))}$ ensures that $1-\tau\eta((\bar{\upv}_j^t)^2-(\upv_j^*)^2)[(\hat{\upx}_j^{t+1})^2-\bbE[(\hat{\upx}_j^{t+1})^2]]-\tau\eta w_j^{t+1}\hat{\upx}_j^{t+1}\geq1/2$ for any $\tau\in[0,1]$, 
    (b) is due to the inequality $\log(1+\frac{c_1(1-c_1)\eta}{16}\lambda_j(\upv_j^*)^2)\geq\eta^2\lambda_j\calO([\sigma^2+(1+C)\mathcal{M}^2(\upb)]$ $\log^4(MT_1/\delta))$, and (c) relies on the temporal exclusivity property that if event $\calB_t^c(j)$ occurs at time $t$, then $\calB_{t}(j)$ is permanently excluded for all subsequent times $t' > t$. Therefore, based on the submartingale, we obtain
    \begin{align}\label{lower-2}
        &\Pro\left\{\breve{\upv}_j^{T_1}<\frac{1-c_1/2}{1+c_1}\upb_j\right\}\notag
        \\
        \overset{\text{(d)}}{\leq}&\exp\left\{-\frac{2\left({T_1}\log\left(1+\frac{c_1(1-c_1)\eta}{2}\lambda_j(\upv_j^*)^2\right)+\log(v_j^0)-\log\left(\frac{1-c_1/2}{1+c_1}\upb_j\right)\right)^2}{{T_1}\eta^2\lambda_j\calO\left([\sigma^2+\mathcal{M}^2(\upb)]\log^6(MT_1/\delta)\right)}\right\}\notag
        \\
        \overset{\text{(e)}}{\leq}&\exp\left\{-\frac{{T_1}\log^2\left(1+\frac{c_1(1-c_1)\eta}{2}\lambda_j(\upv_j^*)^2\right)}{\eta^2\lambda_j\calO\left([\sigma^2+\mathcal{M}^2(\upb)]\log^6(MT_1/\delta)\right)}\right\}\notag
        \\
        \overset{\text{(f)}}{\leq}&\frac{\delta}{12N}
    \end{align}
    where (d) is derived from Azuma's inequality and the estimation of $\left|\log(\breve{\upv}_j^{t+1})-\log(\breve{\upv}_j^{t})\right|$ below:
    \begin{align}
        \left|\log(\breve{\upv}_j^{t+1})-\log(\breve{\upv}_j^{t})\right|\leq\eta\lambda_j^{1/2}\calO\left(\left[\sigma^2+\mathcal{M}^2(\upb)\right]^{1/2}\log^4(MT_1/\delta)\right),
    \end{align}
    which implies that
    \begin{align}
        \left|\log(\breve{\upv}_j^{t+1})-\log\left(1+\frac{c_1(1-c_1)\eta}{2}\lambda_{j}(\upv_{j}^*)^2\right)-\log(\breve{\upv}_j^{t})\right|^2\leq\eta^2\lambda_j\calO\left([\sigma^2+\mathcal{M}^2(\upb)]\log^8(MT_1/\delta)\right).\notag
    \end{align}
    Moreover, since ${T_1}\log(1+\frac{c_1(1-c_1)\eta}{2}\lambda_j(\upv_j^*)^2)/4\geq-\log(v_j^0)$ and $c_1^2{T_1}\lambda_j(\upv_j^*)^4\geq\calO([\sigma^2+\mathcal{M}^2(\upb)]$ $\log^8(MT_1/\delta))$, we obtain inequalities (e) and (f). 
    If $\calA(\bar{\upv}^{T_1})$ holds, Eq.~\eqref{lower-2} illustrates that $\bbP(\calE_{1,j}^c)\leq\frac{\delta}{12N}$. Thus, we have $\bbP(\calE_{1,j}^c\bigcap\calE_2^c)\leq\frac{\delta}{6N}$.
\end{proof}

Second, we construct the high-probability lower bound for $\bar{\mathbf{v}}_j^{T_1}$ for any $j\in[1:D]$ in Lemma \ref{lemma-3}. The proof technique of Lemma \ref{lemma-3} mirrors that of Lemma \ref{lemma_1}.
By contrast, we construct geometrically compensated supermartingale $\{-u_i^t\}_{t=1}^{T_1}$ for each $i\in\mathcal{S}$.
The proof is finished by applying Bernstein-type concentration inequalities to these constructed supermartingales, yielding the required probabilistic bounds.

\begin{lemma}\label{lemma-3}[Formal version of Lemma \ref{phase-1-step-3}]
    Under the setting of Lemma \ref{lemma_1}, let 
    $$
    \eta\leq\frac{c_1^2\Barsigmin(D)}{\calO([\sigma^2+\calM^2(\upb)]\log^4(MT_1/\delta))}.
    $$ 
    The combined event set satisfies $\mathbb{P}\left(\bigcap_{j=1}^D\left(\bigcup_{k={3,4}}\calE_{k,j}\right)\right) \geq 1 - \frac{\delta}{6}$, where 
    $$
    \calE_{3,j} := \left\{\max_{t\leq T_1}\bar{\mathbf{v}}_j^t<\frac{1-c_1/2}{1+c_1}\mathbf{b}_j \right\},\quad\calE_{4,j}:= \left\{\bar{\mathbf{v}}_j^{T_1}\geq\frac{1-c_1}{1+c_1}\mathbf{b}_j\right\},\quad \forall j\in[1:D].
    $$
\end{lemma}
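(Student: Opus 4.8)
The plan is to run the argument of Lemma \ref{lemma_1} in reverse, replacing the upper capping threshold by a matching lower one. Since $\upb_j=(1+c_1)\upv_j^*$ for $j\in[1:D]$, the two levels in the statement are $\tfrac{1-c_1/2}{1+c_1}\upb_j=(1-c_1/2)\upv_j^*$ and $\tfrac{1-c_1}{1+c_1}\upb_j=(1-c_1)\upv_j^*$, so the claim is that for every $j\in[1:D]$, with high probability, either $\bar\upv_j^t$ never climbs above $(1-c_1/2)\upv_j^*$, or it never afterwards falls below $(1-c_1)\upv_j^*$. Concretely, for each $j\in[1:D]$ I would introduce the stopping time $\sigma_j:=\inf\{t\le T_1:\ \bar\upv_j^t<(1-c_1)\upv_j^*\ \text{and}\ \max_{s\le t}\bar\upv_j^s\ge(1-c_1/2)\upv_j^*\}$ and observe that, reading its definition at $t=T_1$, $\{\sigma_j=\infty\}\subseteq\calE_{3,j}\cup\calE_{4,j}$. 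Hence it suffices to show $\bbP\big(\bigcup_{j=1}^D\{\sigma_j<\infty\}\big)\le\delta/6$, working throughout on the bounded-increment event $\calE$ and using $\bar\upv^t\ge\mathbf{0}$, which holds since the step-size hypothesis here implies $\eta\le\widetilde{\Omega}(1/(\sigma^2+\calM^2(\upb)))$ as in the proof of Lemma \ref{lemma_1}.

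Next I would build, for each $j\in[1:D]$, the geometrically compensated supermartingale capturing the contraction of the \emph{deficit} $\upv_j^*-\bar\upv_j^t$. Repeating the drift computation behind Eq.~\eqref{mar-concen-1} (Assumption \ref{ass-d} and Lemma \ref{aux-1.1}), but now on the region $\bar\upv_j^t\in[(1-c_1)\upv_j^*,\upv_j^*]$ where $c_1<1/2$ forces $\bar\upv_j^t(\bar\upv_j^t+\upv_j^*)\ge\tfrac34(\upv_j^*)^2$, one gets $\bbE[\upv_j^*-\bar\upv_j^{t+1}\mid\calF^t]\le(1-\tfrac12\eta\lambda_j\bar\upv_j^t(\bar\upv_j^t+\upv_j^*))(\upv_j^*-\bar\upv_j^t)\le(1-\tfrac38\eta\lambda_j(\upv_j^*)^2)(\upv_j^*-\bar\upv_j^t)$. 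Thus with $c_j:=\tfrac38\lambda_j(\upv_j^*)^2=\Theta(\lambda_j(\upv_j^*)^2)$ the process $-u_j^t:=(1-\eta c_j)^{-t}(\upv_j^*-\bar\upv_j^t)$ is a supermartingale on that band, and Lemma \ref{aux-1} applied to $p_j^{t+1}$ shows its increments are sub-Gaussian with the same parameter $\eta^2\lambda_j(\upv_j^*)^2\calO([\sigma^2+\calM^2(\upb)]\log^4(MT_1/\delta))$ that appears in Lemma \ref{lemma_1}.

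Then I would run the level-crossing estimate exactly as in Case I of Lemma \ref{lemma_1}. On $\{\sigma_j=t_2\}$, the bounded increments on $\calE$ together with the step-size bound force $\bar\upv_j$ to enter the band $[(1-c_1)\upv_j^*,\upv_j^*]$ from above before dropping below $(1-c_1)\upv_j^*$; taking $t_1$ to be the last time before $t_2$ with $\bar\upv_j^{t_1}\ge(1-\tfrac34 c_1)\upv_j^*$, the event $\mathcal{D}_{t_1}^{\sigma_j=t_2}(j):=\{\bar\upv_j^{t_1}\ge(1-\tfrac34 c_1)\upv_j^*\ \bigwedge\ \bar\upv_j^{t_1:t_2-1}\in[(1-c_1)\upv_j^*,\upv_j^*]\ \bigwedge\ \bar\upv_j^{t_2}<(1-c_1)\upv_j^*\}$ occurs, a downward excursion of size $\gtrsim c_1\upv_j^*$ against the positive drift of $-u_j^t$. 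Lemma \ref{aux-2} then bounds $\bbP(\mathcal{D}_{t_1}^{\sigma_j=t_2}(j))\le\exp\{-c_1^2(\upv_j^*)^2/(\eta\calO([\sigma^2+\calM^2(\upb)]\log^4(MT_1/\delta)))\}$, the twin of Eq.~\eqref{probability-1}. Summing over $j\in[1:D]$ and the at most $T_1^2$ pairs $(t_1,t_2)$, the hypothesis $\eta\le c_1^2\Barsigmin(D)/\calO([\sigma^2+\calM^2(\upb)]\log^4(MT_1/\delta))$ makes each term $\le\delta/(6DT_1^2)$, hence $\bbP(\bigcup_j\{\sigma_j<\infty\})\le\delta/6$. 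Transferring from $\bar\upv^{T_1}$ to $\upv^{T_1}$ via the $\upb$-capped coupling (Definition \ref{def:b-capped}) and Proposition \ref{p1} then gives the claim for $\{\upv^t\}$.

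I expect the main obstacle to be the bookkeeping in the level-crossing decomposition: because $\bar\upv_j^t$ can repeatedly rise above $(1-c_1/2)\upv_j^*$ and retreat, and can also overshoot above $\upv_j^*$, where the drift of the deficit changes sign, one must argue that every descent below $(1-c_1)\upv_j^*$ is preceded by a \emph{fresh} traversal of the fixed-width band $[(1-c_1)\upv_j^*,\upv_j^*]$ on which $-u_j^t$ is genuinely a supermartingale, so that the union over $(t_1,t_2)$ is legitimate and each summand carries the full $\exp\{-\Theta(c_1^2(\upv_j^*)^2/\eta)\}$ decay. The accompanying sub-Gaussian increment estimates, while necessary, are mechanical given Lemma \ref{aux-1}, just as in Lemma \ref{lemma_1}.
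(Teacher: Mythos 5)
Your proposal is correct and follows essentially the same route as the paper: reduce $\calE_{3,j}^c\wedge\calE_{4,j}^c$ to a downward excursion of the deficit $\upv_j^*-\bar{\upv}_j^t$ through the band $[(1-c_1)\upv_j^*,\upv_j^*]$, exploit the contraction $\bbE[\upv_j^*-\bar{\upv}_j^{t+1}\mid\calF^t]\le(1-\Theta(\eta\lambda_j(\upv_j^*)^2))(\upv_j^*-\bar{\upv}_j^t)$ on that band together with the sub-Gaussian increment bound from Lemma \ref{aux-1}, apply the supermartingale concentration of Lemma \ref{aux-2} to each crossing, and union over the $O(T_1^2)$ time pairs and $j\in[1:D]$ using the step-size hypothesis. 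The only differences (a single combined stopping time instead of the paper's pair $\tau_{0,j},\tau_{1,j}$, the entry level $(1-\tfrac34c_1)\upv_j^*$ instead of $(1-c_1/2)\upv_j^*$, and the optional transfer to $\upv^{T_1}$) are cosmetic.
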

\begin{proof}
For any $j \in [1:D]$, if $\mathcal{E}_{3,j}^c$ occurs, there exists $t \in [1:T_1]$ such that $\bar{\mathbf{v}}_j^t \geq \frac{1 - c_1/2}{1 + c_1}\mathbf{b}_j$. 
Define $\tau_{0,j}$ as the stopping time satisfying $\bar{\mathbf{v}}_j^{\tau_{0,j}} \geq \frac{1 - c_1/2}{1 + c_1}\mathbf{b}_j$ as:
\begin{equation}\nonumber
   \tau_{0,j}=\inf_{t}\left \{ t:\bar{\mathbf{v}}_j^{t}\ge \frac{1-c_1/2}{1+c_1}\mathbf{b}_j\right \}.
\end{equation}
We also define $\tau_{1,j}$ as the stopping time satisfying $\bar{\mathbf{v}}_j^{\tau_{1,j}}<\frac{1-c_1}{1+c_1}\mathbf{b}_j$ after $\tau_{0,j}$ as:
\begin{equation}\nonumber
     \tau_{1,j} =\inf_{t>\tau_{0,j}}\left \{ t:\bar{\mathbf{v}}_j^{t}<\frac{1-c_1}{1+c_1}\mathbf{b}_j\right \}.
\end{equation}
Based on the definition of $\{\bar{\mathbf{v}}^t\}_{t=0}^{T_1}$, once the event $\calA^c(\bar{\mathbf{v}}^t)$ occurs, the coupling process satisfies $\bar{\mathbf{v}}^{t'}=\bar{\mathbf{v}}^{t}$ for any $t'>t$. Therefore, $\mathcal{A}(\bar{\mathbf{v}}^t)$ holds for all $t \leq \tau_{1,j}$. Moreover, $\bar{\upv}_j^t$ must traverse in and out of the threshold interval $\left[\frac{1-c_1}{1+c_1}\upb_j,\frac{1}{1+c_1}\upb_j\right]$ before subceeding $\frac{1-c_1}{1+c_1}\upb_j$. We aim to estimate the following probability for coordinates $j\in[1:D]$ and time pairs $t_0<t_1\in[1:T_1]$:
\begin{equation}\nonumber
    \bbP\left(\bar{\mathcal{D} }_{\tau_0=t_0}^{\tau_1=t_1}\left ( j \right )=\left\{\bar{\mathbf{v}}_j^{t_0}\geq\frac{1-c_1/2}{1+c_1}\mathbf{b}_j\bigwedge\bar{\mathbf{v}}_j^{t_0:t_1-1}\in\left[\frac{1-c_1}{1+c_1}\mathbf{b}_j,\frac{1}{1+c_1}\mathbf{b}_j\right]\bigwedge\bar{\mathbf{v}}_j^{t_1}<\frac{1-c_1}{1+c_1}\mathbf{b}_j\right\}\right).
\end{equation}
For any $t\in[t_0:t_1-1]$, we have 
\begin{equation}\label{mar-concen-3}
    \begin{aligned}
        \bbE\left[\mathbf{v}_j^*-\bar{\mathbf{v}}_j^{t+1}\mid\mathcal{F}^t\right]=&\bbE_{{\mathbf{x}}_{1:M}^{t+1},{\xi}^{t+1},{\zeta}_{M+1:\infty}^{t+1}}\left[\mathbf{v}_j^*-\bar{\mathbf{v}}_j^{t}+\eta\left(\left((\bar{\upv}_j^t)^2-(\upv_j^*)^2\right)\hat{\upx}_j^{t+1}+\hat{\upz}_j^{t+1}(\bar{\upv}^t)\right.\right.
        \\
        &\, \, \, \, \, \, \, \, \, \, \, \, \, \, \, \, \, \, \, \, \, \, \, \, \, \, \, \, \, \, \, \, \, \, \, \, \, \, \, \, \, \, \, \, \, \left.\left.-\hat{\zeta}_{M+1:\infty}^{t+1}-\hat{\xi}^{t+1}\right)\hat{\mathbf{x}}_j^{t+1}\bar{\mathbf{v}}_j^{t+1}\right]
		\\
		\leq&\left(1-\frac{1-c_1}{(1+c_1)^2}\eta\lambda_{j}(\mathbf{v}_{j}^*)^2\right)(\mathbf{v}_j^*-\bar{\mathbf{v}}_j^{t}).
    \end{aligned}
\end{equation}
Applying Lemma \ref{aux-1} to $(((\bar{\upv}_j^t)^2-(\upv_j^*)^2)\hat{\upx}_j^{t+1}+\hat{\upz}_j^{t+1}(\bar{\upv}^t)-\hat{\zeta}_{M+1:\infty}^{t+1}-\hat{\xi}^{t+1})\hat{\mathbf{x}}_j^{t+1}$, we have
\begin{equation}
\small
    \begin{split}
        \bbE\left[\exp\left\{\lambda\left(\bbE[\bar{\mathbf{v}}_j^{t+1}\mid\calF^t]-\bar{\mathbf{v}}_j^{t+1}\right)\right\}\mid\calF^t\right]\leq \exp\left\{\frac{\lambda^2\eta^2\lambda_j(\upv_j^*)^2\calO\left(\left[\sigma^2+\mathcal{M}^2(\upb)\right]\log^4(MT_1/\delta)\right)}{2}\right\},\notag
    \end{split}
\end{equation}
for any $\lambda\in\bbR$. Therefore, combining Lemma \ref{aux-2} with Eq.~\eqref{mar-concen-3}, we establish the probability bound for event $\bar{\mathcal{D} }_{\tau_0=t_0}^{\tau_1=t_1}\left(j\right)$ with any time pair $t_0<t_1\in[1:T_1]$ as 
	\begin{align}\nonumber
		\Pro\left(\bar{\mathcal{D} }_{\tau_0=t_0}^{\tau_1=t_1}\left ( j \right )\right)\leq\exp\left\{\frac{-c_1^2(\mathbf{v}_j^*)^2}{\eta\calO\left(\left[\sigma^2+\calM^2(\upb)\right]\log^4(MT_1/\delta)\right)}\right\}.
	\end{align}
 Notice that the occurrence of  $\calE_{3,j}^c\bigwedge\calE_{4,j}^c$ implies $\bar{\mathcal{D} }_{\tau_0=t_0}^{\tau_1=t_1}\left ( j \right )$  must hold for certain $t_0<t_1\in[1:T_1]$. Therefore, we have
	\begin{align}
		\Pro\left(\calE_{3,j}^c\bigwedge\calE_{4,j}^c\right)\leq&\sum_{1\leq t_1<t_2\leq T_1}	\Pro\left(\bar{\mathcal{D} }_{\tau_0=t_0}^{\tau_1=t_1}\left ( j \right )\right)\notag
		\\
		\leq&\frac{T_1^2}{2}\exp\left\{\frac{-c_1^2(\mathbf{v}_j^*)^2}{\eta\calO\left(\left[\sigma^2+\calM^2(\upb)\right]\log^4(MT_1/\delta)\right)}\right\}\notag
		\\
		\leq&\frac{T_1^2}{2}\exp\left\{\frac{-c_1^2\min_{j\in[1:D]}(\mathbf{v}_{j}^*)^2}{\eta\calO\left(\left[\sigma^2+\calM^2(\upb)\right]\log^4(MT_1/\delta)\right)}\right\}\notag
		\\
		\leq&\frac{\delta}{6N}.
	\end{align}

\end{proof}

Combining Lemma \ref{lemma_1} in \textbf{Part I} and Lemma \ref{lemma-2}, Lemma \ref{lemma-3} in \textbf{Part II}, we have now completed the proof of Theorem \ref{theorem_1}.

\begin{proof}[Proof of Theorem \ref{theorem_1}]
    First, we notice that in the setting of Theorem \ref{theorem_1}, 
	\begin{align}\label{eta-choice}
		\eta\leq\frac{c_1^2\Barsigmin(D)}{\calO\left(\left[\sigma^2+\mathcal{M}^2(\upb)\right]\log^4(MT_1/\delta)\right)},
	\end{align} 
	and  
	\begin{align}\label{T-choice}
		\begin{cases}
			\frac{\calO\left([\sigma^2+\mathcal{M}^2(\upb)]\log^8(MT_1/\delta)-\min_{j\in[1:D]}\log(\upv_j^0)\right)}{c_1^2\eta\sigmin(D)}\leq T_1\leq\frac{\log^{-4}(MT_1/\delta)\log((M-D)T_1/\delta)}{\eta^2\Tildesigmax(D)\calO\left([\sigma^2+\mathcal{M}^2(\upb)]\right)}, & \text{ if }M>D,
			\\
			\frac{\calO\left([\sigma^2+\mathcal{M}^2(\upb)]\log^8(MT_1/\delta)-\min_{j\in[1:D]}\log(\upv_j^0)\right)}{c_1^2\eta\sigmin(D)}\leq T_1, & \text{ otherwise }.
		\end{cases}
	\end{align}
	 satisfy all assumptions in Lemmas \ref{lemma_1}-\ref{lemma-3}. Thus we can use all results in Lemma \ref{lemma_1}-\ref{lemma-3}. \ref{lemma_1} yields $\Pro\{\bar{\upv}^{T_1}>\upb\}\leq\frac{\delta}{6}$. Lemma \ref{lemma-2} implies that $\Pro\{\min_{j\in[1:D]}\max_{t\leq {T_1}}(\bar{\upv}_j^t-\frac{1-c_1/2}{1+c_1}\upb_j)<0\bigwedge\bar{\upv}^{T_1}\leq\upb\}\leq\frac{\delta}{6}$. Combining Lemma \ref{lemma_1} and \ref{lemma-2}, we have $\Pro\{\min_{j\in[1:D]}$ $\max_{t\leq {T_1}}(\bar{\upv}_j^t-\frac{1-c_1/2}{1+c_1}\upb_j)<0\}\leq\frac{\delta}{3}$. Moreover, Lemma \ref{lemma-3} indicates that $\Pro\{\min_{j\in[1:D]}$ $\max_{t\leq {T_1}}(\bar{\upv}_j^t-\frac{1-c_1/2}{1+c_1}\upb_j)\geq0\bigwedge\min_{j\in[1:D]}(\bar{\upv}_j^{T_1}-\frac{1-c_1}{1+c_1}\upb_j)<0\}\leq\frac{\delta}{6}$. Combining these results, we establish the final probability bound: $\Pro\{|\bar{\upv}_{1:D}^{T_1}-\upv_{1:D}^*|\leq\frac{c_1}{1+c_1}\upb_{1:D}\bigwedge\bar{\upv}_{D+1:M}^{T_1}\leq\upb_{D+1:M}\}\geq1-\frac{2}{3}\delta$, and this bound can be extended to $\upq^{T_1}$ by the definition of capped coupling process in Definition \ref{def:b-capped}. By Proposition \ref{p1}, we complete the proof.
\end{proof}

\subsection{Proof of Phase II}\label{phase-2}

In this section, we introduce the proof techniques of \textbf{Phase II} in Theorem \ref{phase-II-main}, where we construct the global convergence analysis of Algorithm \ref{SGD} for risk minimization. We demonstrate that after Phase I (i.e., $t>T_1$), the iterations of $\upv^t$ are confined within a neighborhood of $\upv_{1:M}^*$ with high probability. Therefore, the SGD dynamics for the quadratic model can be well approximated by the dynamics for the linear model with high probability. Therefore, we can extend the analytical techniques for SGD in the linear model to obtain the conclusion of Theorem \ref{phase-II-main}. 

Theorem \ref{theorem_1} illustrates that the output of Algorithm \ref{SGD} after $T_1$ iterations lies in the neighborhood of the ground truth within a constant factor, namely, $|\upv_{1:D}-\upv_{1:D}^*|\leq c_1\upv_{1:D}^*$. Thus, we use $\upv^{T_1}$, which satisfies Eq.~\eqref{thm-1-eq}, as the initial point for the SGD iterations in \textbf{Phase II}, and set the annealing learning rate to guarantee the output of Algorithm \ref{SGD} fully converges to $\calR_M(\upv_{1:M}^*)$. Before we formal propose Theorem \ref{phase-II-main}, we preliminarily introduce some of the coupling process, auxiliary function, and notations used for our statement of Theorem \ref{phase-II-main} and analysis in Phase II. We introduce the truncated coupling $\{\widehat{\upv}^t\}_{t=0}^{T_2}$ as follows:
\begin{align}
	\begin{cases}
		\widehat{\upv}^{t+1}=\upv^{T_1+t+1}, & \text{ if }\calG(\widehat{\upv}^t)\text{ occurs },
		\\
		\widehat{\upv}^{\tau+1}=\frac{13}{4}\upv_{1:M}^*, \quad\forall \tau\geq t, & \text{ otherwise },\notag
	\end{cases}
\end{align}
with initialization $\widehat{\upv}^0=\upv^{T_1}$ which satisfies Eq.~\eqref{thm-1-eq}, where event 
\begin{align}\label{event-G}
\calG(\upv):=\left\{\widehat{\upv}_j^t\in\left[\frac{1}{2}\upv_j^*,\frac{3}{2}\upv_j^*\right],\, \, \forall j\in[1:D]\bigwedge\widehat{\upv}_j^t\in\left[0,2\upv_j^*\right],\, \, \forall j\in[D+1:M]\right\},
\end{align}
for any $\upv\in\bbR^M$ and $T_2=T-T_1$. Moreover, we define the auxiliary function $\psi:\bbR^M\rightarrow\bbR^M$ as:
\begin{align}
	\psi(\upv)=\begin{cases}
		\upv, & \text{ if }\calG(\upv)\text{ occurs },
		\\
		\upv_{1:M}^*, & \text{ otherwise}.
	\end{cases}\notag
\end{align}
Thus we construct the truncated sequence  $\{\upw^t=\psi(\widehat{\upv}^{t})\}_{t=0}^{T_2}$.
In this phase, our analysis primarily focuses on the trajectory of $\upw^t$. Based on the generation mechanism of the sequence $\{\upw^t\}_{t=0}^{T_2}$, the update from $\upw^{t}$ to $\upw^{t+1}$ can be categorized into two cases:
Case I) $\upw^{t+1}$ remains updated, with its iteration closely approximating SGD updates in linear models \citep{wu2022last};
Case II) For any $\tau\geq t$, $\upw^{\tau+1}$ does not update and remains constant at $\upv_{1:M}^*$.

We also define some notations for simplifying the representation. For any $\upv,\upu\in\bbR^d$, we define $\upv\odot\upu=(\upv_1\upu_1,\cdots,\upv_d\upu_d)^{\top}$ and $\diag\{\upv\}=\diag\{\upv_1,\cdots,\upv_d\}\in\bbR^{d\times d}$. Let $\upH=\frac{25}{4}\upLambda\diag\{\widehat{\upb}\odot\widehat{\upb}\}$ with $\widehat{\upb}^{\top}=\left((\upv_{1:D}^*)^{\top},(\widehat{\upv}_{D+1:M}^*)^{\top}\right)$ and $\hat{\upv}^*$ satisfies $\hat{\upv}_j^*=\max\left\{\frac{3}{2}\upv_j^*,3\upv_j^0\right\}$. We also denote $\upH_{\upw}^t=(\upw^t\odot\Pi_M\upx^{t+1})\otimes((\upw^t+\upv_{1:M}^*)\odot\Pi_M\upx^{t+1})$ and $\upR_{\upw}^t=(\xi^{t+1}+\zeta_{M+1:\infty}^{t+1})\diag\{\upw^t\}$ for simplicity.
We denote the following linear operators that will be used in the proof:
\begin{align}
	\calI:=\upI\otimes\upI,\quad &\calH_{\upw}^t:=\bbE_t\left[\upH_{\upw}^t\otimes(\upH_{\upw}^t)^{\top}\right],\quad \widetilde{\calH}_{\upw}^t:=\bbE_t\left[\upH_{\upw}^t\right]\otimes\bbE_t\left[\upH_{\upw}^t\right],\notag
	\\
	\calG_{\upw}^t:=\bbE_t\left[\upH_{\upw}^t\right]\otimes\upI+\upI\otimes&\bbE_t\left[\upH_{\upw}^t\right]-\eta_t\calH_{\upw}^t,\quad \widetilde{\calG}_{\upw}^t:=\bbE_t\left[\upH_{\upw}^t\right]\otimes\upI+\upI\otimes\bbE_t\left[\upH_{\upw}^t\right]-\eta_t\widetilde{\calH}_{\upw}^t,\notag
\end{align}
where $\bbE_t[\cdot]=\bbE[\cdot\mid\calF^t]$. For any operator $\calA$, we use $\calA\circ\upA$ to denote $\calA$ acting on a symmetric matrix $\upA$. It's easy to directly verify the following rules for above operators acting on a symmetric matrix $\upA$:
\begin{align}
	\calI\circ\upA=\upA,\quad \calH_{\upw}^t\circ\upA=&\bbE_t\left[\upH_{\upw}^t\upA(\upH_{\upw}^t)^{\top}\right],\quad\widetilde{\calH}_{\upw}^t\circ\upA=\bbE_t\left[\upH_{\upw}^t\right]\upA\bbE_t\left[\upH_{\upw}^t\right],\notag
	\\
	\left(\calI-\eta_t\calG_{\upw}^t\right)\circ\upA=&\bbE_t\left[\left(\upI-\eta_t\upH_{\upw}^t\right)\upA\left(\upI-\eta_t\upH_{\upw}^t\right)\right],\notag
	\\
	\left(\calI-\eta_t\widetilde{\calG}_{\upw}^t\right)\circ\upA=&\left(\upI-\eta_t\bbE_t\left[\upH_{\upw}^t\right]\right)\upA\left(\upI-\eta_t\bbE_t\left[\upH_{\upw}^t\right]\right).\notag
\end{align}
The following is the formalized expression of the iteration process for $\upw^t$. For all $t\in[0:T_2-1]$, if $\upw^{t+1}=\upv^{T_1+t+1}$ (i.e., event $\calG(\upv^{T_1+t+1})$ occurs), $\upw^{t+1}$ follows the update rule as:
\begin{align}\label{iteration}
	\upw^{t+1}-\upv_{1:M}^*=\upw^t-\upv_{1:M}^*-\eta_t\upH_{\upw}^t\left(\upw^t-\upv_{1:M}^*\right)+\eta_t\upR_{\upw}^t\Pi_M\upx^t.
\end{align}
Otherwise, we have 
$$
\upw^{\tau+1}=\upv_{1:M}^*,\quad \forall \tau\geq t.
$$  
Since $\upw^{t+1}$ = $\upv^{T_1+t+1}$ implies $\upw^t = \upv^{T_1+t}$, but the converse does not necessarily hold, we derive the recurrence process as:
\begin{align}
	&\bbE\left[\left(\upw^{t+1}-\upv_{1:M}^*\right)^{\otimes2}\right]\preceq\bbE\left[\left(\upw^t-\upv_{1:M}^*-\eta_t\upH_{\upw}^t\left(\upw^t-\upv_{1:M}^*\right)+\eta_t\upR_{\upw}^t\Pi_M\upx^t\right)^{\otimes2}\mathds{1}_{\upw^t=\upv^{T_1+t}}\right].\notag
\end{align}
Define $\widehat{\upw}^t:=\upw^t-\upv_{1:M}^*$. The iterative update of $\widehat{\upw}^t$ can be decomposed into two random processes,
\begin{align}
	\widehat{\upw}^{t}=\mathds{1}_{\upw^t=\upv^{T_1+t}}\cdot\widehat{\upw}_{\bi}^t+\mathds{1}_{\upw^t=\upv^{T_1+t}}\cdot\widehat{\upw}_{\var}^t,\quad \forall t\in[0:T_2],
\end{align}
where $\{\widehat{\upw}_{\var}^t\}_{t=1}^{T_2}$ is recursively defined by
\begin{align}
	\begin{cases}
		\widehat{\upw}_{\var}^{t+1}=\left(\upI-\eta_t\upH_{\upw}^t\right)\widehat{\upw}_{\var}^t+\eta_t\upR_{\upw}^t\Pi_M\upx^t,  & \text{ if }\upw^t=\upv^{T_1+t},
		\\\widehat{\upw}_{\var}^{t+1}=\mathbf{0}, & \text{ otherwise },
	\end{cases}\notag
\end{align}
for any $t\in[0:T_2-1]$ with $\widehat{\upw}_{\var}^0=\textbf{0}$ and $\{\widehat{\upw}_{\bi}^t\}_{t=1}^{T_2}$ is recursively defined by
\begin{align}
	\begin{cases}
		\widehat{\upw}_{\bi}^{t+1}=\left(\upI-\eta_t\upH_{\upw}^t\right)\widehat{\upw}_{\bi}^t, & \text{ if }\upw^t=\upv^{T_1+t},
		\\
		\widehat{\upw}_{\bi}^{t+1}=\mathbf{0}, & \text{ otherwise },\notag
	\end{cases}
\end{align}
for any $t\in[0:T_2-1]$ with  $\widehat{\upw}_{\bi}^0=\upw^0-\upv_{1:M}^*$. We define the $t$-th step bias iteration as $\upB^t=\bbE\left[\widehat{\upw}_{\bi}^t\otimes\widehat{\upw}_{\bi}^t\right]$ and $t$-th step variance iteration as $\upV^t=\bbE\left[\widehat{\upw}_{\var}^t\otimes\widehat{\upw}_{\var}^t\right]$.
Therefore, we can derive the following relations for 
$\{\upB^t\}_{t=0}^{T_2}$ and $\{\upV^t\}_{t=0}^{T_2}$:
\begin{align}
	\begin{cases}\label{bias-and-variance}
        \upB^{t+1}&\preceq\bbE\left[\left(\calI-\eta_t\calG_{\upw}^t\right)\circ\left(\widehat{\upw}_{\bi}^t\otimes\widehat{\upw}_{\bi}^t\right)\right],
		\\
		\upV^{t+1}&\preceq\bbE\left[\left(\calI-\eta_t\calG_{\upw}^t\right)\circ\left(\widehat{\upw}_{\var}^t\otimes\widehat{\upw}_{\var}^t\right)\right]+\eta_t^2\upSigma_{\upw}^t,
	\end{cases}\quad \forall t\in[0:T_2-1],
\end{align}
with $\upB^0=\left(\upw^0-\upv_{1:M}^*\right)\left(\upw^0-\upv_{1:M}^*\right)^{\top}$ and $\upV^0=\mathbf{0}$, where $\upSigma_{\upw}^t=\sigma^2\upLambda\bbE\left[\diag\{\upw^t\odot\upw^t\}\right]$. 

We formally propose Theorem \ref{phase-II-main} as below.
\begin{theorem}\label{phase-II-main}[Formal version of Theorem \ref{phase-II-informal}]
    Suppose Assumption \ref{ass-d} and \ref{ass-ss} hold, and let $T_1=\lceil(T-h)/\log(T-h)\rceil$ and $h=\lceil T/\log(T)\rceil$.
    Under the following setting
    \begin{enumerate}
        \item There exists $D<M$ such that $\eta_0\leq\widetilde{\Omega}(\min\{\tr^{-1}(\upH),\Barsigmin(D)\})$ and $T_1=\widetilde{\calO}(\frac{\sigma^2+\calM^2(\widehat{\upb})}{\eta_0\sigmin(D)})$,
        \item Let $D=M$, $\eta_0\leq\widetilde{\Omega}(\min\{\tr^{-1}(\upH),\Barsigmin(M)\})$, and $T_1\geq\widetilde{\calO}(\frac{\sigma^2+\calM^2(\widehat{\upb})}{\eta_0\sigmin(M)})$,
    \end{enumerate}
    we have
    \begin{align}\label{phaseII-p1}
        \bbE\left[\calR_M(\upw^{T_2})-\calR_M(\upv_{1:M}^*)\right]\lesssim&\sigma^2\left(\frac{N_0'}{K}+\eta_0\sum_{i=N_0'+1}^{N_0}\lambda_i(\upv_i^*)^2\right)\notag
        \\
        &+\sigma^2\eta_0^2(h+T_1)\sum_{i=N_0+1}^M\lambda_i^2(\widehat{\upb}_i^*)^4\notag
        \\
        &+\llangle\frac{1}{\eta_0 T_1}\upI_{1:N_1}+\upH_{N_1+1:M},\left(\upI-\eta_0\widehat{\upH}\right)^{2h}\upB^0\rrangle\notag
        \\
        &+\Gamma(\upH)\llangle\frac{1}{\eta_0h}\upI_{1:N_1'}+\upH_{N_1'+1:M},\upB^0\rrangle,
    \end{align}
    for arbitrary $D\geq N_0\geq N_0'\geq 0$ and $D\geq N_1\geq N_1'\geq 0$, where $\Gamma(\upH):=(\frac{625N_1'}{T_1}+\frac{25\eta_0h}{T_1}\tr(\upH_{N_1'+1:N_1})+\eta_0^2h\tr(\upH_{N_1+1:M}^2))$ and $T_2=T-T_1$. Specially, we have 
    \begin{equation}\label{phaseII-p2}
        \begin{split}
        \calR_M(\upv^{T})-\calR_M(\upv_{1:M}^*)\lesssim&\frac{\sigma^2N}{T_1}+\sigma^2\eta_0^2(h+T_1)\sum_{i=D+1}^M\lambda_i^2(\widehat{\upb}_i^*)^4
        \\
        &+\llangle\frac{1}{\eta_0 T_1}\upI_{1:D}+\upH_{D+1:M},\left(\upI-\eta_0\widehat{\upH}\right)^{2h}\upB^0\rrangle
        \\
        &+\left(\frac{D}{T_1}+\eta_0^2h\tr(\upH_{D+1:M}^2)\right)\llangle\frac{1}{\eta_0h}\upI_{1:D}+\upH_{D+1:M},\upB^0\rrangle,
        \end{split}
    \end{equation}
    with probability at least 0.95.
\end{theorem}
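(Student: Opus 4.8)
The plan is to feed the Phase~I output into a confinement argument, then reduce the Phase~II dynamics to a linear-regression-type recursion whose bias and variance are controlled coordinate-by-coordinate.

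First I would invoke Theorem~\ref{theorem_1} with $c_1=\tfrac14$, so that the initial iterate $\upv^{T_1}$ of Phase~II satisfies Eq.~\eqref{thm-1-eq} on an event of probability at least $1-\delta/3$ (say). Next I would establish the confinement statement (Theorem~\ref{phase-2-step-1}, formalized as Lemma~\ref{high-probability-phase-II}): for each coordinate $i\in[1:M]$, build a geometrically compensated supermartingale from the $\upb$-capped coupling of the control sequence, exactly as in the proofs of Lemmas~\ref{lemma_1} and~\ref{lemma-3}, with sub-Gaussian increments whose parameters are computed by splitting $[1:M]$ at $D$; then a Bernstein/Azuma-type union bound shows $\{\upv^{T_1+t}\}_{t=1}^{T_2}$ stays inside $\prod_{i=1}^{D}[\tfrac12\upv_i^*,\tfrac32\upv_i^*]\times\prod_{i=D+1}^{M}[0,2\upv_i^*]$, i.e.\ event $\calG$ of Eq.~\eqref{event-G} holds for all of Phase~II, with high probability. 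On this event the truncated sequence $\{\upw^t\}$ coincides with $\{\upv^{T_1+t}\}$, so the update rule Eq.~\eqref{iteration} is exact; crucially, the confinement makes $\upH_{\upw}^t$, $\upR_{\upw}^t$ and $\upSigma_{\upw}^t=\sigma^2\upLambda\,\bbE[\diag\{\upw^t\odot\upw^t\}]$ have second moments controlled by $\upH$ and $\sigma^2$, which is what enables the linear approximation.

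Then I would perform the bias/variance split $\widehat{\upw}^t=\widehat{\upw}_{\bi}^t+\widehat{\upw}_{\var}^t$ and the operator recursions Eq.~\eqref{bias-and-variance} for $\upB^t=\bbE[\widehat{\upw}_{\bi}^t\otimes\widehat{\upw}_{\bi}^t]$ and $\upV^t=\bbE[\widehat{\upw}_{\var}^t\otimes\widehat{\upw}_{\var}^t]$, and bound the risk by $\bbE[\calR_M(\upw^{T_2})]-\calR_M(\upv_{1:M}^*)\lesssim\llangle\upH,\upB^{T_2}\rrangle+\llangle\upH,\upV^{T_2}\rrangle$ as in Eq.~\eqref{phase-II-eq1}. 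Because $\calG_{\upw}^t$ depends on $\upw^t$, I would not propagate the full matrices but instead track the diagonal entries $\upB_{ii}^t,\upV_{ii}^t$ across the stepsize stages of the tail-geometric schedule (constant $\eta_0$ for the first $h+T_1$ steps, then halving every $T_1$), controlling the off-diagonal leakage on the confined event and checking it is lower order. The diagonal variance recursion is, up to constants and leakage, $\upV_{ii}^{t+1}\lesssim(1-c\,\eta_t\lambda_i(\upv_i^*)^2)\upV_{ii}^t+\eta_t^2\sigma^2\lambda_i(\upv_i^*)^2$, whose solution yields $\sigma^2\big(N_0'/K+\eta_0\sum_{i=N_0'+1}^{N_0}\lambda_i(\upv_i^*)^2\big)$ for the head coordinates and the accumulated $\sigma^2\eta_0^2(h+T_1)\sum_{i>N_0}\lambda_i^2(\widehat{\upb}_i^*)^4$ for the tail; the diagonal bias recursion $\upB_{ii}^{t+1}\lesssim(1-c\,\eta_t\lambda_i(\upv_i^*)^2)\upB_{ii}^t+\text{leakage}$ contracts by $(\upI-\eta_0\widehat{\upH})^{2h}$ over the warm-up and then by the decaying stages, producing the term $\llangle\tfrac{1}{\eta_0T_1}\upI_{1:N_1}+\upH_{N_1+1:M},(\upI-\eta_0\widehat{\upH})^{2h}\upB^0\rrangle$ together with the noise-amplification term $\Gamma(\upH)\llangle\tfrac{1}{\eta_0h}\upI_{1:N_1'}+\upH_{N_1'+1:M},\upB^0\rrangle$. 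This gives Eq.~\eqref{phaseII-p1}; specializing $N_0=N_0'=N_1=N_1'=D$ and inserting the parameter choices $T_1=\widetilde{\calO}((\sigma^2+\calM^2(\widehat{\upb}))/(\eta_0\sigmin(D)))$ and $h=\lceil T/\log T\rceil$ yields Eq.~\eqref{phaseII-p2}; the high-probability statement for $\calR_M(\upv^T)$ follows from $\upw^{T_2}=\upv^T$ on the confined event plus Markov's inequality on the expectation bound.

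The main obstacle is the iterate-dependence of $\upH_{\upw}^t$ and $\upSigma_{\upw}^t$: unlike the linear SGD analysis of \citep{wu2022last}, full-matrix recursions for $\upB^t,\upV^t$ are not available, which forces the diagonal-tracking plus leakage-control scheme above, and obtaining the sharp rate in the geometrically decaying stage additionally requires constructing auxiliary stage-indexed sequences and summing their per-stage contributions — this is the most technically delicate part of the argument.
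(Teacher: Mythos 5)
Your proposal is correct and follows essentially the same route as the paper: Phase~I output with $c_1=\tfrac14$, confinement of the Phase~II iterates via coupled supermartingales (Theorem~\ref{high-probability-phase-II}), the truncated sequence $\{\upw^t\}$ with the bias/variance operator recursions of Eq.~\eqref{bias-and-variance}, diagonal tracking with the cross-coordinate contributions absorbed into trace-weighted terms (the paper's Lemma~\ref{aux-6} plays the role of your ``leakage control''), stage-indexed sequences $\widetilde{\upB}^l$ for the geometrically decaying phase, and finally specialization of the free indices plus Markov's inequality on the confined event. No substantive gap.
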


Before the beginning of our proof, we define the $(c\mathbf{v}^*_{1:D},\mathbf{b})$-neighbor coupling process which will be used in the following lemma as below.

\begin{definition}\label{def-neighbor-couple}[$(c\mathbf{v}^*_{1:D},\mathbf{b})$-neighbor coupling]
Let $\{\mathbf{q}^t\}_{t=0}^T$ be a Markov chain in $\mathbb{R}_+^M$ adapted to filtration $\{\mathcal{F} ^{t}\}_{t=0}^T$.  Given parameters: 1) Dimension index $D\in \mathbb{Z}_{+}$; 2) Tolerance $c>0$; 3) Threshold vector $\mathbf{b}\in\bbR_+^{M-D}$. With initial condition $\bar{\mathbf{v}}^0=\upq^0$, $\left | \bar{\mathbf{v}}_{1:D}^0-\mathbf{v}^*_{1:D} \right | \le c\mathbf{v}^*_{1:D}$ and  $\mathbf{0}\le\bar{\mathbf{v}}_{D+1:M}^0\le \mathbf{b}$, the $(c\mathbf{v}^*_{1:D},\mathbf{b})$-neighbor coupling
process $\{\bar{\mathbf{v}}^t\}_{t=0}^T$ evolves as:
\begin{enumerate}
    \item Updating state: If $\left | \bar{\mathbf{v}}^t_{1:D}-\mathbf{v}^*_{1:D} \right | \le c\mathbf{v}^*_{1:D}$ and $\mathbf{0}\le \bar{\mathbf{v}}^t_{D+1:M}\le \mathbf{b}$, let $\bar{\mathbf{v}}^{t+1}=\mathbf{v}^{t+1}$,
    \item Absorbing state: Otherwise, maintain $\bar{\mathbf{v}}^{t+1}=\bar{\mathbf{v}}^{t}$.
\end{enumerate}
\end{definition}

\subsubsection{Part I: Bound the Output of Phase I}
In this part, we demonstrate that the output of \textbf{Phase I} remains confirmed within the neighborhood of the ground truth with high probability in Lemma \ref{high-probability-phase-II}. Specifically, by constructing similar supermartingales to that in the proofs of Lemma \ref{lemma_1} and Lemma \ref{lemma-3}, we obtain a set of compressed supermartingales dependent on the coordinate $i\in[1:M]$. Combining the compression properties of these supermartingales with the sub-Gaussian property of their difference sequences, through concentration inequality, we obtain Lemma \ref{high-probability-phase-II} as below.

\begin{theorem}\label{high-probability-phase-II}[Formal version of Theorem \ref{phase-2-step-1}]
Under Assumption \ref{ass-d}, we consider the $T_1$-th step of Algorithm \ref{SGD} and its subsequent iterative process. Let $D \in \mathbb{N}+$ represent the effective dimension. Define $\eta_0 \leq \widetilde{\Omega}\left(\frac{\Barsigmin(\max\{D,M\})}{\sigma^2 + \calM^2(\upb)}\right)$, and let $\{\widetilde{\upv}^t\}_{t=0}^{T_2}$ be an $(1/2,2)$-$\upv^*$ neighbor coupling process based on the control sequence $\{\upq^{T_1+t}\}_{t=0}^{T_2}$. Recall the definition \eqref{event-G} of event $\calG(\upv)$ for any $\upv\in\bbR^M$. If $D < M$, set the iteration number $T_2 \in \left[1: \widetilde{\Omega}\left(\frac{\Tildesigmax^{-1}(D)}{\eta_0^2 [\sigma^2 + \calM^2(\upb)]}\right)\right]$. Otherwise, set $T_2$ be an arbitrary positive integer. Then,   $\bigcap_{t=0}^{T_2}\calG(\upv^{T_1+t})$ holds with probability at least $1 - \delta$.
\end{theorem}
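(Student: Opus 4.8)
The plan is to re‑run the stopping‑time‑plus‑supermartingale machinery of Phase~I, now anchored at $\upv^{T_1}$, which by Theorem~\ref{theorem_1} with $c_1=\tfrac14$ satisfies $|\upv_{1:D}^{T_1}-\upv_{1:D}^*|\le\tfrac14\upv_{1:D}^*$ and $\mathbf 0\le\upv_{D+1:M}^{T_1}\le 2\upv_{D+1:M}^*$ with high probability (using also the choice of initialization in Algorithm~\ref{SGD}). First I would pass from $\{\upv^{T_1+t}\}_{t=0}^{T_2}$ to the control sequence $\{\upq^{T_1+t}\}_{t=0}^{T_2}$ — equal with probability at least $1-O(\delta)$ by Proposition~\ref{p1} and the event $\calE$ — and then to the $(1/2,2)$-$\upv^*$ neighbor coupling $\{\widetilde\upv^t\}_{t=0}^{T_2}$ of Definition~\ref{def-neighbor-couple}, whose updating region is exactly the set appearing in $\calG(\cdot)$ from \eqref{event-G}. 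It then suffices to show that $\{\widetilde\upv^t\}$ never enters its absorbing state over $t\in[0:T_2]$ except with probability $O(\delta)$; on that event $\widetilde\upv^t=\upq^{T_1+t}=\upv^{T_1+t}$ and $\calG(\upv^{T_1+t})$ holds for all such $t$. As in Lemma~\ref{lemma_1}, the choice $\eta_0\le\widetilde\Omega(1/(\sigma^2+\calM^2(\upb)))$ forces $\widetilde\upv^{t+1}\ge\tfrac12\widetilde\upv^t\ge\mathbf 0$ along the updating branch, which disposes of the lower endpoint $0$ for the coordinates $i\in[D+1:M]$, so only one‑sided escapes remain.

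For the dominant coordinates $i\in\mathcal S=[1:D]$ I would prove the two one‑sided bounds separately, exactly as Lemma~\ref{lemma_1} (upper) and Lemma~\ref{lemma-3} (lower) but on a fixed window $[1:T_2]$. For the upper endpoint, let $\tau$ be the first time $\widetilde\upv_i^t>\tfrac32\upv_i^*$; before $\tau$ the coupling is in its updating state, so $\widetilde\upv_i^t$ must cross the band $[\tfrac54\upv_i^*,\tfrac32\upv_i^*]$, on which $\widetilde\upv_i^t>\upv_i^*$ and hence $\bbE[\widetilde\upv_i^{t+1}-\upv_i^*\mid\calF^t]\le(1-\Theta(\eta_0\lambda_i(\upv_i^*)^2))(\widetilde\upv_i^t-\upv_i^*)$ by the drift bound (cf.\ Lemma~\ref{aux-1.1}). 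Forming the geometrically compensated supermartingale $(1-\Theta(\eta_0\lambda_i(\upv_i^*)^2))^{-t}(\widetilde\upv_i^t-\upv_i^*)$, estimating its increment's sub‑Gaussian parameter via Lemma~\ref{aux-1} with $[1:M]$ sliced into $\mathcal S$ (geometric sum) and $\mathcal S^c$ (linear sum) as in Lemma~\ref{lemma_1}, and applying Lemma~\ref{aux-2}, the probability that a band crossing ends above $\tfrac32\upv_i^*$ is at most $\exp\{-\Theta((\upv_i^*)^2/(\eta_0\,\mathrm{polylog}(MT/\delta)[\sigma^2+\calM^2(\upb)]))\}$; the hypothesis $\eta_0\le\widetilde\Omega(\Barsigmin(\max\{D,M\})/(\sigma^2+\calM^2(\upb)))$ makes this $\le\delta/\mathrm{poly}(M,T)$, and a union bound over time pairs $(t_1,t_2)$ and $i\in\mathcal S$ closes it. The lower endpoint is the mirror image: let $\tau'$ be the first time $\widetilde\upv_i^t<\tfrac12\upv_i^*$, note it must cross $[\tfrac12\upv_i^*,\tfrac34\upv_i^*]$ where the drift of $\upv_i^*-\widetilde\upv_i^t$ is contractive, and repeat the supermartingale argument for $-(\widetilde\upv_i^t-\upv_i^*)$.

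For the residual coordinates $i\in\mathcal S^c=[D+1:M]$ only the endpoint $2\upv_i^*$ is at issue, and this is the main obstacle: $\lambda_i$ is too small to yield any contractive drift, so no compensation is possible. Instead one uses that on the crossing band $[\tfrac32\upv_i^*,2\upv_i^*]$ one has $\widetilde\upv_i^t\ge\upv_i^*$ and therefore $\bbE[\widetilde\upv_i^{t+1}\mid\calF^t]\le\widetilde\upv_i^t$, i.e.\ $\widetilde\upv_i^t$ is itself a supermartingale there, with increments of conditional sub‑Gaussian parameter $\Theta(\eta_0^2\lambda_i(\upv_i^*)^2\,\mathrm{polylog}(MT/\delta)[\sigma^2+\calM^2(\upb)])$ — the factor $(\upv_i^*)^2$ coming from the multiplier $\widetilde\upv_i^t\asymp\upv_i^*$ on the noise and cross terms $\widehat\zeta^{t+1}_{M+1:\infty},\widehat\xi^{t+1},\widehat{\upz}_i(\widetilde\upv^t)$. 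Escaping from $\le\tfrac32\upv_i^*$ to $>2\upv_i^*$ requires a deviation of order $\upv_i^*$ accumulated over at most $T_2$ steps, which by Lemma~\ref{aux-2} has probability at most $\exp\{-\Theta(1/(T_2\eta_0^2\Tildesigmax(D)\,\mathrm{polylog}(MT/\delta)[\sigma^2+\calM^2(\upb)]))\}$; this is $\le\delta/\mathrm{poly}$ precisely under the stated horizon $T_2\le\widetilde\Omega(\Tildesigmax^{-1}(D)/(\eta_0^2[\sigma^2+\calM^2(\upb)]))$, and the constraint disappears when $D=M$ since then $\mathcal S^c=\varnothing$. A union bound over $\le T_2^2$ time pairs and $\le M-D$ coordinates — the step that forces the extra $\log((M-D)T_2/\delta)$ factor, exactly as in Case~II of Lemma~\ref{lemma_1} — finishes this family.

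Finally, collecting the three families of failure events together with the $O(\delta)$ failure of the coupling $\{\upv^{T_1+t}=\upq^{T_1+t}\}$ and of $\calE$, the total failure probability is $O(\delta)$, and absorbing the constants into $R=\calO(\log(MT/\delta))$ upgrades this to $1-\delta$. On the complement, $\widetilde\upv^t$ stays in its updating region for every $t\in[0:T_2]$, hence $\widetilde\upv^t=\upq^{T_1+t}=\upv^{T_1+t}$ and $\calG(\upv^{T_1+t})$ holds for all such $t$, which is exactly $\bigcap_{t=0}^{T_2}\calG(\upv^{T_1+t})$. The only genuinely new ingredient over Phase~I is the $\mathcal S^c$ estimate: because there is no restoring drift there, the bound must trade the number of iterations $T_2$ against the residual spectral tail $\Tildesigmax(D)$ rather than against a per‑coordinate eigenvalue gap, and the sub‑Gaussian increment estimate must be sharp in its dependence on $\upv_i^*$, $\Tildesigmax(D)$ and $\sigma^2+\calM^2(\upb)$ for the $(M-D)$‑fold union bound to still close.
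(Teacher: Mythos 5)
Your proposal follows essentially the same route as the paper's proof: reduce to the $(1/2,2)$-$\upv^*$ neighbor coupling via Proposition \ref{p1} after invoking Theorem \ref{theorem_1} with $c_1=\tfrac14$, then rule out escapes by band-crossing stopping times with geometrically compensated supermartingales (Lemmas \ref{aux-1}, \ref{aux-2}, \ref{aux-3}) for the two-sided bounds on $[1:D]$ and an uncompensated supermartingale trading $T_2$ against $\Tildesigmax(D)$ for $[D+1:M]$, finishing with a union bound over time pairs and coordinates. Apart from minor constant slips (e.g.\ the Phase I tail bound is $\tfrac32\upv^*_{D+1:M}$, not $2\upv^*_{D+1:M}$, which is in fact what your crossing argument needs and uses), this matches the paper's argument.
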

\begin{proof}
	Setting $c_1=\frac{1}{4}$ in Theorem \ref{theorem_1}, we have  $|\upq_{1:D}^{T_1}-\upv_{1:D}^*|\leq\frac{1}{4}\upv_{1:D}^*$ and $\boldsymbol{0}_{D+1:M}\leq\upq_{D+1:M}^{T_1}\leq\frac{3}{2}\upv_{D+1:M}^*$ with probability at least $1-\delta/6$. Without loss of generality, we assume $\upq^{T_1}$ satisfies $|\upq_{1:D}^{T_1}-\upv_{1:D}^*|\leq\frac{1}{4}\upv_{1:D}^*$ and $\boldsymbol{0}_{D+1:M}\leq\upq_{D+1:M}^{T_1}\leq\frac{3}{2}\upv_{D+1:M}^*$. Let $\hat{\tau}$ be the stopping time satisfying $\calG^c(\widetilde{\upv}^{\hat{\tau}})$, i.e.,
    \begin{align}
        \hat{\tau}=\inf\left\{t:\exists j\in[1:D], \text{ s.t. }\left|\widetilde{\upv}_j^t-\upv_j^*\right|>\frac{1}{2}\upv_j^*\text{ or }\exists j\in[D+1:M], \text{ s.t. }\widetilde{\upv}_j^t>2\upv_j^*\right\},\notag
    \end{align}
    For each coordinate $j\in[1:D]$, let $\hat{\tau}_{[1:D],j}^u$ and $\hat{\tau}_{[1:D],j}^l$ be the stopping time satisfying $\widetilde{\upv}_j^{\hat{\tau}_{[1:D],j}^u}>\frac{3}{2}\upv_j^*$ and $\widetilde{\upv}_j^{\hat{\tau}_{[D+1:M],j}^l}<\frac{1}{2}\upv_j^*$, respectively, i.e.,
    $$
    \hat{\tau}_{[1:D],j}^u=\inf\left\{t: \widetilde{\upv}_j^t>\frac{3}{2}\upv_j^*\right\},\quad \hat{\tau}_{[1:D],j}^l=\inf\left\{t: \widetilde{\upv}_j^t<\frac{1}{2}\upv_j^*\right\}.
    $$
    For each coordinate $j\in[D+1:M]$, let $\hat{\tau}_{[D+1:M],j}$ be the stopping time satisfying $\widetilde{\upv}_j^{\hat{\tau}_{[D+1:M],j}}>2\upv_j^*$, i.e.,
    $$
        \hat{\tau}_{[D+1:M],j}=\inf\left\{t: \widetilde{\upv}_j^t>2\upv_j^*\right\}.
    $$
    Based on Defnition \ref{def-neighbor-couple}, once the stopping time $\hat{\tau}=t_2$ occurs for certain $t_2\in[1:T_2]$, the coupling process satisfies $\widetilde{\upv}^t=\widetilde{\upv}^{t_2}$ for all $t>t_2$. Suppose there exists a certain $j\in[1:D]$ such that $\hat{\tau}_{[1:D],j}^u=t_2$. Thus, the event $\calG(\widetilde{\upv}^t)$ holds for all $t\in[0:t_2-1]$. Similar to the proof of Lemma \ref{lemma_1} and \ref{lemma-3}, $\widetilde{\upv}_j^t$ must traverse in and out of the threshold interval $[\upv_j^*,\frac{3}{2}\upv_j^*]$ before exceeding $\frac{3}{2}\upv_j^*$. We aim to estimate the following probability for coordinates $j\in[1:D]$ and time pairs $t_1<t_2\in[0:T_2]$ as:
    \begin{align}
        \bbP\left(\calB_{t_1}^{\hat{\tau}_{[1:D],j}^u=t_2}(j)=\left\{\widetilde{\upv}_j^{t_1}\leq\frac{5}{4}\upv_j^*\bigwedge\widetilde{\upv}_j^{t_1:t_2-1}\in\left[\upv_j^*,\frac{3}{2}\upv_j^*\right]\right\}\right).\notag
    \end{align}
    For any $t\in[t_1:t_2-1]$, we have
    \begin{equation}\label{mar-concen-4}
        \begin{split}
		\bbE\left[\widetilde{\upv}_j^{t+1}-\upv_j^*\mid\calF^t\right]=&\bbE_{\upx_{1:M}^{t+1},\xi^{t+1},\zeta_{M+1:\infty}^{t+1}}\left[\widetilde{\upv}_j^t-\upv_j^*-\eta\left(\left((\widetilde{\upv}_j^t)^2-(\upv_j^*)^2\right)\hat{\upx}_j^{t+1}+\hat{\upz}_j^{t+1}(\widetilde{\upv}^t)\right.\right.
        \\
        &\, \, \, \, \, \, \, \, \, \, \, \, \, \, \, \, \, \, \, \, \, \, \, \, \, \, \, \, \, \, \, \, \, \, \, \, \, \, \, \, \, \, \, \, \, \left.\left.-\hat{\zeta}_{M+1:\infty}^{t+1}-\hat{\xi}^{t+1}\right)\hat{\mathbf{x}}_j^{t+1}\widetilde{\mathbf{v}}_j^{t+1}\right]
		\\
		\leq&\left(1-\frac{3\eta_t}{8}\lambda_j(\upv_j^*)^2\right)(\widetilde{\upv}_j^t-\upv_j^*).
        \end{split}
	\end{equation}
	Applying Lemma \ref{aux-1} to $(((\widetilde{\upv}_j^t)^2-(\upv_j^*)^2)\hat{\upx}_j^{t+1}+\hat{\upz}_j^{t+1}(\widetilde{\upv}^t)-\hat{\zeta}_{M+1:\infty}^{t+1}-\hat{\xi}^{t+1})\hat{\mathbf{x}}_j^{t+1}\widetilde{\mathbf{v}}_j^{t+1}$, we obtain
	\begin{equation}
        \small
        \begin{split}
            \bbE\left[\exp\left\{\lambda\left(\widetilde{\upv}_j^{t+1}-\bbE\left[\widetilde{\upv}_j^{t+1}\mid\calF^t\right]\right)\right\}\mid\calF^t\right]\leq\exp\left\{\frac{\lambda^2\eta_t^2\lambda_j(\upv_j^*)^2\calO\left(\left[\sigma^2+\mathcal{M}^2(\upb)\right]\log^4(MT_2/\delta)\right)}{2}\right\},\notag
        \end{split}
	\end{equation}
	for any $\lambda\in\bbR$. Therefore, based on Lemma \ref{aux-2} and Eq.~\eqref{mar-concen-4}, we establish the probability bound for event $\calB_{t_1}^{\hat{\tau}_{[1:D],j}^u=t_2}(j)$ for any time pair $t_1<t_2\in[0:T_2]$ as:
	\begin{align}\label{phase_II_in_truncation}
		\Pro\left\{\calB_{t_1}^{\hat{\tau}_{[1:D],j}^u=t_2}(j)\right\}\leq\exp\left\{-\frac{(\upv_{j}^*)^2}{V_j}\right\},
	\end{align}
	where $V_j$ is denoted as
	\begin{align}
		V_j=\lambda_j(\upv_j^*)^2\calO\left([\sigma^2+\mathcal{M}^2(\upb)]\log^4(MT_2/\delta)\right)\sum_{t=0}^{T_2-1}\left(\prod_{i=t+1}^{T_2-1}(1-\frac{3\eta_{i}}{4}\lambda_j(\upv_j^*)^2)^{2}\right)(\eta_t)^2.\notag
	\end{align}
	By Lemma \ref{aux-3}, we have $V_j\leq\calO(\eta_0[\sigma^2+\mathcal{M}^2(\upb)]\log^4(MT_2/\delta))$. Therefore, using Eq.~\eqref{phase_II_in_truncation}, we can derive 
    \begin{align}\label{prob-1}
    \Pro\left\{\calB_{t_1}^{\hat{\tau}_{[1:D],j}^u=t_2}(j)\right\}\leq\exp\left\{-\frac{(\upv_j^*)^2}{\eta_0\calO\left([\sigma^2+\mathcal{M}^2(\upb)]\log^4(MT_2/\delta)\right)}\right\}.
    \end{align}
    
    Similarly, suppose there exists a certain $j\in[1:D]$ such that $\hat{\tau}_{[1:D],j}^l=t_2$. Thus, the event $\calG(\widetilde{\upv}^t)$ holds for all $t\in[0:t_2-1]$. $\widetilde{\upv}_j^t$ must traverse in and out of the threshold interval $[\frac{1}{2}\upv_j^*,\upv_j^*]$ before subceeding $\frac{1}{2}\upv_j^*$. We aim to estimate the following probability for coordinates $j\in[1:D]$ and time pairs $t_1<t_2\in[1:T_2]$: 
    \begin{align}
        \bbP\left(\calC_{t_1}^{\hat{\tau}_{[1:D],j}^l=t_2}(j)=\left\{\widetilde{\upv}_j^{t_1}\geq\frac{3}{4}\upv_j^*\bigwedge\widetilde{\upv}_j^{t_1:t_2-1}\in\left[\frac{1}{2}\upv_j^*,\upv_j^*\right]\right\}\right).\notag
    \end{align}
    For any $t\in[t_1:t_2-1]$, we have
    \begin{align}
        \bbE\left[\upv_j^*-\widetilde{\upv}_j^{t+1}\mid\calF^t\right]\leq\left(1-\frac{3\eta_t}{8}\lambda_j(\upv_j^*)^2\right)(\upv_j^*-\widetilde{\upv}_j^t).\notag
    \end{align}
    Based on Lemmas \ref{aux-1}, \ref{aux-2}, and \ref{aux-3} sequentially,  we obtain the probability bound for event $\calC_{t_1}^{\hat{\tau}_{[1:D],j}^l=t_2}(j)$ for any time pair $t_1<t_2\in[0:T_2]$ as:
    \begin{align}\label{prob-2}
		\Pro\left\{\calC_{t_1}^{\hat{\tau}_{[1:D],j}^l=t_2}(j)\right\}\leq\exp\left\{-\frac{(\upv_j^*)^2}{V_j}\right\}\leq\exp\left\{-\frac{(\upv_j^*)^2}{\eta_0\calO\left([\sigma^2+\mathcal{M}^2(\upb)]\log^4(MT_2/\delta)\right)}\right\}.
	\end{align}
	
    For the third stopping time, we also suppose there exists a certain $j\in[D+1:M]$ such that $\hat{\tau}_{[D+1:M],j}=t_2$. Thus, the event $\calG(\widetilde{\upv}^t)$ holds for all $t\in[0:t_2-1]$. Similarly, $\widetilde{\upv}_j^t$ must traverse in and out of the threshold interval $[\upv_j^*,2\upv_j^*]$ before exceeding $2\upv_j^*$. We aim to estimate the following probability for coordinates $j\in[D+1:M]$ and time pairs $t_1<t_2\in[0:T_2]$ as:
    \begin{align}
        \bbP\left(\calD_{t_1}^{\hat{\tau}_{[D+1:M],j}=t_2}(j)=\left\{\widetilde{\upv}_j^{t_1}\leq\frac{3}{2}\upv_j^*\bigwedge\widetilde{\upv}_j^{t_1:t_2-1}\in[\upv_j^*,2\upv_j^*]\right\}\right).\notag
    \end{align}
    For any $t\in[t_1:t_2-1]$, we have
	\begin{align}\label{mar-concen-5}
		\bbE\left[\widetilde{\upv}_j^{t+1}-\upv_j^*\mid\calF^t\right]\leq\widetilde{\upv}_j^{t}-\upv_j^*.
	\end{align}
	Applying Lemma \ref{aux-1} to $(((\widetilde{\upv}_j^t)^2-(\upv_j^*)^2)\hat{\upx}_j^{t+1}+\hat{\upz}_j^{t+1}(\widetilde{\upv}^t)-\hat{\zeta}_{M+1:\infty}^{t+1}-\hat{\xi}^{t+1})\hat{\mathbf{x}}_j^{t+1}\widetilde{\mathbf{v}}_j^{t+1}$, we obtain 
    \begin{equation}
    \small
    \begin{split}
        \bbE\left[\exp\left\{\lambda\left(\widetilde{\upv}_j^{t+1}-\bbE\left[\widetilde{\upv}_j^{t+1}\mid\calF^t\right]\right)\right\}\mid\calF^t\right]\leq\exp\left\{\frac{\lambda^2\eta_t^2\lambda_j(\upv_j^*)^2\calO\left(\left[\sigma^2+\mathcal{M}^2(\upb)\right]\log^4(MT_2/\delta)\right)}{2}\right\},\notag
    \end{split}
    \end{equation}
    for any $\lambda\in\bbR$.
    Based on Lemma \ref{aux-2} and Eq.~\eqref{mar-concen-5}, we establish the probability bound for the event $\calD_{t_1}^{\hat{\tau}_{[D+1:M],j}=t_2}(j)$ for any time pair $t_1<t_2\in[0:T_2]$ as:
	\begin{align}\label{prob-3}
		\Pro\left\{\calD_{t_1}^{\hat{\tau}_{[D+1:M],j}=t_2}(j)\right\}\leq\exp\left\{-\frac{(\upv_j^*)^2}{V_j}\right\}\overset{\text{(a)}}{\leq}\exp\left\{-\frac{\log^{-4}(MT_2/\delta)}{T_2\eta_0^2\lambda_j\calO\left(\left[\sigma^2+\mathcal{M}^2(\upb)\right]\right)}\right\},
	\end{align}
	where (a) is derived from $V_j\leq T_2\eta_0^2\lambda_j\calO\left(\left[\sigma^2+\mathcal{M}^2(\upb)\right]\log^4(MT_2/\delta)\right)$.
	
	Then, it is easy to notice that $\calG^c(\widetilde{\upv}^{T_2})$ indicates that one of the following situation happens: 
    \begin{enumerate}
        \item For a certain coordinate $j\in[1:D]$ and time pairs $t_1<t_2\in[0:T_2]$, either $\calB_{t_1}^{\hat{\tau}_{[1:D],j}^u=t_2}(j)$ or $\calC_{t_1}^{\hat{\tau}_{[1:D],j}^l=t_2}(j)$ occurs,
        \item For a certain coordinate $j\in[1:D]$ and time pairs $t_1<t_2\in[0:T_2]$, $\calD_{t_1}^{\hat{\tau}_{[D+1:M],j}=t_2}(j)$ occurs.
    \end{enumerate}
    Therefore, by the setting of $\eta_0$ in Lemma \ref{high-probability-phase-II}, we derive the following probability bound of event $\calG^c(\widetilde{\upv}^{T_2})$:
	\begin{align}
		\Pro\{\calG^c(\widetilde{\upv}^{T_2})\}\leq&\sum_{t_1<t_2}\left[\sum_{j\in[1:D]}\left(\Pro\left\{\calB_{t_1}^{\hat{\tau}_{[1:D],j}^u=t_2}(j)\right\}+\Pro\left\{\calC_{t_1}^{\hat{\tau}_{[1:D],j}^l=t_2}(j)\right\}\right)\right.\notag
		\\
		&\, \, \, \, \, \, \, \, \, \, \, \, \, \, \left.+\sum_{j\in[D+1:M]}\Pro\left\{\calD_{t_1}^{\hat{\tau}_{[D+1:M],j}=t_2}(j)\right\}\right]
		\notag
		\\
		\leq&2T_2^2N\exp\left\{-\frac{\min_{j\in\calN}(\upv_j^*)^2}{\eta_0\calO\left([\sigma^2+\mathcal{M}^2(\upb)]\log^4(MT_2)\right)}\right\}
		\notag
		\\	
		&+T_2^2(\max\{M,D\}-D)\exp\left\{-\frac{\log^{-4}(MT_2/\delta)}{T_2\eta_0^2\max_{j\in\bar{\calN}}\lambda_j\calO\left(\left[\sigma^2+\mathcal{M}^2(\upb)\right]\right)}\right\}\notag
		\\
		\leq&\delta/2.\notag
	\end{align}
	 According to the construction of the coupling process $\{\widetilde{\upv}^t\}_{t=0}^{T_2}$ in Definition \ref{def-neighbor-couple}, we have $\bigcap_{t=T_1}^{T_1+T_2}\calG(\upq^t)$ holds with probability at least $1-\delta/2$. By Proposition \ref{p1}, the proof is completed. 
\end{proof}

\subsubsection{Part II: Linear Approximation of the dynamic}
In part I, we have proved that $\bigcap_{t=0}^{T_2}\calG(\upv^{T_1+t})$ occurs with high probability, which implies the truncated sequence $\{\upw^t\}_{t=1}^{T_2}$ aligned to $\{\upv^{T_1+t}\}_{t=1}^{T_2}$ with high probability. Then we approximate the update process of $\{\upw^t\}_{t=1}^{T_2}$ to SGD in traditional linear regression, with respective bounds of variance term and bias term. 

We estimate the risk between the last-step function value and the ground truth as:
\begin{align}
	\bbE\left[\calR_M(\upw^{T_2})-\calR_M(\upv_{1:M}^*)\right]\overset{\text{(a)}}{\leq}\llangle\upH,\bbE\left[\widehat{\upw}^{T_2}\otimes\widehat{\upw}^{T_2}\right]\rrangle\leq2\llangle\upH,\upB^{T_2}\rrangle+2\llangle\upH,\upV^{T_2}\rrangle,
\end{align}
where $\upH=\frac{25}{4}\upLambda\diag\{\widehat{\upb}\odot\widehat{\upb}\}$ and $\widehat{\upb}^{\top}=\left((\upv_{1:D}^*)^{\top},(\widehat{\upv}_{D+1:M}^*)^{\top}\right)$. Here, (a) is derived from combining
\begin{align}
    \bbE\left[\calR_M(\upw^{T_2})-\calR_M(\upv_{1:M}^*)\right]=\bbE\left[\sum_{i=1}^M\lambda_i(\upw_i^{T_2}+\upv_i^*)^2(\upw_i^{T_2}-\upv_i^*)^2\right]\notag,
\end{align}
with the uniform boundedness of $\upw^t$ over $t\in[0:T_2]$. According to the definitions of $\upw^t$ and $\upH_{\upw}^t$, we have $\bbE[\upH_{\upw}^t]\preceq\upH$. Use $\widehat{\upH}$ to denote $\frac{1}{4}\upLambda\diag\{\overline{\upb}\odot\overline{\upb}\}$ where $\overline{\upb}^{\top}=\left((\upv_{1:D}^*)^{\top},\mathbf{0}^{\top}\right)$, and define $\widehat{\calG}:=\widehat{\upH}\otimes\upI+\upI\otimes\widehat{\upH}-\eta\widehat{\upH}\otimes\widehat{\upH}$. For simplicity, we let $K=T_1$. Moreover, we use $C$ to denote the constant such that $\mathbb{E}[|\mathbf{x}_i|^4] \leq C \mathbb{E}[|\mathbf{x}_i|^2]$ for any $i\geq1$. Then we respectively bound the variance and bias to obtain the estimation of $\calR_M(\upv^{T_2})-\calR_M(\upv_{1:M}^*)$.

\vspace{0.3cm}
\noindent\textbf{Bound of Variance}:
Lemma \ref{primal-var-estimation} provides a uniform upper bound for $\upV^t$ over $t\in[0:T_2]$. 
\begin{lemma}\label{primal-var-estimation}
    Suppose Assumption \ref{ass-d} holds. Under the setting of Theorem \ref{phase-II-main}, for any $t\in[0:T_2]$, we obtain
    \begin{align}
        \upV_{\diag}^{t}\precsim\eta_0\sigma^2\upI.
    \end{align}
\end{lemma}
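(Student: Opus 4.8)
The plan is to prove the uniform bound $(\upV^t)_{ii}\le C_0\eta_0\sigma^2$ for every coordinate $i\in[1:M]$ and every $t\in[0:T_2]$ by strong induction on $t$, starting from $\upV^0=\mathbf{0}$; since $\upH$ is diagonal this immediately gives $\upV_{\diag}^t\precsim\eta_0\sigma^2\upI$. The engine is a scalar recursion for the diagonal entries extracted from the matrix recursion \eqref{bias-and-variance}. Writing $\upa=\widehat{\upw}_{\var}^t$ (which is $\calF^t$-measurable) and using Assumption \ref{ass-d}\,[A$_1$], the coordinates of $\Pi_M\upx^{t+1}$ are independent and mean-zero, so $\bbE_t[\upH_{\upw}^t]=\diag\{\lambda_i\upw_i^t(\upw_i^t+\upv_i^*)\}_{i=1}^M$ is diagonal, and a direct second/fourth-moment computation yields, for each $i$,
\begin{align}
\bbE_t\left[\left((\upI-\eta_t\upH_{\upw}^t)\upa\right)_i^2\right]\le\left(1-2\eta_t h_i\right)\upa_i^2+(C+1)\,\eta_t^2\lambda_i(\upw_i^t)^2\sum_{j=1}^M\lambda_j(\upw_j^t+\upv_j^*)^2\upa_j^2,\notag
\end{align}
where $h_i:=\lambda_i\upw_i^t(\upw_i^t+\upv_i^*)$ and $C$ is the fourth-moment constant; adding the injection $\eta_t^2(\upSigma_{\upw}^t)_{ii}=\eta_t^2\sigma^2\lambda_i\bbE[(\upw_i^t)^2]$ gives the full recursion for $(\upV^{t+1})_{ii}$.

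Next I would feed in the structural inputs. By Theorem \ref{phase-2-step-1} and the construction of $\{\upw^t\}$, on the relevant event one has $\upw_i^t\in[\tfrac12\upv_i^*,\tfrac32\upv_i^*]$ for $i\le D$ and $\upw_i^t\in[0,2\upv_i^*]$ for $i>D$ (and $\upw^t=\upv_{1:M}^*$ otherwise), hence in all cases $\upw_i^t\le 2\upv_i^*$ and $\upw_i^t+\upv_i^*\le 3\upv_i^*$; therefore $\lambda_i(\upw_i^t)^2\lesssim\lambda_i(\upv_i^*)^2\asymp\upH_{ii}$, $(\upSigma_{\upw}^t)_{ii}\lesssim\sigma^2\upH_{ii}$, and $\sum_j\lambda_j(\upw_j^t+\upv_j^*)^2\upa_j^2\lesssim\sum_j\lambda_j(\upv_j^*)^2\upa_j^2$. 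Taking full expectations and invoking the inductive hypothesis, the coupling term obeys $\sum_j\lambda_j(\upv_j^*)^2(\upV^t)_{jj}\le C_0\eta_0\sigma^2\cdot\tfrac{4}{25}\tr(\upH)$, and since the spectral decay $\alpha+\beta>2$ (Assumption \ref{ass-ss}) makes $\tr(\upH)\asymp\sum_i\lambda_i(\upv_i^*)^2\asymp\sum_i i^{-(\alpha+\beta)/2}$ finite, the condition $\eta_0\le\widetilde{\Omega}(\tr^{-1}(\upH))$ forces $\eta_0\tr(\upH)\le\kappa$ for any prescribed small $\kappa$. Together with $\eta_0\le\widetilde{\Omega}(\Barsigmin(\max\{D,M\}))$, which keeps $1-2\eta_t h_i\in(0,1)$, the recursion collapses to
\begin{align}
(\upV^{t+1})_{ii}\le\bbE\left[(1-2\eta_t h_i)\,\upa_i^2\right]+\left(C_2+C_3\kappa C_0\right)\eta_t^2\sigma^2\upH_{ii}\notag
\end{align}
for absolute constants $C_2,C_3$; one first fixes $\kappa$ small and then $C_0$ large so that this quantity can be pushed back below $C_0\eta_0\sigma^2$.

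To close the induction I would unroll the recursion to $\upV^0=\mathbf{0}$, obtaining $(\upV^{T_2})_{ii}\le(C_2+C_3\kappa C_0)\,\sigma^2\upH_{ii}\sum_{t}\eta_t^2\prod_{s>t}\bbE[1-2\eta_s h_i]$, and exploit the key observation that the per-step drift and the per-step injection in coordinate $i$ both scale with $\upw_i^t$: since $\upw_i^t+\upv_i^*\ge\upv_i^*$, the quasi-stationary level is $\asymp\sigma^2\lambda_i w^2\eta_t^2/(\eta_t\lambda_i w\upv_i^*)=\sigma^2 w\eta_t/\upv_i^*\le 2\eta_0\sigma^2$, uniformly in the random value $w=\upw_i^t\in[0,2\upv_i^*]$. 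For $i\le D$ the event gives $h_i\gtrsim\upH_{ii}$ deterministically, so the geometric sum yields $(\upV^{T_2})_{ii}\lesssim\eta_0\sigma^2$ directly, using Lemma \ref{aux-3} to bound $\sum_t\eta_t^2\prod_{s>t}(1-c\eta_s\upH_{ii})$ across the constant-step and geometrically decaying blocks. For $i>D$ one splits according to whether $\upw_i^t$ is small (small injection, slow growth, with $\sum_t\eta_t^2\le\widetilde{\calO}(\eta_0^2(h+T_1))$ as the operative bound) or comparable to $\upv_i^*$ (genuine contraction), and applies the same quasi-stationarity estimate in each regime. Enlarging $C_0$ past the implied constants completes the induction.

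The main obstacle is the inter-coordinate coupling term $\sum_j\lambda_j(\upv_j^*)^2(\upV^t)_{jj}$ created by the genuinely quadratic, non-diagonal factor $\upH_{\upw}^t$ in the update — in contrast with linear SGD, the effective ``feature covariance'' here is itself built from the iterate, so the clean full-matrix recursions of \citet{wu2022last} are unavailable — combined with the need to carry the argument through the time-varying, geometrically decaying step size, where a single block of $T_1$ iterations need not contract by a constant factor. Both are defused by the summability of $\tr(\upH)$ (a consequence of $\alpha+\beta>2$) and by the observation above that the injection and the restoring drift in each coordinate are proportional to $\upw_i^t$, so the stationary variance level is $O(\eta_t\sigma^2)$ regardless of where the (random) iterate currently sits.
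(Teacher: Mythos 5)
Your proposal follows essentially the same route as the paper's proof: induction on $t$, a diagonal recursion with contraction $\upI-2\eta_t\widehat{\upH}$ coming from the confinement of $\upw^t$, the fourth-moment coupling term $\langle\upH,\upV_{\diag}^t\rangle\upH$ controlled via the inductive hypothesis together with $\eta_0\tr(\upH)\lesssim 1$ (the paper packages this computation as Lemma \ref{aux-6}), and Lemma \ref{aux-3} to handle the constant-then-geometric step-size schedule on coordinates $i\le D$. The only cosmetic difference is your quasi-stationarity discussion for $i>D$, where the paper simply accumulates the injection as $\sigma^2\upH_{i,i}\sum_k\eta_k^2\lesssim\eta_0\sigma^2$ under the parameter constraints — which is exactly the operative bound you invoke in the small-$\upw_i$ regime.
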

\begin{proof}
    The definition of $\upSigma_{\upw}^t$ and the boundedness of $\upw^t$ implicate that $\upSigma_{\upw}^t\preceq\sigma^2\bbE[\upH_{\upw}^t]\preceq\upH$ given $\upv_{1:M}^*\geq\mathbf{0}$. The proof relies on induction. At $t=0$, it follows that $\upV_{\diag}^0=\mathbf{0}\precsim\eta_0\sigma^2\upI$. Assuming $\upV_{\diag}^{\tau}\precsim\eta_0\sigma^2\upI$ for any $\tau\leq t$, we proceed to estimate $\upV^{t+1}$ by combining Eq.~\eqref{bias-and-variance} as,
    \begin{align}
        \upV_{\diag}^{t+1}\preceq&\left(\bbE\left[\left(\calI-\eta_t\calG_{\upw}^t\right)\circ\left(\widehat{\upw}_{\var}^t\otimes\widehat{\upw}_{\var}^t\right)\right]\right)_{\diag}+\eta_t^2\upSigma_{\upw}^t\notag\\
        \preceq&\left(\calI-\eta_t\widehat{\upH}\otimes\upI-\eta_t\upI\otimes\widehat{\upH}\right)\circ\upV_{\diag}^{t}\notag
        \\
        &+\eta_t^2\left(\bbE\left[\calH_{\upw}^t\circ\left(\widehat{\upw}_{\var}^t\otimes\widehat{\upw}_{\var}^t\right)\right]\right)_{\diag}+\eta_t^2\sigma^2\upH\notag
        \\
        \overset{\text{(a)}}{\preceq}&\left(\upI-2\eta_t\widehat{\upH}\right)\upV_{\diag}^{t}+\calO\left(\eta_t^2 (C+2)\langle\upH,\upV_{\diag}^{t}\rangle\upH+\eta_t^2\sigma^2\upH\right)\notag
        \\
        \preceq&\left(\upI-2\eta_t\widehat{\upH}\right)\upV_{\diag}^{t}+\widetilde{\calO}\left(\eta_t^2\eta_0\sigma^2 (C+2)\tr(\upH)\upH+\eta_t^2\sigma^2\upH\right),\notag
    \end{align}
    where (a) is derived from Lemma \ref{aux-6} with $\upA=\diag\{\upv_{1:M}^*+\upw^t\}$ and $\upB=\widehat{\upw}_{\var}^t\otimes\widehat{\upw}_{\var}^t$. For $i\in[1:D]$, we have
    \begin{align}\label{eq-var-1}
        \left(\upV_{\diag}^{t+1}\right)_{i,i}\leq\left(1-2\eta_t\widehat{\upH}_{i,i}\right)\left(\upV_{\diag}^{t}\right)_{i,i}+\widetilde{\calO}\left(\eta_t^2\sigma^2\widehat{\upH}_{i,i}\right).
    \end{align}
    The recursion given by Eq.~\eqref{eq-var-1} implies that $(\upV_{\diag}^{t+1})_{i,i}\lesssim\eta_0\sigma^2$ for any $i\in[1:D]$, using Lemma \ref{aux-3}. For $i\in[D+1:M]$, we obtain
    \begin{align}
        \left(\upV_{\diag}^{t+1}\right)_{i,i}\lesssim\sigma^2\widehat{\upH}_{i,i}\sum_{k=0}^t\eta_k^2\lesssim\eta_0\sigma^2.
    \end{align}
    Therefore, we complete the induction.
\end{proof}
\begin{lemma}\label{variance-upper-bound}
    Suppose Assumption \ref{ass-d} holds. Under the setting of Theorem \ref{phase-II-main}, we have
    \begin{align}
        \llangle\upH,\upV^{T_2}\rrangle\lesssim&\sigma^2\left(\frac{N_0'}{K}+\eta_0\sum_{i=N_0'+1}^{N_0}\lambda_i(\upv_i^*)^2\right)+\sigma^2\eta_0^2(h+K)\sum_{i=N_0+1}^M\lambda_i^2(\widehat{\upb}_i^*)^4,
    \end{align}
    for arbitrary $D\geq N_0\geq N_0'\geq0$.
\end{lemma}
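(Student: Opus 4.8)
The plan is to turn the bound on $\llangle\upH,\upV^{T_2}\rrangle$ into a family of decoupled scalar recursions and then estimate them block by block over the coordinate index. Since $\upH=\frac{25}{4}\upLambda\diag\{\widehat\upb\odot\widehat\upb\}$ is diagonal, $\llangle\upH,\upV^{T_2}\rrangle=\sum_{i=1}^M\upH_{i,i}(\upV^{T_2})_{i,i}$ involves only the diagonal of $\upV^{T_2}$. From the computation in the proof of Lemma \ref{primal-var-estimation}, together with the step-size condition $\eta_0\leq\widetilde\Omega(\tr^{-1}(\upH))$ (which lets us absorb the multiplicative-noise term $\eta_t^2\eta_0\sigma^2\tr(\upH)\upH$ into $\eta_t^2\sigma^2\upH$), the diagonal satisfies, for each coordinate $i$,
\begin{align}
(\upV^{t+1})_{i,i}\leq(1-2\eta_t\widehat\upH_{i,i})(\upV^t)_{i,i}+\widetilde\calO(\eta_t^2\sigma^2\upH_{i,i}),\qquad (\upV^0)_{i,i}=0,\notag
\end{align}
which unrolls to $(\upV^{T_2})_{i,i}\lesssim\sigma^2\upH_{i,i}\sum_{t=0}^{T_2-1}\eta_t^2\prod_{s=t+1}^{T_2-1}(1-2\eta_s\widehat\upH_{i,i})$. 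I would then use that $\widehat\upH_{i,i}=\frac14\lambda_i(\upv_i^*)^2$ on $[1:D]$ and $\widehat\upH_{i,i}=0$ on $[D+1:M]$, that $\upH_{i,i}/\widehat\upH_{i,i}$ is a universal constant on $[1:D]$, and that the Phase-II schedule keeps $\eta_t=\eta_0$ for the first $h$ steps and halves it every $K=T_1$ steps, so $\sum_t\eta_t^2\lesssim\eta_0^2(h+K)$ and there are $\Theta(\log(T-h))$ halvings.

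Next I would split $\sum_i\upH_{i,i}(\upV^{T_2})_{i,i}$ over $[1:N_0']$, $[N_0'+1:N_0]$, $[N_0+1:M]$. For the tail block $i>N_0$, $\widehat\upH_{i,i}$ is negligible (it is $0$ for $i>D$ and at most $\frac14\lambda_{N_0+1}(\upv_{N_0+1}^*)^2$ otherwise), so I drop the contraction factors and get $(\upV^{T_2})_{i,i}\lesssim\sigma^2\upH_{i,i}\eta_0^2(h+K)$, hence a contribution $\lesssim\sigma^2\eta_0^2(h+K)\sum_{i>N_0}\upH_{i,i}^2\asymp\sigma^2\eta_0^2(h+K)\sum_{i>N_0}\lambda_i^2(\widehat\upb_i^*)^4$. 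For the middle block I use the uniform estimate $(\upV^{T_2})_{i,i}\lesssim\eta_0\sigma^2$ from Lemma \ref{primal-var-estimation}, which gives a contribution $\lesssim\eta_0\sigma^2\sum_{N_0'<i\leq N_0}\upH_{i,i}\asymp\eta_0\sigma^2\sum_{N_0'<i\leq N_0}\lambda_i(\upv_i^*)^2$.

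The head block $i\leq N_0'$ is where the geometric decay must be exploited. The parameter constraints of Theorem \ref{phase-II-main} pin down $\eta_0 T_1\sigmin(D)\asymp\sigma^2+\calM^2(\widehat\upb)=\widetilde\Theta(1)$, so every coordinate $i\leq D$ has $\widehat\upH_{i,i}\eta_0 T_1\gtrsim1$ and is already relaxed to its (current) stationary variance inside the warm-up window; the subsequent halvings, each over a window of length $T_1$, then reduce the step size by a factor of order $T$ to $\eta_L=\widetilde\calO(1/T)\lesssim1/K$, while noise injected in earlier epochs is geometrically suppressed. Feeding this into the unrolled bound — with Lemma \ref{aux-3} supplying the epoch-wise geometric summation — yields $\upH_{i,i}(\upV^{T_2})_{i,i}\lesssim\sigma^2/K$ for each $i\leq N_0'$, hence a head-block contribution $\lesssim\sigma^2 N_0'/K$. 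Summing the three blocks gives the claimed inequality.

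The main obstacle is the head block: one has to show simultaneously that every coordinate in $[1:D]$ is relaxed to the shrinking stationary level and that the contributions of noise injected in the earlier halving epochs form a convergent geometric series, so that the whole block costs only $O(\sigma^2 N_0'/K)$ rather than the crude $O(\eta_0\sigma^2 N_0')$. This is exactly where the interplay among the warm-up length $h$, the epoch length $K=T_1$, the initial step size $\eta_0$, and the effective dimension $D$ — all fixed by Theorem \ref{phase-II-main} — is essential; by comparison the tail and middle blocks are routine once the scalar recursion is in hand.
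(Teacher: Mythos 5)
Your overall route is the same as the paper's: since $\upH$ and $\widehat{\upH}$ are diagonal, the variance recursion from Eq.~\eqref{bias-and-variance} collapses (after absorbing the multiplicative-noise term via Lemma \ref{primal-var-estimation} and $\eta_0\tr(\upH)\lesssim 1$) to the scalar recursions you write, which unroll to $\upV_{\diag}^{T_2}\precsim\sigma^2\sum_t\eta_t^2\prod_{s>t}(\upI-\eta_s\widehat{\upH})\upH$; the paper then also splits the spectrum at $N_0'$, $N_0$, $D$, and your tail block (drop the contractions, $\sum_t\eta_t^2\lesssim\eta_0^2(h+K)$) and middle block (the uniform $\eta_0\sigma^2$ bound of Lemma \ref{primal-var-estimation}) match its estimates exactly.

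The gap is in the head block, and it is precisely the step you flag. As written, your justification does not go through: Lemma \ref{aux-3} only yields $\sum_t\eta_t^2\prod_{s>t}(1-c\eta_s)\lesssim\eta_0/c$, i.e.\ the $\eta_0\sigma^2$-per-coordinate bound you already used for the middle block, not $\sigma^2/K$. Moreover, the picture "relaxed to the stationary level in the warm-up, then tracks the shrinking stationary variance down to $\eta_L$" fails for coordinates whose contraction stalls once $\eta_l\widehat{\upH}_{i,i}K\lesssim1$ in a later epoch: their variance freezes near the level attained at the epoch $l^*$ with $\eta_{l^*}\widehat{\upH}_{i,i}K\asymp1$, which is $\asymp\sigma^2/(K\widehat{\upH}_{i,i})$, not the $\eta_L$-level; and since $N_0'\leq D$ is arbitrary you cannot assume $\widehat{\upH}_{i,i}\eta_0T_1\gtrsim1$ for the head coordinates (the Theorem \ref{phase-II-main} constraint only controls $\sigmin(D)$, and in case~1 it is an upper bound on $T_1$). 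The uniform per-coordinate bound $\widehat{\upH}_{i,i}\sum_t\eta_t^2\prod_{s>t}(1-\eta_s\widehat{\upH}_{i,i})\lesssim 1/K$, valid for every $i\leq D$, requires the epoch-by-epoch scalar-function estimate — each epoch contributes at most $\frac{\eta_0}{2^l}\widehat{\upH}_{i,i}\exp\{-cK\widehat{\upH}_{i,i}\eta_0 2^{-l}\}\leq\frac{1}{cK}\sup_u ue^{-u}$, with the terms decaying geometrically away from the critical epoch — which is exactly what the paper imports as [Lemmas C.2--C.3, \citet{wu2022last}] to get $f(\frac{\eta_0}{2}\widehat{\upH}_{1:D})\preceq\frac{8}{K}\upI_{1:N_0'}+\eta_0\widehat{\upH}_{N_0'+1:N_0}+\frac{\eta_0^2}{2}(h+K)\widehat{\upH}_{N_0+1:D}^2$. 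Your asserted head-block conclusion is correct, so the fix is simply to replace the relaxation heuristic and the appeal to Lemma \ref{aux-3} by this freezing-epoch computation (or a citation of the wu2022last lemma), after which your three-block summation reproduces the claimed bound.
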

\begin{proof}
    Applying Eq.~\eqref{bias-and-variance}, we obtain
    \begin{align}\label{express-V}
        \upV_{\diag}^{t+1}\preceq&\left(\calI-\eta_t\widehat{\calG}\right)\circ\upV_{\diag}^t+\eta_t^2\left(\calH_{\upw}^t\circ\left(\widehat{\upw}_{\var}^t\otimes\widehat{\upw}_{\var}^t\right)\right)_{\diag}+\eta_t^2\sigma^2\bbE[\upH_{\upw}^t]\notag
        \\
        \overset{\text{(a)}}{\preceq}&\left(\calI-\eta_t\widehat{\calG}\right)\circ\upV_{\diag}^t+\widetilde{\calO}\left(\eta_t^2\sigma^2\eta_0(C+2)\tr(\upH)\upH+\eta_t^2\sigma^2\upH\right)\notag
        \\
        =&\left(\calI-\eta_t\widehat{\calG}\right)\circ\upV_{\diag}^t+\widetilde{\calO}\left(\eta_t^2\sigma^2\upH\right),
    \end{align}
    where (a) is derived from Lemma \ref{primal-var-estimation}. Therefore, the recursion for $\upV_{\diag}^{T_2}$ can be directly derived by incorporating Eq.~\eqref{express-V} as
    \begin{align}\label{recur-variance}
        \upV_{\diag}^{T_2}\precsim\sigma^2\sum_{t=0}^{T_2}\eta_t^2\prod_{i=t+1}^{T_2}\left(\calI-\eta_i\widehat{\calG}\right)\circ\upH
        \overset{\text(b)}{\precsim}\sigma^2\underbrace{\sum_{t=0}^{T_2}\eta_t^2\prod_{i=t+1}^{T_2}\left(\upI-\eta_i\widehat{\upH}\right)\upH}_{\lai},
    \end{align}
    where (b) is based on the inequality $(1-\eta c_2)^2c_3\leq(1-\eta c_2)c_3$, which holds for any $\eta\leq c_2^{-1}$ given fixed constants $c_2,c_3>0$. 
    According to the update rule for $\eta_t$ defined in Algorithm \ref{SGD}, we obtain
    \begin{align}\label{eq-variance-I}
        \lai=&\eta_0^2\sum_{i=1}^h\left(\upI-\eta_0\widehat{\upH}\right)^{h-i}\prod_{j=1}^{L}\left(\upI-\frac{\eta_0}{2^j}\widehat{\upH}\right)^K\upH\notag
        \\
        &+\sum_{l=1}^{L}\left(\frac{\eta_0}{2^l}\right)^2\sum_{i=1}^K\left(\upI-\frac{\eta_0}{2^l}\widehat{\upH}\right)^{K-i}\prod_{j=l+1}^{L}\left(\upI-\frac{\eta_0}{2^j}\widehat{\upH}\right)^K\upH\notag
        \\
        \preceq&4\left(\left(\frac{\eta_0}{2}\right)^2\sum_{i=1}^{h+K}\left(\upI-\frac{\eta_0}{2}\widehat{\upH}\right)^{h+K-i}\prod_{j=1}^{L-1}\left(\upI-\frac{\eta_0}{2^{1+j}}\widehat{\upH}\right)^K\upH\right.\notag
        \\
        &\, \, \, \, \, \, \, \, +\left.\sum_{l=1}^{L-1}\left(\frac{\eta_0}{2^{1+l}}\right)^2\sum_{i=1}^K\left(\upI-\frac{\eta_0}{2^{1+l}}\widehat{\upH}\right)^{K-i}\prod_{j=l+1}^{L-1}\left(\upI-\frac{\eta_0}{2^{1+j}}\widehat{\upH}\right)^K\upH\right)\notag
        \\
        \preceq&100\left(\frac{\eta_0}{2}\left(\upI-\left(\upI-\frac{\eta_0}{2}\widehat{\upH}_{1:D}\right)^{s+K}\right)\prod_{j=1}^{L-1}\left(\upI-\frac{\eta_0}{2^{1+j}}\widehat{\upH}_{1:D}\right)^K\right.\notag
        \\
        &\, \, \, \, \, \, \, \, \, \, \, \, \, \, +\left.\sum_{l=1}^{L-1}\frac{\eta_0}{2^{1+l}}\left(\upI-\left(\upI-\frac{\eta_0}{2^{1+l}}\widehat{\upH}_{1:D}\right)^K\right)\prod_{j=l+1}^{L-1}\left(\upI-\frac{\eta_0}{2^{1+j}}\widehat{\upH}_{1:D}\right)^K\right)\notag
        \\
        &+2\eta_0^2(h+K)\upH_{D+1:M}.
    \end{align}
    Then, we define the following scalar function
    \begin{align}
        f(x):=x\left(1-(1-x)^{h+K}\right)\prod_{j=1}^{L-1}\left(1-\frac{x}{2^j}\right)^K+\sum_{l=1}^{L-1}\frac{x}{2^l}\left(1-\left(1-\frac{x}{2^l}\right)^K\right)\prod_{j=l+1}^{L-1}\left(1-\frac{x}{2^j}\right)^K,\notag
    \end{align}
    as similar as that in [Lemma C.2, \citet{wu2022last}]. Moreover, the following inequality can be directly derived
    \begin{align}\label{aux-scalar-func}
        f\left(\frac{\eta_0}{2}\widehat{\upH}_{1:D}\right)\preceq\frac{8}{K}\upI_{1:N_0'}+\eta_0\widehat{\upH}_{N_0'+1:N_0}+\frac{\eta_0^2}{2}(h+K)\widehat{\upH}_{N_0+1:D}^2,
    \end{align}
    for arbitrary $D\geq N_0\geq N_0'\geq0$ by [Lemma C.3, \citet{wu2022last}]. Applying Eq.~\eqref{aux-scalar-func} to Eq.~\eqref{eq-variance-I} and combining Eq.~\eqref{recur-variance}, we obtain
    \begin{align}
        \upV_{\diag}^{T_2}\precsim&\sigma^2\left(\frac{1}{K}\widehat{\upH}_{1:N_0'}^{-1}+\eta_0\upI_{N_0'+1:N_0}+\eta_0^2(h+K)\widehat{\upH}_{N_0+1:D}+\eta_0^2(h+K)\upH_{D+1:M}\right).
    \end{align}
    Consequently, we have
    \begin{align}
        \llangle\upH,\upV^{T_2}\rrangle\lesssim&\sigma^2\left(\frac{N_0'}{K}+\eta_0\tr\left(\widehat{\upH}_{N_0'+1:N_0}\right)+\eta_0^2(h+K)\tr\left(\widehat{\upH}_{N_0+1:D}^2\right)\right)\notag
        \\
        &+\sigma^2\eta_0^2(h+K)\tr\left(\upH_{D+1:M}^2\right)\notag
        \\
        \lesssim&\sigma^2\left(\frac{N_0'}{K}+\eta_0\sum_{i=N_0'+1}^{N_0}\lambda_i(\upv_i^*)^2\right)+\sigma^2\eta_0^2(h+K)\sum_{i=N_0+1}^M\lambda_i^2(\widehat{\upb}_i^*)^4.
    \end{align}
\end{proof}

\noindent\textbf{Bound of Bias}:
We begin with an analysis of the bias error during a single period of Algorithm \ref{SGD}, where the bias iterations are updated using a constant step size $\eta_t\equiv\eta$ over $\hat{T}$ steps. Based on Eq.~\eqref{bias-and-variance}, the bias iterations are updated according to the following rule:
\begin{align}\label{recur-bias}
    \upB^{t+1}\preceq\bbE\left[\left(\calI-\eta\calG_{\upw}^t\right)\circ\left(\widehat{\upw}_{\bi}^t\otimes\widehat{\upw}_{\bi}^t\right)\right],\quad \forall t\in[0:\hat{T}-1].
\end{align}
Combining Eq.~\eqref{recur-bias}, we have
\begin{align}\label{period-bias}
    \upB_{\diag}^{t+1}\preceq&\left(\calI-\eta\widehat{\calG}\right)\circ\upB_{\diag}^t+\eta^2\bbE\left(\left[\calH_{\upw}^t\circ\upB^t\right]\right)_{\diag}\notag
    \\
    \preceq&\prod_{i=0}^t\left(\calI-\eta\widehat{\calG}\right)\circ\upB_{\diag}^0+\eta^2\sum_{i=0}^{t}\prod_{j=i+1}^{t}\left(\calI-\eta\widehat{\calG}\right)\circ\bbE\left(\left[\calH_{\upw}^t\circ\upB^t\right]\right)_{\diag}\notag
    \\
    \overset{\text{(a)}}{\preceq}&\prod_{i=0}^t\left(\calI-\eta\widehat{\calG}\right)\circ\upB_{\diag}^0+(C+2)\eta^2\sum_{i=0}^{t}\prod_{j=i+1}^{t}\left(\calI-\eta\widehat{\calG}\right)\circ\upH\llangle\upH,\upB^i\rrangle.
\end{align}
where (a) is derived from Lemma \ref{aux-6} by selecting $\upA=\frac{5}{2}\diag\{\widehat{\upb}\}$ and $\upB=\upB^i$. According to Eq.~\eqref{period-bias}, we have
\begin{align}\label{diag-bias}
    \upB_{\diag}^{t+1}\preceq\left(\calI-\eta\widehat{\calG}\right)^{t+1}\circ\upB_{\diag}^0+(C+2)\eta^2\sum_{i=0}^{t}\left(\upI-\eta\widehat{\upH}\right)^{2(t-i)}\upH\llangle\upH,\upB^i\rrangle.
\end{align}
We utilize the following lemma to estimate $\llangle\upH,\upB^{\hat{T}}\rrangle$ under bias iteration defined in Eq.~\eqref{recur-bias}.
\begin{lemma}\label{period-bias-lemma}
    Suppose Assumption \ref{ass-d} and Assumption \ref{ass-ss} hold, and $\upB^t$ is recursively defined by Eq.~\eqref{recur-bias}. Under the setting of Theorem \ref{phase-II-main}, letting $1\leq\hat{T}\leq T$ and $\eta\leq\eta_0$, we have
    \begin{align}
        \llangle\upH,\upB^{\hat{T}}\rrangle\leq\frac{2}{1-\widetilde{\calO}(C+2)\eta\tr(\upH)}\llangle\frac{25}{\eta \hat{T}}\upI_{1:N_0}+\upH_{N_0+1:M},\upB^0\rrangle,
    \end{align}
    where $N_0\in[0:D]$ is an arbitrary integer.
\end{lemma}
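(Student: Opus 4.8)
The plan is to reduce the matrix recursion \eqref{diag-bias} to a scalar self-bounding inequality and then close it by a bootstrap, in the spirit of [Lemma C.2, \citet{wu2022last}]. Write $b_t:=\llangle\upH,\upB^t\rrangle$. Since $\upH$ and $\widehat{\upH}$ are diagonal and $\calI-\eta\widehat{\calG}$ acts on a diagonal matrix $\upD$ as $(\upI-\eta\widehat{\upH})\upD(\upI-\eta\widehat{\upH})$, pairing the diagonal bound \eqref{diag-bias} at $t=\hat T-1$ with the diagonal matrix $\upH$ yields
\begin{align}
    b_{\hat T}\;\le\; h_{\hat T}+(C+2)\eta^2\sum_{i=0}^{\hat T-1}g_{\hat T-1-i}\,b_i,
\end{align}
where $h_t:=\sum_j\upH_{jj}(1-\eta\widehat{\upH}_{jj})^{2t}(\upB^0)_{jj}$ and $g_k:=\sum_j\upH_{jj}^2(1-\eta\widehat{\upH}_{jj})^{2k}$; the same inequality holds with $\hat T$ replaced by any $t\le\hat T$.

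First I would bound the homogeneous term $h_{\hat T}$. For $j\le N_0\le D$ one has $\overline{\upb}_j=\widehat{\upb}_j=\upv_j^*$, hence $\upH_{jj}=25\,\widehat{\upH}_{jj}>0$; combined with the scalar estimate $x(1-\eta x)^{2\hat T}\le\tfrac{1}{2\eta\hat T}$ this gives $\upH_{jj}(1-\eta\widehat{\upH}_{jj})^{2\hat T}\le\tfrac{25}{\eta\hat T}$, while for $j>N_0$ I simply drop the contraction factor. This produces $h_{\hat T}\le\llangle\tfrac{25}{\eta\hat T}\upI_{1:N_0}+\upH_{N_0+1:M},\upB^0\rrangle=:\rho_0$. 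Next I would control the feedback coefficient $\Phi:=(C+2)\eta^2\sum_{k=0}^{\hat T-1}g_k$ by splitting $g_k$ into the part over $j\le D$ (where $\widehat{\upH}_{jj}>0$) and the flat tail part $\tr(\upH_{D+1:M}^2)$ over $j>D$ (where $\widehat{\upH}_{jj}=0$), summing the geometric series over $k$, and using $\upH_{jj}^2/\widehat{\upH}_{jj}=25\,\upH_{jj}$; this gives $\Phi\le(C+2)\bigl(25\,\eta\tr(\upH_{1:D})+\eta^2\hat T\tr(\upH_{D+1:M}^2)\bigr)$. Here I would invoke Assumption \ref{ass-ss} together with the parameter choices of Theorem \ref{phase-II-main} ($\hat T\le T$, $\eta\le\eta_0\le\widetilde{\Omega}(\tr^{-1}(\upH))$, and $D\asymp\min\{T^{1/\max\{\beta,(\alpha+\beta)/2\}},M\}$) to show that the non-contracting tail contributes only $\eta^2\hat T\tr(\upH_{D+1:M}^2)=\widetilde{\calO}(\eta\tr(\upH))$, so that $\Phi\le\widetilde{\calO}\bigl((C+2)\eta\tr(\upH)\bigr)=:\epsilon<1$; the geometric-series step also uses $\eta\widehat{\upH}_{jj}\le1$, which follows from $\eta\le\eta_0\le\widetilde{\Omega}(\Barsigmin(D))$ and the elementary estimates of Lemma \ref{aux-3}.

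Finally I would close the recursion. Using that the scalar inequality above holds for every $t\le\hat T$ and that $\sum_{i=0}^{t-1}g_{t-1-i}\le\sum_{k=0}^{\hat T-1}g_k$, a self-bounding argument first bounds $\sup_{t\le\hat T}b_t$; re-substituting into the inequality for $b_{\hat T}$, and exploiting that the old iterates $b_i$ with $i\ll\hat T$ enter the bound for $b_{\hat T}$ only through the strongly damped weights $g_{\hat T-1-i}$, yields $b_{\hat T}\le\tfrac{2}{1-\epsilon}\rho_0$, which is the assertion.

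The main obstacle is the treatment of the non-contracting coordinates $j>D$: there $\widehat{\upH}_{jj}=0$, so those bias directions neither contract nor are re-injected, and their cumulative effect over a whole epoch is governed entirely by $\eta^2\hat T\tr(\upH_{D+1:M}^2)$; making this negligible against $\eta\tr(\upH)$ is exactly the point at which the power-law spectral decay of Assumption \ref{ass-ss} and the precise choice of the effective dimension $D$ (hence of $\eta_0$ and of the epoch length $\hat T$) become indispensable — a badly chosen $D$, or an isotropic spectrum, would break the estimate $\Phi<1$. A secondary technical nuisance is arranging the bootstrap so that the (loose) homogeneous bound at small times $t$ does not inflate the final constant, which is handled by exploiting the decay of $g_{\hat T-1-i}$ for $i\ll\hat T$, exactly as in \citet{wu2022last}.
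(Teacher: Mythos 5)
Your reduction of Eq.~\eqref{diag-bias} to the scalar Volterra inequality $b_{\hat T}\le h_{\hat T}+(C+2)\eta^2\sum_{i<\hat T}g_{\hat T-1-i}b_i$, the bound $h_{\hat T}\le\rho_0$ via $x(1-\eta x)^{2\hat T}\lesssim 1/(\eta\hat T)$ and $\upH_{jj}=25\widehat{\upH}_{jj}$ on $[1:D]$, and the kernel-mass estimate $\Phi\lesssim(C+2)\bigl(\eta\tr(\upH_{1:D})+\eta^2\hat T\tr(\upH_{D+1:M}^2)\bigr)=\widetilde{\calO}\bigl((C+2)\eta\tr(\upH)\bigr)$ (using $\eta\upH_{ii}\le 1/T$ on the tail, which is exactly what the paper uses inside Lemma~\ref{aux-5}) all mirror the paper's computation. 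The genuine gap is the closing bootstrap. Bounding $\sup_{t\le\hat T}b_t\le h_0/(1-\epsilon)$ with $h_0=\llangle\upH,\upB^0\rrangle$ and re-substituting once leaves a remainder of order $\epsilon^2h_0$ coming from the \emph{recent} iterates: for head coordinates the kernel $g_k$ concentrates its mass $\asymp\eta\tr(\upH_{1:D})$ at lags $k\lesssim 1/(\eta\widehat{\upH}_{jj})$, i.e.\ at $i$ close to $\hat T$, so the damping of $g_{\hat T-1-i}$ that you invoke only suppresses the \emph{old} iterates and does nothing for these. Since $h_0$ can exceed $\rho_0$ by a factor $\asymp\eta\widehat{\upH}_{11}\hat T$ (take $\upB^0$ supported on the first coordinate), the leftover term is $\asymp\epsilon^2\hat T\,\rho_0$, which is not $\lesssim\rho_0$ under the lemma's hypotheses — these only force $\epsilon=\widetilde{\calO}((C+2)\eta\tr(\upH))$ to be a constant smaller than one, not $o(\hat T^{-1/2})$. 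Iterating the sup-based substitution finitely many times just replaces $\epsilon^2$ by $\epsilon^m$ and does not repair this.

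What is needed is a closure that preserves the time decay of the bound at every stage, and that is precisely the step you have not supplied. The paper achieves it by first converting the geometric kernel into harmonic weights via Lemma~\ref{aux-5}, $\eta(\upI-\eta\widehat{\upH})^{2k}\upH\preceq\frac{25}{k+1}\upI$, and then closing a single self-bounding inequality for the aggregate $\sum_{t<\hat T}\frac{\llangle\upH,\upB^t\rrangle}{\hat T-t}$ (Lemma~\ref{aux-7} together with the Lemma C.4-type evaluation of $\sum_{t}\frac{(\upI-\eta\widehat{\upH})^{2t}\upH}{\hat T-t}$ from \citet{wu2022last}); this aggregate is itself of order $\rho_0$ up to a $\log\hat T$ factor, because the large early values of $\llangle\upH,\upB^t\rrangle$ live in a window of length $\asymp 1/(\eta\widehat{\upH}_{jj})$ and enter with weight $\asymp 1/\hat T$. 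A full Neumann-series or induction argument showing that the $m$-fold convolution of your kernel with $h$ is $\lesssim\epsilon^m\rho_0$ would also work, but that convolution bookkeeping is the actual technical content; appealing to ``exactly as in \citet{wu2022last}'' at that point is appealing to the very machinery (the paper's Lemmas~\ref{aux-5} and~\ref{aux-7}) that your sketch replaces with the insufficient sup-bound bootstrap.
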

\begin{proof}
    By Lemma \ref{aux-5}, we can derive $\eta(\upI-\eta\widehat{\upH})^{2t}\upH\preceq\frac{25}{t+1}\upI$. Applying this to Eq.~\eqref{diag-bias}, we obtain
    \begin{align}\label{bias-eq-1}
        \upB_{\diag}^{t+1}\preceq\left(\calI-\eta\widehat{\calG}\right)^{t+1}\circ\upB_{\diag}^0+25(C+2)\eta\sum_{i=0}^t\frac{\llangle\upH,\upB^i\rrangle}{t+1-i}\cdot\upI,
    \end{align}
    for any $t\in[0:\hat{T}-1]$. Therefore, based on Lemma \ref{aux-7}, we have
    \begin{align}\label{equation-sum}
        \sum_{i=0}^{t}\frac{\llangle\upH,\upB^i\rrangle}{t+1-i}\leq\llangle\sum_{i=0}^t\frac{(\upI-\eta\widehat{\upH})^{2i}\upH}{t+1-i},\upB^0\rrangle+\widetilde{\calO}(C+2)\eta\tr(\upH)\sum_{i=0}^t\frac{\llangle\upH,\upB^i\rrangle}{t+1-i},
    \end{align}
    for any $t\in[1:\hat{T}]$. Eq.~\eqref{equation-sum} implicates that 
    \begin{align}\label{bias-eq-2}
        \sum_{t=0}^{\hat{T}-1}\frac{\llangle\upH,\upB^t\rrangle}{\hat{T}-t}\leq\frac{1}{1-\widetilde{\calO}(C+2)\eta\tr(\upH)}\llangle\sum_{t=0}^{\hat{T}-1}\frac{(\upI-\eta\widehat{\upH})^{2t}\upH}{\hat{T}-t},\upB^0\rrangle,
    \end{align}
    since $\widetilde{\calO}\eta(C+2)\tr(\upH)<1$. Combining Eq.~\eqref{bias-eq-1} with Eq.~\eqref{bias-eq-2}, we obtain
    \begin{align}\label{bias-eq-3}
        \llangle\upH,\upB^{\hat{T}}\rrangle\leq&\llangle(\upI-\eta\widehat{\upH})^{2\hat{T}}\upH,\upB^0\rrangle+\frac{{\calO}(C+2)\eta\tr(\upH)}{1-\widetilde{\calO}(C+2)\eta\tr(\upH)}\llangle\sum_{t=0}^{\hat{T}-1}\frac{(\upI-\eta\widehat{\upH})^{2t}\upH}{\hat{T}-t},\upB^0\rrangle\notag
        \\
        \overset{\text{(a)}}{\leq}&\llangle(\upI-\eta\widehat{\upH})^{2\hat{T}}\upH,\upB^0\rrangle\notag
        \\
        &+\frac{{\calO}(C+2)\eta\tr(\upH)}{1-\widetilde{\calO}(C+2)\eta\tr(\upH)}\llangle\frac{\upI_{1:D}-(\upI_{1:D}-\eta\widehat{\upH}_{1:D})^{\hat{T}}}{\eta \hat{T}}+(\upI_{1:D}-\eta\widehat{\upH}_{1:D})^{\hat{T}}\widehat{\upH}_{1:D},\upB^0\rrangle\notag
        \\
        &+\frac{{\calO}(C+2)\eta\tr(\upH)}{1-\widetilde{\calO}(C+2)\eta\tr(\upH)}\llangle\upH_{D+1:M},\upB^0\rrangle\notag
        \\
        \overset{\text(b)}{\leq}&\frac{2}{1-\widetilde{\calO}(C+2)\eta\tr(\upH)}\llangle\frac{25}{\eta T}\upI_{1:N_0}+\upH_{N_0+1:M},\upB^0\rrangle,
    \end{align}
    where $N_0\in[0:D]$ is an arbitrary integer,  (a) follows the technique in [Lemma C.4, \citet{wu2022last}], and (b) is derived from the invariant scaling relationship between $\widehat{\upH}_{1:D}$ and $\upH_{1:D}$. 
\end{proof}

\begin{lemma}\label{bias-diag}
    Suppose Assumption \ref{ass-d} and Assumption \ref{ass-ss} hold. Under the setting of Theorem \ref{phase-II-main}, letting $2\leq\hat{T}\leq T$ and $\eta\leq\eta_0$, we have
    \begin{align}
        \upB_{\diag}^{\hat{T}}\preceq\left(\upI-\eta\widehat{\upH}\right)^{\hat{T}}\upB_{\diag}^0\left(\upI-\eta\widehat{\upH}\right)^{\hat{T}}+\frac{\widetilde{\calO}(C+2)\eta^2\hat{T}}{1-\widetilde{\calO}(C+2)\eta\tr(\upH)}\llangle\widetilde{\upH}^{\hat{T}},\upB^0\rrangle\overline{\upH}^{\hat{T}},
    \end{align}
    where $\widetilde{\upH}^{t}:=\frac{25}{\eta t}\upI_{1:N_0}+\upH_{N_0+1:M}$, and $\overline{\upH}^{t}:=\frac{25}{\eta t}\upI_{1:N_0'}+\upH_{N_0'+1:M}$ for any $t\geq1$, and $N_0,N_0'\in[0:D]$ could be arbitrary integer.
\end{lemma}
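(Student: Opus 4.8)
The plan is to unroll the bias recursion \eqref{diag-bias} and feed in the single–period estimate of Lemma~\ref{period-bias-lemma}. Starting from
$\upB_{\diag}^{\hat{T}}\preceq(\calI-\eta\widehat{\calG})^{\hat{T}}\circ\upB_{\diag}^0+(C+2)\eta^2\sum_{i=0}^{\hat{T}-1}(\upI-\eta\widehat{\upH})^{2(\hat{T}-1-i)}\upH\llangle\upH,\upB^i\rrangle$, the first step is to note that the operator identity $(\calI-\eta\widehat{\calG})\circ\upA=(\upI-\eta\widehat{\upH})\upA(\upI-\eta\widehat{\upH})$ (which holds because $\widehat{\calG}=\widehat{\upH}\otimes\upI+\upI\otimes\widehat{\upH}-\eta\widehat{\upH}\otimes\widehat{\upH}$) turns the leading term into exactly $(\upI-\eta\widehat{\upH})^{\hat{T}}\upB_{\diag}^0(\upI-\eta\widehat{\upH})^{\hat{T}}$; all the work goes into the cross term.

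For the cross term I would apply Lemma~\ref{period-bias-lemma} with horizon $i$ to each scalar $\llangle\upH,\upB^i\rrangle$ with $i\ge1$, and for $i=0$ use $\upH\preceq\widetilde{\upH}^1$ (valid since $\eta\widehat{\upH}_{jj}\le1$, and $\widetilde{\upH}^1\preceq\hat{T}\widetilde{\upH}^{\hat{T}}$), obtaining $\llangle\upH,\upB^i\rrangle\le\frac{2}{1-\widetilde{\calO}(C+2)\eta\tr(\upH)}\llangle\widetilde{\upH}^i,\upB^0\rrangle$ for all $i$. Then I split the spectrum at $N_0'$. On the block $[1:N_0']$ I invoke Lemma~\ref{aux-5}, i.e. $\eta(\upI-\eta\widehat{\upH})^{2(\hat{T}-1-i)}\upH_{1:D}\preceq\frac{25}{\hat{T}-i}\upI$, which reduces the contribution to the harmonic–weighted sum $\sum_i\frac{\llangle\widetilde{\upH}^i,\upB^0\rrangle}{\hat{T}-i}$; on the block $[N_0'+1:M]$ I simply bound the spectral factor $(\upI-\eta\widehat{\upH})^{2(\cdot)}\preceq\upI$ and note that there $\upH$ coincides with $\overline{\upH}^{\hat{T}}$ coordinatewise.

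The core computation is then to evaluate the two scalar sums from $\llangle\widetilde{\upH}^i,\upB^0\rrangle=\frac{25}{\eta i}\llangle\upI_{1:N_0},\upB^0\rrangle+\llangle\upH_{N_0+1:M},\upB^0\rrangle$: using $\sum_{i=1}^{\hat{T}-1}\frac{1}{i(\hat{T}-i)}\asymp\frac{\log\hat{T}}{\hat{T}}$ and $\sum_{i=1}^{\hat{T}-1}\frac{1}{\hat{T}-i}\asymp\log\hat{T}$ one gets $\sum_i\frac{\llangle\widetilde{\upH}^i,\upB^0\rrangle}{\hat{T}-i}\le\widetilde{\calO}(1)\llangle\widetilde{\upH}^{\hat{T}},\upB^0\rrangle$, and using $\sum_{i=1}^{\hat{T}-1}\frac1i\asymp\log\hat{T}$ one gets $\sum_{i=0}^{\hat{T}-1}\llangle\widetilde{\upH}^i,\upB^0\rrangle\le\widetilde{\calO}(\hat{T})\llangle\widetilde{\upH}^{\hat{T}},\upB^0\rrangle$. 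Plugging back, the $[1:N_0']$ block contributes $\frac{\widetilde{\calO}((C+2)\eta)}{1-\widetilde{\calO}(C+2)\eta\tr(\upH)}\llangle\widetilde{\upH}^{\hat{T}},\upB^0\rrangle\,\upI_{1:N_0'}$, and rewriting $\upI_{1:N_0'}=\frac{\eta\hat{T}}{25}\,\overline{\upH}^{\hat{T}}\big|_{1:N_0'}$ recasts it into the claimed coefficient $\widetilde{\calO}((C+2)\eta^2\hat{T})$; the $[N_0'+1:M]$ block contributes $\frac{\widetilde{\calO}((C+2)\eta^2\hat{T})}{1-\widetilde{\calO}(C+2)\eta\tr(\upH)}\llangle\widetilde{\upH}^{\hat{T}},\upB^0\rrangle\,\overline{\upH}^{\hat{T}}\big|_{N_0'+1:M}$ directly. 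Summing the two blocks with the leading term yields the stated matrix inequality.

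The main (indeed essentially the only nontrivial) obstacle is organizing the spectral split so the bound is valid for an \emph{arbitrary} $N_0'\in[0:D]$: on the large–eigenvalue block $[1:N_0']$ one must exploit the extra $\hat{T}^{-1}$ decay provided by Lemma~\ref{aux-5} in order to replace $\upH_{jj}$ by $\tfrac{25}{\eta\hat{T}}$, while on $[N_0'+1:M]$ one must instead \emph{keep} $\upH_{jj}$ and use only the crude factor-$\le1$ bound — mixing these up breaks the bound. Making the polylogarithmic factors from the harmonic sums collapse into $\widetilde{\calO}$ and keeping the denominator $1-\widetilde{\calO}(C+2)\eta\tr(\upH)$ consistent with Lemma~\ref{period-bias-lemma} is the only delicate bookkeeping; the rest mirrors the algebra already used for the variance bound (Lemma~\ref{variance-upper-bound}) and in \citet{wu2022last}.
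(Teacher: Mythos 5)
Your proposal is correct and follows essentially the same route as the paper: unroll Eq.~\eqref{diag-bias}, control each $\llangle\upH,\upB^i\rrangle$ via Lemma~\ref{period-bias-lemma} (with the $i=0$ term handled by $\upH\preceq\hat{T}\widetilde{\upH}^{\hat{T}}$ on the top block), collapse the cross term into $\llangle\widetilde{\upH}^{\hat{T}},\upB^0\rrangle\overline{\upH}^{\hat{T}}$, and rewrite the leading term through the identity $(\calI-\eta\widehat{\calG})\circ\upA=(\upI-\eta\widehat{\upH})\upA(\upI-\eta\widehat{\upH})$. The only difference is bookkeeping of the cross-term sum: the paper splits the time sum at $\hat{T}/2$ and bounds the resulting matrix polynomials as in Eq.~\eqref{bias-eq-3}, whereas you split the spectrum at $N_0'$ and combine Lemma~\ref{aux-5} with scalar harmonic sums, and both yield the same $\widetilde{\calO}((C+2)\eta^2\hat{T})$ coefficient.
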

\begin{proof}
    Applying Lemma \ref{period-bias-lemma} into Eq.~\eqref{diag-bias}, we obtain
    \begin{align}\label{bias-lemma-1-main}
        \upB_{\diag}^{\hat{T}}\preceq&\left(\calI-\eta\widehat{\calG}\right)^{\hat{T}}\circ\upB_{\diag}^0+(C+2)\eta^2\left(\upI-\eta\widehat{\upH}\right)^{2(\hat{T}-1)}\upH\llangle\upH,\upB^0\rrangle\notag
        \\
        &+(C+2)\eta^2\sum_{t=1}^{\hat{T}-1}\left(\upI-\eta\widehat{\upH}\right)^{2(\hat{T}-1-t)}\upH\llangle\upH,\upB^t\rrangle\notag
        \\
        \preceq&\left(\calI-\eta\widehat{\calG}\right)^{\hat{T}}\circ\upB_{\diag}^0+(C+2)\eta^2\underbrace{\left(\upI-\eta\widehat{\upH}\right)^{2(\hat{T}-1)}\upH\llangle\upH,\upB^0\rrangle}_{\calcolI}\notag
        \\
        &+\frac{2(C+2)\eta^2}{1-2\widetilde{O}(C+2)\eta\tr(\upH)}\underbrace{\sum_{t=1}^{\hat{T}-1}\left(\upI-\eta\widehat{\upH}\right)^{2(\hat{T}-1-t)}\upH\llangle\widetilde{\upH}^t,\upB^0\rrangle}_{\calcolII},
    \end{align}
    We then provide a bound of term $\calcolII$ as follows:
    \begin{align}
        \calcolII=&\left(\sum_{t=1}^{\hat{T}-1}\llangle\widetilde{\upH}^t,\upB^0\rrangle\right)\upH_{D+1:M}+25\sum_{t=1}^{\hat{T}-1}\left(\upI_{1:D}-\eta\widehat{\upH}_{1:D}\right)^{2(\hat{T}-1-t)}\widehat{\upH}_{1:D}\llangle\widetilde{\upH}^t,\upB^0\rrangle\notag
        \\
        \preceq&\hat{T}\log(\hat{T})\llangle\widetilde{\upH}^{\hat{T}},\upB^0\rrangle\upH_{D+1:M}+25\left(\sum_{t=1}^{\hat{T}/2-1}\left(\upI_{1:D}-\eta\widehat{\upH}_{1:D}\right)^{\hat{T}}\widehat{\upH}_{1:D}\llangle\widetilde{\upH}^t,\upB^0\rrangle\right.\notag
        \\
        &+\left.\sum_{t=\hat{T}/2}^{\hat{T}-1}\left(\upI_{1:D}-\eta\widehat{\upH}_{1:D}\right)^{\hat{T}-1-t}\widehat{\upH}_{1:D}\llangle\widetilde{\upH}^{\hat{T}/2},\upB^0\rrangle\right)\notag
        \\
        =&\hat{T}\log(\hat{T})\llangle\widetilde{\upH}^{\hat{T}},\upB^0\rrangle\upH_{D+1:M}+25\left(\left(\upI_{1:D}-\eta\widehat{\upH}_{1:D}\right)^{\hat{T}}\widehat{\upH}_{1:D}\llangle\sum_{t=1}^{\hat{T}/2-1}\widetilde{\upH}^t,\upB^0\rrangle\right.\notag
        \\
        &+\left.\frac{\upI_{1:D}-\left(\upI_{1:D}-\eta\widehat{\upH}_{1:D}\right)^{\hat{T}/2}}{\eta}\llangle\widetilde{\upH}^{\hat{T}/2},\upB^0\rrangle\right)\notag
        \\
        \preceq&\hat{T}\log(\hat{T})\llangle\widetilde{\upH}^{\hat{T}},\upB^0\rrangle\upH_{D+1:M}+25\left(\hat{T}\log(\hat{T})\left(\upI_{1:D}-\eta\widehat{\upH}_{1:D}\right)^{\hat{T}}\widehat{\upH}_{1:D}\llangle\widetilde{\upH}^{\hat{T}},\upB^0\rrangle\right.\notag
        \\
        &+2\left.\frac{\upI_{1:D}-\left(\upI_{1:D}-\eta\widehat{\upH}_{1:D}\right)^{\hat{T}/2}}{\eta}\llangle\widetilde{\upH}^{\hat{T}},\upB^0\rrangle\right)\notag
        \\
        \overset{\text{(a)}}{\preceq}& \hat{T}\log(\hat{T})\llangle\widetilde{\upH}^{\hat{T}},\upB^0\rrangle\overline{\upH}^{\hat{T}},
    \end{align}
    where (a) follows the similar technique used in Eq.~\eqref{bias-eq-3}. We then proceed to establish bounds on $\calcolI$. It's worth to notice that
    \begin{align}\label{sub-1}
        \left(\upI-\eta\widehat{\upH}\right)^{2(\hat{T}-1)}\upH\preceq\frac{25}{2\eta(\hat{T}-1)}\upI_{1:N_0'}+25\widehat{\upH}_{N_0'+1:D}+\upH_{D+1:M}\preceq\overline{\upH}^{\hat{T}}.
    \end{align}
    Applying Eq.~\eqref{sub-1} to $\calcolI$, we obtain
    \begin{align}
        \calcolI\preceq\overline{\upH}^{\hat{T}}\llangle\upH,\upB^0\rrangle\preceq \hat{T}\overline{\upH}^{\hat{T}}\llangle\widetilde{\upH}^{\hat{T}},\upB^0\rrangle,
    \end{align}
    where the last inequality is derived from the condition $\eta<1/(25\tr(\upH))$, which ensures $\lambda_i(\upH)<1/\eta$ holds for all $i\in[1:N_0]$. Combining the estimation of $\calcolI$ and $\calcolII$ with Eq.~\eqref{bias-lemma-1-main}, we have
    \begin{align}
        \upB_{\diag}^{\hat{T}}\preceq&\left(\calI-\eta\widehat{\calG}\right)^{\hat{T}}\circ\upB_{\diag}^0+(C+2)\eta^2\hat{T}\widetilde{\upH}^{\hat{T}}\llangle\widetilde{\upH}^{\hat{T}},\upB^0\rrangle\notag
        \\
        &+\frac{2(C+2)\eta^2}{1-\widetilde{\calO}(C+2)\eta\tr(\upH)}\hat{T}\log(\hat{T})\llangle\widetilde{\upH}^{\hat{T}},\upB^0\rrangle\overline{\upH}^{\hat{T}}\notag
        \\
        \preceq&\left(\calI-\eta\widehat{\calG}\right)^{\hat{T}}\circ\upB_{\diag}^0+\frac{\widetilde{\calO}(C+2)\eta^2\hat{T}}{1-\widetilde{\calO}(C+2)\eta\tr(\upH)}\llangle\widetilde{\upH}^{\hat{T}},\upB^0\rrangle\overline{\upH}^{T}.
    \end{align}
    By the definition of $\widehat{\calG}$, we complete the proof.
\end{proof}

Notice that in \textbf{Phase II}, the step size 
$\eta_t$ decays geometrically. Thus, we define the bias iteration at the end of the step-size-decaying phase as:
\begin{align}
    \widetilde{\upB}^l:=\begin{cases}
        \upB^h, & l=0,\\
        \upB^{h+Kl}, & l\in[1:L].
    \end{cases}
\end{align}
Based on the step-size iteration in Algorithm \ref{SGD} and preceding definition, we formalize the iterative process of Algorithm \ref{SGD} in Phase II as: 1) Phase when $l=0$: Initialized from $\upB^0$, Algorithm \ref{SGD} runs $h$ iterations with step size $\eta_0$, yielding $\widetilde{\upB}^0$;
2) Phase when $l\geq1$: Initialized from $\widetilde{\upB}^{l-1}$, Algorithm \ref{SGD} runs $K$ iterations with step size $\eta_0/2^{l}$, yielding $\widetilde{\upB}^l$. This multi-phase process terminates at $l=L$, with $\widetilde{\upB}^L = \upB^{T_2}$ as the final output.
\begin{lemma}\label{lemma-Bias-tr-1}
    Suppose Assumption \ref{ass-d} and Assumption \ref{ass-ss} hold. Under the setting of Theorem \ref{phase-II-main}, we have
    \begin{align}
        \llangle\upH,\widetilde{\upB}^l\rrangle\leq K_l:=
        \begin{cases}
            4\llangle\frac{25}{\eta_0 h}\upI_{1:N_0}+\upH_{N_0+1:M},\upB^0\rrangle, & \text{ for }\, l=0,
            \\
            4\llangle\frac{25\cdot2^l}{\eta_0 K}\upI_{1:N_0}+\upH_{N_0+1:M},\widetilde{\upB}^{l-1}\rrangle, & \text{ for }\, l\in[1:L],
        \end{cases}
    \end{align}
    for arbitrary $N_0\in[0:D]$.
\end{lemma}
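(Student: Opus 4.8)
To prove Lemma~\ref{lemma-Bias-tr-1}, the plan is to read it as a block-by-block corollary of Lemma~\ref{period-bias-lemma}: note that the asserted bound $\llangle\upH,\widetilde{\upB}^l\rrangle\le K_l$ expresses $K_l$ directly through the \emph{matrix} $\widetilde{\upB}^{l-1}$ and not through the \emph{number} $K_{l-1}$, so no induction on $l$ is needed — it suffices to invoke Lemma~\ref{period-bias-lemma} once on each of the $L+1$ constant–step–size blocks that make up the Phase~II schedule. Fix the arbitrary truncation level $N_0\in[0:D]$ once and for all, and recall that $\widetilde{\upB}^0=\upB^h$ comes from iterating the bias recursion \eqref{recur-bias} with step size $\eta_0$ for $h$ steps from $\upB^0$, while for $l\in[1:L]$ the matrix $\widetilde{\upB}^l=\upB^{h+Kl}$ comes from iterating \eqref{recur-bias} with step size $\eta_0/2^l$ for $K$ steps from $\widetilde{\upB}^{l-1}$.

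First I would handle the block $l=0$. Inside this block the recursion \eqref{recur-bias} has the required constant-step form with $\eta=\eta_0\le\eta_0$, so Lemma~\ref{period-bias-lemma} applies with $\hat T=h$, giving
\[
\llangle\upH,\widetilde{\upB}^0\rrangle\le\frac{2}{1-\widetilde{\calO}(C+2)\eta_0\tr(\upH)}\,\llangle\frac{25}{\eta_0 h}\upI_{1:N_0}+\upH_{N_0+1:M},\,\upB^0\rrangle.
\]
The smallness hypothesis $\eta_0\le\widetilde{\Omega}(\tr^{-1}(\upH))$ from Theorem~\ref{phase-II-main}, with the hidden constant fixed so that $\widetilde{\calO}(C+2)\eta_0\tr(\upH)\le\tfrac12$, collapses the prefactor to at most $4$; this is exactly $K_0$.

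Next I would treat each block $l\in[1:L]$ identically. The point to nail down is that Lemma~\ref{period-bias-lemma} may be invoked on a mid-schedule block and not only on a run anchored at $\upB^0$: its proof uses only (i) the constant-step form of \eqref{recur-bias}, (ii) $\bbE[\upH_{\upw}^t]\preceq\upH$, and (iii) the uniform estimate $\calH_{\upw}^t\circ\upB\preceq(C+2)\upH\llangle\upH,\upB\rrangle$ (via Lemma~\ref{aux-6}), all of which are translation-invariant in $t$ and hold throughout Phase~II under the event $\calG(\upv^{T_1+t})$. Hence I can re-index time so the $l$-th block starts at step $0$ with the PSD input $\widetilde{\upB}^{l-1}$ and apply Lemma~\ref{period-bias-lemma} with $\hat T=K$ and step size $\eta=\eta_0/2^l\le\eta_0$, obtaining
\[
\llangle\upH,\widetilde{\upB}^l\rrangle\le\frac{2}{1-\widetilde{\calO}(C+2)(\eta_0/2^l)\tr(\upH)}\,\llangle\frac{25\cdot2^l}{\eta_0 K}\upI_{1:N_0}+\upH_{N_0+1:M},\,\widetilde{\upB}^{l-1}\rrangle.
\]
Since $\eta_0/2^l\le\eta_0$, the same step-size bound keeps the denominator at least $\tfrac12$, so the prefactor is below $4$, which is $K_l$. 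As $N_0\in[0:D]$ was arbitrary, the proof is complete.

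The only delicate point — bookkeeping rather than a genuine obstacle — is precisely this verification that the hypotheses of Lemma~\ref{period-bias-lemma} transfer to a contiguous sub-block: one must confirm that restricting \eqref{recur-bias} to $[h+K(l-1):h+Kl]$ preserves the constant-step structure, that feeding in the previous block's output $\widetilde{\upB}^{l-1}$ is a legitimate PSD initialization, and that the reduced step size $\eta_0/2^l$ still satisfies the smallness condition controlling the prefactor. Everything else is a direct application of the already-established single-period bound.
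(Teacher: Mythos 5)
Your proposal matches the paper's proof: the paper also derives Lemma \ref{lemma-Bias-tr-1} by invoking Lemma \ref{period-bias-lemma} once per block, with $\eta=\eta_0$, $\hat T=h$ for $l=0$ and $\eta=\eta_0/2^l$, $\hat T=K$, initial matrix $\widetilde{\upB}^{l-1}$ for $l\ge1$, then using the step-size smallness condition $\widetilde{\calO}(C+2)\eta\tr(\upH)\le 1/4$ to absorb the prefactor into the constant $4$. Your extra verification that the single-period bound transfers to mid-schedule blocks is the same implicit bookkeeping the paper relies on, so the argument is correct and essentially identical.
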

\begin{proof}
    For $\llangle\upH,\widetilde{\upB}^0\rrangle$, we apply Lemma \ref{period-bias-lemma} with $\eta=\eta_0$ and $\hat{T}=h$, and use the condition that $\widetilde{\calO}(C+2)\eta\tr(\upH)\leq1/4$; For $\llangle\upH,\widetilde{\upB}^l\rrangle$ with $l\geq 2$, we apply Lemma \ref{period-bias-lemma} with $\eta=\eta_0/2^l$, $\hat{T}=K$ and $\upB^0=\widetilde{\upB}^{l-1}$, and use the condition that $\widetilde{\calO}(C+2)\eta\tr(\upH)\leq1/4$.
\end{proof}

\begin{lemma}\label{lemma-Bias-diag-2}
    Suppose Assumption \ref{ass-d} and Assumption \ref{ass-ss} hold. Under the setting of Theorem \ref{phase-II-main}, we have
    \begin{align}
        \widetilde{\upB}_{\diag}^l\preceq\upR^l:=\begin{cases}
            \left(\upI-\eta_0\widehat{\upH}\right)^{h}\upB_{\diag}^0\left(\upI-\eta_0\widehat{\upH}\right)^{h}+P_0\overline{\upH}_0^h, & \text{ for }\, l=0,
            \\
            \left(\upI-\frac{\eta_0}{2^l}\widehat{\upH}\right)^{h}\widetilde{\upB}_{\diag}^{l-1}\left(\upI-\frac{\eta_0}{2^l}\widehat{\upH}\right)^{h}+P_l\overline{\upH}_l^K, & \text{ for }\, l\in[1:L],
        \end{cases}
    \end{align}
    where $\overline{\upH}_0^t:=\frac{25}{\eta_0 t}\upI_{1:N_0'}+\upH_{N_0'+1:M}$ and $\overline{\upH}_l^t:=\frac{25\cdot2^l}{\eta_0 t}\upI_{1:N_0'}+\upH_{N_0'+1:M}$ for any $t\geq1$ and arbitrary $N_0'\in[0:D]$, and $P_0:=\widetilde{\calO}(C+2)\eta_0^2h\langle\widetilde{\upH}_0^h,\upB^0\rangle$ with $\widetilde{\upH}_0^h:=\frac{25}{\eta_0 h}\upI_{1:N_0}+\upH_{N_0+1:M}$ and $P_l:=\widetilde{\calO}(C+2)(\frac{\eta_0}{2^l})^2K\langle\widetilde{\upH}_l^K,\widetilde{\upB}^{l-1}\rangle$ for $l\in[1:L]$ with $\widetilde{\upH}_l^K:=\frac{25\cdot2^l}{\eta_0 K}\upI_{1:N_0}+\upH_{N_0+1:M}$ for arbitrary $N_0\in[0:D]$.
\end{lemma}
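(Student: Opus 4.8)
The plan is to establish the recursion by a direct per-period application of Lemma \ref{bias-diag}, treating the $h$-step warm-up phase (the case $l=0$) and each subsequent $K$-step geometrically-decayed phase (the cases $l\in[1:L]$) as instances of the generic ``one constant-step-size period'' analysis already encapsulated in Lemma \ref{bias-diag}. First I would fix $l=0$: here the bias iteration starts from $\upB^0$, runs $\hat{T}=h$ steps with constant step size $\eta=\eta_0$, and produces $\widetilde{\upB}^0=\upB^h$ by definition. Invoking Lemma \ref{bias-diag} with these parameters gives
\begin{align}
    \widetilde{\upB}_{\diag}^0\preceq\left(\upI-\eta_0\widehat{\upH}\right)^{h}\upB_{\diag}^0\left(\upI-\eta_0\widehat{\upH}\right)^{h}+\frac{\widetilde{\calO}(C+2)\eta_0^2 h}{1-\widetilde{\calO}(C+2)\eta_0\tr(\upH)}\llangle\widetilde{\upH}_0^h,\upB^0\rrangle\overline{\upH}_0^h,\notag
\end{align}
where $\widetilde{\upH}_0^h$ and $\overline{\upH}_0^h$ are exactly the matrices $\widetilde{\upH}^{\hat T}$, $\overline{\upH}^{\hat T}$ from Lemma \ref{bias-diag} specialized to $\eta=\eta_0$, $\hat T=h$, and arbitrary integers $N_0,N_0'\in[0:D]$. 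Since $1-\widetilde{\calO}(C+2)\eta_0\tr(\upH)$ is bounded below by a positive constant under the step-size condition $\eta_0\leq\widetilde{\Omega}(\tr^{-1}(\upH))$ in Theorem \ref{phase-II-main}, the coefficient absorbs into $\widetilde{\calO}(C+2)$, yielding $P_0=\widetilde{\calO}(C+2)\eta_0^2 h\llangle\widetilde{\upH}_0^h,\upB^0\rrangle$ and the claimed bound $\widetilde{\upB}_{\diag}^0\preceq\upR^0$.

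Next I would handle a generic $l\in[1:L]$. By the definition of $\widetilde{\upB}^l=\upB^{h+Kl}$ and the step-size schedule of Algorithm \ref{SGD}, during the $l$-th decay period the bias iteration is initialized from $\widetilde{\upB}^{l-1}$, runs $\hat T=K$ steps with constant step size $\eta=\eta_0/2^l$, so Lemma \ref{bias-diag} applies verbatim with $\upB^0$ replaced by $\widetilde{\upB}^{l-1}$, $\eta$ replaced by $\eta_0/2^l$ (note $\eta_0/2^l\leq\eta_0$, so the hypothesis $\eta\leq\eta_0$ of Lemma \ref{bias-diag} holds), and $\hat T=K$. This produces
\begin{align}
    \widetilde{\upB}_{\diag}^l\preceq\left(\upI-\tfrac{\eta_0}{2^l}\widehat{\upH}\right)^{K}\widetilde{\upB}_{\diag}^{l-1}\left(\upI-\tfrac{\eta_0}{2^l}\widehat{\upH}\right)^{K}+\frac{\widetilde{\calO}(C+2)(\eta_0/2^l)^2 K}{1-\widetilde{\calO}(C+2)(\eta_0/2^l)\tr(\upH)}\llangle\widetilde{\upH}_l^K,\widetilde{\upB}^{l-1}\rrangle\overline{\upH}_l^K,\notag
\end{align}
where $\widetilde{\upH}_l^K=\frac{25\cdot2^l}{\eta_0 K}\upI_{1:N_0}+\upH_{N_0+1:M}$ and $\overline{\upH}_l^K=\frac{25\cdot2^l}{\eta_0 K}\upI_{1:N_0'}+\upH_{N_0'+1:M}$ are the instantiations of $\frac{25}{\eta\hat T}\upI_{1:N_0}+\upH_{N_0+1:M}$ and $\frac{25}{\eta\hat T}\upI_{1:N_0'}+\upH_{N_0'+1:M}$ from Lemma \ref{bias-diag} with $\eta=\eta_0/2^l$, $\hat T=K$. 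Again using $\eta_0/2^l\leq\eta_0\leq\widetilde{\Omega}(\tr^{-1}(\upH))$ to bound the denominator below by a constant, the coefficient collapses into $\widetilde{\calO}(C+2)$ and we obtain $P_l=\widetilde{\calO}(C+2)(\eta_0/2^l)^2 K\llangle\widetilde{\upH}_l^K,\widetilde{\upB}^{l-1}\rrangle$ and the bound $\widetilde{\upB}_{\diag}^l\preceq\upR^l$. One small point to verify is that $h$ in the statement's expression $(\upI-\frac{\eta_0}{2^l}\widehat{\upH})^{h}$ for $l\geq1$ should read $K$ — I would treat this as a typo and use $K$ consistently, since the $l$-th period has length $K$ (we may also note $(\upI-\frac{\eta_0}{2^l}\widehat{\upH})^{K}\preceq(\upI-\frac{\eta_0}{2^l}\widehat{\upH})^{h}$ fails in general, so the period length must be $K$).

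The only real obstacle is bookkeeping rather than mathematics: one must check that the two ``arbitrary integer'' parameters $N_0,N_0'\in[0:D]$ appearing in Lemma \ref{bias-diag} can be chosen once and used uniformly across all periods $l=0,\dots,L$, which is immediate since Lemma \ref{bias-diag} holds for every such choice; and one must confirm that the step-size hypotheses of Lemma \ref{bias-diag} (namely $2\leq\hat T\leq T$ and $\eta\leq\eta_0$) are met in every period, which holds because $h,K\geq 2$ under the definitions $h=\lceil T/\log T\rceil$, $K=T_1=\lceil(T-h)/\log(T-h)\rceil$ for $T$ large, $h+KL\leq T$, and $\eta_0/2^l\leq\eta_0$. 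Assembling the $l=0$ and $l\in[1:L]$ cases gives the piecewise recursion exactly as stated, completing the proof.
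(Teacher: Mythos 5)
Your proposal is correct and matches the paper's own proof: the paper likewise derives both cases by invoking Lemma \ref{bias-diag} with $\eta=\eta_0$, $\hat T=h$ for $l=0$ and with $\eta=\eta_0/2^l$, $\hat T=K$, initialization $\widetilde{\upB}^{l-1}$ for $l\in[1:L]$, absorbing the factor $\bigl(1-\widetilde{\calO}(C+2)\eta\tr(\upH)\bigr)^{-1}$ into the constants via the step-size condition. Your observation that the exponent $h$ in the $l\in[1:L]$ case should read $K$ is also consistent with this derivation.
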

\begin{proof}
    For $\widetilde{\upB}^0$, we apply Lemma \ref{bias-diag} with $\eta=\eta_0$ and $\hat{T}=h$, and use the condition that $\widetilde{\calO}(C+2)\eta\tr(\upH)\leq1/4$. For $\widetilde{\upB}^l$ with $l\geq 2$, we apply Lemma \ref{bias-diag} with $\eta=\eta_0/2^l$, $\hat{T}=K$ and $\upB^0=\widetilde{\upB}^{l-1}$, and use the condition that $\widetilde{\calO}(C+2)\eta\tr(H)\leq1/8$.
\end{proof}
\begin{lemma}\label{phase-II-thm-p1}
    Suppose Assumption \ref{ass-d} and Assumption \ref{ass-ss} hold. Under the setting of Theorem \ref{phase-II-main}, we have
    \begin{align}
        \llangle\upH,\upB^{T_2}\rrangle=\llangle\upH,\widetilde{\upB}^L\rrangle\leq e\llangle\upH,\widetilde{\upB}^1\rrangle
    \end{align}
\end{lemma}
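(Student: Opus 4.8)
The plan is to telescope the bias trace $b_l:=\llangle\upH,\widetilde{\upB}^l\rrangle$ across the geometrically decaying phases $l=1,\dots,L$. I would show that $b_l\le(1+\epsilon_l)\,b_{l-1}$ for every $l\in[2:L]$ with perturbations $\epsilon_l$ small enough that $\sum_{l=2}^{L}\epsilon_l\le 1$; then $b_L\le b_1\prod_{l=2}^{L}(1+\epsilon_l)\le b_1\exp\big(\sum_{l=2}^{L}\epsilon_l\big)\le e\,b_1$, which is exactly the asserted inequality. The structural fact that makes this work is that $\llangle\upH,\cdot\rrangle$ sees only the diagonal of its argument (as $\upH$ is diagonal) and is monotone under the PSD order, so it suffices to propagate the diagonal domination $\widetilde{\upB}_{\diag}^l\preceq\upR^l$ supplied by Lemma~\ref{lemma-Bias-diag-2} rather than the full bias matrices.

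First I would insert $\widetilde{\upB}_{\diag}^l\preceq\upR^l$ and split according to the two pieces of $\upR^l$ in Lemma~\ref{lemma-Bias-diag-2}: for $l\in[2:L]$,
\begin{align}
b_l=\llangle\upH,\widetilde{\upB}^l\rrangle=\llangle\upH,\widetilde{\upB}_{\diag}^l\rrangle\le\llangle\upH,\upR^l\rrangle\le\llangle\upH,\big(\upI-\tfrac{\eta_0}{2^l}\widehat{\upH}\big)^{K}\widetilde{\upB}_{\diag}^{l-1}\big(\upI-\tfrac{\eta_0}{2^l}\widehat{\upH}\big)^{K}\rrangle+P_l\,\llangle\upH,\overline{\upH}_l^K\rrangle.\notag
\end{align}
Since $\upH$ and $\widehat{\upH}$ are diagonal hence commute, and $0\le\eta_0\widehat{\upH}_{ii}/2^l\le 1$ — which follows from $\widehat{\upH}\preceq\upH$ and the step-size condition $\eta_0\le\widetilde{\Omega}(\tr^{-1}(\upH))$ of Theorem~\ref{phase-II-main} — every factor $(1-\eta_0\widehat{\upH}_{ii}/2^l)^{2K}$ lies in $[0,1]$, so the first term is bounded by $\llangle\upH,\widetilde{\upB}_{\diag}^{l-1}\rrangle=b_{l-1}$. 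Thus $b_l\le b_{l-1}+P_l\,\llangle\upH,\overline{\upH}_l^K\rrangle$.

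Next I would bound the perturbation using the freedom in the indices $N_0,N_0'$ of Lemma~\ref{lemma-Bias-diag-2}. Choosing $N_0=0$ gives $\widetilde{\upH}_l^K=\upH$, hence $P_l=\widetilde{\calO}(C+2)\,(\eta_0/2^l)^2K\,\llangle\upH,\widetilde{\upB}^{l-1}\rrangle=\widetilde{\calO}(C+2)\,(\eta_0/2^l)^2K\,b_{l-1}$; choosing $N_0'=D$ gives $\llangle\upH,\overline{\upH}_l^K\rrangle=\tfrac{25\cdot2^l}{\eta_0K}\tr(\upH_{1:D})+\tr(\upH_{D+1:M}^2)$. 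Multiplying,
\begin{align}
b_l\le\Big(1+\widetilde{\calO}(C+2)\big(\tfrac{\eta_0}{2^l}\tr(\upH_{1:D})+\tfrac{\eta_0^2K}{4^l}\tr(\upH_{D+1:M}^2)\big)\Big)\,b_{l-1},\notag
\end{align}
so the exponents $\epsilon_l$ decay geometrically in $l$ and $\sum_{l=2}^{L}\epsilon_l\le\widetilde{\calO}(C+2)\big(\eta_0\tr(\upH)+\eta_0^2K\tr(\upH_{D+1:M}^2)\big)$. I would then invoke the parameter calibration of Theorem~\ref{phase-II-main} — $K=T_1$ and $\eta_0\le\widetilde{\Omega}(\tr^{-1}(\upH))$ — together with the power-law spectrum $\upH_{ii}\asymp i^{-(\alpha+\beta)/2}$ from Assumption~\ref{ass-ss} (which yields $\tr(\upH_{D+1:M}^2)\asymp D^{-(\alpha+\beta-1)}$, so that $\eta_0^2T_1\tr(\upH_{D+1:M}^2)$ is of the same order as the variance scale $D/T$) to conclude $\sum_{l=2}^{L}\epsilon_l\le 1$. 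Since $b_L=\llangle\upH,\upB^{T_2}\rrangle$ and $b_1=\llangle\upH,\widetilde{\upB}^1\rrangle$, the telescoping bound finishes the proof.

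The hard part is the last verification — obtaining $\sum_{l=2}^{L}\epsilon_l\le 1$, not merely $\le$ a moderate constant. This needs careful bookkeeping of the polylogarithmic factors hidden in $\widetilde{\calO}(C+2)$ and in the $\widetilde{\Omega}(\tr^{-1}(\upH))$ step-size bound, and in particular confirming that $\eta_0^2T_1\tr(\upH_{D+1:M}^2)\lesssim1$ under the precise choices of $\eta_0$, $h$, $T_1$ and $D$ — essentially the same tail estimate that controls the $\sigma^2\eta_0^2(h+T_1)\sum_{i>D}\lambda_i^2(\widehat{\upb}_i^*)^4$ term in \eqref{phaseII-p2}. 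A minor secondary check is the uniform-in-$(i,l)$ contraction bound $\eta_0\widehat{\upH}_{ii}/2^l\le1$, immediate once $\widehat{\upH}\preceq\upH$ and the step-size condition are noted.
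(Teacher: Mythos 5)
Your proposal is correct and follows essentially the same route as the paper's proof: it invokes Lemma~\ref{lemma-Bias-diag-2} with the same index choices $N_0=0$, $N_0'=D$, drops the contracted first term against $\llangle\upH,\widetilde{\upB}^{l-1}\rrangle$, and telescopes $\big(1+\epsilon_l\big)$ factors with geometrically decaying $\epsilon_l$ into the constant $e$. The only deviation is in bounding the perturbation: the paper absorbs the tail block via $\upH_{D+1:M}\preceq\frac{\widetilde{\calO}(2^l)}{\eta_0 K}\upI_{D+1:M}$ (a direct consequence of the parameter setting in Theorem~\ref{phase-II-main}), so that $\epsilon_l\asymp\widetilde{\calO}(C+2)\eta_0\tr(\upH)2^{-l}$ and only the step-size condition $\eta_0\lesssim\tr^{-1}(\upH)$ is needed, whereas you keep the explicit tail term $\eta_0^2K\,4^{-l}\tr(\upH_{D+1:M}^2)$ and verify $\eta_0^2T_1\tr(\upH_{D+1:M}^2)\lesssim1$ from the power-law calibration --- a valid (if slightly more laborious) way to close the same step, noting only that this quantity is $\asymp D^{1-(\alpha+\beta)/2}$, which is $O(1)$ but not always of the same order as $D/T$ when $\alpha<\beta$.
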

\begin{proof}
    Consider $l\geq 1$. According to Lemma \ref{lemma-Bias-diag-2}, we obtain
    \begin{align}\label{exp-dynamic-Bias}
        \widetilde{\upB}_{\diag}^l\preceq&\left(\upI-\frac{\eta_0}{2^l}\widehat{\upH}\right)^{h}\widetilde{\upB}_{\diag}^{l-1}\left(\upI-\frac{\eta_0}{2^l}\widehat{\upH}\right)^{h}+P_l\widetilde{\upH}_l^K\notag
        \\
        \overset{\text{(a)}}{\preceq}&\widetilde{\upB}_{\diag}^{l-1}+\widetilde{\calO}(C+2)\log(K)\cdot\frac{\eta_0}{2^l}\cdot\llangle\upH,\widetilde{\upB}^{l-1}\rrangle\upI.
    \end{align}
    where (a) is derived from choosing $N_0'=D$ and $N_0=0$ in $\overline{\upH}_l^K$ and $\widetilde{\upH}_l^K$ for any $l\in[1:L]$, respectively, and $\upH_{D+1:M}\preceq\frac{\widetilde{\calO}(2^l)}{\eta_0K}\upI_{D+1:M}$. Eq.~\eqref{exp-dynamic-Bias} implies that
    \begin{align}\label{linear-relation-bias}
        \llangle\upH,\widetilde{\upB}^l\rrangle\leq\left(1+\widetilde{\calO}(C+2)\tr(\upH)\log(K)\cdot\frac{\eta_0}{2^l}\right)\llangle\upH,\widetilde{\upB}^{l-1}\rrangle.
    \end{align}
    Therefore, we have following estimation of bias iterations using Eq.~\eqref{linear-relation-bias}:
    \begin{align}
        \llangle\upH,\widetilde{\upB}^L\rrangle\leq&\prod_{l=1}^L\left(1+\widetilde{\calO}(C+2)\tr(\upH)\log(K)\cdot\frac{\eta_0}{2^l}\right)\llangle\upH,\widetilde{\upB}^1\rrangle\notag
        \\
        \leq&\exp\left\{\widetilde{\calO}(C+2)\eta_0\tr(\upH)\log(K)\sum_{l=1}^L2^{-l}\right\}\llangle\upH,\widetilde{\upB}^1\rrangle\notag
        \\
        \leq&e\llangle\upH,\widetilde{\upB}^1\rrangle.
    \end{align}
\end{proof}
\begin{lemma}\label{phase-II-thm-p2}
    Suppose Assumption \ref{ass-d} and Assumption \ref{ass-ss} hold. Under the setting of Theorem \ref{phase-II-main}, we have
    \begin{align}
        \llangle\upH,\widetilde{\upB}^{1}\rrangle\leq&8\llangle\frac{25}{\eta_0 K}\upI_{1:N_0}+\upH_{N_0+1:M},\left(\upI-\eta_0\widehat{\upH}\right)^{2h}\upB^0\rrangle\notag
        \\
        &+\widetilde{\calO}(C+2)\Gamma_K(\upH)\llangle\frac{25}{\eta_0h}\upI_{1:N_0'}+\upH_{N_0'+1:M},\upB^0\rrangle,
    \end{align}
    where $\Gamma_K(\upH):=\left(\frac{625N_0'}{K}+\frac{25\eta_0h}{K}\tr(\upH_{N_0'+1:N_0})+\eta_0^2h\tr(\upH_{N_0+1:M}^2)\right)$ for arbitrary $D\geq N_0\geq N_0'\geq 0$.
\end{lemma}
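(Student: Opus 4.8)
The plan is to derive Lemma~\ref{phase-II-thm-p2} by composing the two single-period estimates already established: the trace bound of Lemma~\ref{lemma-Bias-tr-1} applied to the phase $l=1$ (the first geometric-decay block, run with step size $\eta_0/2$ for $K$ steps), and the diagonal bound of Lemma~\ref{lemma-Bias-diag-2} applied to the phase $l=0$ (the warm-up block, run with step size $\eta_0$ for $h$ steps). The structural fact that makes the composition clean is that every weight matrix occurring in those lemmas — $\upH$, $\widehat{\upH}$, and each $\tfrac{c}{\eta_0 t}\upI_{1:N}+\upH_{N+1:M}$ — is diagonal; hence products commute, $\llangle D,\upB\rrangle=\llangle D,\upB_{\diag}\rrangle$ for diagonal $D$, and powers of $\upI-\eta_0\widehat{\upH}$ may be slid across trace inner products via cyclicity. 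All hypotheses needed by the two auxiliary lemmas (in particular $\widetilde{\calO}(C+2)\eta_0\tr(\upH)\le 1/4$ and $\eta_0\widehat{\upH}\preceq\upI$) follow from $\eta_0\le\widetilde{\Omega}(\tr^{-1}(\upH))$ in the setting of Theorem~\ref{phase-II-main}.

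\textbf{Step 1.} Applying Lemma~\ref{lemma-Bias-tr-1} with $l=1$ and free cutoff $N_0$ gives
\[
\llangle\upH,\widetilde{\upB}^{1}\rrangle\le 4\llangle W,\widetilde{\upB}^{0}\rrangle=4\llangle W,\widetilde{\upB}^{0}_{\diag}\rrangle,\qquad W:=\tfrac{50}{\eta_0 K}\upI_{1:N_0}+\upH_{N_0+1:M}.
\]
\textbf{Step 2.} Applying Lemma~\ref{lemma-Bias-diag-2} with $l=0$, and choosing \emph{both} of its free cutoffs (the one in $\widetilde{\upH}_0^h$ and the one in $\overline{\upH}_0^h$) to equal $N_0'$, gives
\[
\widetilde{\upB}^{0}_{\diag}\preceq\left(\upI-\eta_0\widehat{\upH}\right)^{h}\upB^{0}_{\diag}\left(\upI-\eta_0\widehat{\upH}\right)^{h}+P_0\,\overline{\upH}_0^h,
\]
with $\overline{\upH}_0^h=\tfrac{25}{\eta_0 h}\upI_{1:N_0'}+\upH_{N_0'+1:M}$ and $P_0=\widetilde{\calO}(C+2)\,\eta_0^2 h\,\llangle\tfrac{25}{\eta_0 h}\upI_{1:N_0'}+\upH_{N_0'+1:M},\upB^{0}\rrangle$ — the inner product in $P_0$ is already exactly the second factor in the target bound. \textbf{Step 3.} Substitute the Step~2 bound into Step~1. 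For the first summand, cyclicity of the trace and commutativity of the diagonal matrices $W$ and $\upI-\eta_0\widehat{\upH}$ give $\llangle W,(\upI-\eta_0\widehat{\upH})^{h}\upB^{0}_{\diag}(\upI-\eta_0\widehat{\upH})^{h}\rrangle=\llangle W(\upI-\eta_0\widehat{\upH})^{2h},\upB^{0}\rrangle$; since $W\preceq 2\big(\tfrac{25}{\eta_0 K}\upI_{1:N_0}+\upH_{N_0+1:M}\big)$, this is at most $2\llangle\tfrac{25}{\eta_0 K}\upI_{1:N_0}+\upH_{N_0+1:M},(\upI-\eta_0\widehat{\upH})^{2h}\upB^{0}\rrangle$, and multiplying by the leading $4$ reproduces the first target term with constant $8$. \textbf{Step 4.} For the second summand, $\llangle W,\overline{\upH}_0^h\rrangle$ is a scalar that I evaluate coordinatewise: the block $[1:N_0']$ contributes $\tfrac{1250\,N_0'}{\eta_0^2 Kh}$, the band $[N_0'+1:N_0]$ contributes $\tfrac{50}{\eta_0 K}\tr(\upH_{N_0'+1:N_0})$, and the tail $[N_0+1:M]$ contributes $\tr(\upH_{N_0+1:M}^{2})$; multiplying by $P_0$, whose explicit prefactor already carries $\eta_0^2 h$, converts these three contributions into $\tfrac{1250\,N_0'}{K}$, $\tfrac{50\,\eta_0 h}{K}\tr(\upH_{N_0'+1:N_0})$, and $\eta_0^2 h\tr(\upH_{N_0+1:M}^{2})$, i.e. a universal multiple of $\Gamma_K(\upH)$, times the second target factor. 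Summing Steps~3--4 and absorbing all numeric constants into the leading $8$ and into $\widetilde{\calO}(C+2)$ yields the claim.

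The main thing demanding care — rather than a genuine obstacle — is the choice of the free index parameters in Lemmas~\ref{lemma-Bias-tr-1} and~\ref{lemma-Bias-diag-2} so that the outputs align with the target's $N_0\ge N_0'$: one must set the $l=1$ cutoff to $N_0$ and both $l=0$ cutoffs to $N_0'$, so that exactly the single band $[N_0'+1:N_0]$ (not two disjoint ranges) appears in $\Gamma_K(\upH)$, which is where the hypothesis $N_0\ge N_0'$ is actually used. One should also verify that each $(\upI-\eta_0\widehat{\upH})^{2h}$ is contractive, so that the coordinatewise inequalities of Step~4 and the ordering $W\preceq 2(\tfrac{25}{\eta_0 K}\upI_{1:N_0}+\upH_{N_0+1:M})$ propagate correctly through the trace pairings; everything else is routine diagonal-matrix bookkeeping.
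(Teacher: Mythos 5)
Your proposal is correct and follows essentially the same route as the paper: apply Lemma~\ref{lemma-Bias-tr-1} at $l=1$ with cutoff $N_0$, apply Lemma~\ref{lemma-Bias-diag-2} at $l=0$ with both cutoffs set to $N_0'$, substitute, and evaluate the diagonal-by-diagonal pairing $\llangle\frac{c}{\eta_0K}\upI_{1:N_0}+\upH_{N_0+1:M},\frac{25}{\eta_0h}\upI_{1:N_0'}+\upH_{N_0'+1:M}\rrangle$ over the three blocks to produce $\Gamma_K(\upH)$. The only differences are cosmetic constant bookkeeping (your $50/(\eta_0K)$ versus the paper's absorbed factor of $2$ into the leading $8$), so nothing further is needed.
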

\begin{proof}
    According to Lemma \ref{lemma-Bias-tr-1}, we have
    \begin{align}
        \llangle\upH,\widetilde{\upB}^1\rrangle\leq8\llangle\frac{25}{\eta_0 K}\upI_{1:N_0}+\upH_{N_0+1:M},\widetilde{\upB}^0\rrangle,\notag
    \end{align}
    for arbitrary $N_0\in[0:D]$. Then, choosing $N_0=N_0'$ in Lemma \ref{lemma-Bias-diag-2}, we obtain
    \begin{align}
        \widetilde{\upB}_{\diag}^0\preceq&\left(\upI-\eta_0\widehat{\upH}\right)^h\upB_{\diag}^0\left(\upI-\eta_0\widehat{\upH}\right)^h\notag
        \\
        &+\widetilde{\calO}(C+2)\eta_0^2h\llangle\frac{25}{\eta_0h}\upI_{1:N_0'}+\upH_{N_0'+1:M},\upB^0\rrangle\left(\frac{25}{\eta_0h}\upI_{1:N_0'}+\upH_{N_0'+1:M}\right).\notag
    \end{align}
    Combining above two inequalities, we have
    \begin{align}
        \llangle\upH,\widetilde{\upB}^1\rrangle\leq&8\llangle\frac{25}{\eta_0 K}\upI_{1:N_0}+\upH_{N_0+1:M},\left(\upI-\eta_0\widehat{\upH}\right)^{2h}\upB^0\rrangle\notag
        \\
        &+\widetilde{\calO}(C+2)\eta_0^2h\llangle\frac{25}{\eta_0h}\upI_{1:N_0'}+\upH_{N_0'+1:M},\upB^0\rrangle\notag
        \\
        &\, \, \, \, \, \, \, \, \, \, \, \, \, \, \, \, \, \, \, \, \, \, \, \, 
        \, \, \, \, \, \, \, \, \, \, \times\llangle\frac{25}{\eta_0 K}\upI_{1:N_0}+\upH_{N_0+1:M},\frac{25}{\eta_0h}\upI_{1:N_0'}+\upH_{N_0'+1:M}\rrangle,\notag
    \end{align}
    where
    \begin{align}
        &\llangle\frac{25}{\eta_0 K}\upI_{1:N_0}+\upH_{N_0+1:M},\frac{25}{\eta_0h}\upI_{1:N_0'}+\upH_{N_0'+1:M}\rrangle\notag
        \\
        \leq&\frac{625N_0'}{\eta_0^2hK}+\frac{25}{\eta_0K}\tr(\upH_{N_0'+1:N_0})+\tr(\upH_{N_0+1:M}^2),
    \end{align}
    when $N_0>N_0'$.
\end{proof}
\begin{lemma}\label{bias-upper-bound}
    Suppose Assumptions \ref{ass-d} and \ref{ass-ss} hold. Under the setting of Theorem \ref{phase-II-main}, we have
    \begin{align}
        \llangle\upH,\upB^{T_2}\rrangle\lesssim&\llangle\frac{1}{\eta_0 K}\upI_{1:N_0}+\upH_{N_0+1:M},\left(\upI-\eta_0\widehat{\upH}\right)^{2h}\upB^0\rrangle\notag
        \\
        &+(C+2)\Gamma_K(\upH)\llangle\frac{1}{\eta_0h}\upI_{1:N_0'}+\upH_{N_0'+1:M},\upB^0\rrangle,
    \end{align}
    where $\Gamma_K(\upH):=\left(\frac{625N_0'}{K}+\frac{25\eta_0h}{K}\tr(\upH_{N_0'+1:N_0})+\eta_0^2h\tr(\upH_{N_0+1:M}^2)\right)$ for arbitrary $D\geq N_0\geq N_0'\geq 0$.
\end{lemma}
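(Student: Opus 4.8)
The plan is to derive Lemma~\ref{bias-upper-bound} by simply chaining the two immediately preceding results, Lemma~\ref{phase-II-thm-p1} and Lemma~\ref{phase-II-thm-p2}, which together propagate the bias iteration $\upB^t$ through the geometrically decaying stepsize schedule of Phase~II; all the analytic work has already been done in Lemmas~\ref{period-bias-lemma}, \ref{bias-diag}, \ref{lemma-Bias-tr-1} and \ref{lemma-Bias-diag-2}, and the present statement is just their combination with universal constants absorbed.

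First I would invoke Lemma~\ref{phase-II-thm-p1} to collapse the full-horizon quantity to a single decay period: $\llangle\upH,\upB^{T_2}\rrangle=\llangle\upH,\widetilde{\upB}^L\rrangle\leq e\,\llangle\upH,\widetilde{\upB}^1\rrangle$. The underlying point is that the recursion \eqref{linear-relation-bias} gives $\llangle\upH,\widetilde{\upB}^l\rrangle\leq\bigl(1+\widetilde{\calO}((C+2)\tr(\upH)\,\eta_0 2^{-l})\bigr)\,\llangle\upH,\widetilde{\upB}^{l-1}\rrangle$, so the product over $l\in[1:L]$ telescopes into a factor $\exp\{\widetilde{\calO}((C+2)\eta_0\tr(\upH))\sum_{l\geq1}2^{-l}\}$, which is at most $e$ since $\eta_0\leq\widetilde{\Omega}(\tr^{-1}(\upH))$ under the hypotheses of Theorem~\ref{phase-II-main}. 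This step imposes no constraint on the free integers $N_0,N_0'$, so it is compatible with whatever choice is made later.

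Second I would apply Lemma~\ref{phase-II-thm-p2}, which already bounds $\llangle\upH,\widetilde{\upB}^1\rrangle$ by
\[
8\,\llangle\tfrac{25}{\eta_0 K}\upI_{1:N_0}+\upH_{N_0+1:M},\left(\upI-\eta_0\widehat{\upH}\right)^{2h}\upB^0\rrangle+\widetilde{\calO}(C+2)\,\Gamma_K(\upH)\,\llangle\tfrac{25}{\eta_0h}\upI_{1:N_0'}+\upH_{N_0'+1:M},\upB^0\rrangle
\]
for arbitrary integers $D\geq N_0\geq N_0'\geq 0$. Substituting this into the previous display and absorbing the universal constants $e$, $8$, and the factors of $25$ inside the inner products (which are within a universal multiple of $1/(\eta_0K)$ and $1/(\eta_0h)$, respectively), together with the hidden polylogarithmic terms, into the $\lesssim$ notation — while retaining the explicit $(C+2)$ dependence, since $C$ is the fourth-moment constant of Assumption~\ref{ass-d} and is not treated as $O(1)$ — yields precisely the claimed inequality.

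I do not expect a genuine obstacle in this final step; it is pure bookkeeping. The only thing to be slightly careful about is threading $N_0$ and $N_0'$ consistently so that the conclusion legitimately holds simultaneously for every admissible pair $(N_0,N_0')$, which is automatic because Lemma~\ref{phase-II-thm-p1} is agnostic to them and Lemma~\ref{phase-II-thm-p2} already states its bound uniformly over all such pairs. An alternative, self-contained route would be to re-run the per-period estimate of Lemma~\ref{bias-diag} directly along the decay schedule, but that would merely reproduce the proofs of Lemmas~\ref{phase-II-thm-p1}–\ref{phase-II-thm-p2}, so invoking them is the efficient path.
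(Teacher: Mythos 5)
Your proposal matches the paper's own proof exactly: the paper also obtains Lemma \ref{bias-upper-bound} by chaining Lemma \ref{phase-II-thm-p1} (collapsing $\llangle\upH,\upB^{T_2}\rrangle=\llangle\upH,\widetilde{\upB}^L\rrangle\leq e\llangle\upH,\widetilde{\upB}^1\rrangle$) with Lemma \ref{phase-II-thm-p2} and absorbing the universal constants into $\lesssim$. No gaps; your bookkeeping remarks about the uniformity in $(N_0,N_0')$ and the explicit $(C+2)$ dependence are consistent with the paper.
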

\begin{proof}
    Using Lemma \ref{phase-II-thm-p1} and Lemma \ref{phase-II-thm-p2} we directly obtain the results.
\end{proof}

Finally we finish the proof of Theorem \ref{phase-II-main}.
\begin{proof}[Proof of Theorem \ref{phase-II-main}]
    Combining Lemma \ref{variance-upper-bound} with Lemma \ref{bias-upper-bound}, we derive Eq.~\eqref{phaseII-p1}. Based on Theorem \ref{high-probability-phase-II}, the equality $\upw^{T_2}=\upv^{T_1+T_2}$ holds with probability at least $1-\delta$. By setting $N_0'=N_0=N_1'=N_1=D$ in Eq.~\eqref{phaseII-p1} and applying Markov's inequality, we obtain Eq.~\eqref{phaseII-p2}.
\end{proof}

\subsection{Proof of Main Results}

In this section, we finally complete the proof of main results o the global convergence of Algorithm \ref{SGD} in Theorem \ref{theorem-main-convergence}, based on the analysis of \textbf{Phase I} and \textbf{Phase II}. Before we propose the main Theorem \ref{theorem-main-convergence}, we set the parameter as follows:
\begin{equation}\label{para-setting-spec}
\small
    \begin{split}
&L_1=\widetilde{\calO}\left((\sigma^2+\calM^2(\upb))^2+\Hatsigmax(D)\right),\, \, \, L_2=\widetilde{\calO}(\sigma^2+\calM^2(\upb)),\, \, \, 
        L_3=1+\frac{L_1\Tildesigmax(D)\Barsigmin(D)}{\sigmin(D)},
    \end{split}
\end{equation}
\begin{theorem}\label{theorem-main-convergence}[General Version of Theorem \ref{theorem-3}]
	Under Assumption \ref{ass-d} and \ref{ass-ss}, we consider a predictor trained by Algorithm \ref{SGD} with total sample size $T$. Let $h<T$ and $T_1:=\lceil(T-h)/\log(T-h)\rceil$. Suppose there exists $D\leq M$ such that $T_1\in[\frac{L_1L_3}{\sigmin(D)\Barsigmin(D)}, \frac{L_2L_3^2}{\Tildesigmax(D)\Barsigmin^2(D)}]$ with parameter setting Eq.~\eqref{para-setting-spec} and let $\eta=\widetilde{\Omega}(\frac{\Barsigmin(D)}{\sigma^2+\calM^2(\upb)})$. Then we have
    \begin{equation}
        \begin{split}
        \calR_M(\upv^{T})-\calR_M(\upv^*)\lesssim&\frac{\sigma^2D}{T_1}+\sigma^2\eta^2(h+T_1)\sum_{i=D+1}^M\lambda_i^2(\upv_i^*)^4\notag
        \\
        &+\frac{1}{\eta T_1}\tr\left(\left(\upI_{1:D}-\frac{\eta}{4}\upH_{1:D}^*\right)^{2h}\diag\left\{(\upv_{1:D}^*)^{\odot2}\right\}\right)\notag
        \\
        &+\llangle\upH_{D+1:M}^*,\diag\left\{(\upv_{D+1:M}^*)^{\odot2}\right\}\rrangle\notag
        \\
        &+\left(\frac{D}{T_1}+\eta^2h\tr\left((\upH_{D+1:M}^*)^2\right)\right)\llangle\frac{1}{\eta h}\upI_{1:D}+\upH_{D+1:M}^*,\diag\left\{(\upv^*)^{\odot2}\right\}\rrangle,\notag
        \end{split}
    \end{equation}
    with probability at least 0.95. Otherwise, let $T_1\in[\frac{L_1L_3}{\sigmin(M)\Barsigmin(M)},+\infty)$ with parameter setting Eq.~\eqref{para-setting-spec} and $\eta=\widetilde{\Omega}(\frac{\Barsigmin(M)}{\sigma^2+\calM^2(\upb)})$. Then we have
    \begin{equation}
        \begin{split}
        \calR_M(\upv^{T})-\calR_M(\upv^*)\lesssim&\frac{\sigma^2M}{T_1}+\frac{1}{\eta T_1}\tr\left(\left(\upI-\frac{\eta}{4}\upH^*\right)^{2h}\diag\left\{(\upv_{1:M}^*)^{\odot2}\right\}\right)\notag
        \\
        &+\frac{M}{\eta hT_1}\tr\left(\diag\left\{(\upv_{1:M}^*)^{\odot2}\right\}\right),\notag
        \end{split}
    \end{equation}
    with probability at least 0.95.
\end{theorem}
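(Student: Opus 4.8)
The plan is to concatenate the two-phase analysis already assembled in the excerpt: Theorem~\ref{theorem_1} controls the iterate at the end of Phase~I, and Theorem~\ref{phase-II-main} propagates that control through the geometrically decaying Phase~II down to the final risk. I would fix the scaling constant $c_1=1/4$ in Theorem~\ref{theorem_1} and first check that the hypotheses imposed in Theorem~\ref{theorem-main-convergence} — namely $\eta=\widetilde{\Omega}(\Barsigmin(D)/(\sigma^2+\calM^2(\upb)))$ together with $T_1\in[\,L_1L_3/(\sigmin(D)\Barsigmin(D)),\,L_2L_3^2/(\Tildesigmax(D)\Barsigmin^2(D))\,]$ and the parameter block Eq.~\eqref{para-setting-spec} — are strong enough to imply the step-size ceiling and the two-sided $T_1$-window demanded by Theorem~\ref{theorem_1}. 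This is essentially bookkeeping: $L_1$ absorbs the $(\sigma^2+\calM^2(\upb))^2$ and $\Hatsigmax(D)=\max_j\log^{-1}(\upv_j^0)$ factors in the lower edge of that window, while $L_3$ is precisely the ratio needed to reconcile the ``dominant coordinates'' and ``residual spectrum'' conditions of Lemma~\ref{lemma_1}. Having matched these, Theorem~\ref{theorem_1} gives, with probability at least $1-\delta$, that $\upv^{T_1}$ obeys Eq.~\eqref{thm-1-eq}: $|\upv_j^{T_1}-\upv_j^*|\leq\tfrac14\upv_j^*$ for $j\in[1:D]$ and $0\leq\upv_j^{T_1}\leq\tfrac32\max\{\upv_j^*,2\upv_j^0\}$ otherwise.

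Next I would use $\upv^{T_1}$ as the Phase~II initialization. On the Phase~I good event the bias seed $\upB^0=(\upv^{T_1}-\upv_{1:M}^*)(\upv^{T_1}-\upv_{1:M}^*)^{\top}$ is diagonally dominated, $(\upB^0)_{i,i}\leq c_1^2(\upv_i^*)^2$ for $i\leq D$ and $(\upB^0)_{i,i}\leq\tfrac94\max\{\upv_i^*,2\upv_i^0\}^2$ for $i>D$, so that $\upB^0\preceq\calO(1)\,\diag\{(\upv^*)^{\odot2}\}$ once the initialization magnitude $\upv^0=\Omega(\min\{1,M^{-(\beta-\alpha)/4}\})\mathbf{1}_M$ is taken into account (note $\upv_i^0\lesssim\upv_i^*$ for all $i\leq M$ under Assumption~\ref{ass-ss}, hence also $\widehat{\upb}_i^*\asymp\upv_i^*$). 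Theorem~\ref{phase-II-main} then applies with $h$ and $T_1$ as pinned by the algorithm and $\eta_0=\eta$ — checking $\eta\leq\widetilde{\Omega}(\min\{\tr^{-1}(\upH),\Barsigmin(D)\})$ and that the chosen $T_1$-window sits inside the admissible range there — and, specializing $N_0=N_0'=N_1=N_1'=D$, produces Eq.~\eqref{phaseII-p2} for $\calR_M(\upv^T)-\calR_M(\upv_{1:M}^*)$ with probability at least $0.95$.

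The last step is a relabeling. Substituting $\upB^0\preceq\calO(1)\,\diag\{(\upv^*)^{\odot2}\}$ into Eq.~\eqref{phaseII-p2}, identifying $\upH$ and $\widehat{\upH}$ with $\upH^*:=\diag\{\lambda_i(\upv_i^*)^2\}$ up to the fixed constants $\tfrac{25}{4}$ and $\tfrac14$ (which only rescale $\eta$ and are absorbed by $\lesssim$, turning $(\upI-\eta_0\widehat{\upH})^{2h}$ on $[1:D]$ into $(\upI_{1:D}-\tfrac{\eta}{4}\upH^*_{1:D})^{2h}$ and into the identity on $[D+1:M]$), and using $\calR_M(\upv^*)=\calR_M(\upv_{1:M}^*)$, reproduces exactly the four-term bound stated in the theorem. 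The second branch ($D=M$) follows identically by invoking the $D=M$ cases of Theorems~\ref{theorem_1} and~\ref{phase-II-main} with $T_1\in[\,L_1L_3/(\sigmin(M)\Barsigmin(M)),\infty)$, which annihilates the tail-variance and $\upH_{D+1:M}$ terms and leaves the three surviving terms; a union bound over the two good events (with $\delta$ small enough that the high-probability guarantees of the two theorems aggregate to $0.95$) closes the argument.

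I expect the main obstacle to be the simultaneous feasibility check rather than any individual estimate: one must exhibit a single triple $(\eta,T_1,h)$ — with $h=\lceil T/\log T\rceil$ and $T_1=\lceil(T-h)/\log(T-h)\rceil$ already fixed by the algorithm — that lies in the intersection of the admissible regions required by Theorem~\ref{theorem_1} (the $L_1,L_2,L_3$-scaled two-sided $T_1$-window) and by Theorem~\ref{phase-II-main} (the ceilings on $\eta_0$ and its $T_2$-window), uniformly in the spectral quantities $\sigmin(D),\Barsigmin(D),\Tildesigmax(D),\Hatsigmax(D)$. Non-emptiness of the window reduces to $L_2L_3\,\sigmin(D)\gtrsim L_1\,\Tildesigmax(D)\Barsigmin(D)$, which is exactly why $L_3$ was defined to carry the factor $L_1\Tildesigmax(D)\Barsigmin(D)/\sigmin(D)$; making that choice thread the needle — and verifying the probability budget adds up across Phases~I and~II — is the delicate part of the write-up.
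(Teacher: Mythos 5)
Your proposal takes exactly the paper's route: the paper proves this theorem by simply combining Theorem~\ref{theorem_1} (Phase I with constant $c_1$) and Theorem~\ref{phase-II-main} (Phase II, already specialized with $N_0=N_0'=N_1=N_1'=D$ to Eq.~\eqref{phaseII-p2}), and your parameter-window bookkeeping, the identification $\widehat{\upb}_i\asymp\upv_i^*$ via the initialization scale, and the relabeling of $\upH,\widehat{\upH}$ as constant multiples of $\diag\{\lambda_i(\upv_i^*)^2\}$ are precisely the content of that combination. One cosmetic caution: the rank-one seed $\upB^0$ does not literally satisfy $\upB^0\preceq\calO(1)\diag\{(\upv^*)^{\odot2}\}$ in the PSD order, but since $\upB^0$ only appears in inner products against diagonal matrices, the coordinate-wise bound $(\upB^0)_{ii}\lesssim(\upv_i^*)^2$ you derive from Eq.~\eqref{thm-1-eq} is all that is actually needed, so this does not affect correctness.
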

\begin{proof}
    Combining Theorem \ref{theorem_1} and Theorem \ref{phase-II-main}, we complete the proof.
\end{proof}

\section{Proofs of Lower Bound}\label{sec-lower}
In this section, we introduce the proof of the lower bound in Theorem \ref{theorem-lower-bound}. Let $\bar{\sigma}^2:=\bbE[\xi^2]+\sum_{i=M+1}^{\infty}\lambda_i(\upv_i^*)^4$. Recall the analysis in Phase I, $\upv^{T_1}$ satisfies the inequality $\overline{\upb} \leq \upv^{T_1} \leq \widehat{\upb}$ with high probability. Here, $\widehat{\upb}$ is defined as $\widehat{\upb}^{\top}=(\frac{3}{2}(\upv_{1:D}^*)^{\top},3(\upv_{D+1:M}^*)^{\top})$, while $\overline{\upb}$ is defined as $\overline{\upb}^{\top}=(\frac{1}{2}(\upv_{1:D}^*)^{\top},\mathbf{0}^{\top})$. We begin with the required concepts as below. A Markov chain $\{\breve{\upv}^t\}_{t=0}^{T_2}$ is constructed with initialization $\breve{\upv}^0$ satisfying $\overline{\upb} \leq \breve{\upv}^0 \leq \widehat{\upb}$. The update rule is defined by
\begin{align}\label{eq-bv}
     \breve{\upv}^{t+1}=\breve{\upv}^t-\eta_t\upH_{\breve{\upv}}^t\left(\breve{\upv}^t-\upv_{1:M}^*\right)+\eta_t\upR_{\breve{\upv}}^t\Pi_M\upx^t, \quad \forall t\in[0:T_2-1],\notag
\end{align}
where $\upH_{\breve{\upv}}^{t}$ and $\upR_{\breve{\upv}}^t$ satisfy: 
\begin{enumerate}
    \item If $\overline{\upb}\leq\breve{\upv}^t\leq\widehat{\upb}$, $\upH_{\breve{\upv}}^{t}=(\breve{\upv}^t\odot\Pi_M\upx^t)\otimes((\breve{\upv}^t+\upv_{1:M}^*)\odot\Pi_M\upx^t)$ and $\upR_{\breve{\upv}}^t=(\xi^t+\sum_{i=M+1}^{\infty}\upx_i^t(\upv_i^*)^2)\diag\{\breve{\upv}^t\}$,
    \item Otherwise, for any $\tau\in[t:T_2-1]$,  $\upH_{\breve{\upv}}^{\tau}=\frac{25}{4}(\upv_{1:M}^*\odot\Pi_{M}\upx^{\tau})\otimes(\upv_{1:M}^*\odot\Pi_{M}\upx^{\tau})$ and $\upR_{\breve{\upv}}^{\tau}=(\xi^{\tau}+\sum_{i=M+1}^{\infty}\upx_i^{\tau}(\upv_i^*)^2)$ $\diag\{\overline{\upb}\}$.
\end{enumerate}
Let $\breve{\upw}^t:=\breve{\upv}^t-\upv_{1:M}^*$ be the error vector, and let $t_s:=\inf\{t\mid \breve{\upv}^t\nleqslant\widehat{\upb}\bigvee\breve{\upv}^t\ngeqslant\overline{\upb}\}$ be the stopping time. According to Eq.~\eqref{eq-bv}, $\{\breve{\upw}^t\}_{t=1}^{T_2}$ is recursively defined by
\begin{align}
		\breve{\upw}^{t+1}=\left(\upI-\eta_t\upH_{\breve{\upv}}^t\right)\breve{\upw}^t+\eta_t\upR_{\breve{\upv}}^t\Pi_M\upx^t. \notag
\end{align}
We define $\breve{\upV}^t=\bbE\left[\breve{\upw}^t\otimes\breve{\upw}^t\right]$. By the definitions of $\calH_{\cdot}^t$, $\widetilde{\calH}_{\cdot}^t$, $\calG_{\cdot}^t$, and $\widetilde{\calG}_{\cdot}^t$ in Phase II, we derive the iterative relationship governing the sequence $\{\breve{\upV}^t\}_{t=0}^{T_2}$:
\begin{align}
		\breve{\upV}^{t+1}&=\bbE\left[\left(\calI-\eta_t\calG_{\breve{\upv}}^t\right)\circ\left(\breve{\upw}^t\otimes\breve{\upw}^t\right)\right]+\eta_t^2\upSigma_{\breve{\upv}}^t,
\end{align}
for $t\in[0:T_2-1]$ with $\upV^0=\left(\upw^0-\upv_{1:M}^*\right)\otimes\left(\upw^0-\upv_{1:M}^*\right)$. If $t<t_s$, $\upSigma_{\breve{\upv}}^t=\bar{\sigma}^2\upLambda\bbE[\diag\{\breve{\upv}^{t\odot2}\}]$; otherwise, $\upSigma_{\breve{\upv}}^{\tau}=\bar{\sigma}^2\upLambda\diag\{\overline{\upb}^{\odot2}\}$ for any $\tau\geq t$. According to the definitions above, we obtain following estimation of the last-iteration function value:
\begin{align}\label{lower-bound-I}
    \bbE\left[\calR_M(\breve{\upw}^{T_2})-\calR_M(\upv_{1:M}^*)\right]\geq\frac{1}{24}\llangle\breve{\upH},\bbE\left[\breve{\upw}^{T_2}\otimes\breve{\upw}^{T_2}\right]\rrangle\geq\frac{1}{24}\llangle\breve{\upH},\breve{\upV}^{T_2}\rrangle,
\end{align}
where $\breve{\upH}:=12\upLambda\diag\{\upv_{1:M}^*\odot\upv_{1:M}^*\}$. We define $\breve{\calG}^i:=\breve{\upH}\otimes\upI+\upI\otimes\breve{\upH}-\eta_i\breve{\upH}\otimes\breve{\upH}$. We formally propose the lower bound of the estimate in Theorem \ref{theorem-lower-bound} as below.

\begin{theorem}\label{theorem-lower-bound}
Under Assumption \ref{ass-d} and \ref{ass-ss}, we consider a predictor trained by Algorithm \ref{SGD} with iteration number $T$ and middle phase length $h>\lceil(T-h)/\log(T-h)\rceil$. Let $D\asymp\min\{T^{1/\max\{\beta,(\alpha+\beta)/2\}},$ $M\}$ and $\eta\asymp D^{\min\{0,(\alpha-\beta)/4\}}$. Then we have
    \begin{align}\label{last-iterate-lower-1}
    \bbE\left[\mathcal{R}_M(\upv^{T})\right]-\bbE[\xi^2]\gtrsim\frac{1}{M^{\beta-1}}+\frac{\bar{\sigma}^2D}{T}+\frac{1}{D^{\beta-1}}\mathds{1}_{M>D},
    \end{align}
where $\bar{\sigma}^2:=\bbE[\xi^2]+\sum_{i=M+1}^{\infty}\lambda_i(\upv_i^*)^4$.
\end{theorem}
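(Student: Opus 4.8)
## Proof Proposal for Theorem \ref{theorem-lower-bound}

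The plan is to lower-bound the excess risk by separately controlling three contributions: the approximation error from the truncated tail, the variance accumulated during Phase II, and the residual bias on the coordinates beyond the effective dimension $D$. Since $\mathcal{R}_M(\upv^T)-\bbE[\xi^2] = \|\upv^{T\odot2}-\upv^*_{1:M}{}^{\odot2}\|_{\upLambda}^2 + \sum_{i>M}\lambda_i(\upv_i^*)^4$, the term $\sum_{i>M}\lambda_i(\upv_i^*)^4 \asymp \sum_{i>M} i^{-\beta} \asymp M^{-(\beta-1)}$ handles the approximation piece unconditionally. For the remaining two terms I would work with the coupled sequence $\{\breve{\upv}^t\}$ and the error covariance $\breve{\upV}^t$ defined in the excerpt, establish that with high probability the stopping time $t_s$ does not occur before $T_2$ (so that $\breve{\upw}^{T_2}$ genuinely tracks $\upv^{T}-\upv^*_{1:M}$), and then extract lower bounds on $\llangle\breve{\upH},\breve{\upV}^{T_2}\rrangle$ via Eq.~\eqref{lower-bound-I}.

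First I would handle the variance term $\bar\sigma^2 D/T$. From the recursion $\breve{\upV}^{t+1} = \bbE[(\calI-\eta_t\breve{\calG}^t)\circ(\breve{\upw}^t\otimes\breve{\upw}^t)] + \eta_t^2\upSigma_{\breve{\upv}}^t$, dropping the bias-propagation part and keeping only the noise injection gives $\breve{\upV}^{T_2} \succeq \sum_{t} \eta_t^2 \prod_{i>t}(\calI-\eta_i\breve{\calG}^i)\circ \upSigma_{\breve{\upv}}^t$, and since $\upSigma_{\breve{\upv}}^t \succeq c\,\bar\sigma^2\upLambda\diag\{(\upv^*_{1:M})^{\odot 2}\}\succeq c'\bar\sigma^2\breve{\upH}$ on the non-stopped event, a standard lower-bound computation on the diagonal coordinates $i\in[1:D]$ (those with $\eta_0\lambda_i(\upv_i^*)^2 \gtrsim 1/T_1$, of which there are $\asymp D$ by the choice $D\asymp T^{1/\max\{\beta,(\alpha+\beta)/2\}}$ and $\eta\asymp D^{\min\{0,(\alpha-\beta)/4\}}$) each contributes $\gtrsim \bar\sigma^2/T$ to $\llangle\breve{\upH},\breve{\upV}^{T_2}\rrangle$, yielding $\gtrsim \bar\sigma^2 D/T$. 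This mirrors the variance lower bounds in SGD-for-linear-regression analyses (cf.\ \citet{wu2022last}), transplanted to the reparameterized feature space $\Pi_M\upx\odot\upv^*_{1:M}$.

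Next, for the bias term $D^{-(\beta-1)}\mathds{1}_{M>D}$, the key observation is that coordinates $j$ with $j\gtrsim D$ have a slow multiplicative ascent rate $1+\eta\Theta(\lambda_j(\upv_j^*)^2)$, and over the total horizon $T$ the accumulated growth is insufficient to bring $\breve{\upv}_j^T$ close to $\upv_j^*$ starting from the small initialization. Concretely, I would build a submartingale (or, more carefully, bound $\bbE[\log \breve{\upv}_j^t]$ from above via the concavity of $\log$), show $\bbE[(\breve{\upv}_j^T - \upv_j^*)^2] \gtrsim (\upv_j^*)^2$ for $j$ in a range $[cD : 2D]$, and then sum $\sum_{j\asymp D}\lambda_j(\upv_j^*)^2\cdot(\upv_j^*)^2 \asymp \sum_{j\asymp D} j^{-\beta}\asymp D^{-(\beta-1)}$. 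The $\mathds{1}_{M>D}$ factor appears because when $D=M$ the step-size decay in Phase II drives all coordinates to convergence and this residual vanishes. Finally I would assemble the three pieces, absorb the high-probability restriction to the non-stopped event into the $0.95$-type guarantee, and conclude.

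The main obstacle I anticipate is the bias lower bound: unlike the linear setting, the operator $\breve{\upH}_{\breve{\upv}}^t$ depends on the iterate $\breve{\upv}^t$, so the "slow ascent" argument must be run on a carefully chosen auxiliary sub/supermartingale that remains valid on the non-stopped event, and one must rule out the possibility that a coordinate $\breve{\upv}_j^t$ overshoots $\upv_j^*$ early and then the estimate $(\breve{\upv}_j^T-\upv_j^*)^2\gtrsim(\upv_j^*)^2$ fails — this is why the construction is phrased, per the remark after the theorem, in terms of "appropriate submartingales (or supermartingales)." Controlling the variance of the log-increments sharply enough (sub-Gaussian parameter $\asymp \eta^2\lambda_j[\sigma^2+\calM^2(\upb)]$) so that the concentration goes the right direction is where the bulk of the technical work will lie.
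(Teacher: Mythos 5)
Your proposal follows essentially the same route as the paper's proof: the approximation term is extracted as $\sum_{i>M}\lambda_i(\upv_i^*)^4\asymp M^{-(\beta-1)}$, the variance term $\bar\sigma^2 D/T$ comes from the noise-injection part of the covariance recursion for the coupled sequence $\{\breve{\upv}^t\}$ (transplanting the linear-regression lower-bound computation of \citet{wu2022last} to the rescaled features), and the bias term $D^{-(\beta-1)}\mathds{1}_{M>D}$ comes from a slow-ascent sub/supermartingale argument showing coordinates beyond $\widetilde{\calO}(D)$ stay a constant fraction away from $\upv_j^*$, all glued together on the high-probability non-stopped event. The technical obstacle you flag (iterate-dependent $\upH_{\breve{\upv}}^t$ and ruling out early overshoot) is exactly what the paper resolves by its two-step structure, so your plan is sound and matches the paper's argument.
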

\begin{proof}
    The proof of Theorem \ref{theorem-lower-bound} is divided into two steps. \textbf{Step I} reveals that for coordinates $j\geq\widetilde{\calO}(D)$, the slow ascent rate inherently prevents $\upv_j^t$ from attaining close proximity to the optimal solution $\upv_j^*$ upon algorithmic termination.
    
    \noindent \textbf{Step I:} Let $M\gtrsim T^{1/\max\{\beta,(\alpha+\beta)/2\}}$, and define $T_1:=\lceil(T-h)/\log(T-h)\rceil$ and $D^{\dagger}:=\calO((\eta T)^{2/(\alpha+\beta)})$. Considering the $\upb$-capped coupling process $\{\bar{\upv}^t\}_{t=0}^{T}$ mentioned in Phase I, we denote $\hat{\tau}_j$ as the stopping time when $\bar{\upv}_j^{\hat{\tau}_j}\geq\frac{1}{4}\upv_j^*$ for each coordinate $D^{\dagger}\leq j\leq M$, i.e., 
    \begin{align}
        \hat{\tau}_j=\inf\left\{t:\bar{\upv}_j^{t}\geq\frac{1}{4}\upv_j^*\right\}.\notag
    \end{align}
    We aim to estimate the following probability for coordinates $j\in[D^{\dagger}:M]$ and times $t_1\in[1:T_1]$:
    \begin{align}
        \bbP\left(\mathcal{J}^{\hat{\tau}_j=t_1}(j)=\left\{\bar{\upv}_j^0\leq\frac{1}{8}\upv_j^*\bigwedge\bar{\upv}_j^{0:t_1-1}\in\left[0:\frac{1}{4}\upv_j^*\right]\bigwedge\bar{\upv}_j^{t_1}\geq\frac{1}{4}\upv_j^*\right\}\right).\notag
    \end{align}
    For fixed $j\in[D^{\dagger}:M]$ and any $t\in[0:t_1-1]$, we have
    \begin{equation}\label{lower-concen-1}
    \begin{aligned}
        \bbE\left[\bar{\mathbf{v}}_j^{t+1}\mid\mathcal{F}^t\right]=&\bbE_{{\mathbf{x}}_{1:M}^{t+1},{\xi}^{t+1},{\zeta}_{M+1:\infty}^{t+1}}\left[\bar{\mathbf{v}}_j^{t}-\eta\left(\left((\bar{\upv}_j^t)^2-(\upv_j^*)^2\right)\hat{\upx}_j^{t+1}+\hat{\upz}_j^{t+1}(\bar{\upv}^t)\right.\right.
        \\
        &\, \, \, \, \, \, \, \, \, \, \, \, \, \, \, \, \, \, \, \, \, \, \, \, \, \, \, \, \, \, \, \, \, \, \, \, \, \, \, \, \, \, \, \, \, \left.\left.-\hat{\zeta}_{M+1:\infty}^{t+1}-\hat{\xi}^{t+1}\right)\hat{\mathbf{x}}_j^{t+1}\bar{\mathbf{v}}_j^{t+1}\right]
        \\
		\leq&\left(1+\eta\lambda_{j}(\mathbf{v}_{j}^*)^2\right)\bar{\mathbf{v}}_j^{t}.
    \end{aligned}
    \end{equation}
    Similarly, based on Lemma \ref{aux-1}, we have
    \begin{equation}
    \small
    \begin{split}
        \bbE\left[\exp\left\{\lambda(\bar{\upv}_j^{t+1}-\bbE[\bar{\upv}_j^{t+1}\mid\calF^t])\right\}\mid\calF^t\right]\leq\exp\left\{\frac{\lambda^2\eta^2\lambda_j(\upv_j^*)^2\calO\left(\left[\bar{\sigma}^2+\mathcal{M}^2(\upv_{1:M}^*)\right]\log^4(MT_1/\delta)\right)}{2}\right\},\notag
    \end{split}
    \end{equation}
    for any $\lambda\in\bbR$. According to the setting of stepsize $\eta$, we have $(1+\eta\lambda_i(\upv_i^*)^2)^{T_1}\leq 2$ for any $i\in[D^{\dagger}:M]$. Utilizing Corollary \ref{aux-coro-3} and Eq.~\eqref{lower-concen-1}, we can establish the probability bound for event $\mathcal{J}^{\hat{\tau}_j=t_1}(j)$ for any time $t_1\in[1:T_1]$ as
    \begin{align}\label{probability-lower-1}
		\Pro\left (\mathcal{J}^{\hat{\tau}_j=t_1}(j)\right )\leq\exp\left\{-\frac{1}{T\eta^2\lambda_j\calO\left([\bar{\sigma}^2+\calM^2(\upv_{1:M}^*)]\log^2(MT_1/\delta)\right)}\right\}.
    \end{align}

    Finally, combining the probability bounds Eq.\eqref{probability-lower-1} with the setting of $\eta$, we obtain the following probability bound for complement event $\bigcup_{j=D^{\dagger}}^M\{\max_{t\in[1:T_1]}\bar{\upv}_j^t\geq\frac{1}{4}\upv_j^*\}$:
    \begin{align}
        \bbP\left(\bigcup_{j=D^{\dagger}}^M\left\{\max_{t\in[1:T_1]}\bar{\upv}_j^t\geq\frac{1}{4}\upv_j^*\right\}\right)\leq&\sum_{j=D^{\dagger}}^M\sum_{t_1=1}^{T_1}\Pro\left (\mathcal{J}^{\hat{\tau}_j=t_1}(j)\right)\notag
        \\
        \leq&MT_1\exp\left\{-\frac{\min_{D^{\dagger}\le j\le M}\lambda_j^{-1}}{T_1\eta^2\calO\left([\bar{\sigma}^2+\calM^2(\upv_{1:M}^*)]\log^4(MT_1/\delta)\right)}\right\}\notag
        \\
        \leq&\frac{\delta}{2}.
    \end{align}
    Therefore, we have $\bigcap_{j=D^{\dagger}}^M\{\max_{t\in[1:T_1]}{\upv}_j^t<\frac{1}{4}\upv_j^*\}$ with high probability.

    Similar to \textbf{Phase II}'s analysis, \textbf{Step II} derives the lower bound estimate of the risk by constructing a recursive expression for $\{\breve{\upV}_{\diag}^{t}\}_{t=0}^{T_2}$ where $T_2=T-T_1$. We continue to use use $\upv^{T_1}$, which satisfies Eq.~\eqref{thm-1-eq}, as the initial point for the SGD iterations in \textbf{Step II}. If $M\gtrsim T^{1/\max\{\beta,(\alpha+\beta)/2\}}$, we further require that $\upv^{T_1}$ satisfies
    \begin{align}
        \upv_j^{T_1}<\frac{1}{4}\upv_j^*,\quad\forall j\in\left[\widetilde{\calO}(T^{1/\max\{\beta,(\alpha+\beta)/2\}}), M\right].\notag
    \end{align}
    According to Theorem \ref{theorem_1} and the result of \textbf{Step I}, the assumption on $\upv^{T_1}$ can be satisfied with high probability.

    \noindent \textbf{Step II:} If $M\gtrsim T^{1/\max\{\beta,(\alpha+\beta)/2\}}$, assume that $\breve{\upv}^0$ further satisfies $\breve{\upv}_{D^{\dagger}:M}^0\leq\frac{1}{4}\upv_{D^{\dagger}:M}^*$. Setting $\eta_0 = \eta$ and $K=T_1$, we have 
    \begin{align}
        \breve{\upV}_{\diag}^{t+1}=&\left(\bbE\left[\left(\calI-\eta_t\widetilde{\calG}_{\breve{\upv}}^t\right)\circ\left(\breve{\upw}^t\otimes\breve{\upw}^t\right)\right]\right)_{\diag}+\eta_t^2\left(\bbE\left[\left(\calH_{\breve{\upv}}^t-\widetilde{\calH}_{\breve{\upv}}^t\right)\circ\left(\breve{\upw}^t\otimes\breve{\upw}^t\right)\right]\right)_{\diag}+\eta_t^2\Sigma_{\breve{\upv}}^t\notag
        \\
        \succeq&\left(\calI-\eta_t\breve{\calG}^t\right)\circ\breve{\upV}_{\diag}^t+\eta_t^2\bar{\sigma}^2\upLambda\diag\left\{\overline{\upb}^{\odot2}\right\},\notag
    \end{align}
    for any $t\in[0:T_2-1]$. According to the recursive step above, we obtain
    \begin{align}\label{lower-bound-proof-I}
        \breve{\upV}^{T_2}
        \succeq&\bar{\sigma}^2\sum_{t=1}^{T_2}\eta_t^2\prod_{i=t+1}^{T_2}\left(\upI-\eta_i\breve{\upH}\right)^2\upLambda\diag\left\{\overline{\upb}^{\odot2}\right\}+\underbrace{\left(\upI-\eta_0\breve{\upH}\right)^{2T_2}\left(\breve{\upw}^0\otimes\breve{\upw}^0\right)}_{\calcolII}\notag
        \\
        \succeq&\bar{\sigma}^2\underbrace{\sum_{t=1}^{T_2}\eta_t^2\prod_{i=t+1}^{T_2}\left(\upI-2\eta_i\breve{\upH}\right)\upLambda\diag\left\{\overline{\upb}^{\odot2}\right\}}_{\calcolI}+\calcolII.
    \end{align}
    Recalling the step size decay rule in Algorithm \ref{SGD}, we have
    \begin{align}\label{lower-bound-proof-2}
        \calcolI=&\eta_0^2\sum_{i=1}^h\left(\upI-2\eta_0\breve{\upH}\right)^{h-i}\prod_{j=1}^{L-1}\left(\upI-\frac{\eta_0}{2^{j-1}}\breve{\upH}\right)^K\upLambda\diag\left\{\overline{\upb}^{\odot2}\right\}\notag
	\\
	&+\sum_{l=1}^{L-1}\left(\frac{\eta_0}{2^l}\right)^2\sum_{i=1}^K\left(\upI-\frac{\eta_0}{2^{l-1}}\breve{\upH}\right)^{K-i}\prod_{j=l+1}^{L-1}\left(\upI-\frac{\eta_0}{2^{j-1}}\breve{\upH}\right)^K\upLambda\diag\left\{\overline{\upb}^{\odot2}\right\}\notag
	\\
	=&\frac{\eta_0^2}{12}\sum_{i=1}^h\left(\upI_{1:D}-2\eta_0\breve{\upH}_{1:D}\right)^{h-i}\prod_{j=1}^{L-1}\left(\upI_{1:D}-\frac{\eta_0}{2^{j-1}}\breve{\upH}_{1:D}\right)^K\breve{\upH}_{1:D}\notag
	\\
	&+\frac{1}{12}\sum_{l=1}^{L-1}\left(\frac{\eta_0}{2^l}\right)^2\sum_{i=1}^K\left(\upI_{1:D}-\frac{\eta_0}{2^{l-1}}\breve{\upH}_{1:D}\right)^{K-i}\prod_{j=l+1}^{L-1}\left(\upI_{1:D}-\frac{\eta_0}{2^{j-1}}\breve{\upH}_{1:D}\right)^K\breve{\upH}_{1:D}\notag
	\\
	=&\frac{\eta_0}{24}\left(\upI_{1:D}-\left(\upI_{1:D}-2\eta_0\breve{\upH}_{1:D}\right)^h\right)\left(\prod_{j=1}^{L-1}\left(\upI_{1:D}-\frac{\eta_0}{2^{j-1}}\breve{\upH}_{1:D}\right)\right)^K\notag
	\\
	&+\sum_{l=1}^{L-1}\frac{\eta_0}{12\cdot 2^{l+1}}\left(\upI_{1:D}-\left(\upI_{1:D}-\frac{\eta_0}{2^{l-1}}\breve{\upH}_{1:D}\right)^K\right)\left(\prod_{j=l+1}^{L-1}\left(\upI_{1:D}-\frac{\eta_0}{2^{j-1}}\breve{\upH}_{1:D}\right)\right)^K\notag
	\\
	\overset{\text{(a)}}{\geq}&\frac{\eta_0}{24}\left(\upI_{1:D}-\left(\upI_{1:D}-2\eta_0\breve{\upH}_{1:D}\right)^h\right)\left(\upI_{1:D}-2\eta_0\breve{\upH}_{1:D}\right)^K\notag
	\\
	&+\sum_{l=1}^{L-1}\frac{\eta_0}{12\cdot 2^{l+1}}\left(\upI_{1:D}-\left(\upI_{1:D}-\frac{\eta_0}{2^{l-1}}\breve{\upH}_{1:D}\right)^K\right)\left(\upI_{1:D}-\frac{\eta_0}{2^{l-1}}\breve{\upH}_{1:D}\right)^K,
    \end{align}
    where (a) is derived from following inequality
\begin{align}
	\prod_{i=l+1}^{L-1}\left(\upI_{1:D}-\frac{\eta_0}{2^{i-1}}\breve{\upH}_{1:D}\right)\geq\upI_{1:D}-\sum_{i=l+1}^{L-1}\frac{\eta_0}{2^{i-1}}\breve{\upH}\geq\upI_{1:D}-\frac{\eta_0}{2^{l-1}}\breve{\upH}_{1:D}.\notag
\end{align}
When $h > K$, we apply an auxiliary function analogous to [Lemma D.1, \citet{wu2022last}]'s:
\begin{align}
    f(x):=\frac{x}{2}\left(1-\left(1-2x\right)^h\right)(1-2x)^K+\sum_{l=1}^{L-1}\frac{x}{2^{l+1}}\left(1-\left(1-\frac{x}{2^{l-1}}\right)^K\right)\left(1-\frac{x}{2^{l-1}}\right)^K.\notag
\end{align}
Then, we obtain
\begin{align}\label{lower-bound-proof-III}
    f(\eta_0\breve{\upH})\succeq\frac{1}{4800K}\upI_{1:H_1}+\frac{\eta_0}{480}\breve{\upH}_{H_1+1:H_2}+\frac{\eta_0^2h}{480}\breve{\upH}_{H_2+1:D}^2,
\end{align}
where $H_1:=\min\{D,\max\{i\mid\lambda_i(\upv_i^*)^2\geq\frac{1}{12\eta_0K}\}\}$ and $H_2:=\min\{D,\max\{i\mid\lambda_i(\upv_i^*)^2\geq\frac{1}{12\eta_0h}\}\}$. 

For term $\calcolII$, we have
\begin{align}\label{lower-bias}
    \llangle\breve{\upH},\calcolII\rrangle\gtrsim\begin{cases}
        \sum_{i=D^{\dagger}}^M\lambda_i(\upv_i^*)^4, & \text{ if }M\gtrsim T^{1/\max\{\beta,(\alpha+\beta)/2\}},
        \\
        0, & \text{ otherwise},
    \end{cases}
\end{align}
where the estimation for $\langle\breve{\upH},\calcolII\rangle$ under case $M\gtrsim T^{1/\max\{\beta,(\alpha+\beta)/2\}}$ is derived from the initialization $\breve{\upv}_{D^{\dagger}:M}^0<\frac{1}{4}\upv_{D^{\dagger}:M}^*$ and $(1+\eta\lambda_i(\upv_i^*)^2)^{2T_2}\leq 2$ for any $i\in[D^{\dagger}:M]$.

Therefore, using Eq.~\eqref{lower-bound-I}-Eq.~\eqref{lower-bias}, we derive 
\begin{align}
    \bbE\left[\calR_M(\breve{\upv}^{T_2})-\calR_M(\upv_{1:M}^*)\right]\gtrsim&\bar{\sigma}^2\llangle\breve{\upH},\frac{1}{K}\breve{\upH}_{1:H_1}^{-1}+\eta_0\upI_{H_1+1:H_2}+\eta_0^2h\breve{\upH}_{H_2+1:D}\rrangle+\llangle\breve{\upH},\calcolII\rrangle\notag
    \\
    =&\bar{\sigma}^2\left(\frac{H_1}{K}+\eta_0\sum_{i=H_1+1}^{H_2}\lambda_i(\upv_i^*)^2+\eta_0^2h\sum_{i=H_2+1}^{D}\lambda_i^2(\upv_i^*)^4\right)+\llangle\breve{\upH},\calcolII\rrangle.\notag
\end{align}
According to Lemma \ref{high-probability-phase-II}, we have $\mathbb{P}(t_s\leq T_2)\leq\delta$, which implies that 
\begin{align}\label{lower-bound-proof-IV}
    &\bbE\left[\calR_M(\breve{\upv}^{T_2})-\calR_M(\upv_{1:M}^*)\mid t_s>T_2\right]\notag
    \\
    \geq&\bbE\left[\calR_M(\breve{\upv}^{T_2})-\calR_M(\upv_{1:M}^*)\right]-\sum_{i=1}^{T_2}\bbP(t_s=i)\bbE\left[\calR_M(\breve{\upv}^{T_2})-\calR_M(\upv_{1:M}^*)\mid t_s=i\right]\notag
    \\
    \overset{\text{(b)}}{\gtrsim}&\bar{\sigma}^2\left(\frac{H_1}{K}+\eta_0\sum_{i=H_1+1}^{H_2}\lambda_i(\upv_i^*)^2+\eta_0^2h\sum_{i=H_2+1}^{D}\lambda_i^2(\upv_i^*)^4\right)+\llangle\breve{\upH},\calcolII\rrangle.
\end{align}
Since $\delta$ is sufficiently small, (b) is drawn from two facts: 1) $\breve{\upv}^{t_s}$ resides in a bounded neighborhood of $\widehat{\upb}$ or $\overline{\upb}$; 2) the risk upper bound for SGD established in [Theorem 4.1, \citet{wu2022last}]. The lower bound established in Eq.~\eqref{lower-bound-proof-IV} is uniformly valid for all $\breve{\upv}^0\in[\overline{\upb},\widehat{\upb}]$. Denote event 
$$
\mathcal{K}\left(\upv^{T_1}\right):=\left\{\overline{\upb}\leq\upv^{T_1}\leq\widehat{\upb}\bigwedge \left\{\upv_{D^{\dagger}:M}^{T_1}\leq\frac{1}{4}\upv_{D^{\dagger}:M}^*, \text{ if }M\gtrsim T^{1/\max\{\beta,(\alpha+\beta)/2\}}\right\}\right\}.
$$
For $t_s > T_2$, the trajectory $\{\breve{\upv}^t\}_{t=0}^{T_2}$ aligns with Algorithm \ref{SGD}'s iterations over $[T_1:T]$, given the initialization $\breve{\upv}^0 = \upv^{T_1}$ with $\mathcal{K}(\upv^{T_1})$ occurs. Then, we have
\begin{align}\label{lower-bound-proof-V}
    &\min_{\upv^{T_1}}\bbE\left[\calR_M(\upv^{T})-\calR_M(\upv_{1:M}^*)\mid \mathcal{K}(\upv^{T_1})\right]\notag
    \\
    \geq&(1-\delta)\min_{\upv^{T_1}}\bbE\left[\calR_M(\breve{\upv}^{T_2})-\calR_M(\upv_{1:M}^*)\mid t_s>T_2\bigwedge\breve{\upv}^0 = \upv^{T_1}\bigwedge\mathcal{K}(\upv^{T_1})\right]\notag
    \\
    \gtrsim&\bar{\sigma}^2\left(\frac{H_1}{K}+\eta_0\sum_{i=H_1+1}^{H_2}\lambda_i(\upv_i^*)^2+\eta_0^2h\sum_{i=H_2+1}^{D}\lambda_i^2(\upv_i^*)^4\right)+\mathds{1}_{M\gtrsim T^{1/\max\{\beta,(\alpha+\beta)/2\}}}\sum_{i=D^{\dagger}}^M\lambda_i(\upv_i^*)^4.
\end{align}
Noticing that $\mathcal{K}(\upv^{T_1})$ occurs with probability at least $1-\delta$, and combining Eq.~\eqref{lower-bound-proof-IV} with Eq.~\eqref{lower-bound-proof-V}, we obtain
\begin{align}
    \bbE\left[\calR_M(\upv^{T})-\calR_M(\upv_{1:M}^*)\right]\geq&(1-\delta)\min_{\upv^{T_1}}\bbE\left[\calR_M(\upv^{T})-\calR_M(\upv_{1:M}^*)\mid\mathcal{K}(\upv^{T_1})\right]\notag
    \\
    \gtrsim&\bar{\sigma}^2\left(\frac{H_1}{K}+\eta_0\sum_{i=H_1+1}^{H_2}\lambda_i(\upv_i^*)^2+\eta_0^2h\sum_{i=H_2+1}^{D}\lambda_i^2(\upv_i^*)^4\right)\notag
    \\
    &+\mathds{1}_{M\gtrsim T^{1/\max\{\beta,(\alpha+\beta)/2\}}}\sum_{i=D^{\dagger}}^M\lambda_i(\upv_i^*)^4,\notag
\end{align}
where $H_1:=\min\{D,\max\{i\mid\lambda_i(\upv_i^*)^2\geq\frac{1}{12\eta_0K}\}\}$ and $H_2:=\min\{D,\max\{i\mid\lambda_i(\upv_i^*)^2\geq\frac{1}{12\eta_0h}\}\}$. We complete the proof of lower bound.
\end{proof}

\section{Auxiliary Lemma}
\begin{definition}[Sub-Gaussian Random Variable]\label{sub-Gaussian}
   A random variable $x$ with mean $\bbE x$ is sub-Gaussian if there is $\sigma\in\bbR_+$ such that
   \begin{align}
       \bbE\left[e^{\lambda(x-\bbE x)}\right]\leq e^{\frac{\lambda^2\sigma^2}{2}},\quad \forall\lambda\in\bbR.\notag
   \end{align}
\end{definition}

\begin{proposition}\label{prop-A5}[\citep{wainwright2019high}]
    For a random variable $x$ which satisfies the sub-Gaussian condition \ref{sub-Gaussian} with parameter $\sigma$, we have
    \begin{align}\label{eq-subGaussian-prob}
        \bbP\left(|x-\bbE x|>c\right)\leq 2e^{-\frac{c^2}{2\sigma^2}},\quad \forall c>0.
    \end{align}
\end{proposition}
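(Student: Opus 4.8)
The plan is to prove Proposition \ref{prop-A5} by the classical Chernoff (exponential-moment) method followed by a symmetrization step. First I would control the upper tail: fixing any $\lambda>0$ and applying Markov's inequality to the nonnegative random variable $e^{\lambda(x-\bbE x)}$ gives
\[
\bbP(x-\bbE x>c)=\bbP\bigl(e^{\lambda(x-\bbE x)}>e^{\lambda c}\bigr)\le e^{-\lambda c}\,\bbE\bigl[e^{\lambda(x-\bbE x)}\bigr]\le e^{-\lambda c+\lambda^2\sigma^2/2},
\]
where the final inequality is exactly the sub-Gaussian condition of Definition \ref{sub-Gaussian}. Since the exponent $-\lambda c+\lambda^2\sigma^2/2$ is a convex quadratic in $\lambda$ minimized at $\lambda^\star=c/\sigma^2$, which is positive whenever $c>0$ so that the use of $\lambda>0$ above is legitimate, substituting $\lambda=\lambda^\star$ yields $\bbP(x-\bbE x>c)\le e^{-c^2/(2\sigma^2)}$.

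Next I would obtain the lower tail at no extra cost by symmetry. Because Definition \ref{sub-Gaussian} demands the moment-generating bound for \emph{all} $\lambda\in\bbR$, the variable $-x$ satisfies $\bbE[e^{\lambda(-x-\bbE(-x))}]=\bbE[e^{-\lambda(x-\bbE x)}]\le e^{\lambda^2\sigma^2/2}$, i.e.\ $-x$ is itself sub-Gaussian with the same parameter $\sigma$; applying the bound just derived to $-x$ gives $\bbP(x-\bbE x<-c)\le e^{-c^2/(2\sigma^2)}$. A union bound then combines the two one-sided estimates into $\bbP(|x-\bbE x|>c)\le 2e^{-c^2/(2\sigma^2)}$, which is precisely \eqref{eq-subGaussian-prob}.

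There is no substantive obstacle here --- this is a textbook argument --- so the only points deserving attention are bookkeeping ones: checking that the optimizing $\lambda^\star=c/\sigma^2$ lies in the range $(0,\infty)$ over which Markov's inequality was invoked, and noting that it is the two-sided range of $\lambda$ in the definition that lets the lower tail follow without any additional hypothesis (had one only assumed the one-sided sub-Gaussian bound, one would instead argue directly with $\lambda<0$, but this is not needed here).
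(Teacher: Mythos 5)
Your proof is correct and is exactly the standard Chernoff-plus-symmetrization argument; the paper itself does not prove Proposition \ref{prop-A5} but cites it from \citet{wainwright2019high}, where the same textbook derivation appears. Nothing is missing: the optimization at $\lambda^\star=c/\sigma^2>0$, the reduction of the lower tail to $-x$ via the two-sided range of $\lambda$ in Definition \ref{sub-Gaussian}, and the final union bound are all handled properly.
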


\begin{lemma}\label{aux-1}
    Let $X_1,\cdots,X_n$ be independent and symmetric stochastic variables with zero mean. Denote $Y=\sum_{i=1}^n \upv_iX_i$ $\mathds{1}_{|X_i|\leq R}$ for any unit vector $\upv\in\bbR^n$ and positive scalar $R$. Then, we have $YX_1\mathds{1}_{|X_1|\leq R}$ is sub-Gaussian with parameter at most $\sigma=2R^2\|\upv\|_2$.
\end{lemma}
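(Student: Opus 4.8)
The plan is to reduce the statement to Hoeffding's lemma after a single conditioning step. First I would set $\hat X_i := X_i\mathds{1}_{|X_i|\le R}$, note that each $\hat X_i$ is independent, symmetric, zero-mean and almost surely bounded by $R$, and write $Z := Y\hat X_1 = \upv_1\hat X_1^2 + \hat X_1\sum_{i=2}^n\upv_i\hat X_i$. Since the cross terms vanish by independence and zero mean, $\bbE[Z] = \upv_1\bbE[\hat X_1^2]$, so the centered variable splits as $Z-\bbE Z = \big(\upv_1\hat X_1^2-\upv_1\bbE[\hat X_1^2]\big) + \hat X_1\sum_{i\ge 2}\upv_i\hat X_i$, whose second piece has conditional mean zero given $\hat X_1$.

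Next I would bound the conditional MGF. Fixing $\lambda\in\bbR$ and conditioning on $\hat X_1$, the variables $\{\upv_i\hat X_i\}_{i\ge 2}$ are independent, zero-mean and supported in intervals of length $2|\upv_i|R$, so Hoeffding's lemma (equivalently, the symmetry-plus-$\cosh$ bound) gives $\bbE\big[e^{\lambda\hat X_1\sum_{i\ge 2}\upv_i\hat X_i}\mid\hat X_1\big]\le \exp\!\big(\lambda^2\hat X_1^2 R^2\|\upv\|_2^2/2\big)$, and using $\hat X_1^2\le R^2$ this is at most the constant $\exp\!\big(\lambda^2 R^4\|\upv\|_2^2/2\big)$. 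By the tower property, $\bbE[e^{\lambda(Z-\bbE Z)}]\le \exp\!\big(\lambda^2R^4\|\upv\|_2^2/2\big)\,\bbE\big[e^{\lambda(\upv_1\hat X_1^2-\upv_1\bbE[\hat X_1^2])}\big]$. The remaining factor is the MGF of a bounded centered variable: $\upv_1\hat X_1^2$ lies in an interval of length $|\upv_1|R^2\le \|\upv\|_2 R^2 = R^2$, so Hoeffding's lemma again bounds it by $\exp\!\big(\lambda^2 R^4/8\big)$.

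Combining the two estimates yields $\bbE[e^{\lambda(Z-\bbE Z)}]\le \exp\!\big(\lambda^2 R^4(\|\upv\|_2^2/2 + 1/8)\big)$; since $\|\upv\|_2 = 1$ the exponent is $\tfrac{5}{8}\lambda^2 R^4 \le \tfrac12\lambda^2(2R^2\|\upv\|_2)^2$, which is exactly the sub-Gaussian bound of Definition \ref{sub-Gaussian} with parameter $2R^2\|\upv\|_2$.

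The only genuinely delicate point is the quadratic term $\upv_1\hat X_1^2$: because it is not zero-mean and is correlated with the linear part of $Y$, $Z$ is not a sum of independent contributions, so a direct product-MGF argument fails and a naive "bounded times sub-Gaussian'' bound would only give sub-exponential tails. Conditioning on $\hat X_1$ decouples the linear part, and the boundedness of the truncated variable (hence of $\hat X_1^2$) is precisely what lets the quadratic contribution be absorbed as a harmless constant; everything else is routine bookkeeping with Hoeffding's lemma.
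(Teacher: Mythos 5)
Your proof is correct. It shares the paper's overall skeleton — set $\hat X_i = X_i\mathds{1}_{|X_i|\le R}$, split $Y\hat X_1$ into the quadratic term $\upv_1\hat X_1^2$ and the linear term $\hat X_1\sum_{i\ge 2}\upv_i\hat X_i$, and condition on $\hat X_1$ to decouple the latter — but the key MGF estimate is obtained by a different tool. The paper bounds the conditional MGF by symmetrization: it introduces independent copies $\hat X_i'$, applies Jensen's inequality, inserts Rademacher variables, and uses the $\cosh$ bound together with $|\hat X_i-\hat X_i'|\le 2R$, $|\hat X_1|\le R$, arriving at the exponent $2\lambda^2R^4\|\upv\|_2^2$; the quadratic term is handled by "the same technique." You instead apply Hoeffding's lemma directly to the bounded, zero-mean truncated variables (conditionally for the linear sum, unconditionally for $\upv_1(\hat X_1^2-\bbE\hat X_1^2)$, whose range has length $|\upv_1|R^2$), which is more elementary, avoids ghost copies, and yields the slightly sharper exponent $\tfrac58\lambda^2R^4\le 2\lambda^2R^4$ for a unit vector — comfortably within the claimed parameter $\sigma=2R^2\|\upv\|_2$. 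You also correctly isolate the one delicate point (the quadratic term is neither zero-mean nor independent of the linear part, so a naive product argument fails), which is exactly why both proofs condition on $\hat X_1$ before doing anything else; note that the symmetry hypothesis is what keeps the truncated $\hat X_i$ zero-mean, and both arguments use it there.
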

\begin{proof}
    For simplicity, we denote $\hat{X}_i:=X_i\mathds{1}_{|X_i|\leq R}$ for any $i\in[1:n]$, and $Y_{-1}=\sum_{i=2}^n\upv_i\hat{X}_i$. One can notice the following holds
    \begin{align}\label{sub-gau-1}
        \bbE\left[e^{\lambda \left(Y\hat{X}_1-\bbE[Y\hat{X}_1]\right)}\right]=\bbE\left[e^{\lambda\upv_i\left(\hat{X}_1^2-\bbE[\hat{X}_1^2]\right)}\bbE\left[e^{\lambda\left(Y_{-1}\hat{X}_1-\bbE[Y_{-1}\hat{X}_1]\right)}\mid \hat{X}_1\right]\right],
    \end{align}
    for any $\lambda\in\bbR$. Letting $\hat{X}_i'$ be an independent copy of $\hat{X}_i$ for any $i\in[1:n]$, then we have
    \begin{align}\label{sub-gau-2}
        \bbE\left[e^{\lambda\left(Y_{-1}\hat{X}_1-\bbE[Y_{-1}\hat{X}_1]\right)}\mid \hat{X}_1\right]=&\bbE\left[e^{\sum_{i=2}^n\lambda\upv_i(\hat{X}_i-\bbE[\hat{X}_i])\hat{X}_1}\mid\hat{X}_1\right]
        \notag
        \\
        \overset{\text{(a)}}{\leq}&\bbE\left[e^{\sum_{i=2}^n\lambda\upv_i(\hat{X}_i-\hat{X}_i')\hat{X}_1}\mid\hat{X}_1\right],
    \end{align}
    where (a) is derived from the convexity of the exponential, and Jensen’s inequality. Letting $\xi$ be an independent Rademacher variable, since the distribution of $\hat{X}_i-\hat{X}_i'$ is the same as that of $\xi(\hat{X}_i-\hat{X}_i')$ for any $i\in[1:n]$, we obtain
    \begin{align}\label{sub-gau-3}
        \bbE\left[e^{\sum_{i=2}^n\lambda\upv_i(\hat{X}_i-\hat{X}_i')\hat{X}_1}\mid\hat{X}_1\right]
        \overset{\text{(b)}}{\leq}\prod_{i=2}^n\bbE\left[e^{\frac{\lambda^2\upv_i^2}{2}\hat{X}_1^2(\hat{X}_i-\hat{X}_i')^2}\mid\hat{X}_1\right].
    \end{align}
    Noticing that $|\hat{X}_i-\hat{X}_i'|\leq 2R$ and $|\hat{X}_i|\leq R$ for any $i\in[1:n]$, we are guarantee that
    \begin{align}\label{sub-gau-4}
        \prod_{i=2}^n\bbE\left[e^{\frac{\lambda^2\upv_i^2}{2}\hat{X}_1^2(\hat{X}_i-\hat{X}_i')^2}\mid\hat{X}_1\right]\leq e^{2\lambda^2R^4\sum_{i=2}^n\upv_i^2}.
    \end{align}
    Combining Eq.~\eqref{sub-gau-1}-Eq.~\eqref{sub-gau-4} and applying similar technique, we have
    \begin{align}
        \bbE\left[e^{\lambda \left(Y\hat{X}_1-\bbE[Y\hat{X}_1]\right)}\right]\leq& e^{2\lambda^2R^4\sum_{i=2}^n\upv_i^2}\bbE\left[e^{\lambda\upv_i\left(\hat{X}_1^2-\bbE[\hat{X}_1^2]\right)}\right]\leq e^{2\lambda^2R^4\|\upv\|_2^2}.\notag
    \end{align}
\end{proof}

\begin{lemma}\label{aux-1.1}
    Consider a stochastic variable $X$ which is zero-mean and sub-Gaussian with parameter $\sigma$ for some $\sigma>0$. Then, there exists $R>0$ which depends on $\sigma$ such that
    \begin{align}
        \bbE\left[X^2\mathds{1}_{|X|\leq R}\right]\geq\frac{1}{2}\bbE\left[X^2\right].
    \end{align}
\end{lemma}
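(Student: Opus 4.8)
The plan is to reduce the inequality to a one-sided tail estimate and then feed in the sub-Gaussian concentration bound. Write $\bbE[X^2] = \bbE[X^2\mathds{1}_{|X|\le R}] + \bbE[X^2\mathds{1}_{|X|>R}]$, so it is enough to choose $R$ with $\bbE[X^2\mathds{1}_{|X|>R}] \le \tfrac12\bbE[X^2]$; the whole content is that the truncated tail $\bbE[X^2\mathds{1}_{|X|>R}]$ can be made an arbitrarily small fraction of $\bbE[X^2]$ by enlarging $R$ relative to $\sigma$.

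First I would rewrite the truncated tail through the layer-cake (Fubini) identity. Splitting the level sets of $X^2\mathds{1}_{|X|>R}$ at level $R^2$ gives $\bbE[X^2\mathds{1}_{|X|>R}] = R^2\,\bbP(|X|>R) + \int_{R}^{\infty} 2t\,\bbP(|X|>t)\,\mathrm{d}t$. Since $X$ is zero-mean and sub-Gaussian with parameter $\sigma$, Proposition~\ref{prop-A5} yields the two-sided bound $\bbP(|X|>t)\le 2e^{-t^2/(2\sigma^2)}$ for all $t>0$. Substituting this and evaluating the elementary integral $\int_R^\infty 2t\,e^{-t^2/(2\sigma^2)}\,\mathrm{d}t = 2\sigma^2 e^{-R^2/(2\sigma^2)}$ produces the explicit estimate $\bbE[X^2\mathds{1}_{|X|>R}] \le \bigl(2R^2 + 4\sigma^2\bigr)e^{-R^2/(2\sigma^2)}$, whose right-hand side decays to $0$ as $R\to\infty$.

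It then remains to pick $R$. If $\bbE[X^2]=0$ the claimed inequality is trivial; otherwise $\bbE[X^2]$ is a fixed positive number (sub-Gaussianity also gives $\bbE[X^2]\le\sigma^2$), so the decaying bound above is $\le\tfrac12\bbE[X^2]$ for all $R$ past some threshold controlled by $\sigma$ together with $\bbE[X^2]$; one records any such $R$. In the applications in this paper $X$ is a centered Gaussian with $\bbE[X^2]=\sigma^2$ (namely a coordinate $\mathbf{x}_j\sim\calN(0,\lambda_j)$, $\sigma^2=\lambda_j$), and then $R=3\sigma$ already works, since $(2\cdot 9 + 4)\sigma^2 e^{-9/2}<\tfrac12\sigma^2$. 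There is no genuine obstacle here: the only points requiring a little care are the layer-cake identity for $X^2\mathds{1}_{|X|>R}$ (splitting at $R^2$) and applying the two-sided tail bound of Proposition~\ref{prop-A5} via the mean-zero hypothesis.
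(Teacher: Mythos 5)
Your proof follows essentially the same route as the paper's: a layer-cake expansion of the truncated tail $\bbE\left[X^2\mathds{1}_{|X|>R}\right]$ split at level $R^2$, the sub-Gaussian tail bound of Proposition~\ref{prop-A5}, and the explicit evaluation $\int_R^\infty 2t\,e^{-t^2/(2\sigma^2)}\,\mathrm{d}t=2\sigma^2 e^{-R^2/(2\sigma^2)}$, yielding the same estimate $\bigl(2R^2+4\sigma^2\bigr)e^{-R^2/(2\sigma^2)}$. If anything, you complete the final choice of $R$ (and correctly flag that it depends on $\bbE[X^2]$ as well as $\sigma$, with $R=3\sigma$ sufficing in the Gaussian application) more explicitly than the paper, which stops at the tail estimate.
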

\begin{proof}
    According to Eq.~\eqref{eq-subGaussian-prob}, we have $\bbP(|X|\geq r)\leq 2e^{-\frac{r^2}{2\sigma^2}}$ for any $r>0$. Therefore, we obtain
    \begin{align}
        \bbE\left[X^2\mathds{1}_{|X|>R}\right]\overset{\text{(a)}}{=}&2\int_{0}^{\infty}r\bbP(|X|\mathds{1}_{|X|>R}>r)\text{d}r\notag
        \\
        =&2\int_R^{\infty}r\bbP(|X|>r)\text{d}r+R^2\bbP(|X|>R)\notag
        \\
        \leq&4\int_R^{\infty}re^{-\frac{r^2}{2\sigma^2}}\text{d}r+2R^2e^{-\frac{R^2}{2\sigma^2}}=4\sigma^2e^{-\frac{R^2}{\sigma^2}}+2R^2e^{-\frac{R^2}{2\sigma^2}},
    \end{align}
    where (a) is derived from [Lemma 2.2.13, \citet{wainwright2019high}]. 
\end{proof}

\begin{lemma}\label{aux-2}
	Let $c>0$, $\gamma<1$ and $a_t>0$ for any $t\in[0:T-1]$. Consider a sequence of random variables $\{v^i\}_{i=0}^{T-1}\subset[0,c]$, which satisfies either $v^t=v^{t+1}=\cdots=v^T$, or $\bbE\left[v^{t+1}\mid\calF^t\right]\leq(1-\eta_t)v^t$ with stepsize $\eta_t\geq0$, given $\bbE[e^{\lambda(v^{t+1}-\bbE[v^{t+1}\mid\calF^t])}\mid\calF^t]\leq e^{\frac{\lambda^2a_t^2}{2}}$ almost surely for any $\lambda\in\bbR$. Then, there is 
    \begin{align}
        \bbP\left(v^T>c\bigwedge v^0\leq\gamma c\right)\leq\max_{t\in[1:T]}\exp\left\{-\frac{(1-\gamma)^2c^2}{2\sum_{j=0}^{t-1}a_j^2\prod_{i=j+1}^{t-1}(1-\eta_i)^{2}}\right\}.\notag
    \end{align}
\end{lemma}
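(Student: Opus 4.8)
The plan is to dominate the left-hand probability by the tail of a rescaled supermartingale and apply a Chernoff-type bound for supermartingales with conditionally sub-Gaussian increments. The first step is a reduction to the case in which the contraction alternative $\bbE[v^{t+1}\mid\calF^t]\le(1-\eta_t)v^t$ holds at \emph{every} step: if the frozen alternative $v^t=v^{t+1}=\cdots=v^T$ is ever triggered at some $t\le T-1$, then $v^T=v^t\le c$ because $v^t\in[0,c]$, contradicting $v^T>c$, so on the event we want to bound the contraction holds throughout $[0:T-1]$. To avoid conditioning on a non-adapted event, let $\tau$ be the (stopping) time at which the frozen alternative is first triggered — in every application this is the hitting/capping time of the coupling, hence a genuine stopping time — and define $\hat v^{0}=v^{0}$, $\hat v^{t+1}=v^{t+1}$ on $\{t<\tau\}$ and $\hat v^{t+1}=(1-\eta_t)\hat v^{t}$ on $\{t\ge\tau\}$. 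Since $\{t<\tau\}\in\calF^t$ and $\hat v^t=v^t$ for $t\le\tau$, one checks directly that $\bbE[\hat v^{t+1}\mid\calF^t]\le(1-\eta_t)\hat v^t$ and $\bbE[e^{\lambda(\hat v^{t+1}-\bbE[\hat v^{t+1}\mid\calF^t])}\mid\calF^t]\le e^{\lambda^2 a_t^2/2}$ for all $t,\lambda$, while $\hat v^{0}=v^{0}$ and $\hat v^{T}=v^{T}$ on $\{\tau\ge T\}\supseteq\{v^T>c\}$. Hence $\bbP(v^T>c\wedge v^0\le\gamma c)\le\bbP(\hat v^T>c\wedge\hat v^0\le\gamma c)$.

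Next, set $w^{s}:=\hat v^{s}\prod_{i=s}^{T-1}(1-\eta_i)$ for $s\in[0:T]$ (using $\eta_i\in[0,1)$). The contraction inequality for $\hat v$ gives $\bbE[w^{s+1}\mid\calF^s]\le w^s$, so $\{w^s\}$ is a supermartingale whose martingale increment is $w^{s+1}-\bbE[w^{s+1}\mid\calF^s]=\big(\prod_{i=s+1}^{T-1}(1-\eta_i)\big)\big(\hat v^{s+1}-\bbE[\hat v^{s+1}\mid\calF^s]\big)$, conditionally sub-Gaussian with parameter $a_s\prod_{i=s+1}^{T-1}(1-\eta_i)$; also $w^0=v^0\prod_{i=0}^{T-1}(1-\eta_i)\le v^0$ and $w^T=\hat v^T$. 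Running the standard argument — for $\theta>0$, $e^{\theta(w^T-w^0)}\le\prod_{s=0}^{T-1}e^{\theta(w^{s+1}-\bbE[w^{s+1}\mid\calF^s])}$ after discarding the nonpositive predictable parts, then peeling off the conditional sub-Gaussian bounds one step at a time, and finally Markov's inequality optimized over $\theta$ — gives $\bbP(w^T-w^0\ge\lambda)\le\exp\!\big(-\lambda^2/(2\sum_{j=0}^{T-1}a_j^2\prod_{i=j+1}^{T-1}(1-\eta_i)^2)\big)$. On $\{\hat v^T>c\wedge\hat v^0\le\gamma c\}$ we have $w^T>c$ and $w^0\le\gamma c$, so $w^T-w^0>(1-\gamma)c$; choosing $\lambda=(1-\gamma)c$ yields the asserted bound with $t=T$ in the exponent, which is in particular at most $\max_{t\in[1:T]}\exp\!\big(-(1-\gamma)^2c^2/(2\sum_{j=0}^{t-1}a_j^2\prod_{i=j+1}^{t-1}(1-\eta_i)^2)\big)$.

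The only genuine difficulty is this first reduction: the hypothesis is a pathwise ``freeze-or-contract'' dichotomy, and after a freeze the rescaled process would become a submartingale, so one cannot work with $v$ directly; replacing $v$ by the auxiliary process $\hat v$ that continues the deterministic contraction past $\tau$, and verifying that $\hat v$ still satisfies both the one-step contraction and the conditional sub-Gaussian increment bound (the latter because the modified increment is either the original one or identically zero), is the crux. Everything afterwards — the time-dependent rescaling anchored at the terminal step, the telescoping of exponential moments, and the Chernoff estimate — is routine.
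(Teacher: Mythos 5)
Your proposal is correct and follows essentially the same route as the paper: your auxiliary process $\hat v$ that continues the deterministic contraction after the freeze is exactly the paper's coupling $\tilde v$, and the rescaled supermartingale plus telescoped conditional sub-Gaussian bounds and Chernoff/Azuma step mirror the paper's argument. The only cosmetic differences are that you anchor the rescaling at the terminal time rather than at time $0$ (a constant-factor reparametrization of the same supermartingale) and you bound the probability by the $t=T$ term directly, which is dominated by the stated maximum over $t\in[1:T]$.
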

\begin{proof}
    Similarly, we begin with constructing a sequence of couplings $\{\Tilde{v}^i\}_{i=0}^T$ as follows: $\Tilde{v}^0=v^0$; if $v^t=v^{t+1}=\cdots=v^T$, let $\Tilde{v}^{t+1}=(1-\eta_t)\Tilde{v}^t$; otherwise, let $\Tilde{v}^{t+1}=v^{t+1}$. Notice that $\prod_{i=0}^{t-1}(1-\eta_i)^{-1}\tilde{v}^t$ is a supermartingale. We define $D_{t+1}:=\prod_{i=0}^{t}(1-\eta_i)^{-1}\tilde{v}^{t+1}-\prod_{i=0}^{t-1}(1-\eta_i)^{-1}\tilde{v}^t$ for any $t\in[0:T-1]$. Therefore, applying iterated expectation yields 
    \begin{align}\label{eq-sub-gaussian}
        \bbE\left[e^{\lambda\left(\sum_{i=1}^tD_i\right)}\right]=&\bbE\left[e^{\lambda\left(\sum_{i=1}^{t-1}D_{i}\right)}\bbE\left[e^{\lambda D_t}\mid\calF^{t-1}\right]\right]\notag
        \\
        =&\bbE\left[e^{\lambda\left(\sum_{i=1}^{t-1}D_{i}\right)}\bbE\left[e^{\frac{\lambda}{\prod_{i=0}^{t-1}(1-\eta_i)} \left(v^{t}-(1-\eta_{t-1})v^{t-1}\right)}\mid\calF^{t-1}\right]\right]
        \notag
        \\
        \overset{\text{(a)}}{\leq}&\bbE\left[e^{\lambda\left(\sum_{i=1}^{t-1}D_{i}\right)}\bbE\left[e^{\frac{\lambda}{\prod_{i=0}^{t-1}(1-\eta_i)} \left(v^{t}-\bbE[v^t\mid\calF^{t-1}]\right)}\mid\calF^{t-1}\right]\right]
        \notag
        \\
        \overset{\text{(b)}}{\leq}&e^{\frac{\lambda^2a_{t-1}^2}{2\prod_{i=0}^{t-1}(1-\eta_i)^2}}\bbE\left[e^{\lambda\left(\sum_{i=1}^{t-1}D_{i}\right)}\right]
        \notag
        \\
        \leq&e^{\frac{\lambda^2\sum_{j=0}^{t-1}a_j^2\prod_{i=0}^j(1-\eta_i)^{-2}}{2}},
    \end{align}
    for any $\lambda\in\bbR^+$ and $t\in[1:T]$, where (a) is derived from that $\lambda(\bbE[v^t\mid\calF^{t-1}]-(1-\eta_{t-1})v^{t-1})\leq 0$ and (b) follows from the condition that $\bbE[e^{\lambda(v^{t+1}-(1-\eta_t)v^t)}\mid\calF^t]\leq e^{\frac{\lambda^2a^2}{2}}$ almost surely for any $\lambda\in\bbR$. Then we obtain 
    \begin{align}
        \bbP\left(v^T>c\bigwedge v^0\leq\gamma c\right){\leq}&\max_{t\in[1:T]}\bbP\left(\prod_{i=0}^{t-1}(1-\eta_i)^{-1}\Tilde{v}^{t}>\prod_{i=0}^{t-1}(1-\eta_i)^{-1}c\bigwedge\Tilde{v}^0\leq\gamma c\right)\notag
        \\
        \leq&\max_{t\in[1:T]}\min_{\lambda>0}\frac{\bbE\left[e^{\lambda(\sum_{i=1}^tD_i)}\right]}{e^{\lambda\left(\prod_{i=0}^{t-1}(1-\eta_i)^{-1}c-\gamma c\right)}}
        \notag
        \\
        \overset{\text{(b)}}{\leq}&\max_{t\in[1:T]}\exp\left\{-\frac{\left(\prod_{i=0}^{t-1}(1-\eta_i)^{-1}c-\gamma c\right)^2}{2\sum_{j=0}^{t-1}a_j^2\prod_{i=0}^{j}(1-\eta_i)^{-2}}\right\}\notag
        \\
        \leq&\max_{t\in[1:T]}\exp\left\{-\frac{(1-\gamma)^2\left(\prod_{i=0}^{t-1}(1-\eta_i)^{-1}c\right)^2}{2\sum_{j=0}^{t-1}a_j^2\prod_{i=0}^{j}(1-\eta_i)^{-2}}\right\}\notag
        \\
        =&\max_{t\in[1:T]}\exp\left\{-\frac{(1-\gamma)^2c^2}{2\sum_{j=0}^{t-1}a_j^2\prod_{i=j+1}^{t-1}(1-\eta_i)^{2}}\right\},
    \end{align}
    where (b) is derived from Eq.~\eqref{eq-sub-gaussian}.
\end{proof}

\begin{corollary}\label{aux-coro-3}
    Let $c>0$, $\gamma<1$ and $a_t>0$ for any $t\in[0:T-1]$. Consider a sequence of random variables $\{v^i\}_{i=0}^{T-1}\subset[0,c]$, which satisfies $\prod_{i=0}^{T-1}(1+\eta_t)^{-1}c-v^0\geq\gamma c$ and $\bbE\left[v^{t+1}\mid\calF^t\right]\leq(1+\eta_t)v^t$ with stepsize $\eta_t\geq0$, given $\bbE[e^{\lambda(v^{t+1}-\bbE[v^{t+1}\mid\calF^t])}\mid\calF^t]\leq e^{\frac{\lambda^2a_t^2}{2}}$ almost surely for any $\lambda\in\bbR$. Then, there is 
    \begin{align}
        \bbP\left(v^T>c\right)\leq\max_{t\in[1:T]}\exp\left\{-\frac{\gamma^2c^2}{2\sum_{j=0}^{t-1}a_j^2\prod_{i=0}^{j}(1+\eta_i)^{-2}}\right\}\notag.
    \end{align}
\end{corollary}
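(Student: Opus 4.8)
The statement is the upward-drift counterpart of Lemma~\ref{aux-2}: the contractive conditional mean $\bbE[v^{t+1}\mid\calF^t]\le(1-\eta_t)v^t$ is replaced by an expansive one $\bbE[v^{t+1}\mid\calF^t]\le(1+\eta_t)v^t$, and we seek an upper tail for $v^T$. The plan is to follow the same template: rescale the chain to remove the drift, obtain a supermartingale, reduce the event $\{v^T>c\}$ to a one-sided deviation for a sum of conditionally sub-Gaussian increments, and close with a Chernoff bound. Since the present hypotheses impose the drift inequality unconditionally (there is no absorbing alternative), no auxiliary coupling is needed here and one can work directly with $M^t:=\big(\prod_{i=0}^{t-1}(1+\eta_i)\big)^{-1}v^t$, so that $M^0=v^0$.

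First I would verify from $\bbE[v^{t+1}\mid\calF^t]\le(1+\eta_t)v^t$ that $\{M^t\}_{t=0}^T$ is an $\{\calF^t\}$-supermartingale. Because $\eta_i\ge0$, every rescaling factor lies in $(0,1]$, so the centered increment $M^{t+1}-\bbE[M^{t+1}\mid\calF^t]=\big(\prod_{i=0}^{t}(1+\eta_i)\big)^{-1}\big(v^{t+1}-\bbE[v^{t+1}\mid\calF^t]\big)$ inherits sub-Gaussianity with parameter $\big(\prod_{i=0}^{t}(1+\eta_i)\big)^{-1}a_t$. Writing $D_t:=M^t-M^{t-1}$ and using $\bbE[M^t\mid\calF^{t-1}]\le M^{t-1}$ pointwise gives $D_t\le M^t-\bbE[M^t\mid\calF^{t-1}]$, hence for $\lambda>0$ the conditional MGF bound $\bbE[e^{\lambda D_t}\mid\calF^{t-1}]\le\exp\{\tfrac12\lambda^2 a_{t-1}^2\prod_{i=0}^{t-1}(1+\eta_i)^{-2}\}$; since $\sum_{i=1}^{t-1}D_i=M^{t-1}-M^0$ is $\calF^{t-1}$-measurable, iterating conditional expectations exactly as in the proof of Lemma~\ref{aux-2} yields $\bbE[e^{\lambda(M^T-M^0)}]\le\exp\{\tfrac12\lambda^2 S\}$ with $S:=\sum_{j=0}^{T-1}a_j^2\prod_{i=0}^{j}(1+\eta_i)^{-2}$.

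It remains to translate the target event: $v^T>c$ implies $M^T>B:=\big(\prod_{i=0}^{T-1}(1+\eta_i)\big)^{-1}c$, and the hypothesis $B-v^0\ge\gamma c$ gives $M^0=v^0\le B-\gamma c$, so $M^T-M^0>\gamma c$. A Chernoff bound with the MGF estimate above, optimized at $\lambda=\gamma c/S$, gives $\bbP(v^T>c)\le\exp\{-\gamma^2c^2/(2S)\}$, which is precisely the $t=T$ term appearing in $\max_{t\in[1:T]}(\cdot)$, so the displayed inequality follows a fortiori. The only delicate point — and it is a bookkeeping matter rather than a genuine obstacle — is tracking inequality directions after the drift sign flips: $D_t\le M^t-\bbE[M^t\mid\calF^{t-1}]$ and the Chernoff step must both be run with $\lambda>0$, and one uses $\eta_i\ge0$ to ensure the rescaling keeps all sub-Gaussian parameters from being enlarged. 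If one further wished to cover an absorbing-state variant of $\{v^t\}$ (which is what makes the $\max_t$ genuinely necessary rather than a harmless over-estimate), the coupling construction from the proof of Lemma~\ref{aux-2} transfers verbatim.
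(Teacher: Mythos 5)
Your proof is correct and follows essentially the same route the paper intends for this corollary: rescale by $\prod_i(1+\eta_i)^{-1}$ to get a supermartingale with conditionally sub-Gaussian increments and close with a Chernoff bound, exactly the template of Lemma~\ref{aux-2}, and your observations that the unconditional drift hypothesis makes the coupling unnecessary and that the $t=T$ term dominates the maximum are accurate. The only caveat is that the optimization at $\lambda=\gamma c/S$ tacitly uses $\gamma>0$ (for $\gamma\le 0$ the displayed bound is trivial or the deviation event degenerates), which is an imprecision in the stated hypothesis $\gamma<1$ rather than a gap in your argument.
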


\begin{lemma}\label{aux-3}
    For $L,K\in\bbN_+$, consider $T\in\bbN^+$ such that $LK\leq T<(L+1)K$. Then we have 
    \begin{align}
        \sum_{t=0}^{T}\left(\prod_{i=t}^{T}(1-c\eta_t)\right)\eta_t^2\leq\frac{2\eta_0}{c},
    \end{align}
    where $\eta_t=\frac{\eta_0}{2^l}$ if $lK\leq t\leq \min\{(l+1)K-1,T\}$ for any $l\in[0:L]$ and $c>0$ is a constant.
\end{lemma}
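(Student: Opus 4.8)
The plan is to exploit the piecewise-constant structure of the step-size schedule and reduce the sum to a geometric series on each constant-step block. Throughout I use the standing assumption $c\eta_0\le 1$, which holds wherever this lemma is invoked (there $\eta_0$ is taken sufficiently small); it guarantees $0\le 1-c\eta_t\le 1$ for every $t$, so every product appearing below is nonnegative and bounded by $1$.

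First I would partition the index set $\{0,\dots,T\}$ into the blocks $I_l:=\{t:\ lK\le t\le\min\{(l+1)K-1,T\}\}$ for $l=0,\dots,L$. By hypothesis $\eta_t$ is constant and equal to $\bar\eta_l:=\eta_0/2^l$ on $I_l$, and $|I_l|=K$ for $l<L$ while $|I_L|=T-LK+1\le K$. Writing $e_l:=\min\{(l+1)K-1,T\}$ for the last index of block $l$ and $P_l:=\prod_{l'=l+1}^{L}(1-c\bar\eta_{l'})^{|I_{l'}|}$ (with $P_L:=1$) for the contribution of all subsequent blocks, one has, for $t\in I_l$,
\[
\prod_{i=t}^{T}(1-c\eta_i)=P_l\,(1-c\bar\eta_l)^{\,e_l-t+1}.
\]

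Next, summing over $t\in I_l$ and bounding the finite geometric sum by its infinite counterpart,
\[
\sum_{t\in I_l}\Big(\prod_{i=t}^{T}(1-c\eta_i)\Big)\eta_t^2
= P_l\,\bar\eta_l^2\sum_{j=1}^{|I_l|}(1-c\bar\eta_l)^{j}
\le P_l\,\bar\eta_l^2\cdot\frac{1}{c\bar\eta_l}
= \frac{P_l\,\bar\eta_l}{c}.
\]
Since each factor $1-c\bar\eta_{l'}$ lies in $[0,1]$ we have $P_l\le 1$, hence summing over the $L+1$ blocks gives
\[
\sum_{t=0}^{T}\Big(\prod_{i=t}^{T}(1-c\eta_i)\Big)\eta_t^2
\le \frac{1}{c}\sum_{l=0}^{L}\bar\eta_l
= \frac{\eta_0}{c}\sum_{l=0}^{L}2^{-l}
\le \frac{2\eta_0}{c},
\]
which is the claim.

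The only point needing a little care is the last block $I_L$, which may be shorter than $K$; but a shorter block merely truncates the geometric sum, so the per-block bound $P_l\bar\eta_l/c$ still holds verbatim. Apart from that the argument is entirely mechanical, so I do not anticipate a genuine obstacle: the substantive content is simply that a constant step size on each block makes the block-sum $\le\bar\eta_l/c$, and the block step sizes $\bar\eta_l=\eta_0/2^l$ themselves sum geometrically to $2\eta_0$.
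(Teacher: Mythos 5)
Your proof is correct and follows essentially the same route as the paper's: split the sum into the constant-step-size blocks, bound each block's geometric sum by $1/(c\bar\eta_l)$ after bounding the trailing product by $1$, and then sum the block step sizes $\eta_0/2^l$ to get $2\eta_0/c$. Your explicit remark that $c\eta_0\le 1$ is needed (so the factors lie in $[0,1]$) is a point the paper leaves implicit, but otherwise the two arguments coincide.
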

\begin{proof}
    For any $l\in[0:L]$, we have
    \begin{align}\label{aux-eq-lemma3}
        \sum_{t=lK}^{(l+1)K-1}\left(\prod_{i=t}^{T}\left(1-c\eta_t\right)\right)\eta_t^2=&\eta_{lK}^2\left(\prod_{i=(l+1)K}^{T}\left(1-c\eta_t\right)\right)\sum_{t=lK}^{(l+1)K-1}(1-c\eta_{lK})^{(l+1)K-1-t}\notag
        \\
        \leq&\frac{\eta_{lK}}{c}\left(\prod_{i=(l+1)K}^{T}\left(1-c\eta_t\right)\right).
    \end{align}
    Therefore, we obtain the following estimation
    \begin{align}
        \sum_{t=0}^{T}\left(\prod_{i=t}^{T}(1-c\eta_t)\right)\eta_t^2\leq&\sum_{t=0}^{LK-1}\left(\prod_{i=t}^{T}(1-c\eta_t)\right)\eta_t^2+\sum_{t=LK}^T(1-c\eta_{LK})^{T-t}\eta_{LK}^2\notag
        \\
        \overset{\text{(a)}}{\leq}&\frac{\sum_{l=0}^L\eta_{lK}}{c}\leq\frac{2\eta_0}{c}.
    \end{align}
\end{proof}

    

\begin{lemma}\label{aux-5}
    Under Assumption \ref{ass-ss} and the setting of Theorem \ref{phase-II-main}, we have 
    $$
    \eta(\upI-\eta\widehat{\upH})^{2t}\upH\preceq\frac{25}{t+1}\upI,
    $$
    for any $t\in[0:T-1]$.
\end{lemma}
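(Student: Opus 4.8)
Both $\upH$ and $\widehat{\upH}$ are diagonal, so the claimed operator inequality is equivalent to the family of scalar bounds $\eta\,(1-\eta\widehat{\upH}_{ii})^{2t}\,\upH_{ii}\le \tfrac{25}{t+1}$, one for each coordinate $i\in[1:M]$ and each $t\in[0:T-1]$. The plan is to verify these by splitting the index set at the effective dimension $D$, since $\widehat{\upH}$ is supported on the first $D$ coordinates whereas $\upH$ is full; both $\upH$ and $(\upI-\eta\widehat{\upH})^{2t}\upH$ are PSD diagonal matrices, so $\preceq$ is genuinely the entrywise comparison.

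For the bulk coordinates $i\in[1:D]$ one has $\widehat{\upb}_i=\overline{\upb}_i=\upv^*_i$, hence $\upH_{ii}=\tfrac{25}{4}\lambda_i(\upv^*_i)^2$ and $\widehat{\upH}_{ii}=\tfrac14\lambda_i(\upv^*_i)^2$, i.e.\ exactly $\upH_{ii}=25\,\widehat{\upH}_{ii}$. Writing $x:=\eta\widehat{\upH}_{ii}$, the step-size hypothesis of Theorem~\ref{phase-II-main} (namely $\eta\le\eta_0\le\widetilde{\Omega}(\tr^{-1}(\upH))$, together with $\widehat{\upH}_{ii}=\tfrac1{25}\upH_{ii}\le\tfrac1{25}\tr(\upH)$) gives $x\in[0,1]$, and then two elementary one-variable facts finish the job: first $(1-x)^{2t}\le(1-x)^{t}$ because $1-x\in[0,1]$, and second $\max_{y\in[0,1]}y(1-y)^{t}=\tfrac{1}{t+1}\bigl(\tfrac{t}{t+1}\bigr)^{t}\le\tfrac{1}{t+1}$, the maximiser $y=\tfrac1{t+1}$ being located by differentiating $y\mapsto y(1-y)^{t}$. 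Combining, $\eta(1-\eta\widehat{\upH}_{ii})^{2t}\upH_{ii}=25\,x(1-x)^{2t}\le 25\,x(1-x)^{t}\le \tfrac{25}{t+1}$, which also covers the degenerate cases $t=0$ and $x=0$.

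For the tail coordinates $i\in[D+1:M]$ we have $\overline{\upb}_i=0$, so $\widehat{\upH}_{ii}=0$, the factor $(1-\eta\widehat{\upH}_{ii})^{2t}$ equals $1$, and the target collapses to $\eta\,\upH_{ii}\le\tfrac{25}{t+1}$; since $t+1\le T$ it suffices to show $\eta\,\upH_{ii}=\tfrac{25}{4}\eta\lambda_i(\widehat{\upv}^*_i)^2\le \tfrac{25}{T}$. This is the step that actually uses Assumption~\ref{ass-ss} and the parameter coupling baked into Theorem~\ref{phase-II-main}: for $i>D$ one has $\lambda_i\le\Tildesigmax(D)$ and $\lambda_i(\widehat{\upv}^*_i)^4\le\calM^2(\widehat{\upb})$, hence $\lambda_i(\widehat{\upv}^*_i)^2\le\Tildesigmax^{1/2}(D)\,\calM(\widehat{\upb})$, and under the power-law rates $\lambda_i(\widehat{\upv}^*_i)^2\asymp i^{-(\alpha+\beta)/2}$ the tail diagonal is decreasing so $\max_{i>D}\upH_{ii}=\upH_{D+1,D+1}$; plugging in the admissible step size ($\eta\lesssim\Barsigmin(D)/(\sigma^2+\calM^2(\upb))$) and the horizon restriction on $T$ inherited from the Phase~I residual-spectrum condition and the Phase~II time window (which together force $\eta^2T\,\Tildesigmax(D)\lesssim 1$) then yields $\eta\,\upH_{ii}\lesssim 1/T$ after absorbing constants. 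I expect this tail case to be the main obstacle: because $\widehat{\upH}$ is singular on $[D+1:M]$ the decaying factor gives no help there, so the whole $1/T$ scaling has to be pulled out of the smallness of the step-size/tail-spectrum products carried by the hypotheses, and one has to track carefully which of the (rather intertwined) conditions on $\eta$, $T_1$, $h$ and $D$ is doing the work; the bulk case, by contrast, is a self-contained calculus exercise.
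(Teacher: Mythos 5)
Your treatment of the head coordinates $i\in[1:D]$ is correct and is essentially identical to the paper's: both use $\upH_{1:D}=25\,\widehat{\upH}_{1:D}$, drop $(1-x)^{2t}\le(1-x)^{t}$, and invoke the scalar fact $x(1-x)^{t}\le\frac{1}{t+1}$ (the paper phrases it as $(1-x)^{t}\le\frac{1}{(t+1)x}$), with $x=\eta\widehat{\upH}_{ii}\in(0,1)$ guaranteed by $\eta\le\eta_0\lesssim\tr^{-1}(\upH)$. Likewise, your reduction of the tail coordinates to $\eta\,\upH_{ii}\le\frac{25}{T}$ (since $\widehat{\upH}_{ii}=0$ there) is exactly the paper's second step, which the paper settles by simply asserting $\eta\upH_{i,i}\le 1/T$ ``according to the parameter setting.''

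The gap is in how you try to justify that tail inequality. The chain you propose — $\lambda_i(\widehat{\upv}_i^*)^2\le\Tildesigmax^{1/2}(D)\,\calM(\widehat{\upb})$ together with $\eta^2T\,\Tildesigmax(D)\lesssim1$ — only yields $\eta T\,\upH_{ii}\lesssim\bigl(T\cdot\eta^2T\Tildesigmax(D)\bigr)^{1/2}\calM(\widehat{\upb})\lesssim\sqrt{T}$, which is short of the needed $\eta T\,\upH_{ii}\lesssim1$ by a factor $\sqrt{T}$; and the step-size condition $\eta\lesssim\Barsigmin(D)/(\sigma^2+\calM^2(\upb))$ cannot repair this, because after using $\eta^2T\Tildesigmax(D)\lesssim1$ you would need a \emph{lower} bound $\eta\gtrsim(\widehat{\upv}_{D+1}^*)^2$, not an upper bound. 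The condition that actually does the work (and is what the paper's terse citation of ``the parameter setting in Theorem~\ref{phase-II-main}'' refers to) is the adaptation-time condition $T_1=\widetilde{\calO}\bigl(\tfrac{\sigma^2+\calM^2(\widehat{\upb})}{\eta_0\,\sigmin(D)}\bigr)$: since $T\asymp T_1\log T$ and $\sigma^2+\calM^2(\widehat{\upb})=\calO(1)$ under Assumption~\ref{ass-ss}, this gives $\eta_0\,T\,\sigmin(D)\lesssim1$ up to logarithmic factors, and combined with the monotone power-law decay you already noted, $\upH_{ii}=\tfrac{25}{4}\lambda_i(\widehat{\upv}_i^*)^2\lesssim\lambda_D(\upv_D^*)^2=\sigmin(D)$ for all $i>D$ (the $\upv^0$ contribution to $\widehat{\upv}_i^*$ being dominated under the chosen initialization), so that $\eta\,\upH_{ii}\le\eta_0\upH_{ii}\lesssim\frac{1}{T}\le\frac{1}{t+1}$; when $D=M$ the tail is empty and nothing is needed. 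So your proof has the same architecture as the paper's, but the tail case should be run through the $\eta_0 T\,\sigmin(D)$ condition rather than the $\eta^2T\,\Tildesigmax(D)$ condition.
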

\begin{proof}
    For index $i\in[1:D]$, we have 
    \begin{align}
    \eta(\upI_{1:D}-\eta\widehat{\upH}_{1:D})^{2t}\upH_{1:D}=25 \eta(\upI_{1:D}-\eta\widehat{\upH}_{1:D})^{2t}\widehat{\upH}_{1:D}\preceq\frac{25}{t+1}\upI_{1:D},\notag
    \end{align}
    since $(1-x)^t\leq\frac{1}{(t+1)x}$ for any $x\in(0,1)$. For index $i\in[D+1:M]$, we obtain 
    \begin{align}
        \eta\upH_{i,i}\leq\frac{1}{T}\leq\frac{1}{t+1},
    \end{align}
    according to the parameter setting in Theorem \ref{phase-II-main} for any $t\in[1:T-1]$.
\end{proof}

\begin{lemma}\label{aux-6}
    Suppose Assumption \ref{ass-d} hold and let $\upz=\Pi_M\upx\in\bbR^M$. Then there exists a constant $\gamma>0$ such that 
    \begin{align}\label{aux-matrix}
        \bbE\left[\upA\upz\|\upz\|_{\upA^{\top}\upB\upA}^2\upz^{\top}\upA^{\top}\right]\preceq\gamma\llangle\upA\bbE\left[\upz\upz^{\top}\right]\upA^{\top},\upB\rrangle\upA\bbE\left[\upz\upz^{\top}\right]\upA^{\top},
    \end{align}
    for any diagonal PSD matrix $\upA\in\bbR^{M\times M}$ and PSD matrix $\upB\in\bbR^{M\times M}$.
\end{lemma}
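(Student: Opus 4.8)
The plan is to reduce the statement to a standard fourth-moment computation for a vector with independent centered coordinates. First I would substitute $\mathbf{y} := \upA\upz \in \bbR^M$. Since $\upA$ is diagonal and, by Assumption~\ref{ass-d}, the coordinates of $\upz = \Pi_M\upx$ are independent and centered, $\mathbf{y}$ again has independent centered coordinates, with diagonal covariance $\upD := \upA\,\bbE[\upz\upz^\top]\,\upA$. The left-hand side of \eqref{aux-matrix} is $\bbE\big[\mathbf{y}\,(\mathbf{y}^\top\upB\mathbf{y})\,\mathbf{y}^\top\big] = \bbE[\mathbf{y}\mathbf{y}^\top\upB\mathbf{y}\mathbf{y}^\top]$, while its right-hand side equals $\gamma\,\tr(\upD\upB)\,\upD$ (using $\langle\upA\bbE[\upz\upz^\top]\upA^\top,\upB\rangle = \tr(\upD\upB)$). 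So it suffices to bound the fourth-moment operator $\upB \mapsto \bbE[\mathbf{y}\mathbf{y}^\top\upB\mathbf{y}\mathbf{y}^\top]$ by $\gamma\,\tr(\upD\upB)\,\upD$ in the Loewner order.

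Second, I would evaluate this fourth moment entrywise: $\bbE[\mathbf{y}\mathbf{y}^\top\upB\mathbf{y}\mathbf{y}^\top]_{ij} = \sum_{k,\ell}\upB_{k\ell}\,\bbE[\mathbf{y}_i\mathbf{y}_j\mathbf{y}_k\mathbf{y}_\ell]$, and by independence and zero mean (equivalently, Isserlis' theorem in the Gaussian case) only index patterns in which every coordinate appears an even number of times survive. This gives off-diagonal entries $2(\upD\upB\upD)_{ij}$ for $i\neq j$, and diagonal entries $\upD_{ii}\sum_{k\neq i}\upB_{kk}\upD_{kk} + \upB_{ii}\,\bbE[\mathbf{y}_i^4]$. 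Under the Gaussian Assumption~\ref{ass-d} one has $\bbE[\mathbf{y}_i^4] = 3\upD_{ii}^2$, so the diagonal reduces to $\tr(\upD\upB)\upD_{ii} + 2(\upD\upB\upD)_{ii}$ and altogether $\bbE[\mathbf{y}\mathbf{y}^\top\upB\mathbf{y}\mathbf{y}^\top] = 2\upD\upB\upD + \tr(\upD\upB)\upD$; more generally, bounding $\bbE[\mathbf{y}_i^4] = \upA_{ii}^4\bbE[\upx_i^4] \le C\upD_{ii}^2$ yields $\bbE[\mathbf{y}\mathbf{y}^\top\upB\mathbf{y}\mathbf{y}^\top] \preceq 2\upD\upB\upD + \tr(\upD\upB)\upD + \max\{C-2,0\}\,\diag\{\upD_{ii}^2\upB_{ii}\}$.

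Third, I would reduce everything to the elementary Loewner inequality
\[
\upD\upB\upD \;\preceq\; \tr(\upD\upB)\,\upD, \qquad \diag\{\upD_{ii}^2\upB_{ii}\} \;\preceq\; \tr(\upD\upB)\,\upD ,
\]
both of which follow by restricting to the support $S := \{i : \upD_{ii} > 0\}$: on $\bbR^{S^c}$ every matrix here vanishes, and on $\bbR^S$ the matrix $\upD$ is positive definite, so $\upD\upB\upD \preceq \tr(\upD\upB)\upD$ is equivalent to $\upD^{1/2}\upB\upD^{1/2} \preceq \tr(\upD\upB)\,\upI_S$, which holds because $\upD^{1/2}\upB\upD^{1/2}$ is PSD and hence $\lambda_{\max}(\upD^{1/2}\upB\upD^{1/2}) \le \tr(\upD^{1/2}\upB\upD^{1/2}) = \tr(\upD\upB)$; the diagonal inequality follows similarly since $\max_i(\upD_{ii}\upB_{ii}) \le \tr(\upD\upB)$. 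Plugging these into the second step gives $\bbE[\mathbf{y}\mathbf{y}^\top\upB\mathbf{y}\mathbf{y}^\top] \preceq \gamma\,\tr(\upD\upB)\,\upD$ with $\gamma = 3$ under Assumption~\ref{ass-d} (and $\gamma = C+1$ in the general independent-coordinate version), which is exactly \eqref{aux-matrix} after undoing the substitution $\upD = \upA\,\bbE[\upz\upz^\top]\,\upA$.

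I do not expect a genuine obstacle: the content is a routine fourth-moment calculation. The only two points needing a little care are (i) handling the degenerate directions where $\upA_{ii} = 0$, so that the Loewner inequality $\upD\upB\upD \preceq \tr(\upD\upB)\upD$ is stated on the correct subspace rather than assuming $\upD$ invertible, and (ii) using that $\upB$ is only assumed PSD (not PD) — which is precisely what makes $\lambda_{\max}(\upD^{1/2}\upB\upD^{1/2}) \le \tr(\upD^{1/2}\upB\upD^{1/2})$ valid — and making sure the extra diagonal correction coming from $\bbE[\mathbf{y}_i^4]$ is absorbed into $\tr(\upD\upB)\upD$ rather than into $\upD\upB\upD$.
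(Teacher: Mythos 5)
Your proof is correct and follows essentially the same route as the paper's: an entrywise fourth-moment computation exploiting the independence and zero mean of the coordinates, yielding the decomposition into $2\upD\upB\upD$, the trace term $\tr(\upD\upB)\upD$, and a diagonal kurtosis correction, followed by the spectral bound $\upD\upB\upD \preceq \lambda_{\max}(\upD^{1/2}\upB\upD^{1/2})\,\upD \preceq \tr(\upD\upB)\,\upD$. Your write-up is in fact a bit more careful than the paper's (explicit restriction to the support of $\upD$ instead of implicitly assuming invertibility, and the exact Gaussian identity giving $\gamma=3$), but the underlying argument is the same.
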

\begin{proof}
    We denote $\upD:=\bbE[\upA\upz\|\upz\|_{\upA^{\top}\upB\upA}^2\upz^{\top}\upA^{\top}]$. For any $i,j\in[1:M]$ and $i\neq j$, we have $\upD_{i,j}=2\lambda_i\lambda_j\upA_{i,i}\upA_{j,j}(\upA^{\top}\upB\upA)_{i,j}$. In addition, we also have 
    $$
    \upD_{i,i}=\bbE\left[\|\upz\|_{\upA^{\top}\upB\upA}^2\right]\upA_{i,i}^2\lambda_i+(\upA^{\top}\upB\upA)_{i,i}\upA_{i,i}^2\Var\left[\upz_i^2\right]\leq (C+1)\bbE\left[\|\upz\|_{\upA^{\top}\upB\upA}^2\right]\upA_{i,i}^2\lambda_i.
    $$
    Therefore, we obtain that
    \begin{align}
        \upD\preceq&(C+1)\bbE\left[\|\upz\|_{\upA^{\top}\upB\upA}^2\right]\upA\bbE[\upz\upz^{\top}]\upA^{\top}+2\upA\bbE[\upz\upz^{\top}]\upA^{\top}\upB\upA\bbE[\upz\upz^{\top}]\upA^{\top}\notag
        \\
        \preceq&(C+1)\bbE\left[\|\upz\|_{\upA^{\top}\upB\upA}^2\right]\upA\bbE[\upz\upz^{\top}]\upA^{\top}\notag
        \\
        &+2\left\|\left(\upA\bbE[\upz\upz^{\top}]\upA^{\top}\right)^{1/2}\upB\left(\upA\bbE[\upz\upz^{\top}]\upA^{\top}\right)^{1/2}\right\|_2^2\upA\bbE[\upz\upz^{\top}]\upA^{\top}\notag
        \\
        \overset{\text{(a)}}{\preceq}&(C+2)\llangle\upA\bbE\left[\upz\upz^{\top}\right]\upA^{\top},\upB\rrangle\upA\bbE[\upz\upz^{\top}]\upA^{\top},\notag
    \end{align}
    where (a) is derived from that $\langle\upA\bbE[\upz\upz^{\top}]\upA^{\top},\upB\rangle=\bbE[\|\upz\|_{\upA^{\top}\upB\upA}^2]$ and $\|\upH^{1/2}\upB\upH^{1/2}\|_2^2\leq\langle\upH,\upB\rangle$ for any PSD matrix $\upB,\upD\in\bbR^{M\times M}$ since 
    \begin{align}
        \upa^{\top}\upH^{1/2}\upB\upH^{1/2}\upa=&\llangle\upH^{1/2}\upa\upa^{\top}\upH^{1/2},\upB\rrangle\notag
        \\
        \leq&\llangle\upH^{1/2}\upa\upa^{\top}\upH^{1/2},\upB\rrangle+\llangle\upH^{1/2}\upa_{\perp}\upa_{\perp}^{\top}\upH^{1/2},\upB\rrangle\notag
        \\
        =&\llangle\upH,\upB\rrangle.\notag
    \end{align}
    Therefore, by choosing $\gamma=(C+2)$, we obtain Eq.~\eqref{aux-matrix}.
\end{proof}

\begin{lemma}\label{aux-7}
    Under the setting of Theorem \ref{phase-II-main}, suppose following inequality holds 
    \begin{align}
        \upB_{\diag}^{t+1}\preceq\left(\calI-\eta\widehat{\calG}\right)^{t+1}\circ\upB_{\diag}^0+\tau\eta\sum_{i=0}^t\frac{\llangle\upH,\upB^i\rrangle}{t+1-i}\cdot\upI,\notag
    \end{align}
    for any $t\in[0:T-1]$ and some constant $\tau>0$. Then, we have
    \begin{align}
        \sum_{i=0}^{t}\frac{\langle\upH,\upB^i\rangle}{t+1-i}\leq\llangle\sum_{i=0}^t\frac{(\upI-\eta\widehat{\upH})^{2i}\upH}{t+1-i},\upB^0\rrangle+2\tau\eta\log(t)\tr(\upH)\sum_{i=0}^t\frac{\langle\upH,\upB^i\rangle}{t+1-i},\notag
    \end{align}
    for any $t\in[1:T]$.
\end{lemma}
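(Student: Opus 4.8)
The plan is to contract the matrix recursion in the hypothesis against the diagonal PSD matrix $\upH$, turning it into a scalar recursion for the nonnegative numbers $b_i:=\llangle\upH,\upB^i\rrangle$, and then to sum the resulting scalar inequalities with the weights $1/(t+1-i)$. First I would record the bookkeeping facts that make the contraction work: since $\upH=\tfrac{25}{4}\upLambda\diag\{\widehat\upb\odot\widehat\upb\}$ and $\widehat\upH$ are diagonal, we have $(\calI-\eta\widehat\calG)^{i}\circ\upB_{\diag}^0=(\upI-\eta\widehat\upH)^{i}\upB_{\diag}^0(\upI-\eta\widehat\upH)^{i}$, hence $\llangle\upH,(\calI-\eta\widehat\calG)^{i}\circ\upB_{\diag}^0\rrangle=\llangle(\upI-\eta\widehat\upH)^{2i}\upH,\upB^0\rrangle$, while $\llangle\upH,\upI\rrangle=\tr(\upH)$ and $\llangle\upH,\upB^i\rrangle=\llangle\upH,\upB_{\diag}^i\rrangle\ge0$ (the last because $\upB^i$ is PSD, being an expectation of outer products). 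Applying the hypothesis at step $i-1$ for $i\ge1$, and noting the case $i=0$ is a trivial equality, yields
\begin{align}
  b_i\le\llangle(\upI-\eta\widehat\upH)^{2i}\upH,\upB^0\rrangle+\tau\eta\tr(\upH)\sum_{j=0}^{i-1}\frac{b_j}{i-j}.\notag
\end{align}

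Next I would divide by $t+1-i$ and sum over $i=0,\dots,t$. The first term sums by linearity exactly to $\llangle\sum_{i=0}^t(t+1-i)^{-1}(\upI-\eta\widehat\upH)^{2i}\upH,\upB^0\rrangle$, the first term in the claim. For the second term I need the double-sum estimate
\begin{align}
  \sum_{i=1}^t\frac{1}{t+1-i}\sum_{j=0}^{i-1}\frac{b_j}{i-j}\le2\log(t)\sum_{j=0}^t\frac{b_j}{t+1-j}.\notag
\end{align}
Exchanging the order of summation rewrites the left side as $\sum_{j=0}^{t-1}b_j\sum_{i=j+1}^t\frac{1}{(t+1-i)(i-j)}$, and with $n:=t+1-j$ the inner sum equals $\sum_{m=1}^{n-1}\frac{1}{m(n-m)}=\frac1n\sum_{m=1}^{n-1}\bigl(\frac1m+\frac1{n-m}\bigr)=\frac{2}{n}\sum_{m=1}^{n-1}\frac1m$ via the partial-fraction identity $\frac1{m(n-m)}=\frac1n\bigl(\frac1m+\frac1{n-m}\bigr)$. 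Bounding the harmonic sum by $\log(t)$ (legitimate for $n\le t+1\le T+1$, absorbing the small finite range into the constant if needed) gives that the inner sum is at most $\frac{2\log(t)}{t+1-j}$; since every $b_j\ge0$ and the omitted $j=t$ term is nonnegative, the double-sum estimate follows. Combining the two pieces gives $S\le\llangle\sum_{i=0}^t(t+1-i)^{-1}(\upI-\eta\widehat\upH)^{2i}\upH,\upB^0\rrangle+2\tau\eta\log(t)\tr(\upH)\,S$, which is the assertion.

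The argument is elementary once the contraction against the diagonal $\upH$ is in place, so there is no real obstacle; the only points needing a little care are the harmonic-sum estimate together with reconciling the universal constant $2$ in front of $\log(t)$ with the logarithm base used elsewhere in the paper, and the observation that $\llangle\upH,\upB^i\rrangle$ depends only on the diagonal part of $\upB^i$, which is exactly what the hypothesis controls. I would also remark in passing that, since $\eta\tr(\upH)$ is small under the setting of Theorem \ref{phase-II-main}, this inequality can be rearranged to absorb its last term, though that step is not needed for the statement as written.
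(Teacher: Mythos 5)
Your proposal is correct and follows essentially the same route as the paper's proof: contract the hypothesized matrix recursion against $\upH$ to get the scalar inequality for $\llangle\upH,\upB^i\rrangle$, sum with weights $1/(t+1-i)$, exchange the order of summation, and use the partial-fraction identity plus a harmonic-sum bound to produce the $2\log(t)$ factor. The only caveat you already flag yourself — the harmonic sum is really bounded by $\log(t)$ only up to an absorbed constant for small $t$ — is present in the paper's argument as well, so there is no substantive difference.
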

\begin{proof}
    According to the condition of this lemma, we have 
    \begin{align}\label{aux-eq-5.7-1}
        \llangle\upH,\upB^{t}\rrangle\leq&\llangle(\upI-\eta\widehat{\upH})^{2t}\upH,\upB^0\rrangle+\tau\eta\tr(\upH)\sum_{i=0}^{t-1}\frac{\llangle\upH,\upB^i\rrangle}{t-i}.
    \end{align}
    Applying Eq.~\eqref{aux-eq-5.7-1} to each $\langle\upH,\upB^t\rangle$, we obtain
    \begin{align}
        \sum_{i=0}^{t}\frac{\langle\upH,\upB^i\rangle}{t+1-i}\leq&\llangle\sum_{i=0}^t\frac{(\upI-\eta\widehat{\upH})^{2i}\upH}{t+1-i},\upB^0\rrangle+\tau\eta\tr(\upH)\sum_{i=0}^t\sum_{k=0}^{i-1}\frac{\langle\upH,\upB^i\rangle}{(t+1-i)(i-k)}\notag
        \\
        \leq&\llangle\sum_{i=0}^t\frac{(\upI-\eta\widehat{\upH})^{2i}\upH}{t+1-i},\upB^0\rrangle+\tau\eta\tr(\upH)\sum_{k=0}^{t-1}\frac{\langle\upH,\upB^k\rangle}{t+1-k}\sum_{i=k+1}^t\left(\frac{1}{t+1-i}+\frac{1}{i-k}\right)\notag
        \\
        \leq&\llangle\sum_{i=0}^t\frac{(\upI-\eta\widehat{\upH})^{2i}\upH}{t+1-i},\upB^0\rrangle+2\tau\eta\log(t)\tr(\upH)\sum_{k=0}^t\frac{\langle\upH,\upB^k\rangle}{t+1-k}.\notag
    \end{align}
\end{proof}

\section{Simulations}
In this paper, we present simulations in a finite but large dimension ($d=10,000$). We artificially generate samples from the model $y=\left \langle \mathbf{x}, \left ( \mathbf{v}^* \right )^{\odot 2}  \right \rangle+\xi $, where $\mathbf{x}\sim \mathcal{N} \left ( \mathbf{0},\mathbf{H}   \right ) $, $\mathbf{H}=\mathrm{diag}\left \{ i^{-\alpha } \right \} $, $ \mathbf{w} ^*_i=i^{-\frac{\beta-\alpha }{4} }$, and $\xi\sim \mathcal{N} \left (0,1   \right )$ is independent of $\mathbf{x}$. In our simulations, given a total of $T$ iteration, we assume that Algorithm~\ref{SGD} can access $T$ independent samples $\left \{ \left ( \mathbf{x}_i,y_i  \right )  \right \} _{i=1}^T$ generated by the above model. $h$ in Algorithm~\ref{SGD} is set to $\frac{T}{\log_2(T)} $. We numerically approximate the expected error by averaging the results of 100 independent repetitions of the experiment.
In the following, we detail the specific experimental settings and present the results obtained for each scenario.
\begin{itemize}
    \item \textbf{Figure~\ref{simulation-appendix} (a):} We compare the curve of mean error of SGD against the number of iteration steps for both linear and quadratic models, under the setting $\alpha=3$, $\beta=2$ and $T=500$.  The results show that the quadratic model exhibits a phase of diminishing error, while the linear model demonstrates a continuous, steady decrease in error.
    \item \textbf{Figure~\ref{simulation-appendix} (b):} We compare the curve of mean error of SGD against the number of iteration steps for both linear and quadratic models, under the setting $\alpha=2.5$, $\beta=1.5$ and $T=500$.  The results show that the quadratic model exhibits a phase of diminishing error, while the linear model demonstrates a continuous, steady decrease in error.
    \item \textbf{Figure~\ref{simulation-appendix} (c):} We compare the curve of mean error of SGD against the number of sample size for both linear and quadratic models, under the setting $\alpha=3$, $\beta=2$ and $T$ ranging from $1000$ to $5000$. The results indicate that the quadratic model outperforms the linear model and exhibits convergence behavior that is closer to the theoretical algorithm rate.
    \item \textbf{Figure~\ref{simulation-appendix} (d):} We compare the curve of mean error of SGD against the number of sample size for both linear and quadratic models, under the setting $\alpha=2.5$, $\beta=1.5$ and $T$ ranging from $1000$ to $5000$. The results indicate that the quadratic model outperforms the linear model and exhibits convergence behavior that is closer to the theoretical algorithm rate.
    \item \textbf{Figure~\ref{simulation-appendix} (e):} We compare the curve of mean error of SGD against the number of sample size for quadratic models with model size $M=10,30,50,100,200$, under the setting $\alpha=3$, $\beta=2$ and $T$ ranging from $1$ to $10000$. 
    The results show that for a fixed $M$, when $T$ is small, the convergence rate approaches the rate observed as $M\to \infty$. As $T$ increases sufficiently, the convergence rate stabilizes. Increasing $M$ results in an increase in the value of at which this stabilization occurs, which is consistent with the scaling law.
    \item \textbf{Figure~\ref{simulation-appendix} (f):} We compare the curve of mean error of SGD against the number of sample size for quadratic models with model size $M=10,30,50,100,200$, under the setting $\alpha=2.5$, $\beta=1.5$ and $T$ ranging from $1$ to $10000$. The results exhibit similar patterns to those observed in the previous figure.
\end{itemize}

\begin{figure}[t]\label{simulation-appendix}
\centering

\subfigure[\scriptsize{Quadratic v.s. Linear Model}]{
\begin{minipage}[t]{0.33\linewidth}
\centering
\includegraphics[width=2.2 in]{phase2.pdf}
\end{minipage}%
}%
\subfigure[\scriptsize{Quadratic v.s. Linear Model }]{
\begin{minipage}[t]{0.33\linewidth}
\centering
\includegraphics[width=2.2 in]{phase.pdf}
\end{minipage}%
}%
\subfigure[\scriptsize{Empirical v.s. Theoretical Results}]{
\begin{minipage}[t]{0.33\linewidth}
\centering
\includegraphics[width=2.2 in]{lin_vs_qua.pdf}
\end{minipage}%
}%

\subfigure[\scriptsize{Empirical v.s. Theoretical Results}]{
\begin{minipage}[t]{0.33\linewidth}
\centering
\includegraphics[width=2.2 in]{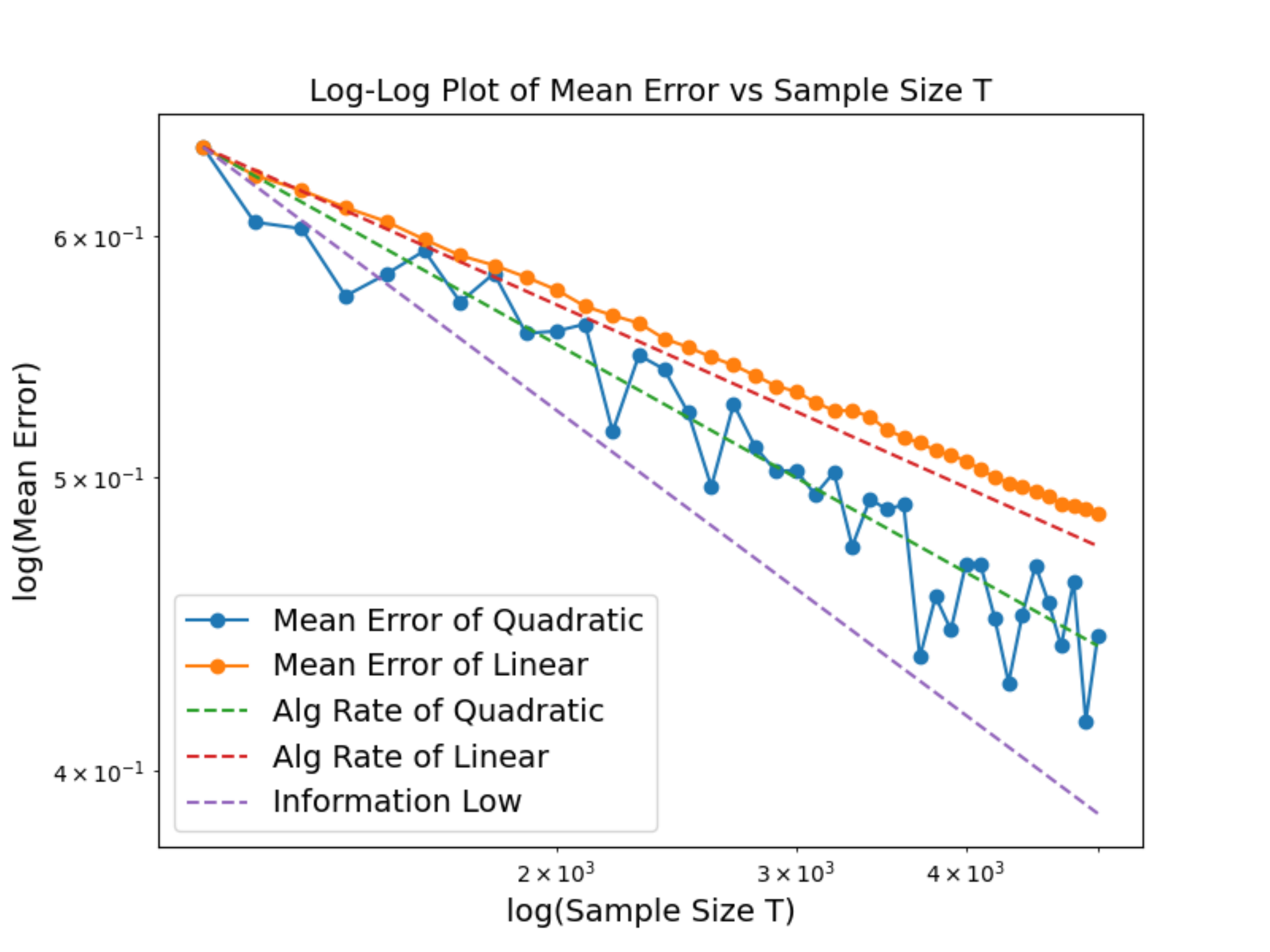}
\end{minipage}%
}%
\subfigure[\scriptsize{Scaling Law }]{
\begin{minipage}[t]{0.33\linewidth}
\centering
\includegraphics[width=2.2 in]{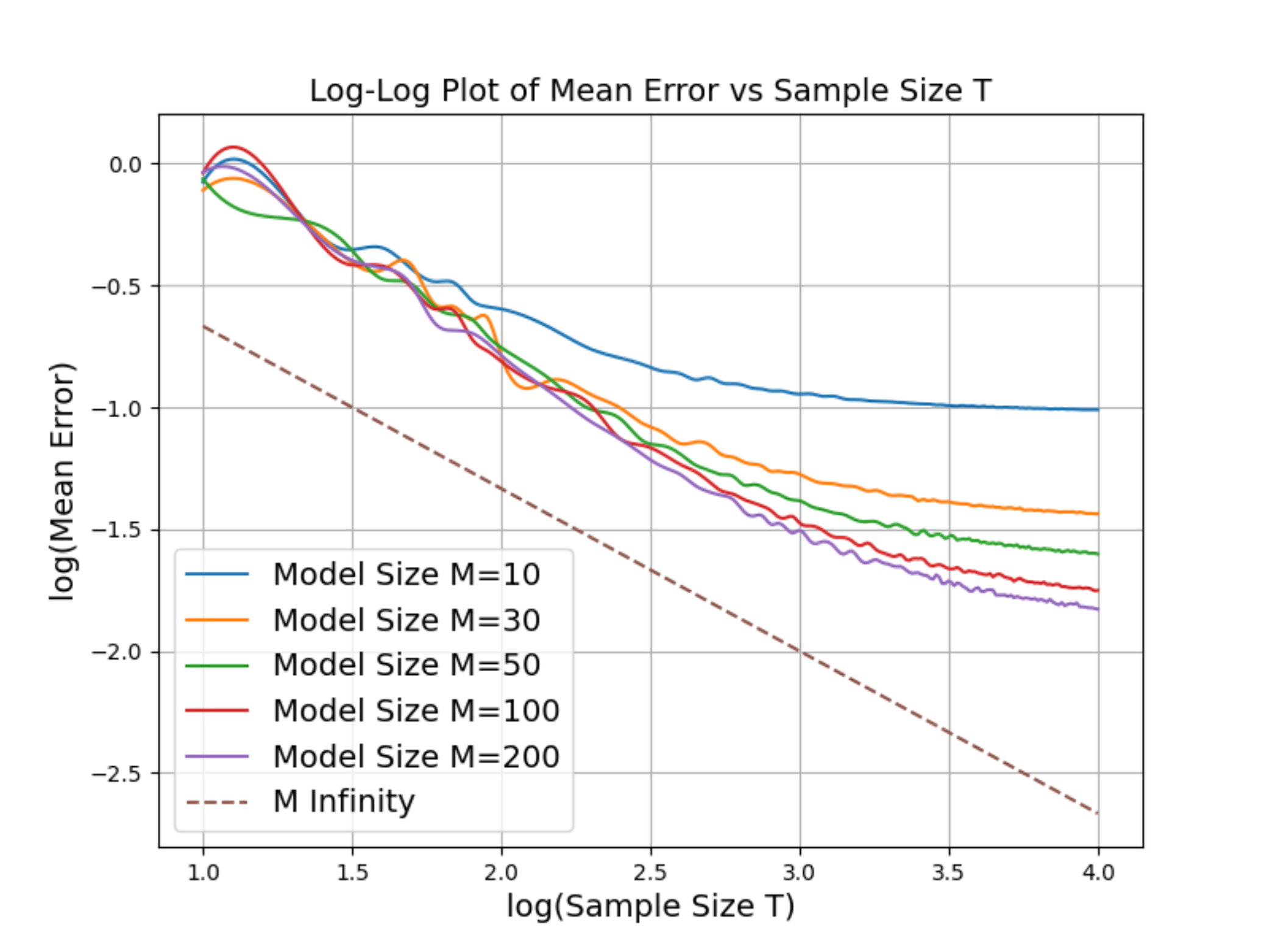}
\end{minipage}%
}%
\subfigure[\scriptsize{Scaling Law}]{
\begin{minipage}[t]{0.33\linewidth}
\centering
\includegraphics[width=2.2 in]{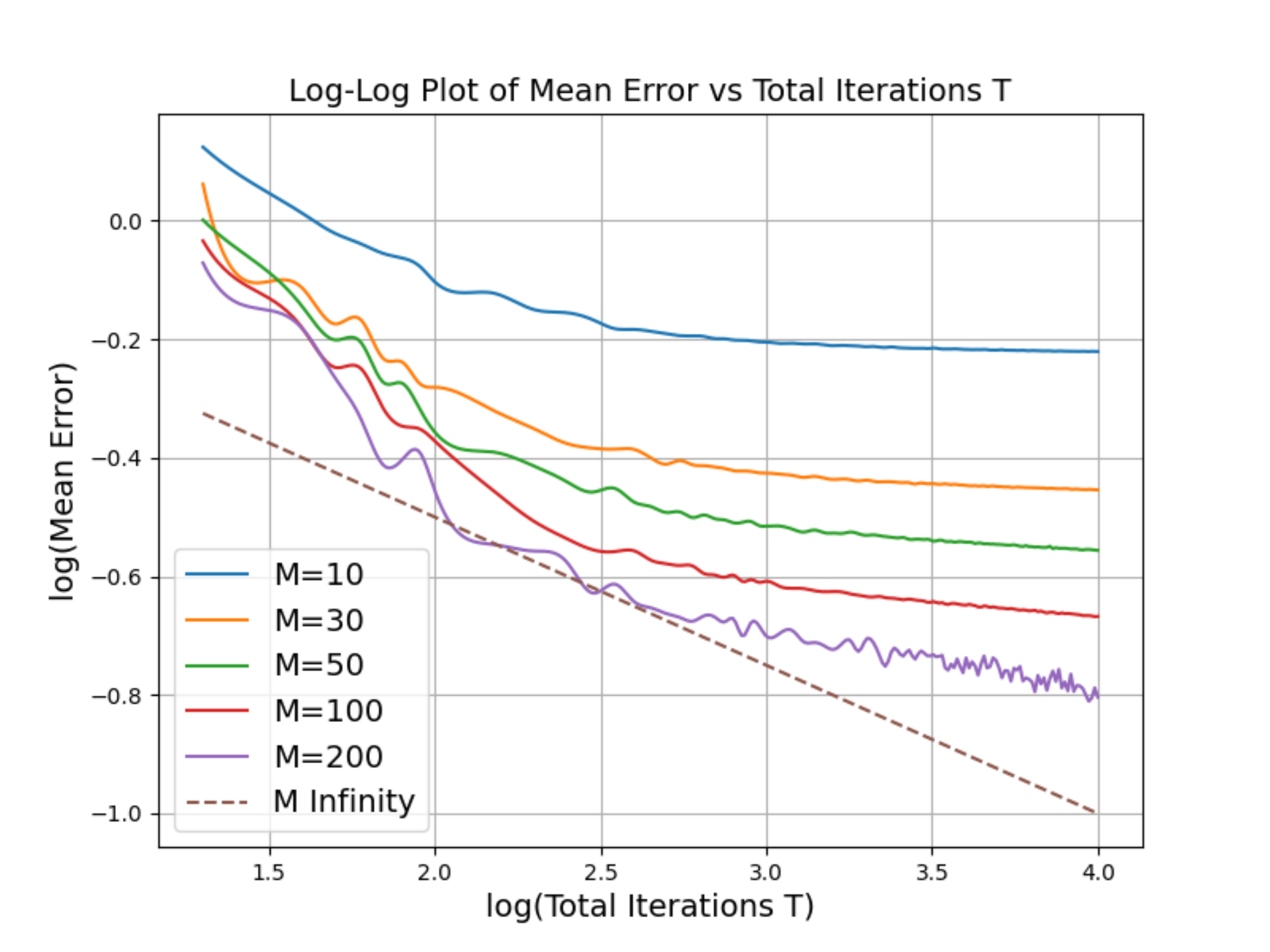}
\end{minipage}%
}%
\centering
\caption{Numerical simulation results.}
\end{figure}

\bibliography{yourbibfile}

\begin{thebibliography}{}

\bibitem [\protect \citeauthoryear {%
Ali%
, Dobriban%
\BCBL {}\ \BBA {} Tibshirani%
}{%
Ali%
\ \protect \BOthers {.}}{%
{\protect \APACyear {2020}}%
}]{%
ali2020implicit}
\APACinsertmetastar {%
ali2020implicit}%
\begin{APACrefauthors}%
Ali, A.%
, Dobriban, E.%
\BCBL {}\ \BBA {} Tibshirani, R.%
\end{APACrefauthors}%
\unskip\
\newblock
\APACrefYearMonthDay{2020}{}{}.
\newblock
{\BBOQ}\APACrefatitle {The implicit regularization of stochastic gradient flow for least squares} {The implicit regularization of stochastic gradient flow for least squares}.{\BBCQ}
\newblock
\BIn{} \APACrefbtitle {International Conference on Machine Learning.} {International conference on machine learning.}
\PrintBackRefs{\CurrentBib}

\bibitem [\protect \citeauthoryear {%
Arora%
\ \protect \BOthers {.}}{%
Arora%
\ \protect \BOthers {.}}{%
{\protect \APACyear {2019}}%
}]{%
arora2019exact}
\APACinsertmetastar {%
arora2019exact}%
\begin{APACrefauthors}%
Arora, S.%
, Du, S\BPBI S.%
, Hu, W.%
, Li, Z.%
, Salakhutdinov, R\BPBI R.%
\BCBL {}\ \BBA {} Wang, R.%
\end{APACrefauthors}%
\unskip\
\newblock
\APACrefYearMonthDay{2019}{}{}.
\newblock
{\BBOQ}\APACrefatitle {On exact computation with an infinitely wide neural net} {On exact computation with an infinitely wide neural net}.{\BBCQ}
\newblock
\BIn{} \APACrefbtitle {Neural Information Processing Systems.} {Neural information processing systems.}
\PrintBackRefs{\CurrentBib}

\bibitem [\protect \citeauthoryear {%
Ba%
\ \protect \BOthers {.}}{%
Ba%
\ \protect \BOthers {.}}{%
{\protect \APACyear {2022}}%
}]{%
ba2022high}
\APACinsertmetastar {%
ba2022high}%
\begin{APACrefauthors}%
Ba, J.%
, Erdogdu, M\BPBI A.%
, Suzuki, T.%
, Wang, Z.%
, Wu, D.%
\BCBL {}\ \BBA {} Yang, G.%
\end{APACrefauthors}%
\unskip\
\newblock
\APACrefYearMonthDay{2022}{}{}.
\newblock
{\BBOQ}\APACrefatitle {High-dimensional asymptotics of feature learning: How one gradient step improves the representation} {High-dimensional asymptotics of feature learning: How one gradient step improves the representation}.{\BBCQ}
\newblock
\BIn{} \APACrefbtitle {Neural Information Processing Systems.} {Neural information processing systems.}
\PrintBackRefs{\CurrentBib}

\bibitem [\protect \citeauthoryear {%
Bach%
\ \BBA {} Moulines%
}{%
Bach%
\ \BBA {} Moulines%
}{%
{\protect \APACyear {2013}}%
}]{%
bach2013non}
\APACinsertmetastar {%
bach2013non}%
\begin{APACrefauthors}%
Bach, F.%
\BCBT {}\ \BBA {} Moulines, E.%
\end{APACrefauthors}%
\unskip\
\newblock
\APACrefYearMonthDay{2013}{}{}.
\newblock
{\BBOQ}\APACrefatitle {Non-strongly-convex smooth stochastic approximation with convergence rate O (1/n)} {Non-strongly-convex smooth stochastic approximation with convergence rate o (1/n)}.{\BBCQ}
\newblock
\BIn{} \APACrefbtitle {Neural Information Processing Systems.} {Neural information processing systems.}
\PrintBackRefs{\CurrentBib}

\bibitem [\protect \citeauthoryear {%
Bahri%
, Dyer%
, Kaplan%
, Lee%
\BCBL {}\ \BBA {} Sharma%
}{%
Bahri%
\ \protect \BOthers {.}}{%
{\protect \APACyear {2024}}%
}]{%
bahri2024explaining}
\APACinsertmetastar {%
bahri2024explaining}%
\begin{APACrefauthors}%
Bahri, Y.%
, Dyer, E.%
, Kaplan, J.%
, Lee, J.%
\BCBL {}\ \BBA {} Sharma, U.%
\end{APACrefauthors}%
\unskip\
\newblock
\APACrefYearMonthDay{2024}{}{}.
\newblock
{\BBOQ}\APACrefatitle {Explaining neural scaling laws} {Explaining neural scaling laws}.{\BBCQ}
\newblock
\APACjournalVolNumPages{Proceedings of the National Academy of Sciences}{121}{27}{e2311878121}.
\PrintBackRefs{\CurrentBib}

\bibitem [\protect \citeauthoryear {%
Bartlett%
, Long%
, Lugosi%
\BCBL {}\ \BBA {} Tsigler%
}{%
Bartlett%
\ \protect \BOthers {.}}{%
{\protect \APACyear {2020}}%
}]{%
Bartlett_2020}
\APACinsertmetastar {%
Bartlett_2020}%
\begin{APACrefauthors}%
Bartlett, P\BPBI L.%
, Long, P\BPBI M.%
, Lugosi, G.%
\BCBL {}\ \BBA {} Tsigler, A.%
\end{APACrefauthors}%
\unskip\
\newblock
\APACrefYearMonthDay{2020}{}{}.
\newblock
{\BBOQ}\APACrefatitle {Benign overfitting in linear regression} {Benign overfitting in linear regression}.{\BBCQ}
\newblock
\APACjournalVolNumPages{Proceedings of the National Academy of Sciences}{}{}{}.
\PrintBackRefs{\CurrentBib}

\bibitem [\protect \citeauthoryear {%
Bartlett%
, Montanari%
\BCBL {}\ \BBA {} Rakhlin%
}{%
Bartlett%
\ \protect \BOthers {.}}{%
{\protect \APACyear {2021}}%
}]{%
bartlett2021deep}
\APACinsertmetastar {%
bartlett2021deep}%
\begin{APACrefauthors}%
Bartlett, P\BPBI L.%
, Montanari, A.%
\BCBL {}\ \BBA {} Rakhlin, A.%
\end{APACrefauthors}%
\unskip\
\newblock
\APACrefYearMonthDay{2021}{}{}.
\newblock
{\BBOQ}\APACrefatitle {Deep learning: a statistical viewpoint} {Deep learning: a statistical viewpoint}.{\BBCQ}
\newblock
\APACjournalVolNumPages{Acta numerica}{}{}{}.
\PrintBackRefs{\CurrentBib}

\bibitem [\protect \citeauthoryear {%
Bietti%
\ \BBA {} Bach%
}{%
Bietti%
\ \BBA {} Bach%
}{%
{\protect \APACyear {2021}}%
}]{%
bietti2021deep}
\APACinsertmetastar {%
bietti2021deep}%
\begin{APACrefauthors}%
Bietti, A.%
\BCBT {}\ \BBA {} Bach, F.%
\end{APACrefauthors}%
\unskip\
\newblock
\APACrefYearMonthDay{2021}{}{}.
\newblock
{\BBOQ}\APACrefatitle {Deep Equals Shallow for Re{LU} Networks in Kernel Regimes} {Deep equals shallow for re{LU} networks in kernel regimes}.{\BBCQ}
\newblock
\BIn{} \APACrefbtitle {International Conference on Learning Representations.} {International conference on learning representations.}
\PrintBackRefs{\CurrentBib}

\bibitem [\protect \citeauthoryear {%
Bietti%
\ \BBA {} Mairal%
}{%
Bietti%
\ \BBA {} Mairal%
}{%
{\protect \APACyear {2019}}%
}]{%
bietti2019inductive}
\APACinsertmetastar {%
bietti2019inductive}%
\begin{APACrefauthors}%
Bietti, A.%
\BCBT {}\ \BBA {} Mairal, J.%
\end{APACrefauthors}%
\unskip\
\newblock
\APACrefYearMonthDay{2019}{}{}.
\newblock
{\BBOQ}\APACrefatitle {On the inductive bias of neural tangent kernels} {On the inductive bias of neural tangent kernels}.{\BBCQ}
\newblock
\BIn{} \APACrefbtitle {Neural Information Processing Systems.} {Neural information processing systems.}
\PrintBackRefs{\CurrentBib}

\bibitem [\protect \citeauthoryear {%
Bordelon%
, Atanasov%
\BCBL {}\ \BBA {} Pehlevan%
}{%
Bordelon%
\ \protect \BOthers {.}}{%
{\protect \APACyear {2024}}%
}]{%
bordelon2024a}
\APACinsertmetastar {%
bordelon2024a}%
\begin{APACrefauthors}%
Bordelon, B.%
, Atanasov, A.%
\BCBL {}\ \BBA {} Pehlevan, C.%
\end{APACrefauthors}%
\unskip\
\newblock
\APACrefYearMonthDay{2024}{}{}.
\newblock
{\BBOQ}\APACrefatitle {A Dynamical Model of Neural Scaling Laws} {A dynamical model of neural scaling laws}.{\BBCQ}
\newblock
\BIn{} \APACrefbtitle {International Conference on Machine Learning.} {International conference on machine learning.}
\PrintBackRefs{\CurrentBib}

\bibitem [\protect \citeauthoryear {%
Bordelon%
\ \BBA {} Pehlevan%
}{%
Bordelon%
\ \BBA {} Pehlevan%
}{%
{\protect \APACyear {2021}}%
}]{%
bordelon2021learning}
\APACinsertmetastar {%
bordelon2021learning}%
\begin{APACrefauthors}%
Bordelon, B.%
\BCBT {}\ \BBA {} Pehlevan, C.%
\end{APACrefauthors}%
\unskip\
\newblock
\APACrefYearMonthDay{2021}{}{}.
\newblock
{\BBOQ}\APACrefatitle {Learning curves for {SGD} on structured features} {Learning curves for {SGD} on structured features}.{\BBCQ}
\newblock
\APACjournalVolNumPages{arXiv preprint arXiv:2106.02713}{}{}{}.
\PrintBackRefs{\CurrentBib}

\bibitem [\protect \citeauthoryear {%
Bubeck%
\ \protect \BOthers {.}}{%
Bubeck%
\ \protect \BOthers {.}}{%
{\protect \APACyear {2015}}%
}]{%
bubeck2015convex}
\APACinsertmetastar {%
bubeck2015convex}%
\begin{APACrefauthors}%
Bubeck, S.%
\BCBT {}\ \BOthersPeriod {.}
\end{APACrefauthors}%
\unskip\
\newblock
\APACrefYearMonthDay{2015}{}{}.
\newblock
{\BBOQ}\APACrefatitle {Convex optimization: Algorithms and complexity} {Convex optimization: Algorithms and complexity}.{\BBCQ}
\newblock
\APACjournalVolNumPages{Foundations and Trends{\textregistered} in Machine Learning}{}{}{}.
\PrintBackRefs{\CurrentBib}

\bibitem [\protect \citeauthoryear {%
Cai%
\ \BBA {} Hall%
}{%
Cai%
\ \BBA {} Hall%
}{%
{\protect \APACyear {2006}}%
}]{%
10.1214/009053606000000830}
\APACinsertmetastar {%
10.1214/009053606000000830}%
\begin{APACrefauthors}%
Cai, T\BPBI T.%
\BCBT {}\ \BBA {} Hall, P.%
\end{APACrefauthors}%
\unskip\
\newblock
\APACrefYearMonthDay{2006}{}{}.
\newblock
{\BBOQ}\APACrefatitle {{Prediction in functional linear regression}} {{Prediction in functional linear regression}}.{\BBCQ}
\newblock
\APACjournalVolNumPages{The Annals of Statistics}{}{}{}.
\PrintBackRefs{\CurrentBib}

\bibitem [\protect \citeauthoryear {%
Candes%
\ \BBA {} Tao%
}{%
Candes%
\ \BBA {} Tao%
}{%
{\protect \APACyear {2005}}%
}]{%
candes2005decoding}
\APACinsertmetastar {%
candes2005decoding}%
\begin{APACrefauthors}%
Candes, E\BPBI J.%
\BCBT {}\ \BBA {} Tao, T.%
\end{APACrefauthors}%
\unskip\
\newblock
\APACrefYearMonthDay{2005}{}{}.
\newblock
{\BBOQ}\APACrefatitle {Decoding by linear programming} {Decoding by linear programming}.{\BBCQ}
\newblock
\APACjournalVolNumPages{IEEE transactions on information theory}{}{}{}.
\PrintBackRefs{\CurrentBib}

\bibitem [\protect \citeauthoryear {%
Caponnetto%
\ \BBA {} De~Vito%
}{%
Caponnetto%
\ \BBA {} De~Vito%
}{%
{\protect \APACyear {2007}}%
}]{%
caponnetto2007optimal}
\APACinsertmetastar {%
caponnetto2007optimal}%
\begin{APACrefauthors}%
Caponnetto, A.%
\BCBT {}\ \BBA {} De~Vito, E.%
\end{APACrefauthors}%
\unskip\
\newblock
\APACrefYearMonthDay{2007}{}{}.
\newblock
{\BBOQ}\APACrefatitle {Optimal rates for the regularized least-squares algorithm} {Optimal rates for the regularized least-squares algorithm}.{\BBCQ}
\newblock
\APACjournalVolNumPages{Foundations of Computational Mathematics}{}{}{}.
\PrintBackRefs{\CurrentBib}

\bibitem [\protect \citeauthoryear {%
Chizat%
\ \BBA {} Bach%
}{%
Chizat%
\ \BBA {} Bach%
}{%
{\protect \APACyear {2018}}%
}]{%
NEURIPS2018_a1afc58c}
\APACinsertmetastar {%
NEURIPS2018_a1afc58c}%
\begin{APACrefauthors}%
Chizat, L.%
\BCBT {}\ \BBA {} Bach, F.%
\end{APACrefauthors}%
\unskip\
\newblock
\APACrefYearMonthDay{2018}{}{}.
\newblock
{\BBOQ}\APACrefatitle {On the Global Convergence of Gradient Descent for Over-parameterized Models using Optimal Transport} {On the global convergence of gradient descent for over-parameterized models using optimal transport}.{\BBCQ}
\newblock
\BIn{} \APACrefbtitle {Neural Information Processing Systems.} {Neural information processing systems.}
\PrintBackRefs{\CurrentBib}

\bibitem [\protect \citeauthoryear {%
Damian%
, Lee%
\BCBL {}\ \BBA {} Soltanolkotabi%
}{%
Damian%
\ \protect \BOthers {.}}{%
{\protect \APACyear {2022}}%
}]{%
damian2022neural}
\APACinsertmetastar {%
damian2022neural}%
\begin{APACrefauthors}%
Damian, A.%
, Lee, J.%
\BCBL {}\ \BBA {} Soltanolkotabi, M.%
\end{APACrefauthors}%
\unskip\
\newblock
\APACrefYearMonthDay{2022}{}{}.
\newblock
{\BBOQ}\APACrefatitle {Neural networks can learn representations with gradient descent} {Neural networks can learn representations with gradient descent}.{\BBCQ}
\newblock
\BIn{} \APACrefbtitle {Conference on Learning Theory.} {Conference on learning theory.}
\PrintBackRefs{\CurrentBib}

\bibitem [\protect \citeauthoryear {%
Dieuleveut%
\ \BBA {} Bach%
}{%
Dieuleveut%
\ \BBA {} Bach%
}{%
{\protect \APACyear {2016}}%
}]{%
dieuleveut2016nonparametric}
\APACinsertmetastar {%
dieuleveut2016nonparametric}%
\begin{APACrefauthors}%
Dieuleveut, A.%
\BCBT {}\ \BBA {} Bach, F.%
\end{APACrefauthors}%
\unskip\
\newblock
\APACrefYearMonthDay{2016}{}{}.
\newblock
{\BBOQ}\APACrefatitle {{Nonparametric stochastic approximation with large step-sizes}} {{Nonparametric stochastic approximation with large step-sizes}}.{\BBCQ}
\newblock
\APACjournalVolNumPages{The Annals of Statistics}{}{}{}.
\PrintBackRefs{\CurrentBib}

\bibitem [\protect \citeauthoryear {%
Dieuleveut%
, Flammarion%
\BCBL {}\ \BBA {} Bach%
}{%
Dieuleveut%
\ \protect \BOthers {.}}{%
{\protect \APACyear {2017}}%
}]{%
dieuleveut2017harder}
\APACinsertmetastar {%
dieuleveut2017harder}%
\begin{APACrefauthors}%
Dieuleveut, A.%
, Flammarion, N.%
\BCBL {}\ \BBA {} Bach, F.%
\end{APACrefauthors}%
\unskip\
\newblock
\APACrefYearMonthDay{2017}{}{}.
\newblock
{\BBOQ}\APACrefatitle {Harder, better, faster, stronger convergence rates for least-squares regression} {Harder, better, faster, stronger convergence rates for least-squares regression}.{\BBCQ}
\newblock
\APACjournalVolNumPages{Journal of Machine Learning Research}{}{}{}.
\PrintBackRefs{\CurrentBib}

\bibitem [\protect \citeauthoryear {%
Ge%
, Kakade%
, Kidambi%
\BCBL {}\ \BBA {} Netrapalli%
}{%
Ge%
\ \protect \BOthers {.}}{%
{\protect \APACyear {2019}}%
}]{%
ge2019step}
\APACinsertmetastar {%
ge2019step}%
\begin{APACrefauthors}%
Ge, R.%
, Kakade, S\BPBI M.%
, Kidambi, R.%
\BCBL {}\ \BBA {} Netrapalli, P.%
\end{APACrefauthors}%
\unskip\
\newblock
\APACrefYearMonthDay{2019}{}{}.
\newblock
{\BBOQ}\APACrefatitle {The step decay schedule: A near optimal, geometrically decaying learning rate procedure for least squares} {The step decay schedule: A near optimal, geometrically decaying learning rate procedure for least squares}.{\BBCQ}
\newblock
\BIn{} \APACrefbtitle {Neural Information Processing Systems.} {Neural information processing systems.}
\PrintBackRefs{\CurrentBib}

\bibitem [\protect \citeauthoryear {%
Ghorbani%
, Mei%
, Misiakiewicz%
\BCBL {}\ \BBA {} Montanari%
}{%
Ghorbani%
\ \protect \BOthers {.}}{%
{\protect \APACyear {2021}}%
}]{%
10.1214/20-AOS1990}
\APACinsertmetastar {%
10.1214/20-AOS1990}%
\begin{APACrefauthors}%
Ghorbani, B.%
, Mei, S.%
, Misiakiewicz, T.%
\BCBL {}\ \BBA {} Montanari, A.%
\end{APACrefauthors}%
\unskip\
\newblock
\APACrefYearMonthDay{2021}{}{}.
\newblock
{\BBOQ}\APACrefatitle {{Linearized two-layers neural networks in high dimension}} {{Linearized two-layers neural networks in high dimension}}.{\BBCQ}
\newblock
\APACjournalVolNumPages{The Annals of Statistics}{}{}{}.
\PrintBackRefs{\CurrentBib}

\bibitem [\protect \citeauthoryear {%
Goyal%
}{%
Goyal%
}{%
{\protect \APACyear {2017}}%
}]{%
goyal2017accurate}
\APACinsertmetastar {%
goyal2017accurate}%
\begin{APACrefauthors}%
Goyal, P.%
\end{APACrefauthors}%
\unskip\
\newblock
\APACrefYearMonthDay{2017}{}{}.
\newblock
{\BBOQ}\APACrefatitle {Accurate, large minibatch {SGD}: Training imagenet in 1 hour} {Accurate, large minibatch {SGD}: Training imagenet in 1 hour}.{\BBCQ}
\newblock
\APACjournalVolNumPages{arXiv preprint arXiv:1706.02677}{}{}{}.
\PrintBackRefs{\CurrentBib}

\bibitem [\protect \citeauthoryear {%
Gunasekar%
, Woodworth%
, Bhojanapalli%
, Neyshabur%
\BCBL {}\ \BBA {} Srebro%
}{%
Gunasekar%
\ \protect \BOthers {.}}{%
{\protect \APACyear {2017}}%
}]{%
gunasekar2017implicit}
\APACinsertmetastar {%
gunasekar2017implicit}%
\begin{APACrefauthors}%
Gunasekar, S.%
, Woodworth, B\BPBI E.%
, Bhojanapalli, S.%
, Neyshabur, B.%
\BCBL {}\ \BBA {} Srebro, N.%
\end{APACrefauthors}%
\unskip\
\newblock
\APACrefYearMonthDay{2017}{}{}.
\newblock
{\BBOQ}\APACrefatitle {Implicit regularization in matrix factorization} {Implicit regularization in matrix factorization}.{\BBCQ}
\newblock
\BIn{} \APACrefbtitle {Neural Information Processing Systems.} {Neural information processing systems.}
\PrintBackRefs{\CurrentBib}

\bibitem [\protect \citeauthoryear {%
HaoChen%
, Wei%
, Lee%
\BCBL {}\ \BBA {} Ma%
}{%
HaoChen%
\ \protect \BOthers {.}}{%
{\protect \APACyear {2021}}%
}]{%
haochen21shape}
\APACinsertmetastar {%
haochen21shape}%
\begin{APACrefauthors}%
HaoChen, J\BPBI Z.%
, Wei, C.%
, Lee, J.%
\BCBL {}\ \BBA {} Ma, T.%
\end{APACrefauthors}%
\unskip\
\newblock
\APACrefYearMonthDay{2021}{}{}.
\newblock
{\BBOQ}\APACrefatitle {Shape Matters: Understanding the Implicit Bias of the Noise Covariance} {Shape matters: Understanding the implicit bias of the noise covariance}.{\BBCQ}
\newblock
\BIn{} \APACrefbtitle {Conference on Learning Theory.} {Conference on learning theory.}
\PrintBackRefs{\CurrentBib}

\bibitem [\protect \citeauthoryear {%
Hastie%
, Montanari%
, Rosset%
\BCBL {}\ \BBA {} Tibshirani%
}{%
Hastie%
\ \protect \BOthers {.}}{%
{\protect \APACyear {2022}}%
}]{%
10.1214/21-AOS2133}
\APACinsertmetastar {%
10.1214/21-AOS2133}%
\begin{APACrefauthors}%
Hastie, T.%
, Montanari, A.%
, Rosset, S.%
\BCBL {}\ \BBA {} Tibshirani, R\BPBI J.%
\end{APACrefauthors}%
\unskip\
\newblock
\APACrefYearMonthDay{2022}{}{}.
\newblock
{\BBOQ}\APACrefatitle {{Surprises in high-dimensional ridgeless least squares interpolation}} {{Surprises in high-dimensional ridgeless least squares interpolation}}.{\BBCQ}
\newblock
\APACjournalVolNumPages{The Annals of Statistics}{}{}{}.
\PrintBackRefs{\CurrentBib}

\bibitem [\protect \citeauthoryear {%
Jacot%
, Gabriel%
\BCBL {}\ \BBA {} Hongler%
}{%
Jacot%
\ \protect \BOthers {.}}{%
{\protect \APACyear {2018}}%
}]{%
NEURIPS2018_5a4be1fa}
\APACinsertmetastar {%
NEURIPS2018_5a4be1fa}%
\begin{APACrefauthors}%
Jacot, A.%
, Gabriel, F.%
\BCBL {}\ \BBA {} Hongler, C.%
\end{APACrefauthors}%
\unskip\
\newblock
\APACrefYearMonthDay{2018}{}{}.
\newblock
{\BBOQ}\APACrefatitle {Neural Tangent Kernel: Convergence and Generalization in Neural Networks} {Neural tangent kernel: Convergence and generalization in neural networks}.{\BBCQ}
\newblock
\BIn{} \APACrefbtitle {Neural Information Processing Systems.} {Neural information processing systems.}
\PrintBackRefs{\CurrentBib}

\bibitem [\protect \citeauthoryear {%
Jain%
, Kakade%
, Kidambi%
, Netrapalli%
\BCBL {}\ \BBA {} Sidford%
}{%
Jain%
\ \protect \BOthers {.}}{%
{\protect \APACyear {2018}}%
}]{%
jain2018parallelizing}
\APACinsertmetastar {%
jain2018parallelizing}%
\begin{APACrefauthors}%
Jain, P.%
, Kakade, S\BPBI M.%
, Kidambi, R.%
, Netrapalli, P.%
\BCBL {}\ \BBA {} Sidford, A.%
\end{APACrefauthors}%
\unskip\
\newblock
\APACrefYearMonthDay{2018}{}{}.
\newblock
{\BBOQ}\APACrefatitle {Parallelizing stochastic gradient descent for least squares regression: mini-batching, averaging, and model misspecification} {Parallelizing stochastic gradient descent for least squares regression: mini-batching, averaging, and model misspecification}.{\BBCQ}
\newblock
\APACjournalVolNumPages{Journal of machine learning research}{}{}{}.
\PrintBackRefs{\CurrentBib}

\bibitem [\protect \citeauthoryear {%
Kaplan%
\ \protect \BOthers {.}}{%
Kaplan%
\ \protect \BOthers {.}}{%
{\protect \APACyear {2020}}%
}]{%
kaplan2020scaling}
\APACinsertmetastar {%
kaplan2020scaling}%
\begin{APACrefauthors}%
Kaplan, J.%
, McCandlish, S.%
, Henighan, T.%
, Brown, T\BPBI B.%
, Chess, B.%
, Child, R.%
\BDBL {}Amodei, D.%
\end{APACrefauthors}%
\unskip\
\newblock
\APACrefYearMonthDay{2020}{}{}.
\newblock
{\BBOQ}\APACrefatitle {Scaling laws for neural language models} {Scaling laws for neural language models}.{\BBCQ}
\newblock
\APACjournalVolNumPages{arXiv preprint arXiv:2001.08361}{}{}{}.
\PrintBackRefs{\CurrentBib}

\bibitem [\protect \citeauthoryear {%
LeCun%
, Bengio%
\BCBL {}\ \BBA {} Hinton%
}{%
LeCun%
\ \protect \BOthers {.}}{%
{\protect \APACyear {2015}}%
}]{%
lecun2015deep}
\APACinsertmetastar {%
lecun2015deep}%
\begin{APACrefauthors}%
LeCun, Y.%
, Bengio, Y.%
\BCBL {}\ \BBA {} Hinton, G.%
\end{APACrefauthors}%
\unskip\
\newblock
\APACrefYearMonthDay{2015}{}{}.
\newblock
{\BBOQ}\APACrefatitle {Deep learning} {Deep learning}.{\BBCQ}
\newblock
\APACjournalVolNumPages{nature}{521}{7553}{436--444}.
\PrintBackRefs{\CurrentBib}

\bibitem [\protect \citeauthoryear {%
Lee%
, Oko%
, Suzuki%
\BCBL {}\ \BBA {} Wu%
}{%
Lee%
\ \protect \BOthers {.}}{%
{\protect \APACyear {2024}}%
}]{%
lee2024neural}
\APACinsertmetastar {%
lee2024neural}%
\begin{APACrefauthors}%
Lee, J\BPBI D.%
, Oko, K.%
, Suzuki, T.%
\BCBL {}\ \BBA {} Wu, D.%
\end{APACrefauthors}%
\unskip\
\newblock
\APACrefYearMonthDay{2024}{}{}.
\newblock
{\BBOQ}\APACrefatitle {Neural network learns low-dimensional polynomials with SGD near the information-theoretic limit} {Neural network learns low-dimensional polynomials with sgd near the information-theoretic limit}.{\BBCQ}
\newblock
\APACjournalVolNumPages{arXiv preprint arXiv:2406.01581}{}{}{}.
\PrintBackRefs{\CurrentBib}

\bibitem [\protect \citeauthoryear {%
Li%
\ \protect \BOthers {.}}{%
Li%
\ \protect \BOthers {.}}{%
{\protect \APACyear {2019}}%
}]{%
li2019enhanced}
\APACinsertmetastar {%
li2019enhanced}%
\begin{APACrefauthors}%
Li, Z.%
, Wang, R.%
, Yu, D.%
, Du, S\BPBI S.%
, Hu, W.%
, Salakhutdinov, R.%
\BCBL {}\ \BBA {} Arora, S.%
\end{APACrefauthors}%
\unskip\
\newblock
\APACrefYearMonthDay{2019}{}{}.
\newblock
{\BBOQ}\APACrefatitle {Enhanced convolutional neural tangent kernels} {Enhanced convolutional neural tangent kernels}.{\BBCQ}
\newblock
\APACjournalVolNumPages{arXiv preprint arXiv:1911.00809}{}{}{}.
\PrintBackRefs{\CurrentBib}

\bibitem [\protect \citeauthoryear {%
Liang%
, Rakhlin%
\BCBL {}\ \BBA {} Zhai%
}{%
Liang%
\ \protect \BOthers {.}}{%
{\protect \APACyear {2020}}%
}]{%
liang2020multiple}
\APACinsertmetastar {%
liang2020multiple}%
\begin{APACrefauthors}%
Liang, T.%
, Rakhlin, A.%
\BCBL {}\ \BBA {} Zhai, X.%
\end{APACrefauthors}%
\unskip\
\newblock
\APACrefYearMonthDay{2020}{}{}.
\newblock
{\BBOQ}\APACrefatitle {On the multiple descent of minimum-norm interpolants and restricted lower isometry of kernels} {On the multiple descent of minimum-norm interpolants and restricted lower isometry of kernels}.{\BBCQ}
\newblock
\BIn{} \APACrefbtitle {Conference on Learning Theory.} {Conference on learning theory.}
\PrintBackRefs{\CurrentBib}

\bibitem [\protect \citeauthoryear {%
J.~Lin%
\ \BBA {} Rosasco%
}{%
J.~Lin%
\ \BBA {} Rosasco%
}{%
{\protect \APACyear {2017}}%
}]{%
lin2017optimal}
\APACinsertmetastar {%
lin2017optimal}%
\begin{APACrefauthors}%
Lin, J.%
\BCBT {}\ \BBA {} Rosasco, L.%
\end{APACrefauthors}%
\unskip\
\newblock
\APACrefYearMonthDay{2017}{}{}.
\newblock
{\BBOQ}\APACrefatitle {Optimal rates for multi-pass stochastic gradient methods} {Optimal rates for multi-pass stochastic gradient methods}.{\BBCQ}
\newblock
\APACjournalVolNumPages{Journal of Machine Learning Research}{}{}{}.
\PrintBackRefs{\CurrentBib}

\bibitem [\protect \citeauthoryear {%
L.~Lin%
, Wu%
, Kakade%
, Bartlett%
\BCBL {}\ \BBA {} Lee%
}{%
L.~Lin%
\ \protect \BOthers {.}}{%
{\protect \APACyear {2024}}%
}]{%
lin2024scaling}
\APACinsertmetastar {%
lin2024scaling}%
\begin{APACrefauthors}%
Lin, L.%
, Wu, J.%
, Kakade, S\BPBI M.%
, Bartlett, P.%
\BCBL {}\ \BBA {} Lee, J\BPBI D.%
\end{APACrefauthors}%
\unskip\
\newblock
\APACrefYearMonthDay{2024}{}{}.
\newblock
{\BBOQ}\APACrefatitle {Scaling Laws in Linear Regression: Compute, Parameters, and Data} {Scaling laws in linear regression: Compute, parameters, and data}.{\BBCQ}
\newblock
\BIn{} \APACrefbtitle {Neural Information Processing Systems.} {Neural information processing systems.}
\PrintBackRefs{\CurrentBib}

\bibitem [\protect \citeauthoryear {%
Lu%
, Zhang%
, Li%
, Xu%
\BCBL {}\ \BBA {} Lin%
}{%
Lu%
\ \protect \BOthers {.}}{%
{\protect \APACyear {2023}}%
}]{%
lu2023optimal}
\APACinsertmetastar {%
lu2023optimal}%
\begin{APACrefauthors}%
Lu, W.%
, Zhang, H.%
, Li, Y.%
, Xu, M.%
\BCBL {}\ \BBA {} Lin, Q.%
\end{APACrefauthors}%
\unskip\
\newblock
\APACrefYearMonthDay{2023}{}{}.
\newblock
{\BBOQ}\APACrefatitle {Optimal rate of kernel regression in large dimensions} {Optimal rate of kernel regression in large dimensions}.{\BBCQ}
\newblock
\APACjournalVolNumPages{arXiv preprint arXiv:2309.04268}{}{}{}.
\PrintBackRefs{\CurrentBib}

\bibitem [\protect \citeauthoryear {%
Mei%
\ \BBA {} Montanari%
}{%
Mei%
\ \BBA {} Montanari%
}{%
{\protect \APACyear {2022}}%
}]{%
mei2022generalization}
\APACinsertmetastar {%
mei2022generalization}%
\begin{APACrefauthors}%
Mei, S.%
\BCBT {}\ \BBA {} Montanari, A.%
\end{APACrefauthors}%
\unskip\
\newblock
\APACrefYearMonthDay{2022}{}{}.
\newblock
{\BBOQ}\APACrefatitle {The generalization error of random features regression: Precise asymptotics and the double descent curve} {The generalization error of random features regression: Precise asymptotics and the double descent curve}.{\BBCQ}
\newblock
\APACjournalVolNumPages{Communications on Pure and Applied Mathematics}{}{}{}.
\PrintBackRefs{\CurrentBib}

\bibitem [\protect \citeauthoryear {%
Mei%
, Montanari%
\BCBL {}\ \BBA {} Nguyen%
}{%
Mei%
\ \protect \BOthers {.}}{%
{\protect \APACyear {2018}}%
}]{%
doi:10.1073/pnas.1806579115}
\APACinsertmetastar {%
doi:10.1073/pnas.1806579115}%
\begin{APACrefauthors}%
Mei, S.%
, Montanari, A.%
\BCBL {}\ \BBA {} Nguyen, P\BHBI M.%
\end{APACrefauthors}%
\unskip\
\newblock
\APACrefYearMonthDay{2018}{}{}.
\newblock
{\BBOQ}\APACrefatitle {A mean field view of the landscape of two-layer neural networks} {A mean field view of the landscape of two-layer neural networks}.{\BBCQ}
\newblock
\APACjournalVolNumPages{Proceedings of the National Academy of Sciences}{}{}{}.
\PrintBackRefs{\CurrentBib}

\bibitem [\protect \citeauthoryear {%
Mousavi-Hosseini%
, Park%
, Girotti%
, Mitliagkas%
\BCBL {}\ \BBA {} Erdogdu%
}{%
Mousavi-Hosseini%
\ \protect \BOthers {.}}{%
{\protect \APACyear {2022}}%
}]{%
mousavi2022neural}
\APACinsertmetastar {%
mousavi2022neural}%
\begin{APACrefauthors}%
Mousavi-Hosseini, A.%
, Park, S.%
, Girotti, M.%
, Mitliagkas, I.%
\BCBL {}\ \BBA {} Erdogdu, M\BPBI A.%
\end{APACrefauthors}%
\unskip\
\newblock
\APACrefYearMonthDay{2022}{}{}.
\newblock
{\BBOQ}\APACrefatitle {Neural networks efficiently learn low-dimensional representations with sgd} {Neural networks efficiently learn low-dimensional representations with sgd}.{\BBCQ}
\newblock
\APACjournalVolNumPages{arXiv preprint arXiv:2209.14863}{}{}{}.
\PrintBackRefs{\CurrentBib}

\bibitem [\protect \citeauthoryear {%
Natarajan%
}{%
Natarajan%
}{%
{\protect \APACyear {1995}}%
}]{%
natarajan1995sparse}
\APACinsertmetastar {%
natarajan1995sparse}%
\begin{APACrefauthors}%
Natarajan, B\BPBI K.%
\end{APACrefauthors}%
\unskip\
\newblock
\APACrefYearMonthDay{1995}{}{}.
\newblock
{\BBOQ}\APACrefatitle {Sparse approximate solutions to linear systems} {Sparse approximate solutions to linear systems}.{\BBCQ}
\newblock
\APACjournalVolNumPages{SIAM journal on computing}{}{}{}.
\PrintBackRefs{\CurrentBib}

\bibitem [\protect \citeauthoryear {%
Raskutti%
, Wainwright%
\BCBL {}\ \BBA {} Yu%
}{%
Raskutti%
\ \protect \BOthers {.}}{%
{\protect \APACyear {2014}}%
}]{%
raskutti2014early}
\APACinsertmetastar {%
raskutti2014early}%
\begin{APACrefauthors}%
Raskutti, G.%
, Wainwright, M\BPBI J.%
\BCBL {}\ \BBA {} Yu, B.%
\end{APACrefauthors}%
\unskip\
\newblock
\APACrefYearMonthDay{2014}{}{}.
\newblock
{\BBOQ}\APACrefatitle {Early stopping and non-parametric regression: an optimal data-dependent stopping rule} {Early stopping and non-parametric regression: an optimal data-dependent stopping rule}.{\BBCQ}
\newblock
\APACjournalVolNumPages{The Journal of Machine Learning Research}{}{}{}.
\PrintBackRefs{\CurrentBib}

\bibitem [\protect \citeauthoryear {%
Tsigler%
\ \BBA {} Bartlett%
}{%
Tsigler%
\ \BBA {} Bartlett%
}{%
{\protect \APACyear {2023}}%
}]{%
tsigler2023benign}
\APACinsertmetastar {%
tsigler2023benign}%
\begin{APACrefauthors}%
Tsigler, A.%
\BCBT {}\ \BBA {} Bartlett, P\BPBI L.%
\end{APACrefauthors}%
\unskip\
\newblock
\APACrefYearMonthDay{2023}{}{}.
\newblock
{\BBOQ}\APACrefatitle {Benign overfitting in ridge regression} {Benign overfitting in ridge regression}.{\BBCQ}
\newblock
\APACjournalVolNumPages{Journal of Machine Learning Research}{}{}{}.
\PrintBackRefs{\CurrentBib}

\bibitem [\protect \citeauthoryear {%
Varre%
, Pillaud-Vivien%
\BCBL {}\ \BBA {} Flammarion%
}{%
Varre%
\ \protect \BOthers {.}}{%
{\protect \APACyear {2021}}%
}]{%
varre2021last}
\APACinsertmetastar {%
varre2021last}%
\begin{APACrefauthors}%
Varre, A\BPBI V.%
, Pillaud-Vivien, L.%
\BCBL {}\ \BBA {} Flammarion, N.%
\end{APACrefauthors}%
\unskip\
\newblock
\APACrefYearMonthDay{2021}{}{}.
\newblock
{\BBOQ}\APACrefatitle {Last iterate convergence of {SGD} for Least-Squares in the Interpolation regime.} {Last iterate convergence of {SGD} for least-squares in the interpolation regime.}{\BBCQ}
\newblock
\BIn{} \APACrefbtitle {Neural Information Processing Systems.} {Neural information processing systems.}
\PrintBackRefs{\CurrentBib}

\bibitem [\protect \citeauthoryear {%
Vaskevicius%
, Kanade%
\BCBL {}\ \BBA {} Rebeschini%
}{%
Vaskevicius%
\ \protect \BOthers {.}}{%
{\protect \APACyear {2019}}%
}]{%
vaskevicius2019implicit}
\APACinsertmetastar {%
vaskevicius2019implicit}%
\begin{APACrefauthors}%
Vaskevicius, T.%
, Kanade, V.%
\BCBL {}\ \BBA {} Rebeschini, P.%
\end{APACrefauthors}%
\unskip\
\newblock
\APACrefYearMonthDay{2019}{}{}.
\newblock
{\BBOQ}\APACrefatitle {Implicit regularization for optimal sparse recovery} {Implicit regularization for optimal sparse recovery}.{\BBCQ}
\newblock
\BIn{} \APACrefbtitle {Neural Information Processing Systems.} {Neural information processing systems.}
\PrintBackRefs{\CurrentBib}

\bibitem [\protect \citeauthoryear {%
Vural%
\ \BBA {} Erdogdu%
}{%
Vural%
\ \BBA {} Erdogdu%
}{%
{\protect \APACyear {2024}}%
}]{%
vural2024pruning}
\APACinsertmetastar {%
vural2024pruning}%
\begin{APACrefauthors}%
Vural, N\BPBI M.%
\BCBT {}\ \BBA {} Erdogdu, M\BPBI A.%
\end{APACrefauthors}%
\unskip\
\newblock
\APACrefYearMonthDay{2024}{}{}.
\newblock
{\BBOQ}\APACrefatitle {Pruning is Optimal for Learning Sparse Features in High-Dimensions} {Pruning is optimal for learning sparse features in high-dimensions}.{\BBCQ}
\newblock
\APACjournalVolNumPages{arXiv preprint arXiv:2406.08658}{}{}{}.
\PrintBackRefs{\CurrentBib}

\bibitem [\protect \citeauthoryear {%
Wainwright%
}{%
Wainwright%
}{%
{\protect \APACyear {2019}}%
}]{%
wainwright2019high}
\APACinsertmetastar {%
wainwright2019high}%
\begin{APACrefauthors}%
Wainwright, M\BPBI J.%
\end{APACrefauthors}%
\unskip\
\newblock
\APACrefYear{2019}.
\newblock
\APACrefbtitle {High-dimensional statistics: A non-asymptotic viewpoint} {High-dimensional statistics: A non-asymptotic viewpoint}.
\newblock
\APACaddressPublisher{}{Cambridge university press}.
\PrintBackRefs{\CurrentBib}

\bibitem [\protect \citeauthoryear {%
Woodworth%
\ \protect \BOthers {.}}{%
Woodworth%
\ \protect \BOthers {.}}{%
{\protect \APACyear {2020}}%
}]{%
woodworth2020kernel}
\APACinsertmetastar {%
woodworth2020kernel}%
\begin{APACrefauthors}%
Woodworth, B.%
, Gunasekar, S.%
, Lee, J\BPBI D.%
, Moroshko, E.%
, Savarese, P.%
, Golan, I.%
\BDBL {}Srebro, N.%
\end{APACrefauthors}%
\unskip\
\newblock
\APACrefYearMonthDay{2020}{}{}.
\newblock
{\BBOQ}\APACrefatitle {Kernel and rich regimes in overparametrized models} {Kernel and rich regimes in overparametrized models}.{\BBCQ}
\newblock
\BIn{} \APACrefbtitle {Conference on Learning Theory.} {Conference on learning theory.}
\PrintBackRefs{\CurrentBib}

\bibitem [\protect \citeauthoryear {%
Wu%
, Zou%
, Braverman%
, Gu%
\BCBL {}\ \BBA {} Kakade%
}{%
Wu%
\ \protect \BOthers {.}}{%
{\protect \APACyear {2022}}%
}]{%
wu2022last}
\APACinsertmetastar {%
wu2022last}%
\begin{APACrefauthors}%
Wu, J.%
, Zou, D.%
, Braverman, V.%
, Gu, Q.%
\BCBL {}\ \BBA {} Kakade, S.%
\end{APACrefauthors}%
\unskip\
\newblock
\APACrefYearMonthDay{2022}{}{}.
\newblock
{\BBOQ}\APACrefatitle {Last iterate risk bounds of {SGD} with decaying stepsize for overparameterized linear regression} {Last iterate risk bounds of {SGD} with decaying stepsize for overparameterized linear regression}.{\BBCQ}
\newblock
\BIn{} \APACrefbtitle {International Conference on Machine Learning.} {International conference on machine learning.}
\PrintBackRefs{\CurrentBib}

\bibitem [\protect \citeauthoryear {%
Zhang%
, Li%
, Lu%
\BCBL {}\ \BBA {} Lin%
}{%
Zhang%
, Li%
\BCBL {}\ \protect \BOthers {.}}{%
{\protect \APACyear {2024}}%
}]{%
zhang2024optimal1}
\APACinsertmetastar {%
zhang2024optimal1}%
\begin{APACrefauthors}%
Zhang, H.%
, Li, Y.%
, Lu, W.%
\BCBL {}\ \BBA {} Lin, Q.%
\end{APACrefauthors}%
\unskip\
\newblock
\APACrefYearMonthDay{2024}{}{}.
\newblock
{\BBOQ}\APACrefatitle {Optimal Rates of Kernel Ridge Regression under Source Condition in Large Dimensions} {Optimal rates of kernel ridge regression under source condition in large dimensions}.{\BBCQ}
\newblock
\APACjournalVolNumPages{arXiv preprint arXiv:2401.01270}{}{}{}.
\PrintBackRefs{\CurrentBib}

\bibitem [\protect \citeauthoryear {%
Zhang%
, Liu%
, Chen%
\BCBL {}\ \BBA {} Fang%
}{%
Zhang%
, Liu%
\BCBL {}\ \protect \BOthers {.}}{%
{\protect \APACyear {2024}}%
}]{%
zhang2024optimality}
\APACinsertmetastar {%
zhang2024optimality}%
\begin{APACrefauthors}%
Zhang, H.%
, Liu, Y.%
, Chen, Q.%
\BCBL {}\ \BBA {} Fang, C.%
\end{APACrefauthors}%
\unskip\
\newblock
\APACrefYearMonthDay{2024}{}{}.
\newblock
{\BBOQ}\APACrefatitle {The Optimality of (Accelerated) {SGD} for High-Dimensional Quadratic Optimization} {The optimality of (accelerated) {SGD} for high-dimensional quadratic optimization}.{\BBCQ}
\newblock
\APACjournalVolNumPages{arXiv preprint arXiv:2409.09745}{}{}{}.
\PrintBackRefs{\CurrentBib}

\bibitem [\protect \citeauthoryear {%
Zou%
, Wu%
, Braverman%
, Gu%
, Foster%
\BCBL {}\ \BBA {} Kakade%
}{%
Zou%
, Wu%
, Braverman%
, Gu%
, Foster%
\BCBL {}\ \BBA {} Kakade%
}{%
{\protect \APACyear {2021}}%
}]{%
zou2021benefits}
\APACinsertmetastar {%
zou2021benefits}%
\begin{APACrefauthors}%
Zou, D.%
, Wu, J.%
, Braverman, V.%
, Gu, Q.%
, Foster, D\BPBI P.%
\BCBL {}\ \BBA {} Kakade, S.%
\end{APACrefauthors}%
\unskip\
\newblock
\APACrefYearMonthDay{2021}{}{}.
\newblock
{\BBOQ}\APACrefatitle {The benefits of implicit regularization from {SGD} in least squares problems} {The benefits of implicit regularization from {SGD} in least squares problems}.{\BBCQ}
\newblock
\BIn{} \APACrefbtitle {Neural Information Processing Systems.} {Neural information processing systems.}
\PrintBackRefs{\CurrentBib}

\bibitem [\protect \citeauthoryear {%
Zou%
, Wu%
, Braverman%
, Gu%
\BCBL {}\ \BBA {} Kakade%
}{%
Zou%
, Wu%
, Braverman%
, Gu%
\BCBL {}\ \BBA {} Kakade%
}{%
{\protect \APACyear {2021}}%
}]{%
zou2021benign}
\APACinsertmetastar {%
zou2021benign}%
\begin{APACrefauthors}%
Zou, D.%
, Wu, J.%
, Braverman, V.%
, Gu, Q.%
\BCBL {}\ \BBA {} Kakade, S.%
\end{APACrefauthors}%
\unskip\
\newblock
\APACrefYearMonthDay{2021}{}{}.
\newblock
{\BBOQ}\APACrefatitle {Benign overfitting of constant-stepsize {SGD} for linear regression} {Benign overfitting of constant-stepsize {SGD} for linear regression}.{\BBCQ}
\newblock
\BIn{} \APACrefbtitle {Conference on Learning Theory.} {Conference on learning theory.}
\PrintBackRefs{\CurrentBib}

\end{thebibliography}
\bibliographystyle{johd}
\newpage

\end{document}